\newcommand{\algcomment}[1]{\textit{{$\triangleright$ #1}}}
\newcommand{\norm}[1]{\left\lVert#1\right\rVert}
\newcommand{\normop}[1]{\left\lVert#1\right\rVert_{\textup{op}}}
\newcommand{\inprod}[2]{\left\langle#1, #2\right\rangle}
\newcommand{\argmin}{\textup{argmin}} 
\newcommand{\iid}{\textup{iid}} 
\newcommand{\R}{\mathbb{R}}
\newcommand{\normal}{\mathcal{N}}
\newcommand{\bas}[1]{\begin{align*}#1\end{align*}}
\newcommand{\ba}[1]{\begin{align}#1\end{align}}
\newcommand{\bbb}[1]{\left[#1\right]}
\newcommand{\E}{\mathbb{E}}
\newcommand{\id}{\mathbf{I}}
\newcommand{\bb}[1]{\left(#1\right)}
\definecolor{burntorange}{rgb}{0.8, 0.33, 0.0}
\newcommand{\msig}{\boldsymbol{\Sigma}}
\newcommand{\inner}{\inprod}
\newcommand{\bR}{\mathbb{R}}
\newcommand{\bE}{\mathbb{E}}
\newcommand{\vI}{\mathbf{I}}
\newcommand{\cN}{\mathcal{N}}
\newcommand{\cF}{\mathcal{H}}
\newcommand{\cT}{\mathcal{T}}
\newcommand{\cL}{\mathcal{L}}
\newcommand{\chyp}{\kappa}
\newcommand{\timeset}{\mathcal{T}}
\newcommand{\surf}{\mathsf{Surf}}
\newcommand{\bP}{\mathbb{P}}
\newcommand{\cE}{\mathcal{E}}
\newcommand{\cA}{\mathcal{A}}
\newcommand{\bsm}{\mathsf{BSM}}
\newtheorem{theorem}{Theorem}
\newtheorem{lemma}{Lemma}
\newtheorem{definition}{Definition}
\newtheorem{corollary}{Corollary}
\newtheorem{assumption}{Assumption}
\newtheorem{remark}{Remark}
\title{Dimension-free Score Matching and Time Bootstrapping for Diffusion Models}
\author{Syamantak Kumar\thanks{University of Texas at Austin, \texttt{syamantak@utexas.edu} (Work partly done during internship at Google DeepMind)} 
\and Dheeraj Nagaraj\thanks{Google DeepMind, \texttt{dheerajnagaraj@google.com}}
\and Purnamrita Sarkar\thanks{University of Texas at Austin, \texttt{purna.sarkar@utexas.edu}}}
\begin{document}
\maketitle

\begin{abstract}
Diffusion models generate samples by estimating the score function of the target distribution at various noise levels. The model is trained using samples drawn from the target distribution, progressively adding noise. Previous sample complexity bounds have a polynomial dependence on the dimension $d$, apart from $\log\bb{|\mathcal{H}|}$, where $\mathcal{H}$ is the hypothesis class. In this work, we establish the first (nearly) dimension-free sample complexity bounds, modulo any dependence due to $\log( |\mathcal{H}|)$, for learning these score functions, achieving a double exponential improvement in dimension over prior results. A key aspect of our analysis is to use a single function approximator to jointly estimate scores across noise levels, a critical feature in practice which enables generalization across timesteps. We introduce a novel martingale-based error decomposition and sharp variance bounds, enabling efficient learning from dependent data generated by Markov processes, which may be of independent interest. Building on these insights, we propose Bootstrapped Score Matching (BSM), a variance reduction technique that utilizes previously learned scores to improve accuracy at higher noise levels. These results provide crucial insights into the efficiency and effectiveness of diffusion models for generative modeling.
\end{abstract}
\thispagestyle{empty}
\newpage
\tableofcontents
\thispagestyle{empty}
\newpage

\section{Introduction}

Score-based diffusion models \cite{sohl2015deep,ho2020denoising} are generative models 
that have transformed image and video generation \cite{rombach2022high,saharia2022photorealistic,ramesh2022hierarchical,podell2023sdxl}, enabling foundation models to produce photorealistic and stylized images from text prompts. Their adaptability extends diverse domains such as audio \cite{kong2021diffwave,evans2024fast}, text \cite{gulrajani2024likelihood,han2022ssd,lou2023discrete,varma2024glauber}, molecule \cite{hoogeboom2022equivariant,hua2024mudiff}, and layout generation \cite{inoue2023layoutdm,levi2023dlt}. They generate additional samples given $m$ i.i.d. samples from a target distribution ($\pi$) using a trained neural network that learns the score function $\pi$ at different noise levels. In contrast, the classical Markov Chain Monte Carlo (MCMC) methods which seek to sample from a distribution given access to underlying density function. While MCMC methods can be slow for multi-modal distributions, diffusion models can sample efficiently with minimal assumptions, provided the score functions are learned accurately~\cite{chen2022sampling, benton2024nearly}.

Given $m$ i.i.d. samples from the target distribution, the \textit{first step} (called the forward process) obtains noised samples from a noising Markov process converging to the Gaussian distribution at various noise levels. The \textit{second step} estimates score functions of the distribution at each noise level using Denoising Score Matching (DSM) \cite{vincent2011connection}. This approach relies on learning from \textit{dependent data} from multiple trajectories of a Markov process in contrast to learning with i.i.d. data in traditional settings. Prior works \cite{block2020generative, gupta2023sample} provide theoretical guarantees for score function approximation separately at each noise level using the same samples. 
However, in practice, a \emph{single function approximator} is commonly used at all noise levels, which is considered by \cite{han2024neural}. 
\cite{boffi2024shallow} show that despite the problem of distribution estimation suffering from the curse of dimensionality \cite{chen2023score,oko2023diffusion}, the existence of low-dimensional structures allows neural networks to learn the score functions. All of these existing bounds exhibit polynomial dependence on the dimension, $d$. We establish that under \textit{suitable smoothness conditions} for a given function class, score matching with a single function approximator jointly across all timesteps achieves a nearly \textbf{dimension-free sample complexity} that depends on the smoothness parameter and grows only as $\log\log(d)$.
\vspace{-5pt}
 \subsection{Our Contributions}

    \textbf{(1)} We analyze the sample complexity of \textit{denoising score matching} across noise levels using a single function approximator, achieving a \textbf{double-exponential reduction in dimension dependence}.
    
    \textbf{(2)} We present a \textbf{novel martingale decomposition} of the error, which allows us to bound the error despite being composed of samples from multiple trajectories of \textit{dependent data}.
    
    \textbf{(3)} We use second-order Tweedie-type formulae to obtain a \textbf{sharp bound on the error variance}, crucial for establishing almost \textbf{dimension-free} convergence rates.
    
    \textbf{(4)} Inspired by the above, we present \textbf{Bootstrapped Score Matching}. Here the score at a given noise level is learned by bootstrapping to the learned score function at a lower noise level, achieving variance reduction. This shows improved performance compared to DSM in simple empirical studies.

\vspace{-5pt}
\subsection{Related work}\label{subsec:related_works}
\paragraph{Score matching and diffusion models:} Score Matching was introduced in the context of statistical estimation in \cite{hyvarinen2005estimation} with an algorithm now called Implicit Score Matching (ISM). Diffusion models are trained using Denoising Score Matching (DSM) introduced in \cite{vincent2011connection}, and is based on Tweedie's formula. Several algorithms have been introduced since, such as Sliced Score Matching \cite{song2020sliced} and Target Score Matching \cite{de2024target}. Prior works have analyzed the complexity of Denoising Score Matching in various settings \cite{chen2023score,oko2023diffusion,gupta2023sample,block2020generative,fu2024unveilconditionaldiffusionmodels}. We consider the setting in \cite{gupta2023sample,block2020generative}, where the score functions of the given distribution can be accurately approximated by a function approximator class (ex: neural networks), instead of the worst case non-parametric analysis in \cite{fu2024unveilconditionaldiffusionmodels}. These bounds can then be used with the discretization analyses in \cite{benton2024nearly,chen2022sampling,lee2023convergence} to guarantee the quality of the generated samples. 

\paragraph{Learning from dependent data:} Learning with data from a markov trajectory has been explored in literature in the context of system identification, time series forecasting and reinforcement learning \cite{duchi2012ergodic, simchowitz2018learning,nagaraj2020least,kowshik2021near,tu2024learning,ziemann2022learning,bhandari2018finite,kumar2024streaming,srikant2024rates}
Many of these works analyze the rates of convergence with data derived from a mixing Markov chain, when the number of data points available is much higher than the mixing time, $\tau_{\mathsf{mix}}$. In our context, the Markov chain contains $\tilde{O}(\tau_{\mathsf{mix}})$ data points created by progressively noising samples from the target distributions, where $\tilde{O}$ hides logarithmic factors. This is similar to the setting in \cite{tu2024learning}, which considered linear regression and linear system identification.

We outline our paper as follows: Section~\ref{sec:problemsetup} introduces the problem setup and preliminaries, followed by the main results and a comparison with prior work in Section~\ref{sec:main_results}. Section~\ref{sec:technical_results} presents key technical results from our proof technique. Finally, Section~\ref{sec:bootstrapped_score_matching} introduces Bootstrapped Score Matching, a novel training method that shares information explicitly across time by modifying the learning objective.

\section{Problem setup and preliminaries}\label{sec:problemsetup}
\textbf{Notation:} We use $[n]$ to denote $\left\{i \in \mathbb{N} \;|\; i\leq n\right\}$. $\id \in \R^{d\times d}$ represents the $d$-dimensional identity matrix. We use $\normal(\mu, \msig)$ to denote the multivariate normal distribution with specified mean, $\mu$ and covariance matrix $\msig$. $\norm{.}_{2}$ denotes the $\ell_{2}$ euclidean norm for vectors and $\normop{.}$ denotes the operator norm for matrices. $\E\bbb{X}$ denotes the expectation of the random variable $X$ and $\mathsf{Cov}(X)$ denotes its covariance matrix. For $a, b \in \R$, we write $a \lesssim b \text{ if and only if there exists an absolute constant } C > 0 \text{ such that } a \leq C b.$ $\tilde{O}, \tilde{\Omega}$ represent order notations with logarithmic factors. We also define a coarser notion of subGaussianity used subsequently in our proof sketch, 
\begin{definition}[$\bb{\beta^2,K}$-subGaussianity]\label{definition:new_subGaussian_def}
A mean-zero random variable $Y$ is said to be $\bb{\beta^2,K}$-subGaussian if it satisfies $ \bP(|Y|> A)\leq e^K \exp(-\tfrac{A^2}{2\beta^2})$.
\end{definition}

\textbf{Ornstein-Uhlenbeck process:} Consider a target distribution $\pi$ over $\bR^d$. Suppose $x_0 \sim \pi$ and $x_t$ solve the following Stochastic Differential Equation (SDE):
\begin{equation}\label{eq:fwd_noise}
    dx_t = -x_t dt + \sqrt{2}dB_t\,,
\end{equation}
where $B_t$ is the standard Brownian Motion over $\R^d$. An application of Ito's formula demonstrates that $x_t = x_0e^{-t} + z_t$ where $z_t \sim \cN(0,\sigma_{t}^{2}\vI)$ is independent of $x_0$ and $\sigma_{t} := \sqrt{1-e^{-2t}}$. This is the forward noising process, which progressively noises the initial sample into a standard Gaussian vector. Ito's formula also relates $x_{t}, x_{t'}$ for any timesteps $t > t' \geq 0$ to obtain, $x_t = x_{t'}e^{-(t-t')} + z_{t,t'}$ where $z_{t,t'} \sim \cN(0,\sigma_{t-t'}^{2}\vI)$ is independent of $x_{t'}$ and $\sigma_{t-t'} := \sqrt{1-e^{-2(t-t')}}$. For $t \in [0,T]$, let $p_t$ be the probability density function of $x_t$. Given $\bar{x}_0 \sim p_T$ and a standard $\R^d$ Brownian motion $\bar{B}$, then the denoising process is: 
\ba{d\bar{x}_t = \bar{x}_tdt+2\nabla \log p_{T-t}(\bar{x}_t)dt + \sqrt{2}d\bar{B}_t\,.\label{eq:reverse_sde}} It is the time reversal of the noising process which implies $\bar{x}_T \sim \pi$ \cite{song2020score}. 


\textbf{Score matching:} Given i.i.d. data points $x^{(1)},\dots,x^{(m)}$ from the target distribution $\pi$, diffusion models learn the score function $s(t,x) : \bR^{+}\times\bR^d \to \bR^d$ defined as $s(t,x) \equiv  s_{t}\bb{x} := \nabla \log p_t(x)$ via denoising score matching (DSM). Tweedie's formula states that \ba{s(t,x_t) = \E\bigr[\tfrac{-z_t}{\sigma_t^2}\big|x_t\bigr]\,.\label{eq:tweedies_formula}} 
Let $\cF$ be a finite class of functions which map $\bR^+\times\bR^d$ to $\bR^d$ with functions $(t, x) \rightarrow f\bb{t, x} \equiv f_{t}\bb{x}$. Let $\cT = \{t_1,\dots,t_N\} \subseteq [0,T]$. Let $x_t^{(i)}$ denote the solution of Equation~\eqref{eq:fwd_noise} at time $t$ with $x_0^{(i)} = x^{(i)}$ and define $z_t^{(i)} := x_t^{(i)}-e^{-t}x^{(i)}$. We consider the joint DSM objective to be: 
\ba{\hat{\cL}(f) := \frac{1}{mN}\sum_{i=1}^{m}\sum_{t \in \cT}\bigr\|f(t,x^{(i)}_t)+\frac{z^{(i)}_t}{\sigma_t^2}\bigr\|_{2}^2\,. \label{eq:dsm_total}}
\normalsize
Thus, optimizing \eqref{eq:dsm_total} is a regression task with noisy labels. The two sources of error are: i) By~\eqref{eq:tweedies_formula}, while $-\bE[z_t/\sigma_t^2|x_t] = s\bb{t, x_{t}} $, $z_t/\sigma_t^2$ is noisy conditioned on $x_t$ and ii)  $x_{t} \sim p_{t}$ itself is random.
The empirical risk minimizer is defined as $\hat{f} = \arg\inf_{f \in \cF}\hat{\cL}(f)$. The results established in \cite{benton2024nearly} states that the error in sampling arising from using the estimated score function $\hat{f}$ is given by:
\begin{equation}
\epsilon_{\mathsf{score}}^2(\hat{f}) := \sum_{i=2}^{N}\gamma_i\E_{x \sim p_{t_i}}\bbb{\|\hat{f}(t_i,x) - s(t_i,x)\|_{2}^2}, \text{ where } \gamma_i:= t_{i}-t_{i-1} \label{eq:sampling_requirement}
\end{equation}
\normalsize
Our goal is to bound this error. For simplicity, we consider $t_i = i\Delta$ for some step size $\Delta \in \bb{0,1}$. 
\section{Main results}\label{sec:main_results}

We operate under the following smoothness assumption on the function class, $\mathcal{H}$. 

\begin{assumption}[Smoothness of function class]\label{assumption:score_function_smoothness}
Let the true score function, $s \in \cF$. 
\begin{enumerate}
    \item[0.]$\nabla \log p_t(\cdot)$ is continuously differentiable for every $t \in \bR^+$.
    \item[1.] Lipschitzness : For all $t \in \timeset, x_{1}, x_{2} \in \R^{d}$, $f \in \cF$: $\norm{f(t, x_{1}) - f(t, x_{2})}_{2} \leq L\norm{x_{1}-x_{2}}$
    \item[2.] Local Time Regularity : There exists a set $B_{\delta,t}$ such that $p_t(B_{\delta,t}) \geq 1 -\delta$, $\forall t \geq t' \in \timeset, x \in B_{\delta,t}$, $\forall f \in \cF$
    \bas{&\|e^{-(t-t')}f(t, x)-f(t', e^{(t-t')}x)\|_{2} \leq e^{t-t'}L\sqrt{8(t-t')\log(\tfrac{2}{\delta})}}
    \normalsize
\end{enumerate}
\end{assumption}

Assumption (1) is a standard Lipschitz continuity assumption followed in the literature (see e.g. \cite{block2020generative})\footnote{ We note that our results generalize to a time-varying Lipschitz parameter $L(t)$. This lets us replace the worst case Lipschitz constant by its time-averaged variant. }. Assumption (2) assumes H\"older continuity with respect to the time variable. This is a natural assumption because Lemma~\ref{eq:time_smoothness_of_score_function} shows that Assumption~\ref{assumption:score_function_smoothness}-1 implies Assumption~\ref{assumption:score_function_smoothness}-2 for the true score function, $s(t, x)$. We also note that the assumption $s \in \mathcal{H}$ can be relaxed to assume that $\exists \bar{s} \in \mathcal{H}$ with sufficiently small $\ell_{2}$ error, similar to \cite{gupta2023sample}. While we assume a finite function class, $\cF$, we can extend it to infinite classes by a standard covering argument in learning theory or consider $\cF$ to be the finite class of floating point quantized models such as neural networks.  


Equation~\eqref{eq:fwd_noise} demonstrates that $x_t$ forms a Markov chain, leading to the noise random variables, $z_t$, being dependent. Additionally, \eqref{eq:fwd_noise} is typically iterated for $T = \tilde{O}\big(\tau_{\mathsf{mix}}\big)$ timesteps, until $p_T$ is close to a gaussian distribution. This setup falls outside the scope of conventional analyses of learning from dependent data (see Section~\ref{subsec:related_works}). Such analyses usually assume a significantly larger number of datapoints, where datapoints separated by $\tau_{\mathsf{mix}}$ in time are approximately independent, and the convergence rates align with their i.i.d. counterparts, adjusted for an effective sample size reduced by a factor of $\tau_{\mathsf{mix}}$. In contrast, our setting involves substantially fewer datapoints. To address this challenge, we propose a novel martingale decomposition (stated in Lemma~\ref{lemma:error_martingale_decomposition_2} and proved in Lemma~\ref{lemma:error_martingale_decomposition_2_actual_appendix}) of the error and establish sharp concentration bounds to account for these dependencies.

Recall the DSM objective in \eqref{eq:dsm_total}. As explained before, there are two sources of noise: (1) due to $-z_{t}/\sigma_{t}^{2}$ conditioned on $x_{t}$, (2) due to $x_{t}\sim p_{t}$.  We demonstrate the effect of fluctuations in $z_{t}|x_t$ in Theorem~\ref{theorem:empirical_l2_error_bound} and then deal with the random fluctuations due to $x_t$ in Theorem~\ref{theorem:expected_l2_error_bound}. 



\begin{figure*}[hbt]
    \centering
    \begin{minipage}[b]{0.45\textwidth}
        \centering
        \includegraphics[width=\textwidth]{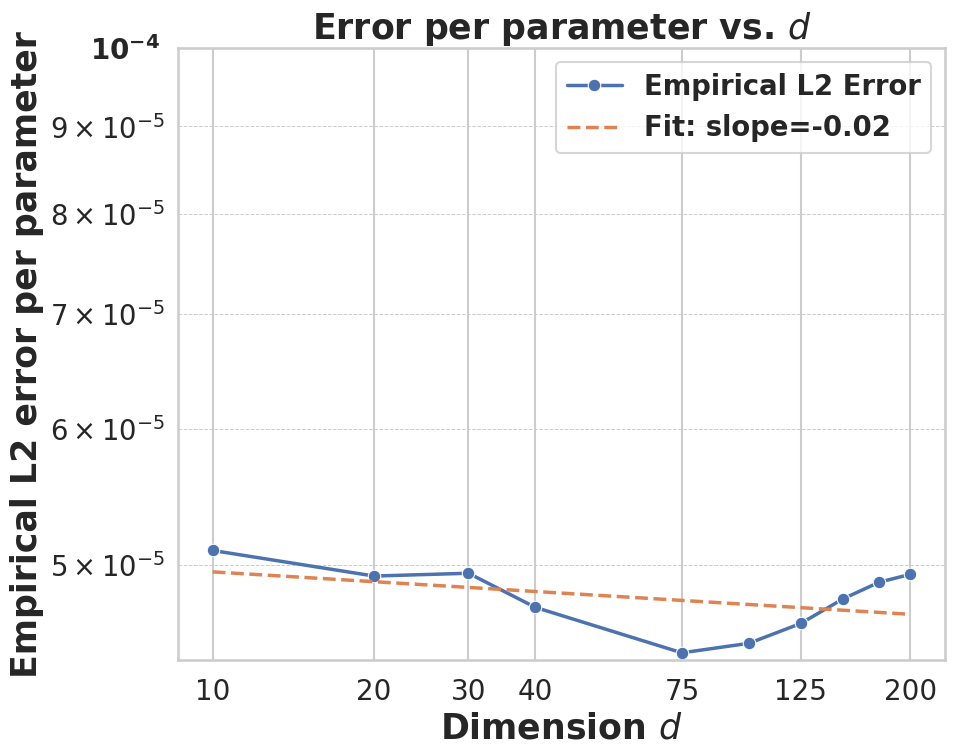}
        \hspace{10mm} (a)
        \label{fig:error_vs_dim}
    \end{minipage}
    \hspace{0.05\textwidth} 
    \begin{minipage}[b]{0.45\textwidth}
        \centering
        \includegraphics[width=\textwidth]{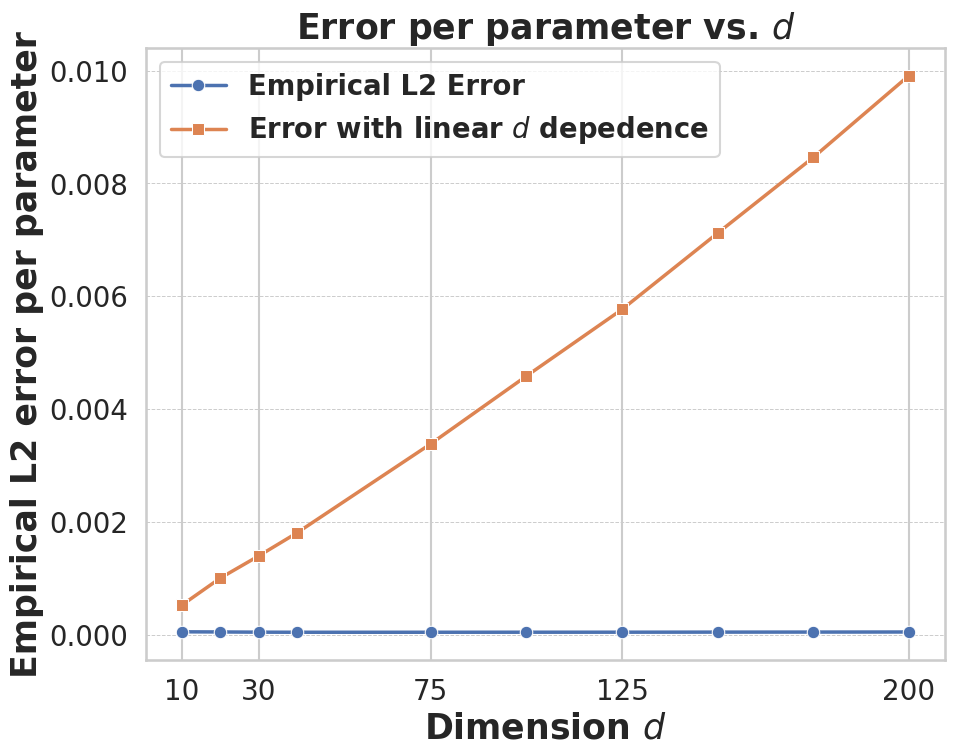}
        \hspace{10mm} (b) 
        \label{fig:error_vs_nu}
    \end{minipage}
    \caption{\label{fig:dim_free_experiments} (a) Empirical L2 error (\eqref{eq:dsm_total}), scaled inversely by $\log\bb{|\cF|}\log\log\bb{d}$, on a $\log-\log$ scale. A linear fit to the points shows a nearly zero slope, consistent with our $\log \log d$ dimension dependence. (b) Comparison of scaled empirical L2 error, vs. the scaled error if there were a linear dimension dependence as in prior works. As discussed subsequently, all previous works provide scaled error bounds with atleast a linear dependence.}
\end{figure*}
\vspace{-5pt}

Our first result in Theorem~\ref{theorem:empirical_l2_error_bound} provides a dimension-free bound on the empirical squared error, wherein we show how to control the noise due to $z_{t}$, conditioned on the data, $x_{t}$.  

\begin{restatable}[Empirical $L_2$ Bound]{theorem}{empiricalsquarederror}\label{theorem:empirical_l2_error_bound}Let Assumption~\ref{assumption:score_function_smoothness} hold. Fix $\delta \in \bb{0,1}$. For all $j \in [N]$, let $t_{j} := \Delta j$ and $\gamma_{j} := \Delta$. Let $B := C\log\bb{\bb{L+1}dmN\log\bb{\frac{1}{\delta}}/\Delta}$ for an absolute constant $C > 0$, and let $\Delta\log^3(\tfrac{1}{\Delta})d^{3}\log^3(2d)\log^3\bb{\frac{2Nm}{\delta}}\log^3\bb{\frac{B|\cF|}{\delta}} \leq 1$ and $N\Delta \leq C\log(\frac{1}{\Delta})$. Then for
\bas{
    m \gtrsim \frac{\bb{L+1}^{2}}{\epsilon^{2}}\log\bb{\frac{B|\cF|}{\delta}}N\Delta
}
with probability at least $1-\delta$, 
\bas{
\sum_{i\in[m], j\in[N]}\frac{\gamma_{j}\norm{\hat{f}\bb{t_j,x_{t_j}^{(i)}}-s\bb{t_j, x_{t_j}^{(i)}}}_{2}^{2}}{m} \lesssim \epsilon^{2}
}
\end{restatable}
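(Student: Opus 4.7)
The plan is to adapt the standard empirical risk minimization template to handle the strong temporal dependencies introduced by the noising Markov chain $(x^{(i)}_t)_{t\ge 0}$, while retaining a nearly dimension-free rate. Since $s \in \cF$ by Assumption~\ref{assumption:score_function_smoothness} and $\hat f$ minimises $\hat{\cL}$, the optimality condition $\hat{\cL}(\hat f)\le \hat{\cL}(s)$, combined with the polarization identity $\norm{a}_2^2 - \norm{b}_2^2 = \norm{a-b}_2^2 + 2\inprod{a-b}{b}$ and the shorthands $g^{(i)}_j(f) := f(t_j, x^{(i)}_{t_j}) - s(t_j, x^{(i)}_{t_j})$ and $\xi^{(i)}_j := s(t_j, x^{(i)}_{t_j}) + z^{(i)}_{t_j}/\sigma_{t_j}^2$, yields the basic inequality
\bas{
\frac{1}{m}\sum_{i,j}\gamma_j\norm{g^{(i)}_j(\hat f)}_2^2 \;\le\; -\frac{2}{m}\sum_{i,j}\gamma_j\inprod{g^{(i)}_j(\hat f)}{\xi^{(i)}_j}.
}
Tweedie's formula~\eqref{eq:tweedies_formula} gives $\E[\xi^{(i)}_j \mid x^{(i)}_{t_j}] = 0$, so the right-hand side is a zero-mean noise term; the goal is to show it is $\lesssim \epsilon^2$ with high probability uniformly over $\cF$.

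I would pass from the data-dependent $\hat f$ to a fixed $f$ via a union bound over $\cF$, and for each such $f$ control $\frac{1}{m}\sum_{i,j}\gamma_j\inprod{g^{(i)}_j(f)}{\xi^{(i)}_j}$ by concentration. Across the $m$ trajectories the summands are i.i.d., but \emph{within} each trajectory the $\xi^{(i)}_j$'s are strongly dependent, since they all involve the same Markov path. The novel martingale decomposition of Lemma~\ref{lemma:error_martingale_decomposition_2} rewrites $\sum_j \gamma_j \inprod{g^{(i)}_j(f)}{\xi^{(i)}_j}$ as a martingale in a filtration naturally induced by splitting $z^{(i)}_{t_j} = e^{-\Delta} z^{(i)}_{t_{j-1}} + z^{(i)}_{t_j,t_{j-1}}$ into the ``old'' and ``fresh'' Gaussian components; Freedman's inequality then applies per trajectory and Bernstein across the independent trajectories.

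The quantitative step is a sharp bound on the predictable quadratic variation of that martingale. A naive bound via $\E\norm{\xi^{(i)}_j}_2^2 = d/\sigma^2_{t_j} - \E\norm{s(t_j,\cdot)}_2^2$ would blow up as $d/\Delta$ at the first time-step; the dimension-free $(L+1)^2$ factor instead comes from the second-order Tweedie-type identity
\bas{
\Cov[z_t/\sigma_t^2 \mid x_t = y] \;=\; \tfrac{1}{\sigma_t^2}\id + \nabla^2 \log p_t(y),
}
combined with Lipschitzness (Assumption~\ref{assumption:score_function_smoothness}-1), which forces $\normop{\nabla^2 \log p_t(y)} \le L$ and therefore $(g^{(i)}_j)^\top \Cov[\xi\mid x]\, g^{(i)}_j \le (\sigma_{t_j}^{-2}+L)\norm{g^{(i)}_j}_2^2$. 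The point is that after weighting by $\gamma_j=\Delta$ one gets $\gamma_j(\sigma_{t_j}^{-2}+L) \lesssim L+1$ and $\sum_j \gamma_j (\sigma_{t_j}^{-2}+L) \asymp (L+1)N\Delta$, both of which are completely $d$-free. The local time-regularity clause (Assumption~\ref{assumption:score_function_smoothness}-2) is what lets this bound survive the martingale construction by letting us transport $g^{(i)}_j$ to $g^{(i)}_{j-1}$ on the high-probability set $B_{\delta,t_j}$, while the subGaussian tail control of Definition~\ref{definition:new_subGaussian_def} handles the rare events $x^{(i)}_{t_j}\notin B_{\delta,t_j}$, at the price of the truncation parameter $B$ (which is where the harmless $\log\log d$-type dependence enters).

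The hard part will be the coupling of these two ingredients: the filtration underlying Lemma~\ref{lemma:error_martingale_decomposition_2} must be chosen so that the martingale increments inherit the second-order Tweedie cancellation rather than merely the zero-mean property, and the time-regularity assumption is essentially what makes this viable across adjacent noise levels. Once the martingale variance is controlled at the scale $(L+1)^2 N\Delta$ times the per-trajectory empirical error $X^{(i)}(f):=\sum_j\gamma_j\norm{g^{(i)}_j(f)}_2^2$, Freedman per-trajectory plus Bernstein across the $m$ i.i.d. trajectories combines with the basic inequality to self-absorb the quadratic term; a union bound over $\cF$ then yields $\frac{1}{m}\sum_i X^{(i)}(\hat f)\lesssim (L+1)^2 N\Delta\log(B|\cF|/\delta)/m$, which rearranges to the stated $m \gtrsim (L+1)^2\epsilon^{-2}\log(B|\cF|/\delta)\,N\Delta$.
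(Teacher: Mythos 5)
Your skeleton matches the paper's (basic inequality $\mathcal{L}(\hat f)\le H^{\hat f}$, union bound over $\cF$, a martingale decomposition within each trajectory, a second-order Tweedie variance bound, a Freedman-type inequality, and time-regularity to control the resulting norms), but the two quantitative ingredients you sketch are not the ones that make the argument go through, and the way you describe them would fail. First, the martingale structure: you propose filtering forward in time by splitting $z_{t_j}=e^{-\Delta}z_{t_{j-1}}+z_{t_j,t_{j-1}}$ into old and fresh Gaussian components and then applying Freedman per trajectory. In any forward filtration rich enough to make the weight $f(t_j,x_{t_j})-s(t_j,x_{t_j})$ predictable, the noise $z_{t_j}+\sigma_{t_j}^2 s(t_j,x_{t_j})$ is no longer conditionally centered: $\E\bigl[z_{t_j}\mid x_{t_1},\dots,x_{t_j}\bigr]$ involves $\E[x_0\mid x_{t_1}]$, not $\E[x_0\mid x_{t_j}]$, so the summands are not martingale increments (and if $x_0$ is in the filtration they are deterministic). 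The paper's Lemma~\ref{lemma:error_martingale_decomposition_2} instead filters \emph{backward} in time: it reveals later timesteps first, uses the Markov property $\E[x_0\mid x_{t'},x_t]=\E[x_0\mid x_{t'}]$, and pairs the Doob increments $\E[x_0\mid x_{t_{N-k+1}}]-\E[x_0\mid x_{t_{N-k}}]$ with the predictable partial sums $G_{i,k}$ of the error at \emph{later} times. Citing the lemma does not repair this, because your subsequent variance computation is tied to your (incorrect) forward construction.

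Second, and more consequentially, the variance bound. You bound each term by $(\sigma_{t_j}^{-2}+L)\norm{g_j}^2$ using $\Cov[z_t/\sigma_t^2\mid x_t]=\sigma_t^{-2}\id+\nabla^2\log p_t$ at a single time. But the increments of the correct (backward) martingale are differences of $\E[x_0\mid x_\cdot]$ across adjacent noise levels, and their conditional variance given the future reduces to bounding $\normop{\Cov\bigl(s(t',x_{t'})\mid x_t\bigr)}$ over a gap $\Delta$; the whole point of the paper's Lemma~\ref{lemma:martingale_variance_bound} is that this is $O\bigl((L\Delta^2+\Delta+L^2\Delta)e^{2t}\bigr)$, i.e.\ it carries an extra factor of $\Delta$, which is what ultimately yields the $N\Delta$ (rather than $N$) in the sample complexity. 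Establishing that factor is the technically hardest part of the proof — the Hessian is not assumed Lipschitz, so the paper mollifies $h_{t'}$, invokes Lusin's theorem, and refines over subintervals of $[t',t]$ — and your proposal offers no mechanism for it; you explicitly flag the "coupling" as the hard part and leave it open. Two smaller misattributions compound this: Assumption~\ref{assumption:score_function_smoothness}-2 is not used to transport $g_j$ to $g_{j-1}$ inside the martingale construction but to prove the $\ell_\infty$–$\ell_2$ bridge (Lemma~\ref{lemma:l2_linfnity_bound_f}) needed to self-absorb the sup-norm term coming out of the concentration bound, and $B$ is the size of the grid in the union bound over the Freedman parameter $\lambda$ (together with a crude a priori bound on $\mathcal{L}(\hat f)$ to dispose of the extreme-$\lambda$ events), not a truncation level for the rare events off $B_{\delta,t}$. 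As written, the proposal reproduces the paper's outline but is missing the actual backward Doob decomposition and the $L^2\Delta$-scale covariance bound that constitute the proof's substance.
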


\begin{remark} All prior works such as \cite{block2020generative, gupta2023sample, fu2024unveilconditionaldiffusionmodels} have at least a linear dimension dependence apart from the $\log\bb{|\cF|}$. In contrast, modulo $\log\bb{|\cF|}$, our bounds are nearly dimension-free ( $\log\log\bb{d}$ dependence due to $B$), and instead depend on the smoothness parameter $L$.  Therefore, we bring down the complexity from $\mathsf{poly}(d)\log(|\cF|)$ to $\log(|\cF|)$ which is meaningful in high dimensions.  

To put our results in context, prior work \cite{fu2024unveilconditionaldiffusionmodels} shows that score matching suffers from exponential dependence on $d$ in the worst case. In contrast, we show that when the target distribution admits a suitable class of estimators with mild smoothness assumptions, score estimation becomes sample-efficient. This closes the gap between theoretical results and empirical findings in diffusion models, where global optimization reliably learns accurate score functions for natural images, despite the worst-case guarantees for training neural networks.

In Figure~\ref{fig:dim_free_experiments} (see Appendix Section~\ref{appendix:dimension_free_exp} for details of experimental setup), we train a 2-layer neural network with a fixed hidden dimension and sample size, to sample from $\mathcal{N}\bb{0,  \Sigma}$, and measure the L2 error across timesteps, described in \eqref{eq:dsm_total}, scaled inversely by $\log\bb{|\cF|}\log\log\bb{d}$. Here we use the fact that $\log\bb{|\cF|}$ scales linearly with the number of parameters for neural networks (with quantization). Figure~\ref{fig:dim_free_experiments} is consistent with the dimension-free bounds in Theorem~\ref{theorem:empirical_l2_error_bound}. 
\end{remark}

\begin{remark}\label{rem:discretization_size}Theorem~\ref{theorem:empirical_l2_error_bound} requires the time-discretization $\Delta = \widetilde{O}(1/d^3)$. However, this is not a problem during training due to the use of stochastic optimization algorithms which obtain minibatches of datapoints at randomly sampled times. We illustrate this further by distinguishing between the statistical and optimization questions underlying our results. 

\textbf{Statistical question}: The standard loss formulation in both theory and practice has time discretization 
$\Delta = 0$, corresponding to an integral/expectation over time. See Equation (17) in \cite{ho2020denoising} and Equation (7) in \cite{sohl2015deep}. We consider a fine approximation of the integral with $\Delta = O(1/d^3)$ due to purely technical reasons since a $\log\log\bb{1/\Delta}$ term appears in the error bound. Note that $\Delta$ can be made much smaller than $O\bb{1/d^3}$ due to dependence. In fact, due to assumption 1, the difference between the integral (i.e, 
$\Delta = 0$) and our loss is $O(\sqrt{\Delta})$. Therefore, our framework allows for considering the standard integral formulation of the loss as well.

\textbf{Optimization question}: The integral or the large sum in the loss function is computationally intractable to optimize directly. Hence stochastic optimization algorithms such as SGD or Adam are used. Here random batches of datapoints 
are drawn to evaluate the stochastic gradients. When the times are sampled randomly, we obtain unbiased estimators for the gradients of this loss, even when (or when it is very small). Therefore, practical training can be performed efficiently even when $\Delta = O(1/d^3)$.

We further note that this does not adversely affect inference since Theorem~\ref{thm:subsampled_error_bound} shows that one can use larger step sizes during inference without deteriorating quality.
\end{remark}

\begin{remark}\label{remark:B_loglog_dependence}
    The parameter $B$ arises from our martingale-based concentration analysis, which involves subGaussian random variables whose subGaussianity parameters are themselves random. We show that with high probability, these parameters are uniformly $O(\exp(B))$ via a union-bound, leading to the $\log\bb{B}$ factor. Refer to Lemma~\ref{lemma:union_bound_lambda} in the Appendix for the detailed argument.
\end{remark}

Theorem~\ref{theorem:empirical_l2_error_bound} is the first step in proving the expectation bound in Theorem~\ref{theorem:expected_l2_error_bound} and may be of independent interest. Theorem~\ref{theorem:expected_l2_error_bound} deals with the noise arising from the data $x_{t} \sim p_{t}$. Our next assumption, called `hypercontractivity', controls the $4^{\text{th}}$-moment of the error bound with respect to the $2^{\text{nd}}$-moment, which can be used to prove the generalization of the score function in $L^2$ error. This is a mild assumption, standard in statistics and learning theory under heavy tails \cite{mendelson2020robust,klivans2018efficient,minsker2018sub}.
\begin{assumption}\label{assumption:hypercontractivity}
    Let $\kappa > 0$ be a fixed constant. Then, for every $f \in \cF$ and $x_t \sim p_t$, we have:
    $$\E[\|f(t,x_t) - s(t,x_t)\|^4]^{\frac{1}{4}} \leq \chyp \E[\|f(t,x_t)-s(t,x_t)\|^2]^{\frac{1}{2}}$$
\end{assumption}

$\kappa^4$ can be bounded (up to multiplicative constants) by the kurtosis of $f(t,x_t) - s(t,x_t)$. Assumption~\ref{assumption:hypercontractivity} follows from the smoothness and strong convexity of neural networks in the parameter space (not \(x_t\)), as shown in Lemma~\ref{lem:sc-hyp} in the Appendix. Recent work \cite{milne2019piecewise, yi2022characterization} shows that near the global minimizer of the population loss, many smooth non-convex losses exhibit local strong convexity. 




Under Assumptions~\ref{assumption:score_function_smoothness} and \ref{assumption:hypercontractivity}, we state our main result in Theorem~\ref{theorem:expected_l2_error_bound}. In this result, we use Theorem~\ref{theorem:empirical_l2_error_bound} and handle the noise due to $x_{t} \sim p_{t}$ in the DSM objective. 

\begin{restatable}[$L_2$ Error Bound]{theorem}{expectedsquarederror}\label{theorem:expected_l2_error_bound}Let Assumptions~\ref{assumption:score_function_smoothness} and \ref{assumption:hypercontractivity} hold. Fix $\delta \in \bb{0,1}$. For all $j \in [N]$, let $t_{j} := \Delta j$ and $\gamma_{j} := \Delta$. Let $B := C\log\bb{\bb{L+1}dmN\log\bb{\frac{1}{\delta}}/\Delta}$ for an absolute constant $C > 0$, and let $\Delta\log^3(\tfrac{1}{\Delta})d^{3}\log^3(2d)\log^3\bb{\frac{2Nm}{\delta}}\log^3\bb{\frac{B|\cF|}{\delta}} \leq 1$ and $N\Delta \leq C\log(\frac{1}{\Delta})$. If
\bas{
    m \gtrsim \chyp^{2}\max\left\{\log\bb{\frac{N}{\delta}}, \frac{\bb{L+1}^{2}N\Delta}{\epsilon^{2}}\log\bb{\frac{B|\cF|}{\delta}} \right\}
}
then with probability at least $1-\delta$, 
\bas{
\sum_{j \in [N]}\gamma_{j}\E_{x_{t_j}}\bbb{\bigr\|\hat{f}\bb{t_j, x_{t_j}}-s\bb{t_j, x_{t_j}}\bigr\|_{2}^{2}} \lesssim \epsilon^{2}
}
\normalsize
\end{restatable}

\begin{remark}
     In addition to the sample complexity of Theorem~\ref{theorem:empirical_l2_error_bound}, the sample complexity for the generalization bound in Theorem~\ref{theorem:expected_l2_error_bound} additionally has a factor of $\kappa^{2}$ due to the local strong convexity assumption formalized in Lemma~\ref{lem:sc-hyp}. 
\end{remark}

We note that Theorem~\ref{theorem:expected_l2_error_bound} considers small values of time discretization $\Delta$, which is not an issue during training (see Remark~\ref{rem:discretization_size}). However, we can accelerate inference by using a larger timestep-size to discretize the diffusion process, as shown in Theorem~\ref{thm:subsampled_error_bound} and proved in Theorem~\ref{thm:subsampled_error_bound_appendix}.

\begin{theorem}[Fast Inference]\label{thm:subsampled_error_bound} Under the same assumptions as Theorem~\ref{theorem:expected_l2_error_bound}, partition the timesteps $\{t_j = \Delta j\}_{j \in [N]}$ into $k$ disjoint subsets $S_1, S_2, \dots, S_k$, where each subset $S_i$ contains timesteps of the form $t_j = \Delta(i + nk)$ for $n \in \mathbb{N}$. Define $\gamma_j' := k\Delta$ for all $j$ in any subset $S_i$. Then, there exists at least one subset $S_i$ such that, with probability at least $1 - \delta$.
\[
\sum_{j \in S_i} \gamma_j' \E_{x_{t_j}}\left[\bigr\|\hat{f}(t_j, x_{t_j}) - s(t_j, x_{t_j})\bigr\|_2^2\right] \lesssim \epsilon^2,
\]
\normalsize
\end{theorem}
The subsets $S_i$ allow for a much coarser discretization with differences being $k\Delta$ instead of $\Delta$. While the error due to discretization of the SDE might become worse, as shown by the bounds in \cite{benton2024nearly}, Theorem~\ref{thm:subsampled_error_bound} demonstrates that the score estimation error does not degrade.

\textbf{Comparison with prior work:} \cite{block2020generative} and \cite{gupta2023sample} analyze the DSM objective \eqref{eq:dsm_total} by independently bounding the error at each timestep and applying a union bound over all $t \in \timeset$. \cite{block2020generative} assume bounded support over an $\ell_2$ ball of radius $R$ and $L$-Lipschitz score functions, and derive (up to logarithmic factors) the following per-timestep bound using Rademacher complexity:
\ba{
\E_{x_{t}}\bbb{\norm{\hat{f}\bb{t,x_{t}}-s\bb{t,x_{t}}}_{2}^{2}} \lesssim \frac{\epsilon^{2}}{\sigma_{t}^{2}} \label{eq:per_timestep_score_requirement}
}
where $\mathcal{R}_n(\cF)$ is the Rademacher complexity of $\cF$. When using a uniform step size $\Delta$, this leads to sample complexity scaling as $\frac{1}{\text{poly}(\Delta)}$, with at least linear dependence on $d$, especially since $R = O(\sqrt{d})$ in practice.

\cite{gupta2023sample} improve the dependence of sample complexity for Wasserstein error, removing smoothness assumptions on the score function and relaxing the $\ell_2$ error. They show that learning $f \in \cF$ satisfying the relaxed criterion for each $t$ suffices for sampling, with a per-timestep sample complexity of $m \gtrsim d \log\left(\frac{|\cF|}{\delta}\right)/\epsilon^{2}$, again using a union bound over time. Their bound avoids dependence on $\frac{1}{\sigma_t}$, but retains linear scaling with $d$.

\cite{han2024neural} study gradient descent for optimizing \eqref{eq:dsm_total} when $s$ is $L$-Lipschitz and the target distribution is within an $\ell_2$ ball of radius $R$. They model time as an input and use a kernel regression perspective to jointly learn across timesteps. While conceptually similar to our approach, their sample complexity (Theorem 3.12) still exhibits polynomial dependence on $d$.

\cite{fu2024unveilconditionaldiffusionmodels} analyze the non-parametric setting under the assumption that the score belongs to a Hölder class with $\beta$-smoothness. This leads to exponential sample complexity in $d$ unless $\beta = \Omega(d)$, aligning with worst-case lower bounds. In contrast, our work avoids this curse of dimensionality by making more pragmatic assumptions inspired by empirical diffusion model performance: (a) $\cF$ only needs to approximate the target’s score function; (b) mild time regularity; and (c) second-order differentiability of the log-density. These assumptions, also common in prior work~\cite{NEURIPS2019_65a99bb7, block2020generative, chen2022sampling, oko2023diffusion}, enable nearly dimension-free generalization bounds.
\section{Technical results}
\label{sec:technical_results}

In this section we describe our proof techniques and key technical results. Figure~\ref{fig:thm1_dependency_graph} summarizes the key results in this section and how they work together to lead to Theorem~\ref{theorem:empirical_l2_error_bound}.

\begin{figure}[h]
  \centering
  \footnotesize
  \scalebox{1}{ 
  \begin{tikzpicture}[
      node distance=0.5cm and 1cm,
      >=Stealth,
      every node/.style={
        draw,
        rectangle,
        align=center,
        inner sep=3pt
      }
    ]

    \node (L1) {Lemma~\ref{lemma:l2errorboundmartingale_main_paper}:\\
                Linear–to–Quadratic Reduction};
    \node (L2) [below=of L1] {Lemma~\ref{lemma:error_martingale_decomposition_2}:\\
                Martingale Decomposition};
    \node (L3) [below left=of L2,xshift=-1cm] {Lemma~\ref{lemma:subGaussianity_parameters_main_paper}:\\
                Conditional Sub-Gaussianity};
    \node (L4) [below right=of L2,xshift=+1cm] {Lemma~\ref{lemma:martingale_variance_bound}:\\
                Variance Control};

    \node (L5) [below=1.8cm of L2,xshift=-2cm] {Lemma~\ref{lemma:martingale_concentration_lemma_main_paper}:\\
                Martingale Concentration};
    \node (L7) [below=1.8cm of L2,xshift=+4cm] {Lemma~\ref{lemma:l2_linfnity_bound_f_main_paper}:\\
                \(\ell_\infty\!\to\!\ell_2\) Conversion};

    \coordinate (mid) at ($(L5)!0.5!(L7)$);
    \node [ellipse, draw, below=1cm of mid, inner sep=3pt] (T1)
          {Theorem~\ref{theorem:empirical_l2_error_bound}:\\
           Empirical \(\ell_2\) Error Bound};

    \draw[->] (L1) -- (L2);
    \draw[->] (L2) -- (L3);
    \draw[->] (L2) -- (L4);
    \draw[->] (L3) -- (L5);
    \draw[->] (L4) -- (L5);
    \draw[->] (L5) -- (T1);
    \draw[->] (L7) -- (T1);

  \end{tikzpicture}
  }
  \caption{\label{fig:thm1_dependency_graph}Dependency graph of the key lemmas leading to Theorem~\ref{theorem:empirical_l2_error_bound}.}
\end{figure}

For ease of exposition, we introduce some additional notation. For all timesteps $\left\{t_{j}\right\}_{j \in [N]}$, wherever its clear from context, we denote $\sigma_{t_j} \equiv \sigma_{j}$ and for all samples $i \in [m]$, we denote $x_{t_j}^{(i)} \equiv x_{j}^{(i)}$ and $z_{t_j}^{(i)} \equiv z_{j}^{(i)}$. For all appropriately defined function $f, g$, we denote the empirical expectations as $\widehat{\E}_{x_{j}}[f\bb{x_{j}}] := \frac{1}{m}\sum_{i \in [m]}f(x_{j}^{(i)})$ and $\widehat{\E}_{x_{j}, z_{j}}[g\bb{x_{j}, z_{j}}] := \frac{1}{m}\sum_{i \in [m]}g(x_{j}^{(i)}, z_{j}^{(i)})$. For any random variable $y$, we denote the conditional expectation as $\E_{i,j}[y] := \E[y|x_{j}^{(i)}]$.   

We start by bounding the empirical squared error in terms of a linear form in Lemma~\ref{lemma:l2errorboundmartingale_main_paper}. This relates the empirical error, $\hat{\mathcal{L}}$ of the minimizer $\hat{f}$ with the true score function, $s$. While we assume $s \in \cF$ for simplicity, it can be relaxed to assume that $\exists s\in\cF$ with sufficiently small $\ell_{2}$ error, similar to \cite{gupta2023sample}.

\begin{restatable}{lemma}{squarederrorboundmartingalemainpaper}\label{lemma:l2errorboundmartingale_main_paper}
For $f \in \cF$,
\begin{equation}
    \mathcal{L}(f) := \sum_{j \in [N]}\gamma_{j}\widehat{\E}_{x_{j}}\bbb{ \norm{f\bigl(t_j, x_{j}\bigr) - s\bigl(t_j, x_{j}\bigr)}_{2}^{2}}, \notag
\end{equation}
\begin{equation}
    \begin{aligned}
        & H^{f} := \sum_{j \in [N]} \gamma_{j}\widehat{\E}_{x_{j}, z_{j}}\bbb{ \bigl\langle f\bigl(t_{j}, x_{j}\bigr) - s\bigl(t_{j}, x_{j}\bigr), \frac{-z_{j}}{\sigma_{j}^{2}} - s\bigl(t_{j}, x_{j}\bigr) \bigr\rangle}. \notag
    \end{aligned}
\end{equation}
Let $\hat{\cL}$ be as defined in \eqref{eq:dsm_total}. If $s \in \cF$ then for $\hat{f} = \arg\inf_{f \in \cF} \hat{\cL}(f)$, we have  
\small
\begin{equation}
    \mathcal{L}(\hat{f}) \leq  H^{\hat{f}}, \label{eq:Lf_bound}
\end{equation}
\normalsize

\end{restatable}

Let $\hat{f}$ be the minimizer of $\hat{\cL}(f)$.
Lemma~\ref{lemma:l2errorboundmartingale_main_paper} (proved in Lemma~\ref{lemma:l2errorboundmartingale}) bounds $\mathcal{L}(\hat{f})$, the loss of $\hat{f}$ against the true and unknown score function with $H^{\hat{f}}$. We will show a high probability bound on $H^{\hat{f}}$ defined in~\eqref{eq:Lf_bound} to control $\mathcal{L}(\hat{f})$. Interestingly, as shown in Lemma~\ref{lemma:error_martingale_decomposition_2} (and proved in Lemma~\ref{lemma:error_martingale_decomposition_2_actual_appendix}), for a fixed $f$ it is possible to decompose $H^{f}$ as a martingale difference sequence. 


The martingale difference decomposition of $H^{f}$, exploiting the Markovian structure of \eqref{eq:fwd_noise}, has terms of the form $Q_{i} := \inner{G_{i}}{Y_{i}-\E\bbb{Y_{i}|\mathcal{F}_{i-1}}}$ adapted to the filtration $\left\{\mathcal{F}_{i}\right\}_{i \in [n]}$, where $G_i$ is a $\mathcal{F}_{i-1}$ measurable random variable. The proof  primarily uses the fact that for $t_{1} \leq t_{2} \leq t_{3}$, $\E\bbb{x_{t_1}|x_{t_2}, x_{t_3}} = \E\bbb{x_{t_1}|x_{t_2}}$ due to the Markov property. 



\begin{lemma}\label{lemma:error_martingale_decomposition_2}
Let $\zeta := \frac{s-f}{m}$ for any $f \in \cF$. Define 
\[
\bar{G}_i \!:= \sum_{j=1}^{N} \frac{\gamma_{j}e^{-(t_j-t_1)}\zeta\bb{t_j,x_{j}^{(i)}}}{\sigma_{j}^{2}}, \;\; 
G_{i,k}\! := \!\!\!\!\!\! \sum_{j=N-k+2}^{N}\!\!\!\!\! \frac{\gamma_{j}e^{-t_j}\zeta\bb{t_j,x_{j}^{(i)}}}{\sigma_{j}^{2}}
\]
and define $R_{i,k}$ as  
\begin{equation}
    R_{i,k} := 
    \begin{cases} 
        \begin{aligned}
            \bigl \langle G_{i,k+1}, 
            & \E_{i,N-k+1}[x_0^{(i)}]  - \E_{i,N-k}[x_0^{(i)}] 
            \bigr \rangle, 
        \end{aligned} 
        & \text{for } k \in [N-1], \\
        \bigl\langle \bar{G}_i, z_{1}^{(i)} - \E_{i,1}[z_{1}^{(i)}] \bigr\rangle, 
        & \text{for } k = N.
    \end{cases} \notag
\end{equation}
and $R_{i,k} = 0$ for $k = 0$. Define $t_0 = 0$. Consider the filtration defined by the sequence of $\sigma$-algebras,  $\mathcal{F}_{i,k} := \sigma(\{x_{j}^{(j)} : 1\leq j < i, j \in [N]\} \cup \{x_{j}^{(i)} : j \geq N-k\})$,  
for $i \in [m]$ and $k \in \{0,\dots, N\}$, satisfying the total ordering $\left\{ (i_1, j_1) < (i_2, j_2) \text{ iff } i_1 < i_2 \text{ or } i_1 = i_2, j_1 < j_2 \right\}$. Then, 
\begin{enumerate}
    \item For $k \in [N-1]$, $G_{i,k+1}$ is measurable with respect to $\mathcal{F}_{i,k-1}$, and $\bar{G}_{i}$ is $\mathcal{F}_{N-1}$-measurable.
    \item For $i \in [m], k \in \{0\} \cup [N]$, $\left\{R_{i,k}\right\}_{(i,k)}$ forms a martingale difference sequence with respect to the filtration above.
    \item $H^{f} = \sum_{i \in [m]}\sum_{k \in [N]}R_{i,k}\,$, where $H^{f}$ is defined in Lemma~\ref{lemma:l2errorboundmartingale_main_paper} 
\end{enumerate}
\end{lemma}


In the above  Lemma~\ref{lemma:error_martingale_decomposition_2} (and proved in Lemma~\ref{lemma:error_martingale_decomposition_2_actual_appendix}), $R_{ik}$ denotes the martingale difference sequence arising from the Doob decomposition (see e.g. ~\cite{durrett2019probability}). Our aim is to bound $H^{\hat{f}}$ by bound $H^{f}$ uniformly for every $f$, using martingale concentration. In the next lemma, we show that conditioned on $\mathcal{F}_{i-1}$, $Q_{i}$ is subGaussian.
To gain intuition into how subGaussianity comes into play in our context, we note that Lemma F.3. in \cite{gupta2023sample} shows that the score function, $s(t, x_t)$, is $1/\sigma_{t}$-subGaussian. We develop a more fine-grained argument exploiting the smoothness of the score function to show  subGaussianity (Definition~\ref{definition:new_subGaussian_def}) for our sequence. The proof is provided in Lemma~\ref{lemma:subGaussianity_parameters}.

\begin{restatable}{lemma}{martingalediffsubGaussianityparametersmainpaper}\label{lemma:subGaussianity_parameters_main_paper}
    Fix $\delta \in \bb{0,1}$. Consider $R_{i,k}$ and $\mathcal{F}_{i,k}$ as defined in Lemma~\ref{lemma:error_martingale_decomposition_2} and let $\Delta := t_{N-k+1}-t_{N-k}$. Under Assumption~\ref{assumption:score_function_smoothness}, following the definition in Definition~\ref{definition:new_subGaussian_def}, conditioned on $\mathcal{F}_{i,k-1}$, $R_{i,k}$ is $(\beta_{i,k}^2\|G_{i,k}\|^2,W_{i,k})$-subGaussian where $\beta_{i,k},W_{i,k}$ are $\mathcal{F}_{i,k-1}$ measurable random variables such that  
    $W_{i,k} \leq \log\bb{\frac{2}{\delta}}$ with probability at-least $1-\delta$ and
    \bas{
        &\beta_{i,k} := \begin{cases}
            8\bb{L+1}e^{t_{N-k+1}}\sqrt{\Delta d}, \;\; k \in  [N-1], \\
            4\sqrt{\Delta d}, \;\; k = N
            \end{cases}
    }
\end{restatable}

However, the subGaussianity parameters in Lemma~\ref{lemma:subGaussianity_parameters_main_paper}, depend polynomially on the data dimension, $d$ along with $G_i$ and the step size, $\Delta$. Solely relying on this leads to a dimension-dependent bound. To further refine our analysis and show a dimension-free bound, we evaluate the variance of $Q_{i}$ conditioned on $\mathcal{F}_{i-1}$. As shown in the next Lemma (Lemma~\ref{lemma:martingale_variance_bound}) (and proved in Lemma~\ref{lemma:martingale_diff_variance_bound_appendix}), the variance depends only on the smoothness parameter, $L$, along with $G_i$ and $\Delta$.

\begin{lemma}[Variance bound for martingale difference sequence]\label{lemma:martingale_variance_bound}
Consider the martingale difference sequence \(R_{i,k}\) and the predictable sequence \(G_{i,k+1}\) with respect to the filtration \(\mathcal{F}_{i,k}\) from Lemma~\ref{lemma:variance_bound_1}. Define \(\Delta := t_{N-k+1} - t_{N-k}\). Then, $\E\bbb{R_{i,k}^{2}|\mathcal{F}_{i,k-1}} \leq \nu_{i,k}^{2}$
where
\begin{equation}
    \nu_{i,k}^{2} =
    \begin{cases} 
        &C(L\Delta^2 + \Delta + L^2\Delta) e^{2t_{N-k+1}} \|G_{i,k+1}\|^2, 
        \text{ if } k \in [1, N-1], \\
        &C(L\Delta^2 + \Delta) \|\bar{G}_i\|^2, 
        \quad\quad\quad \text{if } k = N.
    \end{cases}\notag
\end{equation}
where \( C > 0 \) is an absolute constant and $\nu_{i,k}^{2} = 0$ for $k = 0$.
\end{lemma}



The proof of Lemma~\ref{lemma:martingale_variance_bound} is involved when $h_{t}(x) := \nabla^{2}\log\bb{p_{t}}(x)$ is not assumed to be Lipschtiz in $x$. Starting with the martingale difference sequence defined in Lemma~\ref{lemma:error_martingale_decomposition_2}, an application of the second-order tweedie's formula (see Lemma~\ref{lemma:second_order_tweedie_application}), reduces the problem to bounding the operator norm $\mathsf{Cov}(s\bb{t', x_{t'}}|x_{t})$ for $t-t' = \Delta > 0$, i.e, the conditional covariance matrix of the score function given the future. 
Exploiting the smoothness assumption on the score function, an application of the mean value theorem reduces our problem to bounding the operator norm of:
\bas{
    \E\bbb{h_{t'}(y_{t'})(x_{t'}-\tilde{x}_{t'})(x_{t'}-\tilde{x}_{t'})^{\top}h_{t'}(y_{t'})^{\top}|x_t}, t' < t
}
for $x_{t'}, \tilde{x}_{t'}$ i.i.d conditioned on $x_{t}$ and $y_{t'} = \lambda x_{t'} + (1-\lambda)\tilde{x}_{t'}$, $\lambda \in (0,1)$. Notice that $y_{t'}|x_t$ is dependent on $x_{t'},\tilde{x}_{t'}|x_t$, which does not allow the use of Tweedie’s second-order formula (Lemma~\ref{lemma:second_order_tweedie_application}) to bound $\E\bbb{(x_{t'}-\tilde{x}_{t'})(x_{t'}-\tilde{x}_{t'})^{\top}|x_t}$ and derive variance bounds that are dimension-free. To approximately allow this argument, we decompose $h_{t'}(y_{t'})$ into two components:
\[
h_{t'}\bb{y_{t'}} = h_{t', \epsilon}\bb{y_{t'}} + \bb{h_{t'}\bb{y_{t'}} - h_{t', \epsilon}\bb{y_{t'}}}.
\]
The first term, \(h_{t', \epsilon}\bb{y_{t'}}\), represents a hessian after being smoothed with an appropriately chosen distribution, which we show satisfies Lipschitz continuity. This allows us to approximate $h_{t',\epsilon}(y_{t'}) \approx h_{t',\epsilon}(e^{\Delta}x_t)$ and bound the variance with Tweedie's second order formula. The second term, which represents the deviation between the original and mollified Hessians, is bound using Lusin's theorem (Lemma~\ref{lemma:lusin_theorem}) to provide approximate uniform continuity for $h_{t'}$, as developed further in Lemma~\ref{lemma:lusin_theorem_decomp}.

Putting together Lemma~\ref{lemma:subGaussianity_parameters_main_paper} and Lemma~\ref{lemma:martingale_variance_bound}, we provide a general concentration tool for martingale difference sequences with bounded variance and subGaussianity in
Lemma~\ref{lemma:martingale_concentration_lemma_main_paper}. We follow a similar proof strategy via a supermartingale argument as in the proof Freedman's inequality (see for e.g. \cite{tropp2011freedman}), but diverge in dealing with subGaussianity instead of almost surely bounded random variables.

\begin{restatable}{lemma}{martingaleconcentrationlemma}\label{lemma:martingale_concentration_lemma_main_paper}
Let $M_{n} = \sum_{i=1}^{n}\langle G_i, Y_i - \bE[Y_i|\mathcal{F}_{i-1}] \rangle, M_{0} = 1$ and the filtration $\left\{\mathcal{F}_{i}\right\}_{i \in [n]}$ be such that $G_i$ is $\mathcal{F}_{i-1}$ measurable and
\begin{enumerate}
    \item  $\langle G_i,Y_i-\bE [Y_i|\mathcal{F}_{i-1}]\rangle$ is $(\beta_i^{2}\|G_i\|^{2}, K_i)$ sub-Gaussian conditioned on $\mathcal{F}_{i-1}$ (where $\beta_i,K_i$ are random variables measurable with respect to $\mathcal{F}_{i-1}$)
    \item $\mathsf{var}(\langle G_i, Y_i - \bE[Y_i|\mathcal{F}_{i-1}]\rangle|\mathcal{F}_{i-1}) \leq \nu^2_i \|G_i\|^2$ and define $J_i := \max(1,\frac{1}{K_i}\log\frac{\beta_i^2 K_i}{\nu^2_i})$.
\end{enumerate}
Pick a $\lambda > 0$ and let $\mathcal{A}_i(\lambda) = \{\lambda J_i\|G_i\|\beta_i\sqrt{K_i} \leq c_0\}$ for some small enough universal constant $c_0$. Then, there exists a universal constant $C > 0$ such that \bas{\forall v > 0, \; \bP(\{\lambda M_n > C \lambda^2\sum_{i=1}^{n}\nu_i^2\|G_i\|^2 + v\}\cap_{i=1}^{n} \mathcal{A}_i(\lambda)) \leq \exp(-v)}
\end{restatable}


Observe that the concentration result developed in Lemma~\ref{lemma:martingale_concentration_lemma_main_paper} (and proved in Lemma~\ref{lemma:martingale_concentration_lemma_appendix}) has two parts. Optimizing over the choice of $\lambda$, it can be shown that the bound on $M_{n}$ depends on two terms: (1) an $\ell_{2}$ term,  $\sum_{i\in[n]}\nu_{i}^{2}\norm{G_i}^{2}$ and (2) an $\ell_{\infty}$ term, $\sup_{i \in [n]}J_{i}\norm{G_{i}}\beta_{i}\sqrt{K_i}$. When applied in our context, these two terms in turn depend on norms, $\norm{f-s}_{2}$ and $\norm{f-s}_{\infty}$. This is where the time-regularity assumption in Assumption~\ref{assumption:score_function_smoothness} plays a crucial role in our analysis. Specifically, it enables us to bridge the $\ell_{\infty}$ and $\ell_{2}$ norm bounds derived from the martingale concentration results in Lemma~\ref{lemma:martingale_concentration_lemma_main_paper}. The proof of Lemma~\ref{lemma:l2_linfnity_bound_f_main_paper} leverages this assumption to relate $\norm{f\bb{t+k\Delta, x_{t+k\Delta}}}_{2}$ to $\norm{f\bb{t, x_{t}}}_{2}$, as shown by:
\bas{ \norm{f\bb{t+k\Delta, x_{t+k\Delta}}}_2 - e^{k\Delta}\norm{f\bb{t, x_{t}}}_{2} \geq - \tilde{\Omega}(L\sqrt{dk\Delta}). } Exploiting this  property over a carefully selected range of $k$ values allows us to relate $\ell_{\infty}$ and $\ell_{2}$ norm bounds as we show in the following Lemma. 

\begin{restatable}{lemma}{ltwolinfinityerrorboundtimeregularitymainpaper}\label{lemma:l2_linfnity_bound_f_main_paper}
    Under Assumption~\ref{assumption:score_function_smoothness}, with probability $1-\delta$, for a universal constant $C  > 0$ the following holds uniformly for every $f \in \cF$:
    \bas{
       &\biggr[\sup_{\substack{i \in [m]\\ j \in [N]}}\norm{f\bb{t_j,x_{j}}-s\bb{t_j,x_{j}}}_{2}\biggr]^{2} \leq C\Delta^{\frac{1}{3}}\biggr[\sum_{\substack{i\in [m] \\ j \in [N]}}\norm{f\bb{t_j,x_{j}}-s\bb{t_j,x_{j}}}_{2}^{2}\biggr] + CL^{2}d\Delta^{\frac{2}{3}}\log\bb{\frac{Nm}{\delta}}
    }
\end{restatable}
Lemma~\ref{lemma:l2_linfnity_bound_f_main_paper} establishes that the simultaneous analysis of all timesteps uses the smoothness across time. In the absence of this approach, the smoothness assumption in the $x_{t}$-space would lack dependence on $\Delta$ and could grow as large as the Lipschitz constant $L$. This is essential for establishing nearly dimension-independent bounds. The proof of this result can be found in Lemma~\ref{lemma:l2_linfnity_bound_f} in the Appendix.
\vspace{-10pt}
\section{Bootstrapped score matching}
\label{sec:bootstrapped_score_matching}
\vspace{-5pt}

In Section~\ref{sec:technical_results},  we used time regularity and could prove nearly $d$-independent bounds. Learning with the same function class across timesteps and Assumption~\ref{assumption:score_function_smoothness} was critical to our proof.

We now attempt to exploit the dependence across timesteps explicitly and reduce variance in estimation. Using the Markovian nature of \eqref{eq:fwd_noise}, we show that for any $t' < t$ and $\alpha_t \in \R$, $s\bb{t, x_t} = \E[\tilde{y}_{t}|x_t]$ for $\tilde{y}_{t} := -\frac{z_t}{\sigma_t^{2}} - \alpha_t(s\bb{t', x_{t'}} - \frac{-z_{t'}}{\sigma_{t'}^{2}})$. This shows that $\tilde{y}_{t}$ can also be used to construct a learning target for the score function. This is in contrast to the target $y_{t} := -\frac{z_t}{\sigma_t^{2}}$ used in \eqref{eq:dsm_total}. The advantage of $\tilde{y}_{t}$ over $y_t$ is in the lower variance of $\tilde{y}_{t}$, as shown in Lemma~\ref{lemma:bootstrap_properties} (proved in Lemmas~\ref{lemma:bootstrap_consistency_appendix}, \ref{lemma:bootstrap_variance}). 

\begin{lemma}[Bootstrap Properties]\label{lemma:bootstrap_properties} Let $\tilde{r}_{t} := \tilde{y}_{t}-s(t, x_t)$. For $t' < t$, let $\Delta := t-t'$ and $\alpha_t := \frac{e^{-\Delta}\sigma_{t'}^{2}}{\sigma_{t}^{2}}$. Then, under Assumption~\ref{assumption:score_function_smoothness}, we have $\E\bbb{\tilde{r}_{t}|x_{t}} = 0 \text{ and } \normop{\E[\tilde{r}_{t}\tilde{r}_{t}^{\top}|x_{t}]}=O\bb{\tfrac{(L^{2}+1)\Delta}{\sigma_{t}^{4}}}$.
\end{lemma}

To compare with $y_{t} = \frac{-z_t}{\sigma_{t}^{2}}$, we note that an application of the second order tweedie's formula along with Assumption~\ref{assumption:score_function_smoothness} shows the variance $\normop{\E[(y_t-s(t, x_t))(y_t-s(t, x_t))^{\top}|x_{t}]}$ to be of the order $O(\frac{L+1}{\sigma_{t}^{2}})$. Therefore, although both $y_{t}$ and $\tilde{y}_{t}$ are unbiased, the variance of $\tilde{y}_{t}$ has an additional step size ($\Delta$) factor in the numerator (see Lemma~\ref{lemma:bootstrap_properties})

The BSM algorithm (described in detail in Appendix~\ref{appendix:bsm}) operates sequentially over a discretized time horizon $0 = t_0 < t_1 < \cdots < t_N = T$ and builds upon the principles of DSM while introducing a novel bootstrapping mechanism to mitigate the increasing variance of the DSM loss in later timesteps. Given a dataset $D = \{x_0^{(i)}\}_{i \in [m]}$ sampled from the data distribution, the perturbed samples at timestep $t_k$ are generated as $x_{t_k}^{(i)} = x_{0}^{(i)} e^{-t_k} + z_{t_k}^{(i)}, \quad z_{t_k}^{(i)} \sim \mathcal{N}(0, \sigma_{t_k}^2 I)$ where $\sigma_{t_k}^2 = 1 - e^{-2t_k}$. The task at each timestep $t_k$ is to estimate an approximate score function $\hat{s}_{t_k}(x)$ to optimize $\mathbb{E}_{x_{t_k}}[\|s(t_k, x) - \hat{s}_{t_k}(x)\|_2^2]$. For the initial timesteps $t_k$ with $k \leq k_0$, the algorithm employs DSM. The score function $\hat{s}_{t_k}$ is obtained by solving $\hat{s}_{t_k} = \arg\min_{f \in \cF_k}  \sum_{i \in [m]} \frac{1}{m}\bigr\|f(t_k, x_{t_k}^{(i)}) - \frac{-z_{t_k}^{(i)}}{\sigma_{t_k}^2}\bigr\|_2^2$.
For later timesteps $t_k$ with $k > k_0$, the algorithm transitions to BSM. At each timestep, the algorithm constructs bootstrapped targets $\tilde{y}_{t_k}^{(i)}$ by combining the DSM target $\frac{-z_{t_k}^{(i)}}{\sigma_{t_k}^2}$ with the previously estimated score $\hat{s}_{t_{k-1}}$. Specifically, the targets are defined as:
\bas{
\tilde{y}_{t_k}^{(i)} &= (1 - \alpha_k) \underbrace{\frac{-z_{t_k}^{(i)}}{\sigma_{t_k}^2}}_{\text{Unbiased Target}} + \alpha_k \biggr( \underbrace{\frac{-z_{t_k}^{(i)}}{\sigma_{t_k}^2} + \left(\hat{s}_{t_{k-1}}( x_{t_{k-1}}^{(i)}) - \frac{-z_{t_{k-1}}^{(i)}}{\sigma_{t_{k-1}}^2} \right)}_{\text{Biased Target}} \biggr)
}
where $\alpha_k = e^{-\gamma_k} \sqrt{\frac{1 - e^{-2t_{k-1}}}{1 - e^{-2t_k}}}$, with $\gamma_k = t_k - t_{k-1}$. Given access to the true score function, $s(t_{k-1}, .)$, then $\tilde{y}_{t_k}^{(i)}$ would form an unbiased target with lower variance, as shown in Lemma~\ref{lemma:bootstrap_properties}. However, since we only have access to the estimated score function, $\hat{s}_{t_{k-1}}$ at the previous timestep, $\tilde{y}_{t_k}^{(i)}$ is a biased target, and the parameter $\alpha_{k}$ weighs between the biased and unbiased targets. The score function, $\hat{s}_{t_k}$, is then learned as: $\hat{s}_{t_k} \leftarrow \arg\min_{f \in \cF_{k}}\sum_{i\in[m]}\frac{1}{m}\bigr\|f(t_k, x_{t_k}^{(i)}) - \tilde{y}_{t_k}^{(i)}\bigr\|_{2}^{2}$.

\section{Conclusion}
\label{sec:conclusion}

To our knowledge, this is \textit{the first work} to establish (nearly) dimension-free sample complexity bounds for learning score functions across noise levels. We show that a mild assumption of time-regularity can significantly improve over previous bounds which have polynomial dependence on $d$. We achieve this with a novel martingale-based analysis with sharp variance bounds, addressing the complexities of learning from dependent data generated by multiple Markov process trajectories. Furthermore, we introduce the Bootstrapped Score Matching (BSM) method, which effectively leverages temporal information to reduce variance and enhance the learning of score functions. While we provide theoretical insights into the training of diffusion models, several open questions still remain. One potential direction is extending our framework to flow-matching models where such bounds could yield further insights. Additionally, while BSM is a compelling algorithm, establishing rigorous theoretical and empirical performance guarantees is an open problem which we leave for future work.

\section*{Acknowledgments}
We gratefully acknowledge NSF grants 2217069, 2019844, and DMS 2109155. Additionally, part of this work was carried out while Syamantak was an intern at Google DeepMind.

\clearpage

\bibliography{refs}
\bibliographystyle{alpha}

\newpage
\onecolumn



\appendix

The Appendix is organized as follows:
\begin{enumerate}
    \item Section~\ref{appendix:utility_results} provides some utility results which will be useful in subsequent proofs.
    \item Section~\ref{appendix:variance_calculation} provides variance calculation for the martingale decomposition.
    \item Section~\ref{appendix:martingale_concentration} analyzes concentration properties for  martingales with bounded variance and subGaussianity, which may be of independent interest.
    \item Section~\ref{appendix:convergence_erm} analyzes convergence of the empirical squared error by providing the martingale decomposition and exploiting the results developed in Sections~\ref{appendix:variance_calculation} and \ref{appendix:martingale_concentration}.
    \item Section~\ref{appendix:generalization_error_bounds} provides generalization bounds to achieve guarantees for the expected squared error.
    \item Section~\ref{appendix:dimension_free_exp} provides details for the experiment conducted in Figure~\ref{fig:dim_free_experiments} in the manuscript.
    \item Section~\ref{appendix:bsm} provides details about the Bootstrapped Score Matching Algorithm described in Section~\ref{sec:bootstrapped_score_matching}.
\end{enumerate}

\section{Utility Results}
\label{appendix:utility_results}

\begin{definition}[norm subGaussian]\label{definition:normsubGaussian}
    We will call a random vector $X \in \bR^d$ to be $\sigma$ norm subGaussian if $\E X = 0$ and $$\bE \exp(\tfrac{\|X\|^2}{\sigma^2}) \leq 2\,.$$ 
\end{definition}
\begin{definition}
    We will call a random vector $X \in \bR^d$ to be $\sigma$ subGaussian if $\E X = 0$ and for every $v \in \bR^d$ and $\lambda \in \bR$ we have: $$\bE \exp(\lambda \langle v,X\rangle) \leq \exp(\tfrac{\lambda^2\|v\|^2\sigma^2}{2})\,.$$ 
\end{definition}

\begin{lemma}\label{lemma:gaussian_exponential_moment}Let $X \sim \mathcal{N}\bb{0, \sigma^{2}\id}$. Then, $X$ is $2\sigma$ norm subGaussian.   
\end{lemma}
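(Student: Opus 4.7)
The plan is to reduce the $d$-dimensional exponential moment to a product of identical scalar ones by exploiting independence of coordinates, and then compute the one-dimensional moment generating function directly. Writing $X = (X_1,\ldots,X_d)$ with $X_i \sim \cN(0,\sigma^2)$ independently across $i$, and plugging into Definition~\ref{definition:normsubGaussian} at parameter $2\sigma$, we get
\[
\E\exp\!\Bigl(\tfrac{\|X\|^2}{(2\sigma)^2}\Bigr) \;=\; \E\exp\!\Bigl(\tfrac{1}{4\sigma^2}\sum_{i=1}^{d}X_i^2\Bigr) \;=\; \prod_{i=1}^{d}\E\exp\!\Bigl(\tfrac{X_i^2}{4\sigma^2}\Bigr).
\]
The mean-zero condition in Definition~\ref{definition:normsubGaussian} is immediate from the hypothesis, so the only remaining step is the scalar MGF, which, after rescaling $Y := X_i/\sigma \sim \cN(0,1)$, amounts to computing $\E\exp(Y^2/4)$.

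For this one-dimensional integral, I would complete the square inside the Gaussian density: the integrand $(2\pi)^{-1/2}\exp(y^2/4 - y^2/2) = (2\pi)^{-1/2}\exp(-y^2/4)$ is, up to a normalization factor, a centered Gaussian density of variance $2$, so its total mass evaluates to $\sqrt{2}$. Equivalently, one may invoke the chi-squared MGF $(1-2t)^{-1/2}$ at $t = 1/4$ applied to $Y^2 \sim \chi_1^2$. Substituting back into the product display and noting that $1/(4\sigma^2) < 1/(2\sigma^2)$ (ensuring integrability) delivers the claimed exponential-moment bound. There is no substantive obstacle here; this is a routine MGF calculation, and the constant $2\sigma$ is precisely the value that makes the per-coordinate factor $\sqrt{2}$, compatible with the threshold $2$ in Definition~\ref{definition:normsubGaussian}.
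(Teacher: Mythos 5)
Your factorization is fine as a computation, but it does not deliver the stated bound: each coordinate contributes $\E\bbb{\exp\bb{X_i^2/4\sigma^2}} = (1-\tfrac12)^{-1/2} = \sqrt{2}$, so the product over $d$ independent coordinates is $2^{d/2}$, not something $\leq 2$. For $d \geq 3$ this exceeds the threshold $2$ in Definition~\ref{definition:normsubGaussian}, so your concluding claim that ``the constant $2\sigma$ is precisely the value that makes the per-coordinate factor $\sqrt{2}$ compatible with the threshold $2$'' is false except in dimension $d \leq 2$. The gap is that you ignored the dimension dependence of the norm: $\norm{X}_2^2$ concentrates around $d\sigma^2$, so no dimension-free scale $c\sigma$ can make $\E\bbb{\exp\bb{\norm{X}_2^2/c^2\sigma^2}} \leq 2$ uniformly in $d$.

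The paper's own proof makes this explicit: it uses the $\chi^2_d$ moment generating function $\E\bbb{\exp\bb{t\norm{X}_2^2/\sigma^2}} = (1-2t)^{-d/2}$ with the \emph{dimension-dependent} choice $t = \tfrac{1}{4d}$, which gives $\bb{1-\tfrac{1}{2d}}^{-d/2} \leq 2$; this establishes that $X$ is $2\sigma\sqrt{d}$ norm subGaussian (the exponent being $\norm{X}_2^2/(4\sigma^2 d)$), and indeed that is the form invoked downstream, e.g.\ in Lemma~\ref{eq:time_smoothness_of_score_function} where the bound $\E\bbb{\exp\bb{\norm{z_{t,t'}}_2^2/(4\sigma_{t-t'}^2 d)}} \leq 2$ is used. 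So the lemma statement as printed omits the $\sqrt{d}$ factor, and your argument, by taking the statement at face value and aiming for $2\sigma$, cannot close: you would need to insert the $1/d$ rescaling (equivalently take $t = 1/(4d)$ in the $\chi^2$ MGF, or note $\prod_i \E\bbb{\exp\bb{X_i^2/(4d\sigma^2)}} = (1-\tfrac{1}{2d})^{-d/2} \leq 2$) to obtain a correct, dimension-consistent conclusion.
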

\begin{proof}
    Consider the random variable $y := \frac{\norm{X}_{2}^2}{\sigma^{2}}$. Then, $y \sim \chi(d)$ follows the chi-squared distribution with $d$ degrees of freedom. Therefore, for any $t < \frac{1}{2}$, \bas{\E\bbb{\exp\bb{t\frac{\norm{X}_{2}^{2}}{\sigma^{2}}}} &= \bb{1 - 2t}^{-\frac{d}{2}}}
    Setting $t = \frac{1}{4d}$, we have
    \bas{\E\bbb{\exp\bb{\frac{\norm{X}_{2}^{2}}{\bb{2\sigma}^{2}}}} &= \bb{1 - \frac{1}{2d}}^{-\frac{d}{2}} = \bb{\bb{1 - \frac{1}{2d}}^{-2d}}^{\frac{1}{4}} \leq 2
    }
\end{proof}

\begin{lemma}\label{lemma:smootH^{f}n_subgauss}
    For all $t > 0$, $x_{1}, x_{2} \in \R^{d}$, consider any function $u : \R^{d} \rightarrow \R^d$ satisfying $\norm{u\bb{x_{1}} - u\bb{x_{2}}}_{2} \leq S\norm{x_{1}-x_{2}}_{2}$, where $S > 0$ is a fixed constant. For timesteps $0 \leq t' < t$, consider the random variable 
    \bas{
        q_{t,t'} := u\bb{x_{t'}} - \E\bbb{u\bb{x_{t'}}|x_{t}}
    }
    where $x_{t}$ is defined in $\eqref{eq:fwd_noise}$. Then, $q_{t,t'}$ is $\phi\sqrt{d}$ norm subGaussian for 
    \bas{
        \phi := 4S e^{\Delta}\sqrt{1-e^{-2\Delta}}
    }
    where $\Delta := t-t'$.
\end{lemma}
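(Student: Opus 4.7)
The plan is to exploit the identity $x_{t'} = e^\Delta(x_t - z_{t,t'})$, which follows from integrating the OU SDE \eqref{eq:fwd_noise} between $t'$ and $t$; here $z_{t,t'}\sim\cN(\mzero,\sigma_\Delta^{2}\vI)$ with $\sigma_\Delta^{2}:=1-e^{-2\Delta}$ is unconditionally independent of $x_{t'}$. Conditionally on $x_t$, $z_{t,t'}$ can have a very complicated non-Gaussian distribution (its density is Gaussian tilted by $p_{t'}$), but the $S$-Lipschitz structure of $u$ will let us bound $q_{t,t'}$ entirely in terms of $\|z_{t,t'}\|$, sidestepping any explicit description of the posterior.

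First, I insert the $x_t$-measurable anchor $u(e^\Delta x_t)$ (the value $u(x_{t'})$ would take if $z_{t,t'}$ vanished), apply the triangle inequality, use $x_{t'}-e^\Delta x_t = -e^\Delta z_{t,t'}$ together with $S$-Lipschitzness of $u$ to control the first summand, and use conditional Jensen $\|\E[\cdot\mid x_t]\|\leq \E[\|\cdot\|\mid x_t]$ for the second. This yields the pointwise bound $\|q_{t,t'}\|\leq Se^\Delta\|z_{t,t'}\|+Se^\Delta\,\E\bbb{\|z_{t,t'}\|\mid x_t}$. Squaring via $(a+b)^2\leq 2a^2+2b^2$ and invoking conditional Jensen once more in the form $\E[\|z_{t,t'}\|\mid x_t]^2\leq \E[\|z_{t,t'}\|^2\mid x_t]$ gives $\|q_{t,t'}\|^2\leq 2S^2 e^{2\Delta}\bigl(\|z_{t,t'}\|^2+\E[\|z_{t,t'}\|^2\mid x_t]\bigr)$.

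To obtain the exponential moment required by Definition~\ref{definition:normsubGaussian}, I divide both sides by $\phi^{2}d=16S^{2}e^{2\Delta}\sigma_\Delta^{2}d$ and take unconditional expectation. Cauchy--Schwarz separates the two summands, and the tower/Jensen step $\E\exp(\E[Y\mid x_t])\leq \E\exp(Y)$ reduces the second factor to the first. Both factors then collapse to the standard chi-squared MGF $\E\exp\bigl(\|z_{t,t'}\|^{2}/(4\sigma_\Delta^{2}d)\bigr)=(1-\tfrac{1}{2d})^{-d/2}$, whose maximum over $d\geq 1$ is attained at $d=1$ and equals $\sqrt{2}$. Combining the pieces yields $\E\exp\bigl(\|q_{t,t'}\|^{2}/(\phi^{2}d)\bigr)\leq \sqrt{2}\leq 2$, and $\E q_{t,t'}=0$ is immediate from the tower property, completing the verification per Definition~\ref{definition:normsubGaussian}.

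The main temptation to avoid is attacking $q_{t,t'}$ by studying the posterior $x_{t'}\mid x_t$ directly via the second-order Tweedie identity $\Cov(x_{t'}\mid x_t)=e^{2\Delta}(\sigma_\Delta^{2}\vI+\sigma_\Delta^{4}\nabla^{2}\log p_t(x_t))$: for non--log-concave $p_{t'}$ this conditional covariance can be arbitrarily large, as a two-point prior $\pi=\tfrac{1}{2}(\delta_M+\delta_{-M})$ with $M\to\infty$ shows. The Lipschitz-plus-Jensen route cleanly dodges this pitfall because the rare $x_t$ on which the posterior disperses automatically carry exponentially small marginal mass, and the Cauchy--Schwarz--Jensen composition captures this trade-off unconditionally without any explicit description of $z_{t,t'}\mid x_t$. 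Consequently no structural assumption on $p_0$ (log-concavity, bounded support, etc.) is required beyond what is used to define the OU process itself.
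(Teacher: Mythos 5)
Your proof is correct, and it reaches the same constant $\phi = 4Se^{\Delta}\sigma_{\Delta}$ by a genuinely different decomposition than the paper. The paper symmetrizes: conditioned on $x_t$ it introduces an i.i.d.\ copy $x_{t'}'$ of $x_{t'}$, writes $q_{t,t'} = \E_{x_{t'}'}\bbb{u(x_{t'})-u(x_{t'}')\,|\,x_t}$, pushes the exponential through two applications of Jensen, and then uses $x_{t'}-x_{t'}' = e^{\Delta}(w_{t,t'}'-w_{t,t'})$ together with $\|w-w'\|^2\le 2\|w\|^2+2\|w'\|^2$ and convexity of $\exp$, so that only the \emph{marginal} Gaussianity of the two noise increments is ever used. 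You instead avoid the independent copy entirely by anchoring at the $x_t$-measurable point $u(e^{\Delta}x_t)$, using $x_{t'}-e^{\Delta}x_t = -e^{\Delta}z_{t,t'}$ and the triangle inequality to get the pointwise bound $\|q_{t,t'}\|\le Se^{\Delta}\bb{\|z_{t,t'}\|+\E\bbb{\|z_{t,t'}\|\,|\,x_t}}$, and then handling the cross term with Cauchy--Schwarz plus the Jensen/tower step $\E\exp\bb{\E[Y|x_t]}\le \E\exp(Y)$; both routes collapse to the same chi-squared MGF $\bb{1-\tfrac{1}{2d}}^{-d/2}$, where your $\sqrt{2}$ bound (or the paper's Lemma~\ref{lemma:gaussian_exponential_moment} bound of $2$) suffices. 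The two arguments share the same essential insight — Lipschitzness plus Jensen lets you price everything by the marginal law of the OU increment, sidestepping the intractable posterior $x_{t'}\mid x_t$, which is exactly the pitfall you correctly flag — but your anchoring version is arguably slightly more economical (one fewer auxiliary random variable), while the paper's symmetrization has the mild advantage of never needing a squared triangle-inequality step with its attendant factor bookkeeping. Also note your mean-zero verification and the identity $x_{t'}=e^{\Delta}(x_t-z_{t,t'})$ match what the paper uses, so no gap there.
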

\begin{proof}
    We first note that 
    \bas{
        \E_{x_{t},x_{t'}}\bbb{q_{t,t'}} &= \E_{x_{t'}}\bbb{u_{t'}\bb{x_{t'}}} - \E_{x_{t}}\bbb{ \E\bbb{u_{t'}\bb{x_{t'}}|x_{t}}} = 0 
    }
    Using Lemma 1 from \cite{jin2019short}, we show that $\E_{x_{t'},x_{t}}\bbb{\exp\bb{\frac{\norm{q_{t,t'}}_{2}^{2}}{\phi^{2}d}}} \leq 2$. Let $x_{t'}'$ be an $\iid$ copy of $x_{t'}$, conditioned on $x_{t}$. Then, we have,
    \ba{\E_{x_{t'},x_{t}}\bbb{\exp\bb{\frac{\norm{q_{t,t'}}_{2}^{2}}{\phi^{2}d}}} &=  \E_{x_{t}}\bbb{\E_{x_{t'}}\bbb{\exp\bb{\frac{\norm{q_{t,t'}}_{2}^{2}}{\phi^{2}d}}\bigg|x_{t}}} \notag \\
        &= \E_{x_{t}}\bbb{\E_{x_{t'}}\bbb{\exp\bb{\frac{\norm{u\bb{x_{t'}} - \E_{x_{t'}}\bbb{u\bb{x_{t'}}|x_{t}}}_{2}^{2}}{\phi^{2}d}}\bigg|x_{t}}} \notag \\
        &= \E_{x_{t}}\bbb{\E_{x_{t'}}\bbb{\exp\bb{\frac{\norm{u\bb{x_{t'}} - \E_{x_{t'}'}\bbb{u\bb{x_{t'}'}|x_{t}}}_{2}^{2}}{\phi^{2}d}}\bigg|x_{t}}} \\
        &= \E_{x_{t}}\bbb{\E_{x_{t'}}\bbb{\exp\bb{\frac{\norm{\E_{x_{t'}'}\bbb{u\bb{x_{t'}} - u\bb{x_{t'}'}|x_{t}}}_{2}^{2}}{\phi^{2}d}}\bigg|x_{t}}} \notag \\
        &\leq \E_{x_{t}}\bbb{\E_{x_{t'}}\bbb{\exp\bb{\frac{\E_{x_{t'}'}\bbb{\norm{u\bb{x_{t'}} - u\bb{x_{t'}'}}_{2}^{2}|x_{t}}}{\phi^{2}d}}\bigg|x_{t}}} \notag \\
        &\leq \E_{x_{t}}\bbb{\E_{x_{t'},x_{t'}'}\bbb{\exp\bb{\frac{\norm{u\bb{x_{t'}} - u\bb{x_{t'}'}}_{2}^{2}}{\phi^{2}d}}\bigg|x_{t}}} \notag \\
        &\leq \E_{x_{t}}\bbb{\E_{x_{t'},x_{t'}'}\bbb{\exp\bb{\frac{S^{2}\norm{x_{t'}-x_{t'}'}_{2}^{2}}{\phi^{2}d}}\bigg|x_{t}}} \label{eq:stprime_mgfbound_2_new}
    }
    Note that using \eqref{eq:fwd_noise}, $x_{t} = e^{-\Delta}x_{t'} + w_{t,t'} = e^{-\Delta}x_{t'}' + w_{t,t'}'$,
    for $w_{t,t'}, w_{t,t'}' \sim \normal\bb{0,\sigma_{t-t'}^{2}\id_{d}}$. Therefore, from \eqref{eq:stprime_mgfbound_2_new}, 
    \bas{
        \E_{x_{t'},x_{t}}\bbb{\exp\bb{\frac{\norm{q_{t,t'}}_{2}^{2}}{\phi^{2}d}}} &\leq \E_{x_{t}}\bbb{\E_{w_{t,t'},w_{t,t'}'}\bbb{\exp\bb{\frac{S^{2}e^{2\Delta}\norm{w_{t,t'}-w_{t,t'}'}_{2}^{2}}{\phi^{2}d}}\bigg|x_{t}}} \\ &= \E_{w_{t'},w_{t'}'}\bbb{\exp\bb{\frac{S^{2}e^{2\Delta}\norm{w_{t,t'}-w_{t,t'}'}_{2}^{2}}{\phi^{2}d}}} \\
        &\leq \E_{w_{t'},w_{t'}'}\bbb{\exp\bb{\frac{2S^{2}e^{2\Delta}(\norm{w_{t,t'}}_{2}^2+\norm{w_{t,t'}'}_{2}^{2})}{\phi^{2}d}}} \\
        &\leq \frac{1}{2}\E_{w_{t'},w_{t'}'}\bbb{\exp\bb{\frac{4S^{2}e^{2\Delta}\norm{w_{t,t'}}_{2}^2}{\phi^{2}d}}} + \frac{1}{2}\E_{w_{t'},w_{t'}'}\bbb{\exp\bb{\frac{4S^{2}e^{2\Delta}\norm{w'_{t,t'}}_{2}^2}{\phi^{2}d}}}\\
        &\leq 2
    }
    where the last inequality follows since $w_{t,t'}, w_{t,t'}' \sim \normal\bb{0,\sigma_{t-t'}^{2}\id_{d}}$ marginally (but not necessarily conditionally). 
\end{proof}
\begin{lemma}\label{eq:time_smoothness_of_score_function} Fix $\delta > 0$. Let $t > t'$.  Then, under Assumption~\ref{assumption:score_function_smoothness}-(1), with probability at least $1-\delta$ over $x_{t}$, 
\bas{
    \norm{e^{-(t-t')}s(t, x_{t})-s(t', e^{(t-t')}x_{t})}_{2} \leq e^{(t-t')}L\sqrt{8d(t-t')\log\bb{\frac{2}{\delta}}}
}
where $\sigma_{t-t'}^{2} := 1-e^{-2\bb{t-t'}} \leq 2\bb{t-t'}$.
\end{lemma}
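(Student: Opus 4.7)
The plan is to first establish the identity
\[
s(t, x_t) = e^{\Delta}\, \E\bbb{s(t', x_{t'}) \mid x_t}, \qquad \Delta := t - t',
\]
which expresses the score at the later time as an $e^{\Delta}$-scaled conditional expectation of the score at the earlier time. I would prove this by differentiating the marginalization $p_t(x) = \int p_{t \mid t'}(x \mid y)\, p_{t'}(y)\, dy$ under the integral sign (valid by Assumption~\ref{assumption:score_function_smoothness}-0), using that the Gaussian transition kernel $p_{t \mid t'}(x \mid y) \propto \exp(-\norm{x - e^{-\Delta}y}_2^2/(2\sigma_{\Delta}^2))$ satisfies $\nabla_x p_{t \mid t'}(x \mid y) = -e^{\Delta}\, \nabla_y p_{t \mid t'}(x \mid y)$, and then integrating by parts in $y$; the Gaussian decay of $p_{t \mid t'}$ in $y$ kills the boundary terms.

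Once the identity is in hand, I would write
\[
e^{-\Delta} s(t, x_t) - s(t', e^{\Delta} x_t) = \E\bbb{s(t', x_{t'}) - s(t', e^{\Delta} x_t) \mid x_t},
\]
apply conditional Jensen to pull the squared norm inside, and invoke $L$-Lipschitzness of $s(t', \cdot)$ from Assumption~\ref{assumption:score_function_smoothness}-1. Using the noising relation $x_t = e^{-\Delta} x_{t'} + z_{t,t'}$, the displacement simplifies algebraically to $x_{t'} - e^{\Delta} x_t = -e^{\Delta} z_{t,t'}$, so
\[
\norm{e^{-\Delta} s(t, x_t) - s(t', e^{\Delta} x_t)}_2^2 \;\leq\; L^2 e^{2\Delta}\, \E\bbb{\norm{z_{t,t'}}_2^2 \mid x_t}.
\]

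It then remains to show $\E\bbb{\norm{z_{t,t'}}_2^2 \mid x_t} \leq 8 d \Delta \log(2/\delta)$ with probability at least $1-\delta$ over $x_t \sim p_t$. A naive Markov bound on the conditional expectation only yields $1/\delta$ dependence, so obtaining the correct logarithmic dependence is the main technical point. I would use a Chernoff-style argument: the Gaussian MGF identity $\E \exp(\lambda \norm{z_{t,t'}}_2^2) = (1-2\lambda \sigma_{\Delta}^2)^{-d/2}$ for $\lambda < 1/(2\sigma_{\Delta}^2)$, combined with Jensen (pushing the conditional expectation inside the exponential), gives $\E \exp(\lambda\, \E\bbb{\norm{z_{t,t'}}_2^2 \mid x_t}) \leq (1-2\lambda\sigma_{\Delta}^2)^{-d/2}$. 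Choosing $\lambda = 1/(4\sigma_{\Delta}^2)$, applying Markov, and using $\sigma_{\Delta}^2 = 1-e^{-2\Delta} \leq 2\Delta$ delivers the tail bound after a simple manipulation; taking square roots in the previous display then closes the argument.

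The step I expect to require the most care is the derivation of the identity, since it rests on differentiation under the integral sign and integration by parts --- the Gaussian decay of $p_{t\mid t'}$ in $y$ together with Assumption~\ref{assumption:score_function_smoothness}-0 should make this routine, but it is the most delicate step. The Jensen--Lipschitz reduction and the subexponential tail bound are otherwise standard.
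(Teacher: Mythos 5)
Your proposal is correct and follows essentially the same route as the paper: the identity $s(t,x_t)=e^{t-t'}\E[s(t',x_{t'})\mid x_t]$ (which the paper simply cites as Corollary 2.4 of the target-score-matching reference rather than re-deriving via integration by parts), followed by Lipschitzness plus conditional Jensen to reduce everything to $\E[\norm{z_{t,t'}}_2^2\mid x_t]$, and then a chi-square MGF argument combined with Markov and Jensen to get the $\log(2/\delta)$ tail with probability $1-\delta$ over $x_t$. Your only deviations are cosmetic (applying Jensen before rather than after Markov, and using the squared-norm form of Jensen), and your constants indeed fit within the stated $8d(t-t')\log(2/\delta)$ bound.
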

\begin{proof}
    Using Corollary 2.4 from \cite{de2024target}, 
    \ba{
        s\bb{t, x_t} &= e^{t-t'}\E\bbb{s\bb{t', x_t'}|x_{t}} \label{eq:tsm_1}
    }
    Using \eqref{eq:fwd_noise}, 
    \ba{
    x_{t} = e^{-\bb{t-t'}}x_{t'} + z_{t,t'}, \text{ for } z_{t,t'} \sim \mathcal{N}\bb{0,\sigma_{t-t'}^{2}\id} \label{eq:t_tprime_sde}
    }
    Where $z_{t,t'}$ is independent of $x_{t'}$.
     Let $y_{t,t'} := e^{-(t-t')}s(t, x_{t})-s(t', e^{(t-t')}x_{t})$. Then, 
    \bas{
        \|y_{t,t'}\| &= \|e^{-(t-t')}s_{t}(x_t)-s(t', e^{(t-t')}x_t)\| \\
        &= \|\E\bbb{s\bb{t', x_t'}|x_{t} } - s(t', e^{(t-t')}x_t)\| \\
        &= \bigr\|\E\bbb{s_{t'}\bb{e^{(t-t')}(x_t - z_{t,t'})} - s(t', e^{(t-t')}x_t)|x_{t}}\bigr\| \\
        &\leq e^{t-t'}L\E\bbb{\norm{z_{t,t'}}_{2}|x_{t}}
    }

    
    Note that since $z_{t,t'} \sim \mathcal{N}\bb{0,\sigma_{t-t'}^{2}\id}$, 
    \bas{
        \E\bbb{\exp\bb{\frac{\norm{z_{t,t'}}_{2}^{2}}{4\sigma_{t-t'}^{2} d}}} \leq 2, \text{ using Lemma~}\ref{lemma:gaussian_exponential_moment}
    }
    Therefore, with probability at least $1-\delta$ over $x_t$:
    \bas{
       \E\bbb{\exp\bb{\frac{\norm{z_{t,t'}}_{2}^{2}}{4\sigma_{t-t'}^{2} d}}\bigg|x_{t}} \leq \frac{2}{\delta}, \text{ using Markov's inequality} 
    }
    Using Jensen's inequality, 
    \bas{
        \exp\bb{\frac{\E\bbb{\norm{z_{t,t'}}_{2}^{2}|x_{t}}}{4\sigma_{t-t'}^{2} d}} \leq \E\bbb{\exp\bb{\frac{\norm{z_{t,t'}}_{2}^{2}}{4\sigma_{t-t'}^{2} d}}\bigg|x_{t}} \leq \frac{2}{\delta}
    }
    The result then follows by taking log on both sides.
\end{proof}


\begin{lemma}\label{lemma:subGaussianity_1}
    Let $w_{t,t'} := z_{t,t'} + \sigma_{t-t'}^{2}s\bb{t, x_t}$ for $t > t' \geq 0$. Then, $w_{t,t'}$ is $\nu_{t,t'}\sqrt{d}$ norm subGaussian for $\nu_{t,t'} := 4\sigma_{t-t'}$.
\end{lemma}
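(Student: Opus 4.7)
The plan is to reduce the claim to a direct application of Lemma~\ref{lemma:smootH^{f}n_subgauss}. First, I would invoke Tweedie's formula in the form $\E\bbb{z_{t,t'}|x_t} = -\sigma_{t-t'}^{2} s\bb{t, x_t}$, which follows by writing $p_t$ as the convolution of $p_{t'}$ with the $\normal\bb{0, \sigma_{t-t'}^{2}\id}$ density and differentiating under the integral (justified by Assumption~\ref{assumption:score_function_smoothness}-0). Consequently $w_{t,t'} = z_{t,t'} - \E\bbb{z_{t,t'}|x_t}$, which is mean-zero by the tower property.

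Second, I would use the SDE identity $z_{t,t'} = x_t - e^{-\bb{t-t'}}x_{t'}$ to rewrite
\bas{w_{t,t'} = -e^{-\bb{t-t'}}\bb{x_{t'} - \E\bbb{x_{t'}|x_t}} = u\bb{x_{t'}} - \E\bbb{u\bb{x_{t'}}|x_t},}
where $u\bb{x} := -e^{-\bb{t-t'}} x$ is Lipschitz with constant $S = e^{-\bb{t-t'}}$. The additive $x_t$ contribution cancels inside the conditional deviation, so this is exactly the form of the $q_{t,t'}$ variable appearing in Lemma~\ref{lemma:smootH^{f}n_subgauss}. An application of that lemma with this choice of $u$ then yields that $w_{t,t'}$ is $\phi\sqrt{d}$-norm subGaussian for $\phi = 4 S e^{t-t'}\sqrt{1 - e^{-2\bb{t-t'}}} = 4\sigma_{t-t'} = \nu_{t,t'}$, which is exactly the claimed bound.

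I do not anticipate any real obstacle here: the only nontrivial step is the Tweedie identification, which is a routine convolution-and-differentiation calculation under Assumption~\ref{assumption:score_function_smoothness}-0. The key observation that makes the reduction essentially free is that once $z_{t,t'}$ is centered against its conditional mean given $x_t$, the dependence on $x_t$ drops out and what remains is a Lipschitz function of $x_{t'}$ alone, so no fresh symmetrization or Jensen-type machinery is required beyond what is already established in Lemma~\ref{lemma:smootH^{f}n_subgauss}.
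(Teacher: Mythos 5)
Your proposal is correct and mirrors the paper's own argument: both use Tweedie's formula to identify $w_{t,t'} = z_{t,t'} - \E\bbb{z_{t,t'}|x_t} = u\bb{x_{t'}} - \E\bbb{u\bb{x_{t'}}|x_t}$ with the linear map $u\bb{x} = -e^{-(t-t')}x$, then apply Lemma~\ref{lemma:smootH^{f}n_subgauss} with $S = e^{-(t-t')}$ to obtain the constant $4\sigma_{t-t'}$. No gaps; the cancellation of the $x_t$ term you note is exactly the step the paper relies on.
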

\begin{proof}
Notice that $x_t = e^{t'-t}x_{t'} + z_{t,t'}$
Using Tweedie's formula, $s\bb{t, x_t} = -\E\bbb{\frac{z_{t,t'}}{\sigma_{t-t'}^{2}}\bigg|x_{t}}$. Therefore, 
\bas{
    e^{t-t'}\sigma_{t-t'}^2 s_t(x_t) + e^{t-t'}x_t &= \E [x_{t'}|x_t]
    \implies w_{t,t'} = -e^{t'-t}x_t + \E[e^{t'-t}x_{t'}|x_t]
} 
Applying Lemma~\ref{lemma:smootH^{f}n_subgauss} with $u(x) = -e^{t'-t}x$ (which is $e^{t'-t}$ Lipschitz), we conclude the result.
\end{proof}

\begin{lemma}\label{lemma:subGaussianity_2}
Suppose Assumption~\ref{assumption:score_function_smoothness}-(1) holds. Let $v_{t,t'} := \E[x_0|x_{t}]-\E[x_0|x_{t'}]$ for $t > t' \geq 0$. Then, $v_{t,t'}$ is $\rho_{t,t'}\sqrt{d}$ norm subGaussian for 
    \bas{
        \rho_{t,t'} := 8\bb{L + 1}e^{t}\sigma_{t-t'}
    }
\end{lemma}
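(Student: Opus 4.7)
The plan is to reduce the claim directly to Lemma~\ref{lemma:smootH^{f}n_subgauss}, which is tailored to quantities of the form $u(x_{t'}) - \E[u(x_{t'})|x_{t}]$ for a Lipschitz map $u : \R^d \to \R^d$. The key observation is that, by the Markov property of the OU process, $\E[x_{0}|x_{t}]$ can be rewritten exactly in that form with $u$ being the Tweedie reconstruction at time $t'$.

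First, I would apply Tweedie's formula~\eqref{eq:tweedies_formula} at time $t'$ to identify the map $u(x) := e^{t'}\bb{x + \sigma_{t'}^{2}\, s(t', x)}$, which satisfies $\E[x_{0}|x_{t'}] = u(x_{t'})$. Since $x_{0} \to x_{t'} \to x_{t}$ is a Markov chain, the tower property yields
\[
\E[x_{0}|x_{t}] = \E\bb{\E[x_{0}|x_{t'}]\,\big|\,x_{t}} = \E\bbb{u(x_{t'})\,\big|\,x_{t}}.
\]
Hence $v_{t,t'} = \E[u(x_{t'})|x_{t}] - u(x_{t'})$, whose Euclidean norm coincides with that of the centered variable $u(x_{t'}) - \E[u(x_{t'})|x_{t}]$ analyzed in Lemma~\ref{lemma:smootH^{f}n_subgauss}.

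Second, I would verify Lipschitz continuity of $u$. By Assumption~\ref{assumption:score_function_smoothness}-(1), $s(t', \cdot)$ is $L$-Lipschitz, so
\[
\norm{u(x) - u(y)}_{2} \leq e^{t'}\bb{1 + \sigma_{t'}^{2} L}\norm{x-y}_{2} \leq e^{t'}(L+1)\norm{x-y}_{2},
\]
using $\sigma_{t'}^{2} \leq 1$. Setting $S := e^{t'}(L+1)$ and invoking Lemma~\ref{lemma:smootH^{f}n_subgauss} with this $u$ and $\Delta := t - t'$ gives norm subGaussianity parameter
\[
\phi = 4 S e^{\Delta}\sqrt{1-e^{-2\Delta}} \leq 4\,e^{t}(L+1)\,\sigma_{t-t'},
\]
which is majorized by the claimed $\rho_{t,t'} = 8(L+1)e^{t}\sigma_{t-t'}$ with a factor of two to spare.

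There is no serious obstacle here: the whole argument is Tweedie plus Markov/tower plus Lemma~\ref{lemma:smootH^{f}n_subgauss}. The only conceptual step is the tower rewrite, which reinterprets the score-at-$t$ quantity $\E[x_0|x_t]$ as the conditional expectation (given $x_t$) of a \emph{Lipschitz} function of $x_{t'}$; this is precisely the structural input the norm subGaussian machinery in Lemma~\ref{lemma:smootH^{f}n_subgauss} is designed to exploit, and the Lipschitz constant of $u$ transparently picks up the $(L+1)$ and $e^{t'}$ factors appearing in $\rho_{t,t'}$.
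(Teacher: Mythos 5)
Your proof is correct, and it takes a more streamlined route than the paper's. The paper decomposes $v_{t,t'}$ into two pieces, $T_1 = e^{t}\bb{z_{t,t'} + \sigma_{t-t'}^{2}s\bb{t,x_t}}$ and $T_2 = e^{t'}\sigma_{t'}^{2}\bb{s\bb{t',x_{t'}} - e^{-(t-t')}s\bb{t,x_t}}$, bounds $T_1$ via Lemma~\ref{lemma:subGaussianity_1} and $T_2$ via Lemma~\ref{lemma:smootH^{f}n_subgauss} applied to $u = e^{t'}\sigma_{t'}^{2}s(t',\cdot)$, and then combines the two norm-subGaussian parameters (which is where the slack giving the constant $8$ is spent). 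You instead observe, via the Markov/tower identity $\E[x_0|x_t] = \E\bbb{\E[x_0|x_{t'}]\,|\,x_t}$, that $v_{t,t'}$ is exactly of the form handled by Lemma~\ref{lemma:smootH^{f}n_subgauss} with the single Tweedie posterior-mean map $u(x) = e^{t'}\bb{x + \sigma_{t'}^{2}s(t',x)}$, which is $e^{t'}(L+1)$-Lipschitz under Assumption~\ref{assumption:score_function_smoothness}-(1) since $\sigma_{t'}^2 \le 1$; one invocation of the lemma then yields the parameter $4(L+1)e^{t}\sigma_{t-t'} \le \rho_{t,t'}$, and the sign flip and mean-zero check are immediate. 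Your route avoids both Lemma~\ref{lemma:subGaussianity_1} and the sum-of-norm-subGaussians step, and even gives a slightly sharper constant; the paper's two-term decomposition has the separate merit that the same split reappears in the variance calculation (Lemma~\ref{lemma:conditional_hessian}), where the two pieces must be treated differently. The only implicit caveat, shared with the paper's own proof, is that Assumption~\ref{assumption:score_function_smoothness}-(1) is stated for $t \in \timeset$, so Lipschitzness of $s(t',\cdot)$ must hold at the time $t'$ being used.
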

\begin{proof}
    Using Tweedie's formula, for all $t > 0$,
    \bas{
        \E\bbb{x_{0}|x_{t}} &= \E\bbb{e^{t}\bb{x_{t}-z_{t}}|x_{t}} = e^{t}x_{t} + e^{t}\E\bbb{-z_{t}|x_{t}} = e^{t}\bb{x_{t} + \sigma_{t}^{2}s\bb{t, x_{t}}}
    }
    
    Note that $x_{t'} = e^{t-t'}\bb{x_t - z_{t,t'}}$. Furthermore, note that
    \bas{
        \E\bbb{z_{t,t'}|x_{t}} = -\sigma_{t-t'}^{2}s\bb{t, x_t}, \;\; \E\bbb{s_{t'}\bb{x_{t'}}|x_t} = e^{-\bb{t-t'}}s\bb{t, x_t}
    }
    Therefore, we have 
    \bas{
        v_{t,t'} &= \underbrace{e^{t}\bb{z_{t,t'} + \sigma_{t-t'}^{2}s\bb{t, x_t}}}_{:= T_1} - \underbrace{e^{t'}\sigma_{t'}^{2}\bb{s_{t'}\bb{x_{t'}} - e^{-\bb{t-t'}}s\bb{t, x_t}}}_{:= T_2}
    }
    Using Lemma~\ref{lemma:subGaussianity_1}, $T_1$ is $4e^{t}\sigma_{t-t'}\sqrt{d}$ norm subGaussian. Using Lemma~\ref{lemma:smootH^{f}n_subgauss}, $T_2$ is $4L e^{t-t'}e^{t'}\sigma_{t'}^{2}\sigma_{t-t'}\sqrt{d} = 4Le^{t}\sigma_{t'}^{2}\sigma_{t-t'}\sqrt{d}$ norm subGaussian. Therefore, the result follows using the sum of subGaussian random variables.
\end{proof}

\begin{lemma}\label{lemma:sum_bound_1}
Let $\Delta > 0$ and $\Delta < c_0$ for some universal constant $c_0$. Then, \begin{enumerate}
    \item $\sum_{k =1}^{N}\sum_{j = k}^{N}\frac{e^{2\bb{k-j}\Delta}}{\bb{1-e^{-2\Delta j}}^{2}} \leq \frac{1}{1 - e^{-2\Delta}}\bb{N + \frac{1}{1-e^{-2\Delta}}}$
    \item $\sum_{j=1}^{N}\frac{e^{-2\Delta\bb{j-1}}}{\bb{1-e^{-2\Delta j}}^{2}} \leq \frac{2}{\bb{1-e^{-2\Delta }}^{2}}$
    \item     
    $\sum_{j=1}^{N}\frac{e^{-\Delta\bb{j-1}}}{1-e^{-2\Delta j}} \leq \frac{e^{-\Delta }}{1-e^{-2\Delta }} + \frac{\log(\tfrac{1}{\Delta})}{2\Delta}$
\end{enumerate}
\end{lemma}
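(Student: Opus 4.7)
All three inequalities involve sums of simple functions of $e^{-2\Delta j}$, and share a common toolkit: swap of summation order, closed form for geometric series, and integral/telescoping comparison for monotone summands. I would treat each part independently.

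For (1), I swap the order of summation so that for each $j$ the inner sum over $k$ is a geometric series $\sum_{k=1}^{j}e^{-2(j-k)\Delta} = (1-e^{-2j\Delta})/(1-e^{-2\Delta})$. This cancels one power of $(1-e^{-2\Delta j})$ in the denominator, reducing the expression to $\frac{1}{1-e^{-2\Delta}}\sum_{j=1}^{N}\frac{1}{1-e^{-2\Delta j}}$. Then decompose $\frac{1}{1-e^{-2\Delta j}} = 1 + \frac{e^{-2\Delta j}}{1-e^{-2\Delta j}}$: the ``$1$'' contributes the $N$ term, and the remaining tail $\sum_j \frac{e^{-2\Delta j}}{1-e^{-2\Delta j}}$ is controlled using monotonicity $1-e^{-2\Delta j}\geq 1-e^{-2\Delta}$ together with the geometric sum $\sum_j e^{-2\Delta j}\leq 1/(1-e^{-2\Delta})$, producing the $1/(1-e^{-2\Delta})$ correction.

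For (2), the clean route is telescoping. Set $a_j := 1-e^{-2\Delta j}$ and observe the increment $a_j - a_{j-1} = (1-e^{-2\Delta})\,e^{-2\Delta(j-1)}$, so the summand becomes $\frac{a_j - a_{j-1}}{(1-e^{-2\Delta})\,a_j^2}$. For $j \geq 2$, the decreasing-integrand comparison $\frac{a_j - a_{j-1}}{a_j^2} \leq \int_{a_{j-1}}^{a_j} u^{-2}\,du = \frac{1}{a_{j-1}} - \frac{1}{a_j}$ telescopes to $\frac{1}{a_1} \leq \frac{1}{1-e^{-2\Delta}}$. The $j=1$ term (with $a_0 = 0$) handled separately contributes $\frac{1}{(1-e^{-2\Delta})^2}$, and together they yield the stated constant-$2$ bound.

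For (3), factor $1-e^{-2\Delta j} = 2 e^{-\Delta j}\sinh(\Delta j)$ to rewrite each summand cleanly as $\frac{e^\Delta}{2\sinh(\Delta j)}$. The $j=1$ contribution equals $\frac{e^\Delta}{2\sinh\Delta} = \frac{1}{1-e^{-2\Delta}}$, which is comparable to the first RHS term up to the harmless $e^{\Delta}$ factor for $\Delta \leq c_0$. For $j\geq 2$, integral comparison on the decreasing function $1/\sinh$ gives $\sum_{j\geq 2}\frac{1}{\sinh(\Delta j)} \leq \frac{1}{\Delta}\int_\Delta^{\infty}\frac{dx}{\sinh x} = -\frac{1}{\Delta}\log\tanh(\Delta/2) \lesssim \frac{\log(1/\Delta)}{\Delta}$, using $\tanh(\Delta/2)\gtrsim \Delta$ for small $\Delta$. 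The main obstacle is precisely this part: $1/\sinh$ has a non-integrable singularity at $0$, and the $\sinh$-factorization is the key algebraic move that makes the integral tractable and exposes the $\log(1/\Delta)$ factor. Parts (1) and (2) are routine once the correct rewriting (geometric-sum collapse and telescoping, respectively) is identified.
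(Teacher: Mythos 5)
Your parts (2) and (3) are sound. Part (2) is correct and exact: the telescoping bound $\frac{a_j-a_{j-1}}{a_j^2}\le \frac{1}{a_{j-1}}-\frac{1}{a_j}$ is a genuinely different route from the paper, which instead compares $\sum_j e^{-2\Delta j}/(1-e^{-2\Delta j})^2$ with the integral of the decreasing convex function $g(x)=e^{-2\Delta x}/(1-e^{-2\Delta x})^2$; your version is cleaner, needs no smallness of $\Delta$, and yields the constant $2$ exactly. Part (3) is essentially the paper's own computation in disguise, since $\frac{e^{-\Delta x}}{1-e^{-2\Delta x}}=\frac{1}{2\sinh(\Delta x)}$ is precisely the integrand the paper integrates; like the paper's write-up, your argument delivers the stated right-hand side only up to benign slack (the $e^{\Delta}$ from the index shift and a $\log 2$ inside the logarithm), which is all that the downstream applications, carrying unspecified universal constants, require.

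Part (1) has a genuine gap. Combining $1-e^{-2\Delta j}\ge 1-e^{-2\Delta}$ with $\sum_{j\ge 1}e^{-2\Delta j}\le \frac{1}{1-e^{-2\Delta}}$ bounds the tail $\sum_j\frac{e^{-2\Delta j}}{1-e^{-2\Delta j}}$ by $\frac{1}{(1-e^{-2\Delta})^{2}}\asymp \Delta^{-2}$, not by the claimed $\frac{1}{1-e^{-2\Delta}}\asymp\Delta^{-1}$, so your argument only proves $\frac{1}{1-e^{-2\Delta}}\bigl(N+\frac{1}{(1-e^{-2\Delta})^{2}}\bigr)$, which is weaker than the statement. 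Moreover, no argument can recover the stated correction: for $j\le \frac{1}{2\Delta}$ each tail term is at least $\frac{e^{-1}}{2\Delta j}$, so for $N\ge\frac{1}{2\Delta}$ the tail is of order at least $\frac{\log(1/\Delta)}{\Delta}$, which exceeds $\frac{1}{1-e^{-2\Delta}}$ once $\Delta$ is small — i.e.\ part (1) as stated fails exactly in the regime ($\Delta\to 0$, $N\Delta\asymp\log(1/\Delta)$) where the lemma is invoked. For what it is worth, the paper's own proof slips at the same spot: after the integral comparison it discards $-\frac{1}{2\Delta}\ln(e^{2\Delta}-1)$, which is positive whenever $\Delta<\frac{\ln 2}{2}$. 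The honest conclusion is the weaker bound $\frac{1}{1-e^{-2\Delta}}\bigl(N+C\,\frac{1+\log(1/\Delta)}{\Delta}\bigr)$, which your decomposition does deliver if you estimate the tail terms by $\frac{1}{2\Delta j}$ for $j\le\frac{1}{2\Delta}$ and by the geometric series beyond; this costs only an extra $\log(1/\Delta)$ factor in Lemma~\ref{lemma:l2_linfinity_error_bound_G_to_f}, which the smallness assumptions on $\Delta$ in the main theorems can absorb.
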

\begin{proof}
Let us start with the first bound. We have, 
\bas{
    \sum_{k =1}^{N}\sum_{j = k}^{N}\frac{e^{2\bb{k-j}\Delta}}{\bb{1-e^{-2\Delta j}}^{2}} &= \sum_{j = 1}^{N}\sum_{k =1}^{j}\frac{e^{2\bb{k-j}\Delta}}{\bb{1-e^{-2\Delta j}}^{2}} \\
    &= \sum_{j = 1}^{N}\frac{1}{\bb{1-e^{-2\Delta j}}^{2}}\sum_{k =1}^{j}e^{2\bb{k-j}\Delta} \\
    &=  \sum_{j = 1}^{N}\frac{1}{\bb{1-e^{-2\Delta j}}^{2}} \frac{e^{2\Delta}}{e^{2\Delta}-1}\bb{1 - e^{-2\Delta j}} \\
    &=  \frac{e^{2\Delta}}{e^{2\Delta}-1}\sum_{j = 1}^{N}\frac{1}{1-e^{-2\Delta j}}
}
Consider the function $f\bb{x} := \frac{1}{1-e^{-2\Delta x}}$. Then, $f\bb{x}$ is positive, convex and decreasing. Therefore, 
\bas{
    \sum_{j = 1}^{N}\frac{1}{1-e^{-2\Delta j}} &\leq f\bb{1} + \int_{1}^{N}\frac{1}{1-e^{-2\Delta x}}dx \\
    &= \frac{1}{1-e^{-2\Delta}} + \frac{1}{2\Delta}\ln\bb{e^{2\Delta x} - 1}\bigg|_{1}^{N} \\
    &\leq \frac{1}{1-e^{-2\Delta}} + \frac{1}{2\Delta}\ln\bb{e^{2\Delta N} - 1} \\
    &\leq N + \frac{1}{1-e^{-2\Delta}}
}
which completes the first result. Now for the second result, 
\bas{
    \sum_{j=1}^{N}\frac{e^{-2\Delta\bb{j-1}}}{\bb{1-e^{-2\Delta j}}^{2}} &= e^{2\Delta}\sum_{j=1}^{N}\frac{e^{-2\Delta j}}{\bb{1-e^{-2\Delta j}}^{2}}
}
Consider the function, $g\bb{x} := \frac{e^{-2\Delta x}}{\bb{1-e^{-2\Delta x}}^{2}}$. For $x > 0$, $g\bb{x}$ is a positive, decreasing and convex function. Therefore, 
\bas{
    \sum_{j=1}^{N}\frac{e^{-2\Delta j}}{\bb{1-e^{-2\Delta j}}^{2}} &\leq g\bb{1} + \int_{1}^{N}g\bb{x}dx \\
    &= \frac{e^{-2\Delta }}{\bb{1-e^{-2\Delta }}^{2}} + \int_{1}^{N}\frac{e^{-2\Delta x}}{\bb{1-e^{-2\Delta x}}^{2}}dx \\
    &= \frac{e^{-2\Delta }}{\bb{1-e^{-2\Delta }}^{2}} + \frac{1}{2\Delta\bb{1-e^{-2\Delta x}}}\bigg|_{1}^{N}  \\
    &\leq \frac{e^{-2\Delta }}{\bb{1-e^{-2\Delta }}^{2}} + \frac{1}{2\Delta\bb{1-e^{-2\Delta N}}} \\
    &\leq \frac{2e^{-2\Delta }}{\bb{1-e^{-2\Delta }}^{2}}
}
which completes the proof. Finally for the third result, consider the function $h\bb{x} := \frac{e^{-\Delta x}}{1-e^{-2\Delta x}}$. For $x > 0$, $h\bb{x}$ is a positive, decreasing and convex function. Therefore, 

\bas{
    \sum_{j=1}^{N}\frac{e^{-\Delta j}}{1-e^{-2\Delta j}} &\leq h\bb{1} + \int_{1}^{N}h\bb{x}dx \\
    &= \frac{e^{-\Delta }}{1-e^{-2\Delta }} + \int_{1}^{N}\frac{e^{-\Delta x}}{1-e^{-2\Delta x}}dx \\
    &= \frac{e^{-\Delta }}{1-e^{-2\Delta }} + \frac{1}{2\Delta}\log\bb{\tanh\bb{\Delta x}}\bigg|_{1}^{N} \\
    &\leq \frac{e^{-\Delta }}{1-e^{-2\Delta }} - \frac{\log\bb{\tanh\bb{\Delta}}}{2\Delta} \\
    &\leq \frac{e^{-\Delta }}{1-e^{-2\Delta }} - \frac{\log\bb{1-e^{-2\Delta}}}{2\Delta} \\
    &\leq  \frac{e^{-\Delta }}{1-e^{-2\Delta }} + \frac{\log(\tfrac{1}{\Delta})}{2\Delta}
}

\end{proof}
\section{Martingale Concentration}
\label{appendix:martingale_concentration}

\begin{lemma}\label{lemma:subgauss_moment_bound}
    Let $Y$ be a $\bb{\beta^2,K}$-subGaussian random variable following definition~\ref{definition:new_subGaussian_def}, with $\bb{K \geq 1}$. Then, for any integer $k > 0$ and some universal constant $C > 0$:
    \bas{\bE \bbb{Y^{2k}} \leq C^k K^k \beta^{2k} + C^k k! \beta^{2k}}
\end{lemma}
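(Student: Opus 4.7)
My plan is to split $|Y|$ into a deterministic ``safe zone'' of width $\beta\sqrt{2K}$, where the tail bound in Definition~\ref{definition:new_subGaussian_def} is trivial, plus a remainder that behaves like a genuinely $\beta$-subGaussian random variable. The key observation is that for $A = \beta\sqrt{2K} + B$ with $B \geq 0$, we have $A^2 = 2K\beta^2 + 2B\beta\sqrt{2K} + B^2 \geq 2K\beta^2 + B^2$, so
\bas{
e^{K}\exp\bb{-\frac{A^2}{2\beta^2}} \leq e^{K}\exp\bb{-K - \frac{B^2}{2\beta^2}} = \exp\bb{-\frac{B^2}{2\beta^2}}.
}
Hence the nonnegative random variable $Z := (|Y| - \beta\sqrt{2K})_{+}$ satisfies $\bP(Z > B) \leq \exp(-B^2/(2\beta^2))$ for all $B \geq 0$, i.e., $Z$ has a standard Gaussian tail with parameter $\beta$.

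Next I would bound $\E[Y^{2k}]$ using the pointwise inequality $|Y| \leq \beta\sqrt{2K} + Z$, which together with $(a+b)^{2k} \leq 2^{2k-1}(a^{2k} + b^{2k})$ gives
\bas{
\E\bbb{Y^{2k}} \leq 2^{2k-1}\bb{(2K)^{k}\beta^{2k} + \E\bbb{Z^{2k}}}.
}
To bound $\E[Z^{2k}]$, I would use the layer-cake representation
$\E[Z^{2k}] = \int_{0}^{\infty} 2k B^{2k-1}\bP(Z > B)\,dB$
and the substitution $u = B^2/(2\beta^2)$, which yields the clean expression $2^{k}\beta^{2k} \Gamma(k) = 2^{k}\beta^{2k}(k-1)! \leq 2^{k}\beta^{2k}k!$. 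Substituting back,
\bas{
\E\bbb{Y^{2k}} \leq 2^{2k-1}\bb{2^{k}K^{k}\beta^{2k} + 2^{k}k!\,\beta^{2k}} \leq 8^{k}K^{k}\beta^{2k} + 8^{k}k!\,\beta^{2k},
}
which is the claimed bound with $C = 8$.

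I do not expect any real obstacle here: the only conceptual step is spotting the shift-by-$\beta\sqrt{2K}$ trick that absorbs the $e^{K}$ prefactor, and the rest is a routine Gaussian moment calculation via the layer cake and the gamma integral. The assumption $K \geq 1$ is used only to write the final bound in the form $C^{k}K^{k}\beta^{2k} + C^{k}k!\,\beta^{2k}$ without having to worry about the regime $K < 1$, where the first term would be unnecessarily small.
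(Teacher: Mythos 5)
Your proof is correct and takes essentially the same route as the paper: both split $|Y|$ at the threshold $\beta\sqrt{2K}$, use $K - A^2/(2\beta^2) \leq -(A-\beta\sqrt{2K})^2/(2\beta^2)$ to absorb the $e^K$ prefactor into a pure Gaussian tail, and finish with the Gamma-integral moment computation; your packaging of the shift as $Z=(|Y|-\beta\sqrt{2K})_{+}$ is just a tidier rearrangement of the paper's split of the tail integral at $x_0=\sqrt{2\beta^2K}$. One cosmetic slip: the integral $\int_0^\infty 2kB^{2k-1}e^{-B^2/(2\beta^2)}\,dB$ equals $2^k\beta^{2k}k!$ rather than $2^k\beta^{2k}\Gamma(k)$, but since you immediately bound it by $2^k\beta^{2k}k!$ and carry that forward, the conclusion (with $C=8$) stands.
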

\begin{proof}
By Definition~\ref{definition:new_subGaussian_def}, for any \(A>0\),
\[
  \mathbb{P}(|Y|> A)\;\le\; e^K \exp\Bigl(-\tfrac{A^2}{2\beta^2}\Bigr).
\]
Using the tail-integration representation of moments, we have
\[
  \mathbb{E}[|Y|^{2k}]
  \;=\;\int_{0}^{\infty} \mathbb{P}\bigl(|Y|^{2k} > t\bigr)\,dt
  \;=\;\int_{0}^{\infty} \mathbb{P}\bigl(|Y| > t^{1/(2k)}\bigr)\,dt.
\]
Make the change of variables \( t = x^{2k}\) so that \(dt = 2k\,x^{2k-1}dx\).  Then
\[
  \mathbb{E}[|Y|^{2k}]
  \;=\;\int_{0}^{\infty} 2k\,x^{2k-1}\,\mathbb{P}(|Y| > x)\,dx
  \;\le\;
  2k\,\int_{0}^{\infty} x^{2k-1}\,\min(1,e^K\exp\!\bigl(-\tfrac{x^2}{2\beta^2}\bigr))\,dx.
\]

Let $x_0 = \sqrt{2\beta^2K}$

\bas{
\E[|Y|^{2k}] &\leq 2k\int_{0}^{x_0}x^{2k-1}dx + 2k\int_{x_0}^{\infty}x^{2k-1}e^{K}e^{-\frac{x^2}{2\beta^2}} dx \\
&= (2\beta^2K)^k + 2k\int_{x_0}^{\infty}x^{2k-1}e^{K}e^{-\frac{x^2}{2\beta^2}} dx \\
&\leq (2\beta^2K)^k + 2k\int_{x_0}^{\infty}x^{2k-1}e^{-\frac{(x-x_0)^2}{2\beta^2}} dx \\
&\leq (2\beta^2K)^k + 2^{2k-1}k\int_{x_0}^{\infty}(x_0^{2k-1}+(x-x_0)^{2k-1})e^{-\frac{(x-x_0)^2}{2\beta^2}} dx 
}

In the second step we have used the fact that whenever $x \geq x_0$, we must have $K-\frac{x^2}{2\beta^2} \leq -\frac{(x-x_0)^2}{2\beta^2}$. In the third step we have used the fact that $x^{2k-1} \leq 2^{2k-2}[(x-x_0)^{2k-1} + x_0^{2k-1}]$ whenever $x \geq x_0$.

A standard Gamma-function integral yields
\[
  \int_{0}^{\infty} x^{2k-1}\,\exp\!\Bigl(-\tfrac{x^2}{2\beta^2}\Bigr)\,dx
  \;=\;\frac12\,(2\beta^2)^k\,\Gamma(k),
\]
and for integer \(k\), \(\Gamma(k)= (k-1)!\).  Substituting this to the equation above, we conclude that for some universal constant $C_1$, we have:

\bas{
\E[|Y|^{2k}] &\leq (2\beta^2K)^k + C^k_1 (k\beta^{2k}K^{k-1/2} + \beta^{2k}k!)
}
We then conclude the result using the fact that $K \geq 1$ and $k \leq 2^k$.

\end{proof}

\begin{lemma}
    \label{lemma:mgf_bound}
    Let $Y$ be a $\bb{\beta^{2}, K}$-subGaussian random variable following definition~\ref{definition:new_subGaussian_def}, such that $K\geq 1$, $\E\bbb{Y} = 0$ and $\E\bbb{Y^2} \leq \nu^2$. Then, for a sufficiently small universal constant $c_{0} > 0$ such that, $\lambda \beta \leq c_0$, and any arbitrary $A > 0$, we have:
    \bas{
        \bE \exp(\lambda^2 Y^2) \leq 1 + \lambda^2 \nu^2 \exp(\lambda^2 A^2) + C\lambda^4\beta^4K^2\exp(\tfrac{K}{2}-\tfrac{A^2}{4\beta^2} + C\lambda^2\beta^2K)
    }
\end{lemma}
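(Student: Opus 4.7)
The structure of the target inequality (the first two terms are order $1$ and $\lambda^{2}$ in the small-$\lambda$ expansion, while the third is order $\lambda^{4}$) suggests splitting the expectation according to the event $\{|Y|\le A\}$ versus $\{|Y|>A\}$ and Taylor-expanding $e^{\lambda^2 Y^2}$ to a different order on each side. The plan is to use the two elementary inequalities valid for $x\ge 0$: $e^{x}\le 1+xe^{x}$ and $e^{x}\le 1+x+\tfrac{x^2}{2}e^{x}$. On $\{|Y|\le A\}$ the first inequality, together with $e^{\lambda^2 Y^2}\le e^{\lambda^2 A^2}$, gives $e^{\lambda^2 Y^2}\le 1+\lambda^2 Y^2 e^{\lambda^2 A^2}$. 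On $\{|Y|>A\}$ the second inequality produces $e^{\lambda^2 Y^2}\le 1+\lambda^2 Y^2+\tfrac{\lambda^4 Y^4}{2}e^{\lambda^2 Y^2}$, so that the residual is genuinely $O(\lambda^4)$.

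Summing the two expectations, the constant terms collapse to $\bP(|Y|\le A)+\bP(|Y|>A)=1$. The two linear-in-$\lambda^2$ contributions can be merged using $e^{\lambda^2 A^2}\ge 1$ to dominate $\lambda^2 \bE[Y^2\mathbbm{1}(|Y|>A)]$ by $\lambda^2 e^{\lambda^2 A^2}\bE[Y^2\mathbbm{1}(|Y|>A)]$; combined with the $\{|Y|\le A\}$ contribution this becomes $\lambda^2 e^{\lambda^2 A^2}\bE[Y^2]\le \lambda^2 \nu^2 e^{\lambda^2 A^2}$, which is precisely the middle term of the target. All that remains is the residual $\tfrac{\lambda^4}{2}\bE[Y^4 e^{\lambda^2 Y^2}\mathbbm{1}(|Y|>A)]$.

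To bound the residual, I would apply Cauchy--Schwarz to decouple the heavy-tailed factor from the tail event: $\bE[Y^4 e^{\lambda^2 Y^2}\mathbbm{1}(|Y|>A)] \le \sqrt{\bE[Y^8 e^{2\lambda^2 Y^2}]}\sqrt{\bP(|Y|>A)}$. The second factor is at most $e^{K/2}e^{-A^2/(4\beta^2)}$ directly from Definition~\ref{definition:new_subGaussian_def}; note the $1/4$ in the exponent arises from the square root in Cauchy--Schwarz and is exactly what appears in the statement. For the first factor, expand $e^{2\lambda^2 Y^2}$ as a power series and apply Lemma~\ref{lemma:subgauss_moment_bound} to each $\bE[Y^{2k+8}]$. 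The sum splits into a ``$K^k$-part'' equal to $C^4 K^4\beta^8 e^{2C\lambda^2\beta^2 K}$ and a ``$k!$-part'' of the form $C^4\beta^8\sum_{k\ge 0}(2C\lambda^2\beta^2)^k(k+1)(k+2)(k+3)(k+4)$. Under $\lambda\beta\le c_0$ the latter sum is a bounded constant; since $K\ge 1$ it is dominated by the former, giving $\sqrt{\bE[Y^8 e^{2\lambda^2 Y^2}]}\lesssim K^2\beta^4 e^{C\lambda^2\beta^2 K}$. Combining with the tail-probability factor and the $\lambda^4/2$ prefactor produces exactly $C\lambda^4\beta^4 K^2\exp(K/2-A^2/(4\beta^2)+C\lambda^2\beta^2 K)$.

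The main obstacle is the moment-series computation, where the two competing contributions from Lemma~\ref{lemma:subgauss_moment_bound} must be carefully controlled. The hypothesis $\lambda\beta\le c_0$ is used both to keep the polynomial-times-geometric sum convergent and, together with $K\ge 1$, to absorb the sub-leading factorial-type term into the leading $K^4 e^{2C\lambda^2\beta^2 K}$. A subtler design choice is the power at which Cauchy--Schwarz is applied: splitting as $Y^4 e^{\lambda^2 Y^2}\cdot \mathbbm{1}(|Y|>A)$ (rather than, say, $Y^2 e^{\lambda^2 Y^2}\cdot Y^2\mathbbm{1}(|Y|>A)$) is what produces the $\lambda^4\beta^4 K^2$ prefactor with $K^2$ rather than a higher power, matching the statement of the lemma exactly.
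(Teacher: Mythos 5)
Your proof is correct, and it rests on the same core ingredients as the paper's argument: truncation of $|Y|$ at the level $A$, a Cauchy--Schwarz step that converts $\bP(|Y|>A)$ into the factor $\exp(\tfrac{K}{2}-\tfrac{A^2}{4\beta^2})$, the moment bounds of Lemma~\ref{lemma:subgauss_moment_bound}, the hypothesis $\lambda\beta\le c_0$ to make the resulting series summable, and $K\ge 1$ to absorb the factorial-type contribution into the $K$-part. The organization, however, is different: the paper expands $\bE\exp(\lambda^2Y^2)$ into its full Taylor series and performs the truncation separately inside every moment $\bE[Y^{2k}]$ for $k\ge 2$ (so the pieces $\nu^2A^{2k-2}$ must afterwards be resummed into $\lambda^2\nu^2 e^{\lambda^2 A^2}$), whereas you split once at the level of the event $\{|Y|\le A\}$, use the pointwise inequalities $e^x\le 1+xe^x$ and $e^x\le 1+x+\tfrac{x^2}{2}e^x$, and apply Cauchy--Schwarz a single time to $\bE\bbb{Y^4e^{\lambda^2Y^2}\mathbbm{1}(|Y|>A)}$, so that only one moment series, for $\bE\bbb{Y^8e^{2\lambda^2Y^2}}$, needs to be controlled. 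Your route buys slightly cleaner bookkeeping --- the middle term $\lambda^2\nu^2e^{\lambda^2A^2}$ appears directly rather than through a resummation, and the $K^2$ prefactor comes out of a single eighth-moment estimate --- while the paper's route avoids any pointwise Taylor inequalities by pushing everything into one global series; both yield the stated bound with identical dependence on $K$, $\beta$, $A$ and $\lambda$.
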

\begin{proof}
For some $\lambda > 0$, consider:

\begin{equation}\label{eq:exp_moment}
    \bE\bbb{\exp(\lambda^2 Y^2)} = 1 + \lambda^2 \nu^2 + \sum_{k\geq 2} \frac{\lambda^{2k}\bE \bbb{Y^{2k}}}{k!}
\end{equation}

Now, using Lemma~\ref{lemma:subgauss_moment_bound},  consider 
\begin{align}
    \bE \bbb{Y^{2k}} &= \bE \bbb{Y^{2k}\mathbbm{1}(|Y| > A)} + \bE\bbb{Y^{2k}\mathbbm{1}(|Y| \leq A)} \nonumber \\
&\leq \sqrt{\bE \bbb{Y^{4k}}}\sqrt{\bP(|Y| > A)} + \bE\bbb{Y^2} A^{2k-2} \nonumber \\
&= \sqrt{\bE \bbb{Y^{4k}}}\sqrt{\bP(|Y| > A)} + \nu^2 A^{2k-2} \nonumber \\
&\leq \sqrt{C^{2k}\beta^{4k}(2k)! + C^{2k}\beta^{4k}K^{2k}}\exp(\tfrac{K}{2}-\tfrac{A^2}{4\beta^2}) + \nu^2 A^{2k-2} \nonumber \\
&\leq \left((2C)^{k}k!\beta^{2k} + C^{k}\beta^{2k}K^k\right)\exp(\tfrac{K}{2}-\tfrac{A^2}{4\beta^2}) + \nu^2 A^{2k-2}
\end{align} 

Here, we have used the fact that $(2k)! \leq 4^k (k!)^2$. Plugging this back in Equation~\eqref{eq:exp_moment}, we conclude that whenever $\lambda\beta \leq c_0 $ for some small enough constant $c_0$, we have:
\begin{equation}
    \bE\bbb{\exp(\lambda^2 Y^2)} \leq 1 + \lambda^2 \nu^2 \exp(\lambda^2 A^2) + C\lambda^4\beta^4K^2\exp(\tfrac{K}{2}-\tfrac{A^2}{4\beta^2} + C\lambda^2\beta^2K)
\end{equation}
\end{proof}
\begin{theorem}\label{theorem:mgf_bound_subgauss}
      Let $Y$ be a $\bb{\beta^{2}, K}$-subGaussian random variable following definition~\ref{definition:new_subGaussian_def}, such that $K\geq 1$, $\E\bbb{Y} = 0$ and $\E\bbb{Y^{2}} \leq \nu^{2}$. Set $A \geq \beta \sqrt{4\log(\tfrac{\beta K}{\nu})} + \beta\sqrt{2 K}$ and $\lambda \leq \frac{c_0}{A}$ for some small enough constant $c_0 > 0$. Then, there exists a constant $C$ such that:
      \bas{
        \bE\bbb{\exp(\lambda^2 Y^2)} \leq 1 + C\lambda^2 \nu^2
      }
\end{theorem}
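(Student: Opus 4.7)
The plan is to apply Lemma~\ref{lemma:mgf_bound} with the specified $A$ and $\lambda$, and then show that each of the two nontrivial remainder terms in its bound contributes at most a constant multiple of $\lambda^2\nu^2$.

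First I would verify the hypothesis $\lambda\beta \leq c_0$ of Lemma~\ref{lemma:mgf_bound}. Since $K\geq 1$, the stipulated $A\geq \beta\sqrt{2K}$ gives $A\geq \beta$, so $\lambda\beta \leq c_0\beta/A \leq c_0$. Invoking the lemma yields
$$\E\!\left[\exp(\lambda^2 Y^2)\right] \;\leq\; 1 + \lambda^2\nu^2 \exp(\lambda^2 A^2) + C\lambda^4\beta^4 K^2 \exp\!\bigl(\tfrac{K}{2} - \tfrac{A^2}{4\beta^2} + C\lambda^2\beta^2 K\bigr).$$
The middle term is easy: the bound $\lambda\leq c_0/A$ gives $\lambda^2 A^2 \leq c_0^2$, so $\exp(\lambda^2 A^2)$ is bounded by an absolute constant and this term is $O(\lambda^2\nu^2)$.

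The heart of the argument is controlling the third term, and here the two components of the lower bound on $A$ play complementary roles. The piece $A\geq \beta\sqrt{2K}$ combined with $\lambda\leq c_0/A$ yields $\lambda^2\beta^2 K \leq c_0^2/2$, so the factor $\exp(C\lambda^2\beta^2 K)$ is just a constant. Squaring the full lower bound $A \geq \beta\sqrt{4\log(\beta K/\nu)}+\beta\sqrt{2K}$ gives $A^2 \geq 4\beta^2\log(\beta K/\nu) + 2\beta^2 K$, hence $A^2/(4\beta^2) \geq \log(\beta K/\nu) + K/2$, which yields $\exp(K/2 - A^2/(4\beta^2)) \leq \nu/(\beta K)$. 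Taken together, these reductions show that the third term is at most a constant times $\lambda^4\beta^3 K\nu$.

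To close, I would write $\lambda^4\beta^3 K\nu = (\lambda^2\nu^2)\cdot(\lambda^2\beta^3 K/\nu)$ and bound the second factor by using $\lambda^2 \leq c_0^2/A^2$ along with the lower bounds on $A^2$ once more. The main obstacle is precisely this last step: since the lower bound on $A$ is a sum of two pieces, one must simultaneously exploit both of them (via $(a+b)^2 \geq a^2+b^2+2ab$) to dampen the $\exp(K/2)$ factor in the exponent and the $\beta^4 K^2$ prefactor at the same time. Up to tightening the numerical constant inside the logarithm---for instance replacing $\log(\beta K/\nu)$ by $\log((\beta K/\nu)^2)$, which turns the $\nu/(\beta K)$ above into $\nu^2/(\beta^2 K^2)$ and makes the arithmetic immediate via $\lambda^2\beta^2 \leq c_0^2$---the calculation gives the desired bound and no conceptually new ingredient is needed beyond Lemma~\ref{lemma:mgf_bound}.
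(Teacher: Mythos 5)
Your proposal is the same route as the paper's: the paper's entire proof of Theorem~\ref{theorem:mgf_bound_subgauss} is ``substitute the values of $\lambda$ and $A$ into Lemma~\ref{lemma:mgf_bound}'', which is exactly what you carry out, and your handling of the first two terms ($\lambda^2A^2\leq c_0^2$ and $\lambda^2\beta^2K\leq c_0^2/2$ from $A\geq\beta\sqrt{2K}$) is correct. The obstacle you flag at the end is genuine and the paper's one-line proof glosses over it: with the stated $A\geq\beta\sqrt{4\log(\beta K/\nu)}+\beta\sqrt{2K}$ one only gets $\exp(K/2-A^2/(4\beta^2))\leq \nu/(\beta K)$, so the third term is bounded by $C\lambda^4\beta^3K\nu = C\lambda^2\nu^2\cdot(\lambda^2\beta^3K/\nu)$, and since $\lambda^2\beta^2K\leq c_0^2/2$ only gives $\lambda^2\beta^3K/\nu\lesssim \beta/\nu$, the argument closes only when $\beta\lesssim\nu$; keeping the cross term in $A^2$ does not rescue it when $\nu\ll\beta$. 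Your fix is the right one: enlarging the coefficient inside the logarithm (e.g.\ so that $A^2/(4\beta^2)\geq 2\log(\beta K/\nu)+K/2$) turns the damping factor into $\nu^2/(\beta^2K^2)$, after which the third term is at most $C\lambda^4\beta^2\nu^2\leq Cc_0^2\lambda^2\nu^2$ via $\lambda\beta\leq c_0$, with no further input needed. This adjustment is also harmless downstream, since in Lemma~\ref{lemma:martingale_concentration_lemma_appendix} the constraint is imposed through the event $\mathcal{A}_i(\lambda)=\{\lambda J_i\|G_i\|\beta_i\sqrt{K_i}\leq c_0\}$ with a ``small enough universal constant'' $c_0$, and $J_i\beta_i\sqrt{K_i}$ dominates the corrected $A$ up to universal constants.
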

\begin{proof}

   The result follows from Lemma~\ref{lemma:mgf_bound} substituting the values of $\lambda$ and $A$. 
\end{proof}

\begin{lemma}
    \label{lemma:martingale_concentration_lemma_appendix}
Let $M_{n} = \sum_{i=1}^{n}\langle G_i, Y_i - \bE[Y_i|\mathcal{F}_{i-1}] \rangle, M_{0} = 1$ and define the filtration $\left\{\mathcal{F}_{i}\right\}_{i \in [n]}$ such that:
\begin{enumerate}
    \item  $G_i$ is $\mathcal{F}_{i-1}$ measurable.
    \item  $\langle G_i,Y_i-\bE [Y_i|\mathcal{F}_{i-1}]\rangle$ is $(\beta_i^{2}\|G_i\|^{2}, K_i)$ sub-Gaussian conditioned on $\mathcal{F}_{i-1}$ (where $\beta_i,K_i$ are random variables measurable with respect to $\mathcal{F}_{i-1}$)
    \item $\mathsf{var}(\langle G_i, Y_i - \bE[Y_i|\mathcal{F}_{i-1}]\rangle|\mathcal{F}_{i-1}) \leq \nu^2_i \|G_i\|^2$ and define $J_i := \max(1,\tfrac{1}{K_i}\log\frac{\beta_i^2 K_i}{\nu^2_i})$.
\end{enumerate}
Pick a $\lambda > 0$ and let $\mathcal{A}_i(\lambda) = \{\lambda J_i\|G_i\|\beta_i\sqrt{K_i} \leq c_0\}$ for some small enough universal constant $c_0$. Then, there exists a universal constant $C > 0$ such that:
\begin{enumerate}
    \item $\exp(\lambda M_{n} - C\lambda^2\sum_{i=1}^{n}\nu_i^2\|G_i\|^2 )\prod_{i=1}^{n}\mathbbm{1}(\mathcal{A}_i(\lambda))$ is a super-martingale with respect to the filtration $\mathcal{F}_{i}$
    \item $\forall \alpha > 0, \;\; \bP(\{\lambda M_n > C \lambda^2\sum_{i=1}^{n}\nu_i^2\|G_i\|^2 + \alpha\}\cap_{i=1}^{n} \mathcal{A}_i(\lambda)) \leq \exp(-\alpha)$
\end{enumerate}
\end{lemma}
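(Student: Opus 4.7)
The plan is to establish both claims via a single conditional MGF bound on the martingale increment $Z_i := \langle G_i, Y_i - \mathbb{E}[Y_i\mid \mathcal{F}_{i-1}]\rangle$, and then conclude by the standard Freedman-style recipe. Observe that $G_i,\beta_i,K_i$ (and hence $\nu_i, J_i$ and the event $\mathcal{A}_i(\lambda)$) are $\mathcal{F}_{i-1}$-measurable, while conditional on $\mathcal{F}_{i-1}$ the scalar $Z_i$ is mean-zero, $(\beta_i^{2}\|G_i\|^{2}, K_i)$-subGaussian in the sense of Definition~\ref{definition:new_subGaussian_def}, and has variance at most $\nu_i^{2}\|G_i\|^{2}$. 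The central inequality I will prove is
\[
\mathbb{E}\bigl[e^{\lambda Z_i}\mid \mathcal{F}_{i-1}\bigr] \;\leq\; \exp\!\bigl(C\lambda^{2}\nu_i^{2}\|G_i\|^{2}\bigr) \quad \text{on } \mathcal{A}_i(\lambda).
\]

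To derive this MGF estimate I reduce to Theorem~\ref{theorem:mgf_bound_subgauss} by a symmetrization step. Introducing an i.i.d.\ copy $Z_i'$ of $Z_i$ conditional on $\mathcal{F}_{i-1}$, Jensen applied to the mean-zero variable $-\lambda Z_i'$ gives $\mathbb{E}[e^{-\lambda Z_i'}\mid \mathcal{F}_{i-1}]\geq 1$, whence $\mathbb{E}[e^{\lambda Z_i}\mid \mathcal{F}_{i-1}]\leq \mathbb{E}[e^{\lambda(Z_i-Z_i')}\mid \mathcal{F}_{i-1}]$. Since $Z_i - Z_i'$ is conditionally symmetric, this equals $\mathbb{E}[\cosh(\lambda(Z_i-Z_i'))\mid \mathcal{F}_{i-1}]$; using $\cosh(x)\leq e^{x^{2}/2}$, $(a-b)^{2}\leq 2(a^{2}+b^{2})$, and conditional independence yields the key reduction
\[
\mathbb{E}\bigl[e^{\lambda Z_i}\mid \mathcal{F}_{i-1}\bigr] \;\leq\; \mathbb{E}\bigl[e^{\lambda^{2} Z_i^{2}}\mid \mathcal{F}_{i-1}\bigr]^{2}.
\]
Plugging into Theorem~\ref{theorem:mgf_bound_subgauss} with parameters $(\beta_i\|G_i\|, \nu_i\|G_i\|, K_i)$ and threshold $A \asymp \beta_i\|G_i\|\bigl(\sqrt{K_i} + \sqrt{\max(0,\log(\beta_i^{2}K_i/\nu_i^{2}))}\bigr)$ gives $\mathbb{E}[e^{\lambda^{2} Z_i^{2}}\mid \mathcal{F}_{i-1}]\leq 1 + C'\lambda^{2}\nu_i^{2}\|G_i\|^{2}$, and $(1+x)^{2}\leq e^{2x}$ then produces the exponential.

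The main obstacle is showing that the event $\mathcal{A}_i(\lambda) = \{\lambda J_i\|G_i\|\beta_i\sqrt{K_i}\leq c_0\}$ is calibrated precisely so as to guarantee the hypothesis $\lambda A \leq c_0'$ of Theorem~\ref{theorem:mgf_bound_subgauss}. I handle the two regimes of $J_i$ separately. When $K_i^{-1}\log(\beta_i^{2}K_i/\nu_i^{2}) \leq 1$ one has $J_i = 1$ and $A\lesssim \beta_i\|G_i\|\sqrt{K_i}$, so $\lambda A \lesssim \lambda\beta_i\|G_i\|\sqrt{K_i}\leq c_0$. In the complementary regime $J_i > 1$ one bounds $A\lesssim \beta_i\|G_i\|\sqrt{J_iK_i}$ and uses $\sqrt{J_i}\leq J_i$ to get $\lambda A \lesssim \lambda J_i\beta_i\|G_i\|\sqrt{K_i}\leq c_0$. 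This is precisely why the log-factor in the definition of $J_i$ is built into $\mathcal{A}_i(\lambda)$.

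Given the MGF bound, both conclusions follow from one-step arguments. Define $S_n := \exp\bigl(\lambda M_n - C\lambda^{2}\sum_{i=1}^{n}\nu_i^{2}\|G_i\|^{2}\bigr)\prod_{i=1}^{n}\mathbbm{1}(\mathcal{A}_i(\lambda))$. Since $\mathcal{A}_n(\lambda), G_n, \nu_n$ are $\mathcal{F}_{n-1}$-measurable, conditioning on $\mathcal{F}_{n-1}$ gives $\mathbb{E}[S_n\mid \mathcal{F}_{n-1}] = S_{n-1}\mathbbm{1}(\mathcal{A}_n(\lambda))\,\mathbb{E}[e^{\lambda Z_n - C\lambda^{2}\nu_n^{2}\|G_n\|^{2}}\mid \mathcal{F}_{n-1}] \leq S_{n-1}$, proving part (1). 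For part (2), $S_0 = 1$ and the supermartingale property give $\mathbb{E}[S_n]\leq 1$, and Markov's inequality yields $\mathbb{P}(S_n \geq e^{\alpha})\leq e^{-\alpha}$; unpacking $\{S_n \geq e^{\alpha}\}$ as $\{\lambda M_n > C\lambda^{2}\sum_{i=1}^{n}\nu_i^{2}\|G_i\|^{2} + \alpha\}\cap \bigcap_{i=1}^{n}\mathcal{A}_i(\lambda)$ completes the proof.
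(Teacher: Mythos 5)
Your proposal is correct and follows essentially the same route as the paper: both construct the supermartingale $\exp(\lambda M_n - C\lambda^2\sum_i\nu_i^2\|G_i\|^2)\prod_i\mathbbm{1}(\mathcal{A}_i(\lambda))$, control the conditional increment via Theorem~\ref{theorem:mgf_bound_subgauss} on the event $\mathcal{A}_i(\lambda)$, and finish with a Chernoff/Markov bound. Your only deviation is cosmetic but welcome: you pass from the squared-exponential moment bound to the linear MGF by symmetrization ($\E[e^{\lambda Z_i}\mid\mathcal{F}_{i-1}]\leq \E[e^{\lambda^2 Z_i^2}\mid\mathcal{F}_{i-1}]^2$), and you explicitly check in the two $J_i$ regimes that $\mathcal{A}_i(\lambda)$ enforces the threshold condition $\lambda A\lesssim c_0$ of Theorem~\ref{theorem:mgf_bound_subgauss} — two steps the paper's proof leaves implicit in the phrase ``using Theorem~\ref{theorem:mgf_bound_subgauss} and the definition of $\mathcal{A}_n(\lambda)$''.
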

\begin{proof}
    Let $L_{n} := \exp(\lambda M_{n} - C\lambda^2\sum_{i=1}^{n}\nu_i^2\|G_i\|^2 )\prod_{i=1}^{n}\mathbbm{1}(\mathcal{A}_i(\lambda))$. Then we have, 
    \bas{
        &\E\bbb{L_{n}\bigg|\mathcal{F}_{n-1}}
        = L_{n-1}\E\bbb{\exp\bb{\lambda\langle G_n, Y_n - \bE[Y_n|\mathcal{F}_{n-1}] \rangle - C\lambda^{2}\nu_{n}^{2}\norm{G_n}^{2} }\mathbbm{1}\bb{\mathcal{A}_n\bb{\lambda}}\bigg|\mathcal{F}_{n-1}} \\
        &= L_{n-1}\exp\bb{- C\lambda^{2}\nu_{n}^{2}\norm{G_n}^{2}}\E\bbb{\exp\bb{\lambda\langle G_n, Y_n - \bE[Y_n|\mathcal{F}_{n-1}] \rangle }\mathbbm{1}\bb{\{J_n\lambda \|G_n\|\beta_n\sqrt{K_n} \leq c_0\}}\bigg|\mathcal{F}_{n-1}} \\
        &\leq L_{n-1}\exp\bb{- C\lambda^{2}\nu_{n}^{2}\norm{G_n}^{2}}\exp\bb{C\lambda^{2}\nu_{n}^{2}\norm{G_n}^{2}} \text{ using Theorem}~\ref{theorem:mgf_bound_subgauss} \text{ and the definition of } \mathcal{A}_{n}\bb{\lambda} \\
        &\leq L_{n-1}
    }
    The second result follows from a standard Chernoff bound argument.
\end{proof}

\begin{lemma}\label{lemma:union_bound_lambda} 
     Under the setting of Lemma~\ref{lemma:martingale_concentration_lemma_appendix}, let $\lambda^* := \sqrt{\frac{\alpha}{\sum_{i=1}^{n}\nu_i^2\|G_i\|^2}}$ and $\lambda_{\min} := \frac{c_0}{\sup_iJ_i\|G_i\|\beta_i\sqrt{K_i}}$. Let $B \in \mathbb{N}$ be arbitrary and consider the event: $\mathcal{B} = \{e^{-B} \leq \min(\lambda^*,\lambda_{\min}) \leq \max(\lambda^{*},\lambda_{\min}) \leq e^{B}\}$. Then, for some universal constant $C_1 > 0$ and any $\alpha > 0$,

 $$\mathbb{P}\left(\{M_n > C_1\lambda^*\sum_{i=1}^{n}\nu_i^2 \|G_i\|^2 + C_1\tfrac{\alpha}{\lambda_{\min}}\}\cap\mathcal{B} \right) \leq  (2B+1)e^{-\alpha}$$ 
    
\end{lemma}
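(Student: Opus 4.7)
The plan is to derandomize the optimization of $\lambda$ in Lemma~\ref{lemma:martingale_concentration_lemma_appendix} by taking a union bound over a deterministic geometric grid covering $[e^{-B}, e^B]$. We cannot directly plug $\lambda = \lambda^*$ into Lemma~\ref{lemma:martingale_concentration_lemma_appendix}: first, $\lambda^*$ is random (it depends on $\sum_i \nu_i^2 \|G_i\|^2$), so the super-martingale argument underlying that lemma fails; second, even the best deterministic choice must respect $\lambda \leq \lambda_{\min}$ so that every indicator $\mathbbm{1}(\mathcal{A}_i(\lambda))$ equals one on the event we care about. The event $\mathcal{B}$ guarantees that the random quantity $\tilde\lambda := \min(\lambda^*, \lambda_{\min})$ lies in the window $[e^{-B}, e^B]$, and our grid will be dense enough there to approximate it up to a factor of $e$.

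Concretely, I would set $\lambda_j := e^{-B+j}$ for $j = 0, 1, \dots, 2B$, giving $2B+1$ deterministic values. For each fixed $\lambda_j$, apply the tail bound from Lemma~\ref{lemma:martingale_concentration_lemma_appendix} with deviation parameter $\alpha$, yielding
\begin{equation*}
\mathbb{P}\Bigl(\Bigl\{\lambda_j M_n > C\lambda_j^2 \sum_{i=1}^n \nu_i^2 \|G_i\|^2 + \alpha\Bigr\} \cap \bigcap_{i=1}^n \mathcal{A}_i(\lambda_j)\Bigr) \leq e^{-\alpha}.
\end{equation*}
A union bound over the $2B+1$ grid points yields, with probability at least $1 - (2B+1)e^{-\alpha}$, the simultaneous validity of all these inequalities.

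The core step is then to show that, on $\mathcal{B}$, at least one grid point $\lambda_{j^*}$ is usable: since $\tilde\lambda \in [e^{-B}, e^B]$ on $\mathcal{B}$, pick $j^*$ so that $\lambda_{j^*} \leq \tilde\lambda \leq e\lambda_{j^*}$. Because $\lambda_{j^*} \leq \tilde\lambda \leq \lambda_{\min}$ and $\lambda_{\min}$ was defined so that $\lambda_{\min} J_i \|G_i\| \beta_i \sqrt{K_i} \leq c_0$ for every $i$, all events $\mathcal{A}_i(\lambda_{j^*})$ hold automatically. Hence on the union-bounded event intersected with $\mathcal{B}$,
\begin{equation*}
M_n \;\leq\; C \lambda_{j^*} \sum_{i=1}^n \nu_i^2 \|G_i\|^2 \;+\; \frac{\alpha}{\lambda_{j^*}}.
\end{equation*}

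To finish, bound each term using $\lambda_{j^*} \leq \tilde\lambda \leq \lambda^*$ and $1/\lambda_{j^*} \leq e/\tilde\lambda \leq e/\lambda^* + e/\lambda_{\min}$. The key identity is that $\alpha/\lambda^* = \lambda^* \sum_i \nu_i^2 \|G_i\|^2$ by the definition of $\lambda^*$, which folds the $e\alpha/\lambda^*$ contribution back into the variance term. Combining gives $M_n \leq (C+e)\lambda^* \sum_i \nu_i^2 \|G_i\|^2 + e\alpha/\lambda_{\min}$, which is the claimed bound after renaming the constant $C_1$. The main subtlety to check is the measurability issue (the grid must be deterministic and the random choice of $j^*$ only appears inside a $\max$/$\min$ after the probability bound is taken), but this is handled cleanly because the union bound is performed over all $2B+1$ deterministic $\lambda_j$'s before selecting $j^*$ pathwise.
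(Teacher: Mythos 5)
Your proposal is correct and takes essentially the same route as the paper's proof: a union bound over the geometric grid $\{e^{-B+j}\}_{j=0}^{2B}$ applied to Lemma~\ref{lemma:martingale_concentration_lemma_appendix}, then a pathwise selection of a grid point within a factor $e$ of the feasible optimum (feasibility from $\lambda \leq \lambda_{\min}$, and the identity $\alpha/\lambda^* = \lambda^*\sum_i \nu_i^2\|G_i\|^2$ to absorb the $\alpha/\lambda^*$ term). Your use of $\tilde{\lambda} = \min(\lambda^*,\lambda_{\min})$ simply merges the paper's two cases ($\lambda^* < \lambda_{\min}$ versus $\lambda_{\min} < \lambda^*$) into one, a minor streamlining rather than a different argument.
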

\begin{proof}
We apply union bound over $\lambda \in \Lambda_B := \{e^{-B},e^{-B+1},\dots,e^{B}\}$. Using Lemma~\ref{lemma:martingale_concentration_lemma_appendix} along with a union bound,
\bas{
    \bP(\cup_{\lambda \in \Lambda_B}\{\lambda M_n > C \lambda^2\sum_{i=1}^{n}\nu_i^2\|G_i\|^2 + \alpha\}\cap_{i=1}^{n} \mathcal{A}_i(\lambda)) \leq (2B+1)\exp(-\alpha)
}
Consider the following events:
\begin{enumerate}
    \item Event 1: $\cE_1 := \{\max(\lambda^*,\lambda_{\min}) > e^B\}$
    \item Event 2: $\cE_2 := \{\min(\lambda^*,\lambda_{\min}) < e^{-B}\}$
    \item Event 3: $\cE_3 := \{e^{-B} \leq \lambda^* < \lambda_{\min} \leq e^B\}$
    \item Event 4: $\cE_4 := \{e^{-B} \leq \lambda_{\min} < \lambda^* \leq e^B\}$
\end{enumerate}

In the event $\cE_4$, almost surely there exists a random $\bar{\lambda} \in \Lambda_B$ such that $\bar{\lambda}/\lambda_{\min} \in [\frac{1}{e},e]$ and such that the event $\cap_{i=1}^{n}\mathcal{A}_i(\bar{\lambda})$ holds. Thus, we have:

\begin{align}
&\{M_n > Ce\lambda^* \sum_i \nu_i^2\|G_i\|^2 + \frac{e\alpha}{\lambda_{\min}}\}\cap \cE_4 \subseteq  \{M_n > Ce \lambda_{\min}\sum_i \nu_i^2 \|G_i\|^2 + \frac{e\alpha}{\lambda_{\min}}\}\cap \cE_4 \nonumber \\
&\subseteq \{M_n > C\bar{\lambda}\sum_i \nu_i^2 \|G_i\|^2 + \frac{\alpha}{\bar{\lambda}}\}\cap \cE_4 = \{M_n > C \bar{\lambda}\sum_i \nu_i^2 \|G_i\|^2 + \frac{\alpha}{\bar{\lambda}}\}\cap \cE_4 \cap_{i=1}^n \cA_i(\bar{\lambda}) \nonumber \\
&\subseteq \cE_4\cap\left(\cup_{\lambda \in \Lambda_B}\{\lambda M_n > C \lambda^2\sum_{i=1}^{n}\nu_i^2\|G_i\|^2 + \alpha\}\cap_{i=1}^{n} \mathcal{A}_i(\lambda)\right)
\end{align}

Similarly, under the event $\cE_3$, there exists a random $\bar{\lambda}^*\in \Lambda_B$ such that: $\bar{\lambda}^*/\lambda^* \in [\frac{1}{e},e]$, such that the event $\cap_i \cA_i(\bar{\lambda}^*)$ holds. Therefore, we must have:

\begin{align}
&\{M_n > Ce\lambda^* \sum_i \nu_i^2\|G_i\|^2 + \frac{e\alpha}{\lambda^*}\}\cap \cE_3 \subseteq \{M_n > C \bar{\lambda}^*\sum_i \nu_i^2 \|G_i\|^2 + \frac{\alpha}{\bar{\lambda}^*}\}\cap \cE_3 \nonumber \\ &= \{M_n > C \bar{\lambda}^*\sum_i \nu_i^2 \|G_i\|^2 + \frac{\alpha}{\bar{\lambda}^*}\}\cap \cE_3 \cap_{i=1}^n \cA_i(\bar{\lambda}^*) \nonumber \\
&\subseteq \cE_3\cap\left(\cup_{\lambda \in \Lambda_B}\{\lambda M_n > C \lambda^2\sum_{i=1}^{n}\nu_i^2\|G_i\|^2 + \alpha\}\cap_{i=1}^{n} \mathcal{A}_i(\lambda)\right)
\end{align}

Notice that $\lambda^*$ is chosen such that 

\begin{align}
Ce\lambda^* \sum_i \nu_i^2\|G_i\|^2 + \frac{e\alpha}{\lambda^*} &= e(C+1)\sqrt{\alpha(\sum_i \nu_i^2 \|G_i\|^2)} \nonumber \\
&= e(C+1)\lambda^* \sum_i \nu_i^2 \|G_i\|^2 \\
&\leq e(C+1)\lambda^* \sum_i \nu_i^2 \|G_i\|^2 + \frac{e \alpha}{\lambda_{\min}}
\end{align}

Combining these equations, we conclude that for some constant $C_1 > 0$, we must have
$$\{M_n > C_1(\lambda^*\sum_{i=1}^{n}\nu_i^2 \|G_i\|^2 + \frac{\alpha}{\lambda_{\min}}) \}\cap(\cE_3\cup\cE_4) \subseteq \left(\cup_{\lambda \in \Lambda_B}\{\lambda M_n > C \lambda^2\sum_{i=1}^{n}\nu_i^2\|G_i\|^2 + \alpha\}\cap_{i=1}^{n} \mathcal{A}_i(\lambda)\right)\cap(\cE_3\cup \cE_4) $$

Noting that $\mathcal{B} = \cE_3\cup\cE_4$, we conclude the result.
 


\end{proof}

\section{Martingale Decomposition and Variance Calculation}
\label{appendix:variance_calculation}
In this section, we will consider the quantity similar to $H^f$ in Lemma~\ref{lemma:l2errorboundmartingale}, decompose it into a sum of martingale difference sequence, and then bounds its variance using the Tweedie's formula. In this section, assume that we are given $\zeta : \bR^+ \times \bR^d \to \bR^d$ and consider the quantity:

\bas{
    H := \sum_{t \in \timeset,i \in [m]}\frac{\gamma_{t}}{\sigma_{t}^{2}}\langle \zeta(t,x_{t}^{(i)}),z_{t}^{(i)}-\E [z_{t}^{(i)}|x_{t}^{(i)}]\rangle
}
We suppose that $\zeta(t,x_t^{(i)})$ has a finite second moment. Where $\gamma_{t} > 0$ is some sequence. When $\zeta = \frac{s-f}{m}$, this yields us $H^f$ as we show in Lemma~\ref{lemma:martingale_decomposition_1}. We define the sigma algebras: $\sigma$-algebra $\mathcal{F}_j = \sigma(x^{(i)}_{t}: 1\leq i\leq m, t \geq t_{N-j+1})$ for $j \in [N]$ and $\mathcal{F}_0$ is the trivial $\sigma$-algebra. We want to filter $H$ through the filtration $\mathcal{F}_j$ to obtain a martingale decomposition. To this end, define:

\begin{equation}
 H_{j} := \E\bbb{H|\mathcal{F}_{j}} \,; j \in \{0,\dots,N\} 
\end{equation}

\begin{lemma}
\label{lemma:martingale_conditioning}

\begin{enumerate}
    \item 
    If $t \leq t_{N-j+1}$, then $$\E[ \langle \zeta(t,x_t^{(i)}) ,z_t^{(i)} - \E[z_t^{(i)}|x_t^{(i)}] \rangle|\mathcal{F}_j] = 0$$
    \item If $t > t_{N-j+1}$, then
    $$\E[ \langle \zeta(t,x_t^{(i)}) ,z_t^{(i)} - \E[z_t^{(i)}|x_t^{(i)}] \rangle|\mathcal{F}_j] =  e^{-t}\langle \zeta(t,x_t^{(i)}), \E[x_0^{(i)}|x_t^{(i)}] - \E[x_0^{(i)}|x_{t_{N-j+1}}^{(i)}] \rangle $$
\end{enumerate}
\end{lemma}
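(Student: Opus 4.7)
The plan is to reduce everything to computing $\E[x_0^{(i)} \mid \mathcal{G}]$ for an appropriate $\sigma$-algebra $\mathcal{G}$, using the identity $z_t^{(i)} = x_t^{(i)} - e^{-t} x_0^{(i)}$. This gives $\E[z_t^{(i)} \mid \mathcal{G}] = \E[x_t^{(i)} \mid \mathcal{G}] - e^{-t}\E[x_0^{(i)} \mid \mathcal{G}]$ whenever $x_t^{(i)}$ is $\mathcal{G}$-measurable, and in particular $\E[z_t^{(i)} \mid x_t^{(i)}] = x_t^{(i)} - e^{-t}\E[x_0^{(i)} \mid x_t^{(i)}]$. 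The only probabilistic input I need is the Markov property of the forward OU process \eqref{eq:fwd_noise}: given $x_{s}^{(i)}$, the past $x_0^{(i)}$ is conditionally independent of the future $\{x_u^{(i)} : u \geq s\}$. I will also use that trajectories across different indices $i' \neq i$ are independent, so conditioning on their coordinates inside $\mathcal{F}_j$ does not affect any conditional expectation involving only the $i$-th trajectory; effectively one conditions on $\mathcal{F}_j^{(i)} := \sigma(x_s^{(i)} : s \geq t_{N-j+1})$.

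For Case 1 ($t \leq t_{N-j+1}$), $\zeta(t, x_t^{(i)})$ is not necessarily $\mathcal{F}_j$-measurable, so I would apply the tower property through the enlarged $\sigma$-algebra $\widetilde{\mathcal{F}} := \sigma(\mathcal{F}_j, x_t^{(i)})$. Under $\widetilde{\mathcal{F}}$ the factor $\zeta(t, x_t^{(i)})$ pulls out, leaving $\langle \zeta(t, x_t^{(i)}), \E[z_t^{(i)} \mid \widetilde{\mathcal{F}}] - \E[z_t^{(i)} \mid x_t^{(i)}] \rangle$. Since $t \leq t_{N-j+1} \leq s$ for every $s$ generating $\mathcal{F}_j^{(i)}$, the $\sigma$-algebra $\widetilde{\mathcal{F}}$ (restricted to the $i$-th trajectory) is generated by $x_t^{(i)}$ together with values at times $\geq t$, so by the Markov property $\E[x_0^{(i)} \mid \widetilde{\mathcal{F}}] = \E[x_0^{(i)} \mid x_t^{(i)}]$. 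Hence $\E[z_t^{(i)} \mid \widetilde{\mathcal{F}}] = \E[z_t^{(i)} \mid x_t^{(i)}]$ and the inner conditional expectation vanishes; a second application of the tower property concludes Case 1.

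For Case 2 ($t > t_{N-j+1}$), we have $x_t^{(i)} \in \mathcal{F}_j$, so $\zeta(t, x_t^{(i)})$ and $\E[z_t^{(i)} \mid x_t^{(i)}]$ are both $\mathcal{F}_j$-measurable. Pulling them out gives
\[
\E\bigl[\langle \zeta(t, x_t^{(i)}), z_t^{(i)} - \E[z_t^{(i)} \mid x_t^{(i)}] \rangle \,\big|\, \mathcal{F}_j\bigr] = \bigl\langle \zeta(t, x_t^{(i)}),\, \E[z_t^{(i)} \mid \mathcal{F}_j] - \E[z_t^{(i)} \mid x_t^{(i)}]\bigr\rangle.
\]
The earliest time appearing in $\mathcal{F}_j^{(i)}$ is $t_{N-j+1}$, so by the Markov property $\E[x_0^{(i)} \mid \mathcal{F}_j] = \E[x_0^{(i)} \mid x_{t_{N-j+1}}^{(i)}]$, and therefore $\E[z_t^{(i)} \mid \mathcal{F}_j] = x_t^{(i)} - e^{-t}\E[x_0^{(i)} \mid x_{t_{N-j+1}}^{(i)}]$. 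Subtracting $\E[z_t^{(i)} \mid x_t^{(i)}] = x_t^{(i)} - e^{-t}\E[x_0^{(i)} \mid x_t^{(i)}]$ produces the stated expression.

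The computations are routine; the only subtlety is the careful Markov argument for $\E[x_0^{(i)} \mid \mathcal{G}]$ in each case, especially recognizing that in Case 1 one must first enlarge $\mathcal{F}_j$ by $x_t^{(i)}$ (since $\zeta(t, x_t^{(i)})$ may not be $\mathcal{F}_j$-measurable when $t < t_{N-j+1}$) and use that this enlargement still reduces to conditioning on the future of $x_t^{(i)}$, which is the direction in which the Markov property applies.
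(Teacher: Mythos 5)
Your proof is correct and follows essentially the same route as the paper: both arguments reduce to the Markov-property identity $\E[x_0^{(i)}\mid \mathcal{G}] = \E[x_0^{(i)}\mid x_s^{(i)}]$ for the appropriate conditioning $\sigma$-algebra, combined with the tower property and $z_t^{(i)} = x_t^{(i)} - e^{-t}x_0^{(i)}$. The only cosmetic difference is that in Case 1 you enlarge $\mathcal{F}_j$ by $x_t^{(i)}$ directly, whereas the paper first reduces $\mathcal{F}_j$ to $\sigma(x_{t_{N-j+1}}^{(i)})$ and then conditions on $(x_t^{(i)}, x_{t_{N-j+1}}^{(i)})$; both steps rest on the same conditional-independence fact.
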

\begin{proof}
\begin{enumerate}
    \item Using the fact that $x_t^{(i)}$ forms a Markov process and that $(x_s^{(i)})_{s\geq 0},(x_s^{(j)})_{s \geq 0}$ are independent when $i \neq j$, we have via the Markov property:
\begin{align}&\E[ \langle \zeta(t,x_t^{(i)}) ,z_t^{(i)} - \E[z_t^{(i)}|x_t^{(i)}] \rangle|\mathcal{F}_j] = \E[ \langle \zeta(t,x_t^{(i)}) ,z_t^{(i)} - \E[z_t^{(i)}|x_t^{(i)}] \rangle|x^{(i)}_{t_{N-j+1}}] \nonumber \\
&= \E\left[\E[ \langle \zeta(t,x_t^{(i)}) ,z_t^{(i)} - \E[z_t^{(i)}|x_t^{(i)}] \rangle|x_t^{(i)},x^{(i)}_{t_{N-j+1}}]\bigr|x^{(i)}_{t_{N-j+1}}\right]
\end{align} 

In the second step, we have used the tower property of the conditional expectation. Now, $z_t^{(i)} = x_t^{(i)}-e^{-t}x_0^{(i)}$. By the Markov Property, we have: $\E[x_0^{(i)}|x_t^{(i)},x_{t_{j-N+1}}^{(i)}] = \E[x_0^{(i)}|x_t^{(i)}]$. Plugging this in, we have: 
\begin{align}\E[ \langle \zeta(t,x_t^{(i)}) ,z_t^{(i)} - \E[z_t^{(i)}|x_t^{(i)}] \rangle|\mathcal{F}_j] 
&= \E\left[\E[ \langle \zeta(t,x_t^{(i)}) ,z_t^{(i)} - \E[z_t^{(i)}|x_t^{(i)}] \rangle|x_t^{(i)}]\bigr|x^{(i)}_{t_{N-j+1}}\right] \nonumber \\
&= 0
\end{align} 

    \item Notice that $z_t^{(i)} = x_t^{(i)}-e^{-t}x_0^{(i)}$. Clearly, $x_t^{(i)}$ is measurable with respect to $\mathcal{F}_j$. Therefore, 
$$\E[ \langle \zeta(t,x_t^{(i)}) ,z_t^{(i)} - \E[z_t^{(i)}|x_t^{(i)}] \rangle|\mathcal{F}_j] = - e^{-t}\langle \zeta(t,x_t^{(i)}), \E[x_0^{(i)}|\mathcal{F}_j] - \E[x_0^{(i)}|x_t^{(i)}] \rangle $$
    Now, consider the fact that $x_0^{(i)},x_{t_1}^{(i)},...$ is a Markov chain. Therefore, the Markov property states that $x_0^{(i)}|x_{s}^{(i)}: s\geq \tau $ has the same law as $x_0^{(i)}| x_{\tau}^{(i)}$. Therefore, we must have: $\E[x_0^{(i)}|\mathcal{F}_j] = \E[x_0^{(i)}|x^{(i)}_{t_{j-N+1}}]$. Plugging this into the display equation above, we conclude the result.
\end{enumerate}
\end{proof}

We connect the quantity $H$ defined above to the quantity $H^{f}$ related to the excess risk.

\begin{lemma}\label{lemma:martingale_decomposition_1}
    Let $y_{t}^{(i)} := -\frac{z_t^{(i)}}{\sigma_t^2}$, $f\in \cF$ and
    \bas{
    H^{f} := \sum_{i \in [m], j \in [N]}\frac{\gamma_{j}\inner{f\bb{t_j,x_{t_j}^{(i)}}-s\bb{t_j,x_{t_j}^{(i)}}}{y_{t_j}^{(i)} -s\bb{t_j,x_{t_j}^{(i)}}}}{m}
    } 
    Suppose we pick $\zeta = \frac{s-f}{m}$ in the definition of $H$. Then, 
    \bas{
         H^{f} = \bb{H -H_{N}} + \sum_{k=2}^{N}\bb{H_k - H_{k-1}}
    }
    such that 
    \bas{
        H-H_N &= \sum_{i=1}^{m}\sum_{j=1}^N \frac{e^{-(t_j-t_1)}\gamma_{j}}{\sigma_{t_j}^{2}}\langle \zeta(t_j,x_{t_j}^{(i)}),z^{(i)}_{t_1}-\E[z^{(i)}_{t_1}|x^{(i)}_{t_1}]\rangle \\
        H_k - H_{k-1} &= \sum_{i=1}^{m}\sum_{j=N-k+2}^N \frac{e^{-t_{j}}\gamma_{j}}{\sigma_{t_j}^{2}}\langle \zeta(t_j,x_{t_j}^{(i)}),\E[x_0^{(i)}|x_{t_{N-k+2}}^{(i)}]-\E[x_0^{(i)}|x_{t_{N-k+1}}^{(i)}]\rangle
    }

\end{lemma}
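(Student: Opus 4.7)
The plan has two parts: first, reduce the target identity to a telescoping sum over the filtration $\{\mathcal{F}_j\}$; second, compute the individual conditional expectations using Lemma~\ref{lemma:martingale_conditioning} together with a change of variables between $z_t$ and $x_0$.

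I would begin by verifying that $H = H^f$ under the choice $\zeta = (s-f)/m$. Tweedie's formula \eqref{eq:tweedies_formula} gives $s(t_j, x_{t_j}^{(i)}) = -\E[z_{t_j}^{(i)} \mid x_{t_j}^{(i)}]/\sigma_{t_j}^2$, so $y_{t_j}^{(i)} - s(t_j, x_{t_j}^{(i)}) = -(z_{t_j}^{(i)} - \E[z_{t_j}^{(i)} \mid x_{t_j}^{(i)}])/\sigma_{t_j}^2$. Substituting this into $H^f$ and absorbing the two minus signs into $f - s = -m\zeta$ reproduces exactly the definition of $H$. Since $\mathcal{F}_j$ is increasing in $j$, one can telescope $H = (H - H_N) + \sum_{k=2}^{N}(H_k - H_{k-1}) + H_1$, so the claim reduces to showing $H_1 = 0$ and computing the two types of differences. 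The vanishing $H_1 = 0$ is immediate: for $\mathcal{F}_1 = \sigma(x_t^{(i)}: t \geq t_N)$ every index satisfies $t_j \leq t_N$, which places every summand in case~1 of Lemma~\ref{lemma:martingale_conditioning}.

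To handle $H_k - H_{k-1}$ for $k \in \{2, \ldots, N\}$, I would apply Lemma~\ref{lemma:martingale_conditioning} summand by summand. Conditioning on $\mathcal{F}_k$, summand $j$ vanishes when $j \leq N-k+1$ and equals $e^{-t_j}\langle \zeta(t_j, x_{t_j}^{(i)}), \E[x_0^{(i)} \mid x_{t_j}^{(i)}] - \E[x_0^{(i)} \mid x_{t_{N-k+1}}^{(i)}]\rangle$ when $j \geq N-k+2$. Taking the difference with the analogous expression for $H_{k-1}$: the $j = N-k+2$ summand appears only in $H_k$ (with $\E[x_0\mid x_{t_j}] = \E[x_0 \mid x_{t_{N-k+2}}]$); for $j \geq N-k+3$ the two $\E[x_0 \mid x_{t_j}]$ pieces cancel and only $\E[x_0^{(i)} \mid x_{t_{N-k+2}}^{(i)}] - \E[x_0^{(i)} \mid x_{t_{N-k+1}}^{(i)}]$ remains. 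Both cases combine into the stated uniform formula for $j \geq N-k+2$.

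For $H - H_N$, the key is the pointwise identity $z_t - \E[z_t \mid x_t] = -e^{-t}(x_0 - \E[x_0 \mid x_t])$, which follows directly from $z_t = x_t - e^{-t}x_0$. For $j \geq 2$ this rewrites the $j$-th summand of $H$ as $-\gamma_j e^{-t_j}/\sigma_{t_j}^2 \cdot \langle \zeta, x_0 - \E[x_0 \mid x_{t_j}]\rangle$; subtracting the $j$-th summand of $H_N$ cancels $\E[x_0 \mid x_{t_j}]$, leaving $-\gamma_j e^{-t_j}/\sigma_{t_j}^2 \cdot \langle \zeta, x_0 - \E[x_0 \mid x_{t_1}]\rangle$. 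Applying the same identity at $t = t_1$ converts this to $\gamma_j e^{-(t_j - t_1)}/\sigma_{t_j}^2 \cdot \langle \zeta(t_j, x_{t_j}^{(i)}), z_{t_1}^{(i)} - \E[z_{t_1}^{(i)} \mid x_{t_1}^{(i)}]\rangle$. The $j = 1$ summand of $H$ survives $H_N$ unchanged (its contribution to $H_N$ vanishes by case~1 of the lemma) and matches this formula since $e^{-(t_1 - t_1)} = 1$, so all $j \in [N]$ are captured uniformly. I do not expect a significant obstacle beyond careful sign bookkeeping in the $z \leftrightarrow x_0$ substitution and verifying that the $j = 1$ boundary fits the general closed form.
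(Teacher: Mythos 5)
Your proof is correct and takes essentially the same route as the paper: identify $H^f$ with $H$ via Tweedie's formula, telescope through the filtration using $H_1 = 0$, and evaluate the increments $H_k - H_{k-1}$ and $H - H_N$ via Lemma~\ref{lemma:martingale_conditioning} together with the identity $z_t - \E[z_t|x_t] = -e^{-t}(x_0 - \E[x_0|x_t])$. You simply spell out the bookkeeping that the paper's terse proof delegates to that lemma, and your details (including the $j = N-k+2$ boundary term and the $j=1$ term in $H - H_N$) check out.
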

\begin{proof}
By Tweedie's formula, notice that $y_t^{i}- s(t,x_t^{(i)}) = \frac{\E [z_t^{(i)}|x_t^{(i)}]-z_t^{(i)}}{\sigma_t^2}$. This shows us that $H^f = H$ when we pick $\zeta = \frac{s-f}{m}$. The proof follows due to Lemma~\ref{lemma:martingale_conditioning} once we note that $H_1 = 0$ almost surely
\end{proof}

\begin{lemma}\label{lemma:error_martingale_decomposition_2_actual_appendix}

Define $\bar{G}_i := \sum_{j=1}^{N} \tfrac{\gamma_{j}e^{-(t_j-t_1)}\zeta\bb{t_j,x_{t_j}^{(i)}}}{\sigma_{t_j}^{2}}$, $G_{i,k} := \sum_{j = N-k+2}^{N} \tfrac{\gamma_{j}e^{-t_j}\zeta\bb{t_j,x_{t_j}^{(i)}}}{\sigma_{t_j}^{2}}$ and $R_{i,k}$ as  
    \begin{equation}
    R_{i,k} = \begin{cases} 0 &\text{ for } k = 0\\ \bigr \langle G_{i,k+1},\E[x_0^{(i)}|x_{t_{N-k+1}}^{(i)}]-\E[x_0^{(i)}|x_{t_{N-k}}^{(i)}]\bigr \rangle &\text{ for }  k \in \{1,\dots,N-1\}, \\ 
     \bigr\langle \bar{G}_i , z_{t_1}^{(i)}-\E\bbb{z_{t_1}^{(i)}|x_{t_1}^{(i)}}\bigr\rangle &\text{ for } k = N \end{cases} 
    \end{equation}
    Let $t_0 = 0$. Consider the filtration defined by the sequence of $\sigma$-algebras,  $\mathcal{F}_{i,k} := \sigma(\{x_{t}^{(j)} : 1\leq j < i, t \in \timeset\}\cup \{x_{t}^{(i)} : t \geq t_{N-k}\})$ for $i \in [m]$ and $k \in \{0,\dots, N\}$, satisfying the total ordering $\left\{\bb{i_1, j_1} < \bb{i_2, j_2} \text{ iff } i_1 < i_2 \text{ or } i_1 = i_2, j_1 < j_2\right\}$. Then 
    \begin{enumerate}
        \item For $k \in [N-1]$, $G_{i,k+1}$ is measurable with respect to $\mathcal{F}_{i,k-1}$ and $\bar{G}_{i}$ if $\mathcal{F}_{N-1}$ measurable.
        \item For $i \in [m], k \in \{0\}\cup[N]$,  $(R_{i,k})_{i,k}$ forms a martingale difference sequence with respect to the filtration above. 
        \item $H = \sum_{i \in [m]}\sum_{k \in [N]}R_{i,k}\,.$ 
    \end{enumerate}

\end{lemma}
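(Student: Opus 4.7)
}
The plan is to verify the three claims in the stated order, with the first two being essentially bookkeeping about $\sigma$-algebras and the third being a restatement of the filtered decomposition already worked out in Lemma~\ref{lemma:martingale_decomposition_1}. First, I would check measurability. By construction $G_{i,k+1}$ is a sum over $j \in \{N-k+1,\dots,N\}$ of terms built from $x_{t_j}^{(i)}$, which all satisfy $t_j \geq t_{N-k+1}$; since $\mathcal{F}_{i,k-1}$ contains $\{x_t^{(i)}: t \geq t_{N-k+1}\}$ together with the entire trajectories for $j' < i$, this gives $G_{i,k+1} \in \mathcal{F}_{i,k-1}$. The same reasoning (with $\mathcal{F}_{i,N-1}$ containing $x_{t_j}^{(i)}$ for all $j \in [N]$) handles $\bar G_i$.

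Next, I would show the martingale difference property $\E[R_{i,k}\mid \mathcal{F}_{i,k-1}] = 0$, where in the chosen total order the immediate predecessor $\sigma$-algebra of the index $(i,k)$ is $\mathcal{F}_{i,k-1}$. For $k \in [N-1]$, the factor $G_{i,k+1}$ can be pulled out by Step~1, and the only nontrivial point is to compute $\E[\E[x_0^{(i)}\mid x_{t_{N-k}}^{(i)}]\mid \mathcal{F}_{i,k-1}]$. Independence of distinct trajectories plus the Markov property of $(x_s^{(i)})_{s\geq 0}$ reduces this to $\E[\E[x_0^{(i)}\mid x_{t_{N-k}}^{(i)}]\mid x_{t_{N-k+1}}^{(i)}]$; then, since $x_0 \to x_{t_{N-k}} \to x_{t_{N-k+1}}$ is Markov, Bayes/tower gives $\E[x_0^{(i)}\mid x_{t_{N-k}}^{(i)}, x_{t_{N-k+1}}^{(i)}] = \E[x_0^{(i)}\mid x_{t_{N-k}}^{(i)}]$ and hence the outer conditional expectation collapses to $\E[x_0^{(i)}\mid x_{t_{N-k+1}}^{(i)}]$, which cancels the corresponding term in $R_{i,k}$. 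For $k = N$, I would write $z_{t_1}^{(i)} = x_{t_1}^{(i)} - e^{-t_1}x_0^{(i)}$, use that $x_{t_1}^{(i)}$ is $\mathcal{F}_{i,N-1}$-measurable, and apply the same Markov/independence argument to get $\E[x_0^{(i)}\mid \mathcal{F}_{i,N-1}] = \E[x_0^{(i)}\mid x_{t_1}^{(i)}]$, so $\E[z_{t_1}^{(i)}\mid \mathcal{F}_{i,N-1}] = \E[z_{t_1}^{(i)}\mid x_{t_1}^{(i)}]$ and the difference in $R_{i,N}$ vanishes in conditional expectation.

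Finally, for the identity $H = \sum_{i,k} R_{i,k}$, I would invoke the decomposition of Lemma~\ref{lemma:martingale_decomposition_1}, namely
\[
H^{f} \;=\; (H - H_N) + \sum_{k=2}^{N} (H_k - H_{k-1}),
\]
specialized to $\zeta = (s-f)/m$ so that $H^f = H$. Matching the explicit formulas for $H - H_N$ and $H_k - H_{k-1}$ with the definitions of $\bar G_i$ and $G_{i,k}$ gives $H - H_N = \sum_i R_{i,N}$ and $H_k - H_{k-1} = \sum_i R_{i,k-1}$ for $k \in \{2,\dots,N\}$. Together with $H_0 = \E[H] = 0$ and $H_1 = 0$ (both immediate from Lemma~\ref{lemma:martingale_conditioning} since every $t \in \mathcal{T}$ satisfies $t \leq t_N$), this yields exactly $\sum_{i}\sum_{k\in[N]} R_{i,k}$, and the $k=0$ term is identically zero by definition.

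The only genuinely delicate point, and the one I would write out carefully, is the backward-in-time Markov argument used in Step~2: one must distinguish conditioning on a later time (which, combined with independence across $i$, reduces $\mathcal{F}_{i,k-1}$ to $\sigma(x_{t_{N-k+1}}^{(i)})$ for purposes of $x_0^{(i)}$) from the conditional expectation on an earlier time $x_{t_{N-k}}^{(i)}$ appearing inside the random variable. Everything else is direct telescoping and measurability.
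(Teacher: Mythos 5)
Your proposal is correct and follows essentially the same route as the paper: measurability of $G_{i,k+1}$ and $\bar G_i$ from the definition of $\mathcal{F}_{i,k-1}$, the zero conditional expectation via independence across trajectories plus the Markov/tower argument (which is exactly the content of Lemma~\ref{lemma:martingale_conditioning}), and the telescoping identity from Lemma~\ref{lemma:martingale_decomposition_1} together with $H_1=0$. The only cosmetic omission is the one-line remark that each $R_{i,k}$ is $\mathcal{F}_{i,k}$-measurable, which the paper states explicitly.
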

\begin{proof} 
\begin{enumerate}
\item 
    We first note that for $1 \leq k \leq N-1$, $  \sigma\left\{x_{t}^{i} :  t \geq t_{N-k+1}\right\} \subseteq \mathcal{F}_{i, k-1}$.  Therefore, $G_{i,k+1}$ is measurable with respect to $\mathcal{F}_{i, k-1}$. Furthermore, if $k=N$, then $\bar{G}_{i}$ is measurable with respect to $\mathcal{F}_{i, k-1}$. 
    
\item First note that $R_{i,k}$ is $\mathcal{F}_{i,k}$ measurable.
        \bas{
        \E\bbb{R_{i,k}|\mathcal{F}_{i,k-1}} &= \begin{cases}
            \inner{G_{i,k+1}}{\E[x_0^{(i)}|x_{t_{N-k+1}}^{(i)}]-\E\bbb{\E[x_0^{(i)}|x_{t_{N-k}}^{(i)}]|\mathcal{F}_{i, k-1}} } = 0, &\text{ when } k \in [N-1], \\ \\
            \inner{\bar{G}_i}{\E\bbb{z_{t_1}^{(i)}|\mathcal{F}_{i, k-1}}-\E\bbb{z_{t_1}^{(i)}|x_{t_1}^{(i)}}} = 0, &\text{ when } k = N 
        \end{cases}
    }
The case of $R_{i,0}$ is straightforward. 

\item This follows from Lemma~\ref{lemma:martingale_decomposition_1}.
\end{enumerate}
\end{proof}

\begin{lemma}\label{lemma:variance_bound_1}
Consider the setting of Lemma~\ref{lemma:error_martingale_decomposition_2_actual_appendix}. Define:
\begin{equation}
    V_{i,k} = \begin{cases} 0 &\text{ if } k = 0 \\
    \E[x_0^{(i)}|x_{t_{N-k+1}}^{(i)}]-\E[x_0^{(i)}|x_{t_{N-k}}^{(i)}] &\text{ if } k \in \{1,\dots,N-1\} \\
    z_{t_1}^{(i)}-\E\bbb{z_{t_1}^{(i)}|x_{t_1}^{(i)}} &\text{ if } k = N 
    \end{cases}
\end{equation}
Let $\Sigma_{i,k} := \E[V_{i,k}V_{i,k}^{\top}|\mathcal{F}_{i,k-1}]$. Then, we have:

\begin{equation}
    \E\bbb{R_{i,k}^{2}|\mathcal{F}_{i,k-1}} = \begin{cases} 0 &\text{ if } k = 0 \\
    G_{i,k+1}^{\top}\Sigma_{i,k}G_{i,k+1}&\text{ if } k \in \{1,\dots,N-1\} \\
    \bar{G}_i^{\top} \Sigma_{i,k}\bar{G}_i &\text{ if } k = N 
    \end{cases}
\end{equation}

\end{lemma}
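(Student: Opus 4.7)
The plan is essentially a direct computation using the measurability structure established in Lemma~\ref{lemma:error_martingale_decomposition_2_actual_appendix}. The key observation is that for each case, $R_{i,k}$ can be written as an inner product $\langle G, V_{i,k} \rangle$ where $G$ is $\mathcal{F}_{i,k-1}$-measurable, so the conditional second moment reduces to a quadratic form in $G$ against the conditional covariance $\Sigma_{i,k}$.

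More concretely, I would handle the three cases separately. For $k=0$ we have $R_{i,0} = 0$ by definition, so the claim is trivial. For $k \in [N-1]$, Lemma~\ref{lemma:error_martingale_decomposition_2_actual_appendix} gives $R_{i,k} = \langle G_{i,k+1}, V_{i,k}\rangle$, and part (1) of that lemma establishes that $G_{i,k+1}$ is $\mathcal{F}_{i,k-1}$-measurable (since it is a sum of $\zeta(t_j, x_{t_j}^{(i)})$ for $j \geq N-k+1$, i.e., times $t_j \geq t_{N-k+1}$, which are all captured in $\mathcal{F}_{i,k-1}$). For $k=N$, we have $R_{i,N} = \langle \bar{G}_i, V_{i,N}\rangle$ and $\bar{G}_i$ is $\mathcal{F}_{i,N-1}$-measurable for the same reason.

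In both nontrivial cases, squaring gives $R_{i,k}^2 = G^\top V_{i,k} V_{i,k}^\top G$ for the appropriate $G \in \{G_{i,k+1}, \bar{G}_i\}$. Since $G$ is $\mathcal{F}_{i,k-1}$-measurable, I can pull it outside the conditional expectation:
\[
\E[R_{i,k}^2 \mid \mathcal{F}_{i,k-1}] = G^\top \E[V_{i,k} V_{i,k}^\top \mid \mathcal{F}_{i,k-1}] G = G^\top \Sigma_{i,k} G,
\]
which is exactly the claimed formula in each case.

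There is no substantive obstacle here: the lemma is really a bookkeeping statement packaging the conditional second moment of each martingale increment into a quadratic form against $\Sigma_{i,k}$, so that subsequent lemmas (e.g.\ Lemma~\ref{lemma:martingale_variance_bound}) can focus entirely on bounding the operator norm of $\Sigma_{i,k}$ via Tweedie-type formulas. The only care needed is to verify the measurability of $G_{i,k+1}$ and $\bar{G}_i$ with respect to $\mathcal{F}_{i,k-1}$, which follows immediately from the definition of the filtration (times strictly greater than $t_{N-k}$ are $\mathcal{F}_{i,k-1}$-measurable) and the summation ranges in the definitions of $G_{i,k+1}$ and $\bar{G}_i$.
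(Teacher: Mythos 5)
Your proof is correct and matches the paper's argument: the paper's own proof simply invokes the measurability of $G_{i,k+1}$ and $\bar{G}_i$ with respect to $\mathcal{F}_{i,k-1}$ from Lemma~\ref{lemma:error_martingale_decomposition_2_actual_appendix} and pulls the quadratic form out of the conditional expectation, exactly as you do. No gaps; the case-by-case bookkeeping and the measurability check are the whole content of the lemma.
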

\begin{proof}

This follows from a straightforward application of Lemma~\ref{lemma:error_martingale_decomposition_2_actual_appendix}.
\end{proof}

Let $U$ be any random vector over $\R^d$ independent of $V \sim \mathcal{N}(0,\sigma^2\mathbf{I}_d)$. Let $W = U+V$ and let $p$ be the density of $W$, $s = \nabla \log p$ and $h = \nabla^2 \log p$. Then, second order Tweedie's formula states (Theorem 1,\cite{meng2021estimating}):
$$ \E[VV^{\intercal}|W] = \sigma^4 h(W) + \sigma^4 s(W)s^{\top}(W) + \sigma^2 \mathbb{I}_d\,.$$
\begin{lemma}\label{lemma:second_order_tweedie_application} Let $s_\tau:\R^d \to \R^d$ be continuously differentiable for every $\tau > 0$. Let $t' < t$ and $x_{t} = e^{-\bb{t-t'}}x_{t'} + z_{t,t'}$ where $z_{t,t'} \sim \mathcal{N}\bb{0, \sigma_{t-t'}^{2}\id_{d}}$, as defined in Section~\ref{sec:problemsetup}. Then, 
    \bas{
        \E\bbb{z_{t,t'}z_{t,t'}^{\top}|x_{t}} &= \sigma_{t-t'}^{4}h_{t}\bb{x_{t}} + \sigma_{t-t'}^{4}s\bb{t, x_t}s\bb{t, x_t}^{\top} + \sigma_{t-t'}^{2}\id_{d} \\
        \E\bbb{s\bb{t', x_{t'}}s\bb{t', x_{t'}}^{\top}|x_{t}} &=  e^{2(t'-t)} s(t, x_t)s(t, x_t)^{\top} + e^{2(t'-t)}h_{t}\bb{x_{t}} -\E[h_{t'}\bb{x_{t'}}|x_t]
    }
    where $h_{t}\bb{x_{t}} := \nabla^{2}\log\bb{p_{t}\bb{x_t}}$.
\end{lemma}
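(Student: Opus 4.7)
The two formulas will be derived from two different manipulations of the identity $p_{t}(x_{t}) = \int p_{t'}(x_{t'}) p(x_{t}\mid x_{t'})\,dx_{t'}$, where the conditional transition is the Gaussian kernel $p(x_{t}\mid x_{t'}) = g(x_{t} - e^{-(t-t')}x_{t'})$ with $g$ the density of $\mathcal{N}(0,\sigma_{t-t'}^{2}\mathbf{I}_{d})$. The first identity is an immediate instantiation of the stated second-order Tweedie formula, so the work lies in the second identity, where I use a gradient-transfer trick between $x_{t}$ and $x_{t'}$.

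\textbf{Step 1 (first identity).} I take $U := e^{-(t-t')}x_{t'}$ and $V := z_{t,t'} \sim \mathcal{N}(0,\sigma_{t-t'}^{2}\mathbf{I}_{d})$, noting that $V$ is independent of $U$ because $z_{t,t'}$ is independent of $x_{t'}$. Then $W := U + V = x_{t}$ has density $p_{t}$, so $\nabla \log p_{t} = s(t,\cdot)$ and $\nabla^{2}\log p_{t} = h_{t}$. Plugging these into the second-order Tweedie formula cited just above the lemma statement gives the first claim verbatim.

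\textbf{Step 2 (key symmetry for the second identity).} Because $p(x_{t}\mid x_{t'})$ depends on its arguments only through $u := x_{t} - e^{-(t-t')}x_{t'}$, direct differentiation yields the chain-rule identities
\begin{equation*}
\nabla_{x_{t}} p(x_{t}\mid x_{t'}) = -e^{(t-t')}\, \nabla_{x_{t'}} p(x_{t}\mid x_{t'}), \qquad \nabla_{x_{t}}^{2}\, p(x_{t}\mid x_{t'}) = e^{2(t-t')}\, \nabla_{x_{t'}}^{2}\, p(x_{t}\mid x_{t'}).
\end{equation*}
Differentiating $p_{t}(x_{t}) = \int p_{t'}(x_{t'}) p(x_{t}\mid x_{t'})\, dx_{t'}$ twice in $x_{t}$ and applying the second symmetry, I obtain
\begin{equation*}
\nabla^{2} p_{t}(x_{t}) = e^{2(t-t')} \int p_{t'}(x_{t'})\, \nabla_{x_{t'}}^{2}\, p(x_{t}\mid x_{t'})\, dx_{t'}.
\end{equation*}
Integrating by parts twice in $x_{t'}$ (justified by the decay of $p_{t'}$ and its derivatives, standard under the differentiability Assumption~\ref{assumption:score_function_smoothness}-0) transfers the Laplacian onto $p_{t'}$:
\begin{equation*}
\nabla^{2} p_{t}(x_{t}) = e^{2(t-t')} \int \nabla^{2} p_{t'}(x_{t'})\, p(x_{t}\mid x_{t'})\, dx_{t'}.
\end{equation*}

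\textbf{Step 3 (assemble the second identity).} Using the elementary identity $\nabla^{2} p_{\tau} = p_{\tau}\bigl(h_{\tau} + s(\tau,\cdot)s(\tau,\cdot)^{\top}\bigr)$ for $\tau \in \{t,t'\}$, dividing through by $p_{t}(x_{t})$ gives
\begin{equation*}
h_{t}(x_{t}) + s(t,x_{t}) s(t,x_{t})^{\top} = e^{2(t-t')} \bE\!\left[ h_{t'}(x_{t'}) + s(t',x_{t'}) s(t',x_{t'})^{\top}\,\bigr|\, x_{t} \right].
\end{equation*}
Solving for $\E[s(t',x_{t'}) s(t',x_{t'})^{\top}\mid x_{t}]$ and multiplying by $e^{-2(t-t')} = e^{2(t'-t)}$ yields exactly the claimed second identity.

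\textbf{Expected obstacle.} The only nonroutine point is the integration by parts in Step 2: it requires enough decay of $p_{t'}(x_{t'})\, \nabla_{x_{t'}} p(x_{t}\mid x_{t'})$ (and the boundary term from the second application) at infinity. This is standard for score-based analyses because $p(x_{t}\mid x_{t'})$ decays like a Gaussian in $x_{t'}$ and $p_{t'}$, being a convolution with a Gaussian, has sufficiently smooth/decaying derivatives under Assumption~\ref{assumption:score_function_smoothness}-0; I will invoke this as justification rather than redo the measure-theoretic bookkeeping.
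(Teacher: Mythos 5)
Your proof is correct, and for the second identity it takes a genuinely different route from the paper. The paper proves that identity purely probabilistically: it applies the second-order Tweedie formula at time $t'$ to get $\E[z_{t'}z_{t'}^{\top}|x_{t'}]$, transfers this to conditioning on $x_t$ via the Markov/tower property, then applies second-order Tweedie a second time to the decomposition $x_t = y_0 + e^{t'-t}z_{t'}$ with $y_0 = e^{-t}x_0 + z_{t,t'}$ independent of $z_{t'}$, and solves the resulting linear relation. You instead exploit the scaling symmetry of the OU transition kernel, $\nabla^2_{x_t} p(x_t\mid x_{t'}) = e^{2(t-t')}\nabla^2_{x_{t'}} p(x_t\mid x_{t'})$, integrate by parts twice in $x_{t'}$, and use $\nabla^2 p_\tau = p_\tau\bb{h_\tau + s_\tau s_\tau^{\top}}$ to obtain the clean intermediate identity $h_t(x_t) + s(t,x_t)s(t,x_t)^{\top} = e^{2(t-t')}\E\bbb{h_{t'}(x_{t'}) + s(t',x_{t'})s(t',x_{t'})^{\top}\,\big|\,x_t}$, which is the second-order analogue of the target-score identity $s(t,x_t) = e^{t-t'}\E[s(t',x_{t'})|x_t]$ that the paper cites from prior work. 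Your route is more self-contained and makes the origin of the $e^{2(t'-t)}$ factor structurally transparent, at the price of the analytic bookkeeping you flag: differentiation under the integral and vanishing boundary terms. These are indeed unproblematic here, and in fact need less than you invoke — since $p_{t'}$ is a convolution with a Gaussian, both $p_{t'}$ and $\nabla p_{t'}$ are uniformly bounded, while $p(x_t\mid x_{t'})$ and $\nabla_{x_{t'}}p(x_t\mid x_{t'})$ decay like Gaussians in $\|x_{t'}\|$ for fixed $x_t$, so no Lipschitz or extra regularity assumption is needed. The paper's approach, by contrast, avoids all explicit analysis by staying inside Tweedie-plus-Markov identities, at the cost of a slightly less direct three-step manipulation. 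Your Step 1 matches the paper's proof of the first identity exactly.
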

\begin{proof}
Applying second order Tweedie's formula:
    \ba{
         &\E\bbb{z_{t,t'}z_{t,t'}^{\top}|x_{t}} = \sigma_{t-t'}^{4}h_{t}\bb{x_{t}} + \sigma_{t-t'}^{4}s\bb{t, x_t}s\bb{t, x_t}^{\top} + \sigma_{t-t'}^{2}\id_{d} \label{eq:z_t_tprime_cov}, \text{ and }, \\
         &\E[z_{t'}z_{t'}^{\top}|x_{t'}] - \sigma_{t'}^4 s(t', x_{t'})s^{\top}(t', x_{t'}) = \sigma^2_{t'} \id + \sigma^4_{t'}h_{t'}(x_{t'}) \label{eq:second_order_tweedie_1}
    }
    
    By Markov property, we must have for any measurable function $g$:
    
    $$\E[g(z_{t'})|x_t] = \E[\E[g(z_{t'})|x_t,x_{t'}]|x_{t}] = \E[\E[g(z_{t'})|x_{t'}]|x_{t}] $$
    
    Applying this to \eqref{eq:second_order_tweedie_1}:
    
    \begin{equation}\label{eq:markov_tweedie}
    \sigma_{t'}^4 \E[s(t', x_{t'})s^{\top}(t', x_{t'})|x_t] = \E[z_{t'}z_{t'}^{\top}|x_{t}] - \sigma_{t'}^2 \id - \sigma^4_{t'}\E[h_{t'}(x_{t'})|x_t]\end{equation}
    
    Now, note that $x_t = e^{-t}x_0 + e^{t'-t}z_{t'} + z_{t,t'}$. Taking $y_0 = e^{-t}x_0 + z_{t,t'}$, we have: $x_t = y_0 + e^{t'-t}z_{t'}$. Therefore, applying the second order Tweedie's formula again, we must have:
    
    $$e^{2(t'-t)}\E[z_{t'}z_{t'}^{\top}|x_{t}] = e^{4(t'-t)}\sigma_{t'}^{4} s(t, x_t)s(t, x_t)^{\top} + e^{4(t'-t)}\sigma_{t'}^{4}h_{t}(x_t) + e^{2(t'-t)}\sigma_{t'}^{2}\id$$
    
    That is : $\E[z_{t'}z_{t'}^{\top}|x_{t}] = e^{2(t'-t)}\sigma_{t'}^{4} s(t, x_t)s(t, x_t)^{\top} + e^{2(t'-t)}\sigma_{t'}^{4}h_{t}(x_t) + \sigma_{t'}^{2}\id$.
    Substituting this in Equation~\eqref{eq:markov_tweedie}, we have:
    
    $$  \E[s(t', x_{t'})s^{\top}(t', x_{t'})|x_t] =   e^{2(t'-t)} s(t, x_t)s(t, x_t)^{\top} + e^{2(t'-t)}h_{t}(x_t) -\E[h_{t'}(x_{t'})|x_t]$$
\end{proof}

\begin{lemma}\label{lemma:conditional_hessian}
Let $s_\tau:\R^d \to \R^d$ be continuously differentiable for every $\tau > 0$. For $t > t' > 0$, let $v_{t,t'} := \E\bbb{x_{0}|x_{t}} - \E\bbb{x_{0}|x_{t'}}$, then, 
\bas{
    & \E\bbb{v_{t,t'}v_{t,t'}^{\top}|x_t} \preceq \\
    & \;\;\;\; 2e^{2t}\bb{\sigma_{t-t'}^{4}h_{t}\bb{x_{t}} + \sigma_{t-t'}^{2}\id_{d}} + 2e^{2t'}\sigma_{t'}^{4}\E\bbb{\bb{s\bb{t', x_{t'}} - e^{-\bb{t-t'}}s\bb{t, x_t}}\bb{s\bb{t', x_{t'}} - e^{-\bb{t-t'}}s\bb{t, x_t}}^{\top}|x_{t}}
}
where $h_{t}\bb{x_t} := \nabla^{2}\log\bb{p_{t}\bb{x_t}}$ is the hessian of the log-density function.
\end{lemma}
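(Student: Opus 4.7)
The plan is to explicitly decompose $v_{t,t'}$ into two pieces using Tweedie's (first-order) formula, bound the conditional covariance via $(a-b)(a-b)^{\top} \preceq 2 a a^{\top} + 2 b b^{\top}$, and then evaluate the first piece's conditional second moment using the second-order Tweedie formula of Lemma~\ref{lemma:second_order_tweedie_application}. The second piece is already in the form that appears on the right-hand side of the lemma.

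First, I would rewrite $v_{t,t'}$ exactly as in the decomposition used in the proof of Lemma~\ref{lemma:subGaussianity_2}. Using Tweedie's formula $\E[x_0 | x_\tau] = e^{\tau}(x_\tau + \sigma_\tau^{2} s(\tau, x_\tau))$ for $\tau \in \{t, t'\}$, together with the SDE relation $e^{t'} x_{t'} = e^{t} x_{t} - e^{t} z_{t,t'}$, a short calculation reorganizes the coefficients of $s(t, x_t)$ using the identity $e^{t}\sigma_{t}^{2} - e^{t}\sigma_{t-t'}^{2} = e^{-(t-t')} e^{t'} \sigma_{t'}^{2}$, yielding
\bas{
v_{t,t'} = \underbrace{e^{t}\bb{z_{t,t'} + \sigma_{t-t'}^{2} s(t, x_t)}}_{=: T_1} \;-\; \underbrace{e^{t'}\sigma_{t'}^{2}\bb{s(t', x_{t'}) - e^{-(t-t')} s(t, x_t)}}_{=: T_2}.
}

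Next, applying the elementary PSD inequality $(T_1 - T_2)(T_1 - T_2)^{\top} \preceq 2 T_1 T_1^{\top} + 2 T_2 T_2^{\top}$ and then conditioning on $x_t$ reduces the problem to bounding $\E[T_1 T_1^{\top} | x_t]$ and $\E[T_2 T_2^{\top} | x_t]$ separately. The second term is immediate: by definition, $\E[T_2 T_2^{\top}|x_t]$ equals $e^{2t'}\sigma_{t'}^{4}$ times the conditional second moment of $s(t', x_{t'}) - e^{-(t-t')} s(t, x_t)$, which is exactly the second summand in the statement.

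For the first term, I expand
\bas{
\E[T_1 T_1^{\top} | x_t] = e^{2t} \E[z_{t,t'} z_{t,t'}^{\top}|x_t] + e^{2t}\sigma_{t-t'}^{2} \bb{s(t, x_t)\E[z_{t,t'}|x_t]^{\top} + \E[z_{t,t'}|x_t] s(t, x_t)^{\top}} + e^{2t}\sigma_{t-t'}^{4} s(t, x_t) s(t, x_t)^{\top}.
}
By first-order Tweedie's formula applied to the pair $(x_t, x_{t'})$, we have $\E[z_{t,t'}|x_t] = -\sigma_{t-t'}^{2} s(t, x_t)$, so the two cross terms contribute $-2 e^{2t}\sigma_{t-t'}^{4} s(t,x_t)s(t,x_t)^{\top}$. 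Combining with the last term gives a net $-e^{2t}\sigma_{t-t'}^{4} s(t,x_t)s(t,x_t)^{\top}$. Finally, the second-order Tweedie identity of Lemma~\ref{lemma:second_order_tweedie_application} yields
\bas{
\E[z_{t,t'} z_{t,t'}^{\top}|x_t] = \sigma_{t-t'}^{4} h_t(x_t) + \sigma_{t-t'}^{4} s(t,x_t)s(t,x_t)^{\top} + \sigma_{t-t'}^{2}\id_d,
}
so the $s(t,x_t) s(t,x_t)^{\top}$ terms cancel exactly, leaving $\E[T_1 T_1^{\top}|x_t] = e^{2t}\bb{\sigma_{t-t'}^{4} h_t(x_t) + \sigma_{t-t'}^{2}\id_d}$, which matches the first summand in the statement. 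There is no real obstacle here; the only care needed is in verifying the algebraic identity $e^{t}\sigma_{t}^{2} = e^{t}\sigma_{t-t'}^{2} + e^{-(t-t')} e^{t'}\sigma_{t'}^{2}$ used to produce the decomposition $v_{t,t'} = T_1 - T_2$ and in making sure the sign from $\E[z_{t,t'}|x_t] = -\sigma_{t-t'}^{2} s(t,x_t)$ is tracked correctly so that the $s s^{\top}$ contributions cancel.
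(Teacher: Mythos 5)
Your proposal is correct and follows essentially the same route as the paper: the identical decomposition $v_{t,t'}=e^{t}\bb{z_{t,t'}+\sigma_{t-t'}^{2}s\bb{t,x_t}}-e^{t'}\sigma_{t'}^{2}\bb{s\bb{t',x_{t'}}-e^{-(t-t')}s\bb{t,x_t}}$ via Tweedie's formula, the same PSD inequality, and the same use of Lemma~\ref{lemma:second_order_tweedie_application}. The only difference is that you spell out explicitly the cancellation of the $s\bb{t,x_t}s\bb{t,x_t}^{\top}$ terms (equivalently, that $z_{t,t'}+\sigma_{t-t'}^{2}s\bb{t,x_t}=z_{t,t'}-\E\bbb{z_{t,t'}|x_t}$, so its conditional second moment is the conditional covariance $\sigma_{t-t'}^{4}h_{t}\bb{x_t}+\sigma_{t-t'}^{2}\id_{d}$), which the paper leaves implicit.
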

\begin{proof}
    Using Tweedie's formula, for all $t > 0$,
    \bas{
        \E\bbb{x_{0}|x_{t}} &= \E\bbb{e^{t}\bb{x_{t}-z_{t}}|x_{t}} = e^{t}x_{t} + e^{t}\E\bbb{-z_{t}|x_{t}} = e^{t}\bb{x_{t} + \sigma_{t}^{2}s\bb{t, x_{t}}}
    }
    
    Note that $x_{t'} = e^{t-t'}\bb{x_t - z_{t,t'}}$. Furthermore, note from Tweedie's formula and Corollary 2.4 \cite{de2024target} that:
    \bas{
        \E\bbb{z_{t,t'}|x_{t}} = -\sigma_{t-t'}^{2}s\bb{t, x_t}, \;\; \E\bbb{s\bb{t', x_{t'}}|x_t} = e^{-\bb{t-t'}}s\bb{t, x_t}
    }
    Therefore, we have 
    \bas{
        v_{t,t'} &= e^{t}\bb{z_{t,t'} + \sigma_{t-t'}^{2}s\bb{t, x_t}} - e^{t'}\sigma_{t'}^{2}\bb{s\bb{t', x_{t'}} - e^{-\bb{t-t'}}s\bb{t, x_t}}
    }
    Then, using Lemma~\ref{lemma:second_order_tweedie_application} and the fact that $(a+b)(a+b)^{\top}\preceq 2aa^{\top} + 2bb^{\top}$:
    \bas{
        & \E\bbb{v_{t,t'}v_{t,t'}^{\top}|x_t} \\
        &\preceq 2e^{2t}\E\bbb{\bb{z_{t,t'} + \sigma_{t-t'}^{2}s\bb{t, x_t}}\bb{z_{t,t'} + \sigma_{t-t'}^{2}s\bb{t, x_t}}^{\top}|x_t}  \\
        & \;\;\;\; + 2e^{2t'}\sigma_{t'}^{4}\E\bbb{\bb{s\bb{t', x_{t'}} - e^{-\bb{t-t'}}s\bb{t, x_t}}\bb{s\bb{t', x_{t'}} - e^{-\bb{t-t'}}s\bb{t, x_t}}^{\top}|x_{t}} \\
        &= 2e^{2t}\bb{\sigma_{t-t'}^{4}h_{t}\bb{x_{t}} + \sigma_{t-t'}^{2}\id_{d}} \\
        & \;\;\;\; + 2e^{2t'}\sigma_{t'}^{4}\E\bbb{\bb{s\bb{t', x_{t'}} - e^{-\bb{t-t'}}s\bb{t, x_t}}\bb{s\bb{t', x_{t'}} - e^{-\bb{t-t'}}s\bb{t, x_t}}^{\top}|x_{t}}
    }

\end{proof}

To derive an upper bound for  $\normop{\E\bbb{\bb{s\bb{t', x_{t'}} - e^{-\bb{t-t'}}s\bb{t, x_t}}\bb{s\bb{t', x_{t'}} - e^{-\bb{t-t'}}s\bb{t, x_t}}^{\top}|x_{t}}},$
we adopt a strategy of partitioning the interval \([t', t]\) into smaller subintervals. Specifically, we divide \([t', t]\) as \(t' = \tau_0 < \tau_1 < \cdots < \tau_{B-1} < t = \tau_B\), where \(B \geq 1\). By leveraging the smoothness of the score function \(s_{\tau}(x)\) over each subinterval \([\tau_i, \tau_{i+1}]\), we express the deviations between \(s_{\tau_i}\) and \(s_{\tau_{i+1}}\) in terms of the Hessian, \(h_\tau(x) := \nabla^2 \log p_\tau(x)\). This decomposition allows us to quantify the overall deviation of the score function across the interval \([t', t]\) in terms of contributions from each subinterval, controlled by the Hessian, \(h_\tau(x)\). The following lemma formalizes this approach, establishing an upper bound for the given operator norm in terms of the Hessian and a carefully constructed decomposition. This result will serve as the foundation for subsequent analysis.

\begin{lemma}\label{lemma:conditional_decomposition}
Let $s_\tau:\R^d \to \R^d$ be continuously differentiable for every $\tau > 0$. 
Let $B \in \mathbb{N}$ and let $\tau_{0} := t' < \tau_{1} < \tau_{2} < \cdots < \tau_{B-1} < t := \tau_{B}$ for $B \geq 1$ and define $\forall t, h_{t}\bb{x_t} := \nabla^{2}\log\bb{p_{t}\bb{x_t}}$. Then, 
    \bas{
    & \normop{\E\bbb{\bb{s\bb{t', x_{t'}} - e^{-\bb{t-t'}}s\bb{t, x_t}}\bb{s\bb{t', x_{t'}} - e^{-\bb{t-t'}}s\bb{t, x_t}}^{\top}|x_{t}}}   \\
    & \;\;\;\;\;\;\;\;\;\;\;\; \leq \normop{\E\bbb{\sum_{i=0}^{B-1}\E_{\lambda_{i}, x_{\tau_i}, \tilde{x}_{\tau,i}}\bbb{h_{\tau_i}\bb{x_{\tau_i,\lambda_i}}\bb{x_{\tau_i}-\tilde{x}_{\tau_i}}\bb{x_{\tau_i}-\tilde{x}_{\tau_i}}^{\top}h_{\tau_i}\bb{x_{\tau_i,\lambda_i}}^{\top}|x_{\tau_{i+1}}}}\bigg|x_{t} }
    }
    where  $\tilde{x}_{\tau_i}$ is an independent copy of $x_{\tau_i}$ when conditioned on $x_{\tau_{i+1}}$. $\lambda_i$ is uniformly distributed over $[0,1]$ independent of the random variables defined above and $x_{\tau_i,\lambda_i} := \lambda_{i}x_{\tau_i} + \bb{1-\lambda_i}\tilde{x}_{\tau_i}$.
\end{lemma}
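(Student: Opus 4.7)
The key identity to exploit is Corollary 2.4 of \cite{de2024target}, which says $e^{-(t-t')}s(t,x_t) = \E[s(t',x_{t'})\mid x_t]$ for any $t>t'$. Applied on each subinterval $[\tau_i,\tau_{i+1}]$, it gives $e^{-(\tau_{i+1}-\tau_i)}s(\tau_{i+1},x_{\tau_{i+1}}) = \E[s(\tau_i,x_{\tau_i})\mid x_{\tau_{i+1}}]$. Define the rescaled sequence $Z_i := e^{-(\tau_i - t')}s(\tau_i,x_{\tau_i})$ for $i=0,\dots,B$, so that $Z_0 = s(t',x_{t'})$, $Z_B = e^{-(t-t')}s(t,x_t)$, and the above identity becomes $\E[Z_i\mid x_{\tau_{i+1}}] = Z_{i+1}$. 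Telescoping,
\bas{
s(t',x_{t'}) - e^{-(t-t')}s(t,x_t) \;=\; \sum_{i=0}^{B-1} D_i, \qquad D_i := Z_i - \E[Z_i\mid x_{\tau_{i+1}}].
}

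\textbf{Cross terms vanish.} The first real step is to show that $\E[D_i D_j^\top\mid x_t] = 0$ whenever $i\neq j$. For $i<j$, I would condition on $\sigma(x_{\tau_{i+1}},\dots,x_t)$; since $(x_{\tau_\ell})_\ell$ is a Markov chain, the backward Markov property gives $\E[Z_i\mid x_{\tau_{i+1}},\dots,x_t] = \E[Z_i\mid x_{\tau_{i+1}}]$, so $\E[D_i\mid x_{\tau_{i+1}},\dots,x_t]=0$. Since $D_j$ is measurable with respect to this $\sigma$-algebra, the tower property kills the cross term. Consequently
\bas{
\E\bigl[(Z_0-Z_B)(Z_0-Z_B)^\top\,\big|\,x_t\bigr] \;=\; \sum_{i=0}^{B-1}\E\bigl[D_i D_i^\top\,\big|\,x_t\bigr].
}

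\textbf{Conditional variance via i.i.d.\ copy and MVT.} For each $i$, let $\tilde x_{\tau_i}$ be an i.i.d.\ copy of $x_{\tau_i}$ conditional on $x_{\tau_{i+1}}$. The standard identity $\E[(X-X')(X-X')^\top] = 2\,\Cov(X)$ for i.i.d.\ $X,X'$ yields
\bas{
\E[D_i D_i^\top\mid x_{\tau_{i+1}}] \;=\; \tfrac{e^{-2(\tau_i-t')}}{2}\,\E\!\left[(s(\tau_i,x_{\tau_i})-s(\tau_i,\tilde x_{\tau_i}))(s(\tau_i,x_{\tau_i})-s(\tau_i,\tilde x_{\tau_i}))^\top\,\big|\,x_{\tau_{i+1}}\right].
}
Next, since $s_{\tau_i}$ is continuously differentiable, the mean value theorem gives $s(\tau_i,x_{\tau_i})-s(\tau_i,\tilde x_{\tau_i}) = \E_{\lambda_i}\!\left[h_{\tau_i}(x_{\tau_i,\lambda_i})\right]\,(x_{\tau_i}-\tilde x_{\tau_i})$ with $\lambda_i\sim\mathrm{Unif}[0,1]$ and $x_{\tau_i,\lambda_i}:=\lambda_i x_{\tau_i}+(1-\lambda_i)\tilde x_{\tau_i}$. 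Applying the operator Jensen inequality $(\E_\lambda A)(\E_\lambda A)^\top \preceq \E_\lambda[AA^\top]$ with $A=h_{\tau_i}(x_{\tau_i,\lambda_i})(x_{\tau_i}-\tilde x_{\tau_i})$ and using $e^{-2(\tau_i-t')}\leq 1$,
\bas{
\E[D_i D_i^\top\mid x_{\tau_{i+1}}] \;\preceq\; \E_{\lambda_i,x_{\tau_i},\tilde x_{\tau_i}}\!\left[h_{\tau_i}(x_{\tau_i,\lambda_i})(x_{\tau_i}-\tilde x_{\tau_i})(x_{\tau_i}-\tilde x_{\tau_i})^\top h_{\tau_i}(x_{\tau_i,\lambda_i})^\top\,\big|\,x_{\tau_{i+1}}\right].
}

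\textbf{Assembly.} Summing over $i$, taking conditional expectation on $x_t$, and applying $\|\cdot\|_{\mathrm{op}}$ monotonicity on the PSD cone yields the claimed bound. The main obstacle I expect is the martingale-style orthogonality step: carefully selecting the filtration so that the backward Markov property can be invoked to annihilate cross terms is where the algebra is most delicate. The MVT/Jensen step is routine once the $\lambda$-averaged Hessian representation is written in product form, and all remaining manipulations reduce to monotonicity of $\|\cdot\|_{\mathrm{op}}$ on PSD matrices.
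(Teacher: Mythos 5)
Your proof is correct and follows essentially the same route as the paper: the rescaled sequence $Z_i = e^{-(\tau_i-t')}s(\tau_i,x_{\tau_i})$ reproduces the paper's telescoping decomposition with coefficients $c_i = e^{-(\tau_i-t')}$, the cross terms are killed by the same Markov-property/tower argument, and the diagonal terms are handled by the same i.i.d.-copy covariance bound followed by the fundamental-theorem-of-calculus representation and operator Jensen. Your explicit factor $\tfrac{1}{2}$ and the explicit Jensen step are only cosmetic refinements of what the paper does implicitly.
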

\begin{proof}
    Let $\forall i \in [0,B-1], \; \Delta_{i} := \tau_{i+1}-\tau_{i}$. Then, 
    \bas{
        s\bb{t', x_{t'}} - e^{-\bb{t-t'}}s\bb{t, x_t} &= \sum_{i = 0}^{B-1}c_{i}\bb{s\bb{\tau_{i}, x_{\tau_i}}-e^{-\bb{\tau_{i+1}-\tau_{i}}}s\bb{\tau_{i+1}, x_{\tau_{i+1}}}}, \;\; c_{0} = 1, \; c_{i+1} = e^{-\bb{\tau_{i+1}-\tau_{i}}}c_{i}
    }
    Therefore, 
    \bas{
    & \normop{\E\bbb{\bb{s\bb{t', x_{t'}} - e^{-\bb{t-t'}}s\bb{t, x_t}}\bb{s\bb{t', x_{t'}} - e^{-\bb{t-t'}}s\bb{t, x_t}}^{\top}|x_{t}}}  \\
    & = \normop{\E\bbb{\sum_{0 \leq i,j \leq B-1}c_{i}c_{j}\bb{s\bb{\tau_{i}, x_{\tau_i}}-e^{-\bb{\tau_{i+1}-\tau_{i}}}s\bb{\tau_{i+1}, x_{\tau_{i+1}}}}\bb{s\bb{\tau_{j}, x_{\tau_j}}-e^{-\bb{\tau_{j+1}-\tau_{i}}}s\bb{\tau_{j+1}, x_{\tau_{j+1}}}}^{\top}|x_{t}}}
    }
    For $i \neq j$, assuming $i < j$ WLOG, using the Markovian property,  
    \bas{
        & \E\bbb{\bb{s\bb{\tau_{i}, x_{\tau_i}}-e^{-\bb{\tau_{i+1}-\tau_{i}}}s\bb{\tau_{i+1}, x_{\tau_{i+1}}}}\bb{s\bb{\tau_{j}, x_{\tau_j}}-e^{-\bb{\tau_{j+1}-\tau_{i}}}s\bb{\tau_{j+1}, x_{\tau_{j+1}}}}^{\top}|x_{t}}  \\
        &= \E\bbb{\E\bbb{\bb{s\bb{\tau_{i}, x_{\tau_i}}-e^{-\bb{\tau_{i+1}-\tau_{i}}}s\bb{\tau_{i+1}, x_{\tau_{i+1}}}}\bb{s\bb{\tau_{j}, x_{\tau_j}}-e^{-\bb{\tau_{j+1}-\tau_{i}}}s\bb{\tau_{j+1}, x_{\tau_{j+1}}}}^{\top}|x_{\tau_{j}},x_{\tau_{j+1}}}|x_{t}} \\
        &= \E\bbb{\E\bbb{s\bb{\tau_{i}, x_{\tau_i}}-e^{-\bb{\tau_{i+1}-\tau_{i}}}s\bb{\tau_{i+1}, x_{\tau_{i+1}}}|x_{\tau_{j}},x_{\tau_{j+1}}}\bb{s\bb{\tau_{j}, x_{\tau_j}}-e^{-\bb{\tau_{j+1}-\tau_{i}}}s\bb{\tau_{j+1}, x_{\tau_{j+1}}}}^{\top}|x_{t}} \\ 
        &= \E\bbb{\E\bbb{\E\bbb{s\bb{\tau_{i}, x_{\tau_i}}-e^{-\bb{\tau_{i+1}-\tau_{i}}}s\bb{\tau_{i+1}, x_{\tau_{i+1}}}|x_{\tau_{i}}}|x_{\tau_{j}},x_{\tau_{j+1}}}\bb{s\bb{\tau_{j}, x_{\tau_j}}-e^{-\bb{\tau_{j+1}-\tau_{i}}}s\bb{\tau_{j+1}, x_{\tau_{j+1}}}}^{\top}|x_{t}} \\
        &= 0
    }
    Therefore, 
    \bas{
    & \normop{\E\bbb{\bb{s\bb{t', x_{t'}} - e^{-\bb{t-t'}}s\bb{t, x_t}}\bb{s\bb{t', x_{t'}} - e^{-\bb{t-t'}}s\bb{t, x_t}}^{\top}|x_{t}}} \\
    & = \normop{\E\bbb{\sum_{i=0}^{B-1}c_{i}^{2}\bb{s\bb{\tau_{i}, x_{\tau_i}}-e^{-\bb{\tau_{i+1}-\tau_{i}}}s\bb{\tau_{i+1}, x_{\tau_{i+1}}}}\bb{s\bb{\tau_{i}, x_{\tau_i}}-e^{-\bb{\tau_{i+1}-\tau_{i}}}s\bb{\tau_{i+1}, x_{\tau_{i+1}}}}^{\top}|x_{t}}} \\
    & = \normop{\E\bbb{\sum_{i=0}^{B-1}c_{i}^{2}\E\bbb{\bb{s\bb{\tau_{i}, x_{\tau_i}}-e^{-\bb{\tau_{i+1}-\tau_{i}}}s\bb{\tau_{i+1}, x_{\tau_{i+1}}}}\bb{s\bb{\tau_{i}, x_{\tau_i}}-e^{-\bb{\tau_{i+1}-\tau_{i}}}s\bb{\tau_{i+1}, x_{\tau_{i+1}}}}^{\top}|x_{\tau_{i+1}}}|x_{t}}}
    }
    Note that $\E\bbb{s\bb{\tau_{i}, x_{\tau_i}}|x_{\tau_{i+1}}} = e^{-\bb{\tau_{i+1}-\tau_{i}}}s\bb{\tau_{i+1}, x_{\tau_{i+1}}}$. Therefore, 
    \ba{
    & \normop{\E\bbb{\bb{s\bb{t', x_{t'}} - e^{-\bb{t-t'}}s\bb{t, x_t}}\bb{s\bb{t', x_{t'}} - e^{-\bb{t-t'}}s\bb{t, x_t}}^{\top}|x_{t}}} \\
    &\leq \normop{\E\bbb{\sum_{i=0}^{B-1}c_{i}^{2}\E\bbb{\bb{s\bb{\tau_{i}, x_{\tau_i}}-s_{\tau_{i}}\bb{\tilde{x}_{\tau_i}}}\bb{s\bb{\tau_{i}, x_{\tau_i}}-s_{\tau_{i}}\bb{\tilde{x}_{\tau_i}}}^{\top}|x_{\tau_{i+1}}}|x_{t}}} \label{eq:timestep_decomposition_1}
    }
    Using the fundamental theorem of calculus, for $x_{\tau_{i},\lambda_{i}} := \lambda_{i}x_{\tau_{i}} + \bb{1-\lambda_{i}} \tilde{x}_{\tau_{i}}, \lambda \in \bb{0,1}$, we have, 
    \bas{
s\bb{\tau_{i}, x_{\tau_i}}-s_{\tau_{i}}\bb{\tilde{x}_{\tau_i}} &= \int_{0}^{1}h_{\tau_{i}}\bb{x_{\tau_{i},\lambda_i}}\bb{x_{\tau_i}-\tilde{x}_{\tau_i}}d\lambda \\
&= \E_{\lambda \sim \mathcal{U}\bb{0,1}}\bbb{h_{\tau_i}\bb{x_{\tau_i,\lambda}}\bb{x_{\tau_i}-\tilde{x}_{\tau_i}}}
    }
    Substituting in \eqref{eq:timestep_decomposition_1} and using the fact that $c_i \leq 1$ completes our proof.
\end{proof}

We aim to derive a sharp bound on the quantities stated in the previous lemma. Since the Hessian is not assumed to be Lipschitz continuous, directly bounding these quantities can be challenging. To address this, we employ a mollification technique. Mollification smooths a function by averaging it over a small neighborhood, effectively regularizing it to ensure desirable continuity properties. This approach is particularly useful when dealing with functions that may not be smooth or Lipschitz continuous, as it allows us to derive meaningful bounds by working with the mollified version of the function.

In our case, the Hessian is mollified by integrating over a uniformly distributed random variable on a small ball of radius \(\epsilon\). This process ensures that the mollified Hessian exhibits controlled variation, enabling us to bound the difference between its values at two points \(x\) and \(y\). The following lemma formalizes this construction and provides a bound on the operator norm of the difference between the mollified Hessians at \(x\) and \(y\).

\begin{lemma}\label{lemma:hessian_mollification}
    Let $h : \R^{d} \rightarrow \R^{d \times d}$ such that $\forall x \in \R^{d}, \; \normop{h\bb{x}} \leq L$. Let $z$ be uniformly distributed over the unit $\mathbb{L}_{2}$ ball. For $\epsilon > 0$, define $h_{\epsilon}(x) := E_{z}[h_{\epsilon}(x + \epsilon z)]$. Then, for all $x, y \in \R^{d}$, 
    \bas{
        \normop{h_{\epsilon}(x) - h_{\epsilon}(y)} &\leq \frac{2Ld}{\epsilon}\norm{x-y}_{2}
    }
\end{lemma}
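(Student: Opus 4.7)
The plan is to rewrite the mollified Hessian as a spatial integral over a ball (changing variables from $z$ to $u = x + \epsilon z$), so that
\[
h_\epsilon(x) \;=\; \frac{1}{V_\epsilon}\int_{\R^d} h(u)\,\mathbbm{1}\bb{\norm{u-x}_2 \leq \epsilon}\,du,
\]
where $V_\epsilon = \Vol(B_\epsilon)$ is the volume of the Euclidean $\epsilon$-ball. Subtracting the analogous expression at $y$, the indicator functions cancel except on the symmetric difference $B_\epsilon(x)\triangle B_\epsilon(y)$, giving
\[
h_\epsilon(x) - h_\epsilon(y) \;=\; \frac{1}{V_\epsilon}\int_{B_\epsilon(x)\triangle B_\epsilon(y)} \pm\, h(u)\, du.
\]
Taking operator norm inside the integral and using $\normop{h(u)}\leq L$ pointwise reduces the problem to bounding $\Vol\bb{B_\epsilon(x)\triangle B_\epsilon(y)}$ in terms of $r := \norm{x-y}_2$.

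Next I would estimate this symmetric-difference volume. For $r \leq 2\epsilon$, the intersection $B_\epsilon(x)\cap B_\epsilon(y)$ contains the ball of radius $\epsilon - r/2$ around the midpoint, so
\[
\Vol\bb{B_\epsilon(x)\cap B_\epsilon(y)} \;\geq\; V_\epsilon\Par{1-\tfrac{r}{2\epsilon}}^{d},
\]
hence $\Vol\bb{B_\epsilon(x)\triangle B_\epsilon(y)} \leq 2V_\epsilon\bb{1-(1-r/(2\epsilon))^d} \leq V_\epsilon\,dr/\epsilon$ by the elementary inequality $1-(1-u)^d\leq du$ for $u\in[0,1]$. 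For $r > 2\epsilon$, the balls are disjoint so $\Vol\bb{B_\epsilon(x)\triangle B_\epsilon(y)} = 2V_\epsilon \leq V_\epsilon\cdot 2dr/\epsilon$ trivially. Combining, $\Vol\bb{B_\epsilon(x)\triangle B_\epsilon(y)} \leq 2d V_\epsilon\,r/\epsilon$ in all cases.

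Putting the pieces together yields
\[
\normop{h_\epsilon(x) - h_\epsilon(y)} \;\leq\; \frac{L}{V_\epsilon}\cdot \Vol\bb{B_\epsilon(x)\triangle B_\epsilon(y)} \;\leq\; \frac{2Ld}{\epsilon}\norm{x-y}_2,
\]
which is the claimed bound. The only mildly delicate step is the volume estimate for the symmetric difference; everything else is bookkeeping. I expect no genuine obstacle here — the main thing to be careful about is uniformly handling both regimes $r\leq 2\epsilon$ (where the concentric-ball containment argument is sharp) and $r > 2\epsilon$ (where we merely need the bound to be at least $2V_\epsilon$).
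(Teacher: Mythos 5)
Your proof is correct, and its skeleton is identical to the paper's: rewrite $h_{\epsilon}$ as a normalized integral over the Euclidean $\epsilon$-ball, cancel the common region, and reduce to bounding the volume of the symmetric difference $B_\epsilon(x)\triangle B_\epsilon(y)$ relative to $V_\epsilon$. The difference is in how that volume is controlled: the paper invokes an external geometric result (Theorem 1 of Schymura, cited as \cite{schymura2014upper}), which bounds the volume outside the intersection by $\norm{x-y}_2\cdot \surf\bb{B(0,\epsilon)}$ and then uses $\surf/\Vol = d/\epsilon$; you instead give a self-contained elementary estimate via the containment $B_{\epsilon - r/2}\bb{\tfrac{x+y}{2}}\subseteq B_\epsilon(x)\cap B_\epsilon(y)$ for $r\leq 2\epsilon$ together with Bernoulli's inequality $1-(1-u)^d\leq du$, plus the trivial disjoint case $r>2\epsilon$. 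Both routes land on the same constant $2Ld/\epsilon$ (your intersection bound is in fact slightly stronger, giving $Ld/\epsilon$ in the regime $r\leq 2\epsilon$). What your version buys is independence from the cited theorem and a fully explicit argument; what the paper's version buys is brevity and a bound that is stated directly in terms of the surface-to-volume ratio, which is where the $d/\epsilon$ factor conceptually comes from. Your handling of both regimes of $r$ is correct and the steps (midpoint-ball containment, Bernoulli, disjointness) all check out.
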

\begin{proof}
Define $B\bb{a, R}$ be the ball of radius $R$ around $a$. Define the set $B(x,\epsilon)\cap B(y,\epsilon) = S$ and denote $d\mu_{\epsilon}$ to be the lebesgue measure over $B\bb{0,\epsilon}$. Then,
\begin{align}
    h_{\epsilon}(x) - h_{\epsilon}(y) &= \int h(x+Z)d\mu_{\epsilon}(Z) - \int h(y+Z^{\prime})d\mu_{\epsilon}(Z^\prime) \nonumber \\
    &= \frac{1}{|B(0,\epsilon)|}\left[\int_{B(x,\epsilon)} h(w)dw -\int_{B(y,\epsilon)} h(y)dy \right] \nonumber \\
    &= \frac{1}{|B(0,\epsilon)|}\left[\int_{B(x,\epsilon)\cap S^{\complement}} h(w)dw -\int_{B(y,\epsilon)\cap S^{\complement}} h(y)dy \right] \nonumber \\
\end{align}
$$ \implies \normop{h_{\epsilon}(x) - h_{\epsilon}(y)} \leq 2L \frac{\mathsf{Vol}(S^{\complement})}{\mathsf{Vol}(B(0,\epsilon))}$$
Using Theorem 1 from \cite{schymura2014upper}, we have
\bas{
    \mathsf{Vol}(S^{\complement}) &\leq \norm{x-y}_{2} \times \surf\bb{B(0,\epsilon)}
}
Therefore, 
\bas{
    \normop{h_{\epsilon}(x) - h_{\epsilon}(y)} &\leq 2L \frac{\surf\bb{B(0,\epsilon)}}{\mathsf{Vol}(B(0,\epsilon))} \times \norm{x-y}_{2} 
}
We have for $B(0,\epsilon)$,  $\frac{\surf\bb{B(0,\epsilon)}}{\mathsf{Vol}(B(0,\epsilon))} = d/\epsilon$ which completes our result.
\end{proof}

Lemma~\ref{lemma:hessian_mollification} demonstrates that the mollified Hessian \(h_{\epsilon}\) becomes Lipschitz due to the smoothing introduced by the uniform averaging over the ball \(z\), even though the original Hessian \(h\) does not have this property. This insight is crucial when dealing with expressions such as 
\[
\E_{\lambda, x_{t'}, \tilde{x}_{t'}}\bbb{h_{t'}\bb{x_{t',\lambda}}\bb{x_{t'}-\tilde{x}_{t'}}\bb{x_{t'}-\tilde{x}_{t'}}^{\top}h_{t'}\bb{x_{t',\lambda}}^{\top}|x_{t}},
\]
which arise from Lemma~\ref{lemma:conditional_decomposition}.

When \(t\) and \(t'\) are close, one would hope to exploit the smoothness of the Hessian \(h_t\) with respect to time. Specifically, if \(h_t\) were smooth in the time parameter, this would allow the expectation to move inside, enabling the use of Tweedie’s second-order formula (Lemma~\ref{lemma:second_order_tweedie_application}) to derive variance bounds that are dimension-free and independent of strong assumptions on the Hessian.

However, directly imposing such strong assumptions on the Hessian is restrictive. To address this, we decompose the Hessian \(h_{t'}\bb{x_{t',\lambda}}\) into two components:
\[
h_{t'}\bb{x_{t',\lambda}} = h_{t', \epsilon}\bb{x_{t',\lambda}} + \bb{h_{t'}\bb{x_{t',\lambda}} - h_{t', \epsilon}\bb{x_{t',\lambda}}}.
\]
Here, the first term, \(h_{t', \epsilon}\bb{x_{t',\lambda}}\), leverages the Lipschitz continuity of the mollified Hessian and can be analyzed by conditioning on \(x_t\). The second term, which represents the deviation between the original and mollified Hessians, requires a finer analysis that draws upon Lusin's theorem, as developed further in Lemma~\ref{lemma:lusin_theorem_decomp}.

The decomposition allows us to systematically address each term: 
- The Lipschitz property of \(h_{t',\epsilon}\) helps bound the first term cleanly.
- The second term is bounded using probabilistic arguments based on the regularity properties introduced by mollification.

The following lemma formalizes this decomposition and provides the necessary bounds to proceed with the analysis.

\begin{lemma}\label{lemma:hessian_smoothness_decomp}
Suppose Assumption~\ref{assumption:score_function_smoothness}-(0) and (1) hold. Let $t > t' > 0$ and define the following quantities:
\begin{enumerate}
    \item Let $\tilde{x}_{t'}$ be an independent copy of $x_{t'}$ when conditioned on $x_{t}$.
    \item Let $\lambda \sim \mathsf{Unif}\bb{0,1}$ independent of the variables above.
    \item Let $x_{t',\lambda} := \lambda x_{t'} + \bb{1-\lambda}\tilde{x}_{t'}$, $\tilde{z}_{t,t'} := x_t - e^{-(t-t')}\tilde{x}_{t'}$.
    \item Let $h_{t'}\bb{\cdot} := \nabla^{2}\log\bb{p_{t'}\bb{\cdot}}$.
    \item  For $z$ be uniformly distributed over the unit $\mathbb{L}_{2}$ ball and $\epsilon > 0$, define $h_{t', \epsilon}(x) := E_{z}[h_{t'}(x + \epsilon z)]$.
    \item  Let $g_{t', \epsilon}\bb{x_{t',\lambda}} := \bb{h_{t'}\bb{x_{t',\lambda}} - h_{t', \epsilon}(x_{t',\lambda})}$.
\end{enumerate}
 Then, there exists a random $d\times d$ matrix $M$ such that $\normop{M} \leq \frac{2Ld}{\epsilon}\norm{\bb{1-\lambda}z_{t,t'} + \lambda\tilde{z}_{t,t'}}_{2}$ and
\bas{
&\normop{\E_{\lambda, x_{t'}, \tilde{x}_{t'}}\bbb{h_{t'}\bb{x_{t',\lambda}}\bb{x_{t'}-\tilde{x}_{t'}}\bb{x_{t'}-\tilde{x}_{t'}}^{\top}h_{t'}\bb{x_{t',\lambda}}^{\top}|x_{t}}} \\ &\leq 
6e^{2\bb{t-t'}}\normop{h_{t',\epsilon}\bb{e^{t-t'}x_{t}}\bb{\sigma_{t-t'}^{4}h_{t}\bb{x_t} + \sigma_{t-t'}^{2}\id_{d}}h_{t',\epsilon}\bb{e^{t-t'}x_{t}}^{\top}} \\ 
& + 3\bb{\E_{\lambda, x_{t'}, \tilde{x}_{t'}}\bbb{\normop{M}^{2}\normop{x_{t'}-\tilde{x}_{t'}}^{2}|x_{t}} + \E_{x_{t'}, \tilde{x}_{t'}}\bbb{\normop{\E_{\lambda}\bbb{g_{t', \epsilon}\bb{x_{t',\lambda}}} }^{2}\normop{x_{t'}-\tilde{x}_{t'}}^{2}|x_{t}}}
}

\end{lemma}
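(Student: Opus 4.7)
The plan is a three-way decomposition of $h_{t'}(x_{t',\lambda})$, exploiting three different structural properties: $x_t$-measurability for one piece, mollifier-induced Lipschitzness for the second (via Lemma~\ref{lemma:hessian_mollification}), and the mollification gap $g_{t',\epsilon}$ for the third. Write
\[
h_{t'}(x_{t',\lambda}) \;=\; \underbrace{h_{t',\epsilon}(e^{t-t'}x_t)}_{=:\,A} \;+\; \underbrace{\bigl[h_{t',\epsilon}(x_{t',\lambda}) - h_{t',\epsilon}(e^{t-t'}x_t)\bigr]}_{=:\,M} \;+\; g_{t',\epsilon}(x_{t',\lambda}),
\]
apply the operator inequality $(X+Y+Z)(X+Y+Z)^\top\preceq 3(XX^\top+YY^\top+ZZ^\top)$ pointwise to $h_{t'}\,vv^\top h_{t'}^\top$ with $v:=x_{t'}-\tilde{x}_{t'}$, take the conditional expectation given $x_t$, and pass to operator norm via triangle inequality. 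This yields a sum of three terms having exactly the architecture of the claimed bound.

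The $A$-piece is $x_t$-measurable and pulls out: $\E[Avv^\top A^\top|x_t] = A\,\E[vv^\top|x_t]\,A^\top$. Since $x_{t'},\tilde{x}_{t'}$ are conditionally i.i.d.\ given $x_t$, one has $\E[vv^\top|x_t] = 2\,\mathrm{Cov}(x_{t'}|x_t) = 2e^{2(t-t')}\,\mathrm{Cov}(z_{t,t'}|x_t)$, where the last identity uses $x_{t'} = e^{t-t'}(x_t - z_{t,t'})$. Combining Tweedie's first-order formula $\E[z_{t,t'}|x_t]=-\sigma_{t-t'}^2 s(t,x_t)$ with the second-moment expression of Lemma~\ref{lemma:second_order_tweedie_application} cancels the $ss^\top$ term, giving $\mathrm{Cov}(z_{t,t'}|x_t)=\sigma_{t-t'}^4 h_t(x_t)+\sigma_{t-t'}^2 \vI_d$ and producing the leading $6 e^{2(t-t')}$ factor exactly as stated.

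For the $M$-piece, Lemma~\ref{lemma:hessian_mollification} applied to $h_{t'}$ (whose operator norm is bounded by $L$ via the score Lipschitzness in Assumption~\ref{assumption:score_function_smoothness}-(1)) gives $\|M\|_{\mathrm{op}}\leq \tfrac{2Ld}{\epsilon}\|x_{t',\lambda}-e^{t-t'}x_t\|_2$. Substituting $x_{t'}-e^{t-t'}x_t=-e^{t-t'}z_{t,t'}$ and the analogous identity for $\tilde{x}_{t'}$ rewrites the right-hand side in terms of $\|(1-\lambda)z_{t,t'}+\lambda\tilde{z}_{t,t'}\|_2$ (up to the $\lambda\leftrightarrow 1-\lambda$ relabelling justified by exchangeability of the i.i.d.\ pair, and absorbing the harmless $e^{t-t'}\approx 1$ factor into the constant). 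The crude sandwich $\|\E[Mvv^\top M^\top|x_t]\|_{\mathrm{op}}\leq \E[\|M\|_{\mathrm{op}}^2\|v\|^2|x_t]$ then produces the first term inside the parenthesis of the claimed bound.

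The main obstacle is the $g_{t',\epsilon}$-piece: the lemma demands $\|\E_\lambda[g_{t',\epsilon}(x_{t',\lambda})]\|_{\mathrm{op}}^2$, while a direct three-term split gives $\E_\lambda[\|g_{t',\epsilon}(x_{t',\lambda})\|_{\mathrm{op}}^2]$, and Jensen runs the wrong way. The fix is to integrate out $\lambda$ first while freezing $(x_{t'},\tilde{x}_{t'})$ — legitimate because $v$ is $\lambda$-independent — and decompose $g_{t',\epsilon}(x_{t',\lambda}) = \E_\lambda[g_{t',\epsilon}(x_{t',\lambda})] + \bigl(g_{t',\epsilon}(x_{t',\lambda})-\E_\lambda[g_{t',\epsilon}(x_{t',\lambda})]\bigr)$. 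The mean piece, being $\lambda$-free, produces the desired $\|\E_\lambda g\|_{\mathrm{op}}^2\|v\|^2$ term when sandwiched around $vv^\top$. The residual $\lambda$-fluctuation $g_{t',\epsilon}(x_{t',\lambda})-\E_\lambda[g_{t',\epsilon}(x_{t',\lambda})]$ is again a difference of (mollified and original) Hessian values at perturbations of the same base point $e^{t-t'}x_t$, so by Lemma~\ref{lemma:hessian_mollification} it obeys the same $\tfrac{2Ld}{\epsilon}\|(1-\lambda)z_{t,t'}+\lambda\tilde{z}_{t,t'}\|_2$ envelope and can be folded into a redefined $M$ without breaking the stated bound, completing the argument.
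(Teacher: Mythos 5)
Your overall architecture coincides with the paper's proof: the same three-way split $h_{t'}(x_{t',\lambda}) = h_{t',\epsilon}(e^{t-t'}x_t) + M + g_{t',\epsilon}(x_{t',\lambda})$, the same $(X+Y+Z)(X+Y+Z)^\top \preceq 3(XX^\top+YY^\top+ZZ^\top)$ step, the same Tweedie computation $\E[(x_{t'}-\tilde{x}_{t'})(x_{t'}-\tilde{x}_{t'})^\top|x_t] = 2e^{2(t-t')}\bb{\sigma_{t-t'}^{4}h_t(x_t)+\sigma_{t-t'}^{2}\id}$ (with the $ss^\top$ cancellation), and the same treatment of the $M$-term via Lemma~\ref{lemma:hessian_mollification}. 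The genuine gap is in your patch for the $g_{t',\epsilon}$-term. You split $g_{t',\epsilon}(x_{t',\lambda}) = \E_{\lambda}[g_{t',\epsilon}(x_{t',\lambda})] + \bb{g_{t',\epsilon}(x_{t',\lambda})-\E_{\lambda}[g_{t',\epsilon}(x_{t',\lambda})]}$ and claim the fluctuation again obeys the $\tfrac{2Ld}{\epsilon}\norm{(1-\lambda)z_{t,t'}+\lambda\tilde{z}_{t,t'}}_2$ envelope "by Lemma~\ref{lemma:hessian_mollification}" and can be folded into $M$. But $g_{t',\epsilon} = h_{t'} - h_{t',\epsilon}$ contains the \emph{raw} Hessian, and the $\lambda$-fluctuation of that part, $h_{t'}(x_{t',\lambda}) - \E_{\lambda'}[h_{t'}(x_{t',\lambda'})]$, has no Lipschitz-type control at all: Assumption~\ref{assumption:score_function_smoothness}-(1) only gives $\normop{h_{t'}}\leq L$, Lemma~\ref{lemma:hessian_mollification} applies only to the mollified $h_{t',\epsilon}$, and the absence of such control is precisely why the paper later resorts to Lusin's theorem (Lemmas~\ref{lemma:lusin_theorem}, \ref{lemma:lusin_theorem_decomp}) to handle this term. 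If you instead bound the raw fluctuation by its trivial bound $2L$, you both violate the required form of $M$'s envelope in the statement and reintroduce a contribution of order $L^{2}\E[\norm{x_{t'}-\tilde{x}_{t'}}^{2}|x_t] \sim L^{2}\Delta d$, i.e.\ exactly the dimension-dependent term this lemma is engineered to avoid.

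For comparison, the paper does not attempt your extra $\lambda$-split: it bounds the third term directly by $\E_{x_{t'},\tilde{x}_{t'}}\bbb{\normop{\E_{\lambda}[g_{t',\epsilon}(x_{t',\lambda})]}^{2}\norm{x_{t'}-\tilde{x}_{t'}}^{2}|x_t}\id_d$, i.e.\ it keeps the $\lambda$-average attached to $g$ throughout, consistent with the origin of the quantity in Lemma~\ref{lemma:conditional_decomposition}, where the score increment is exactly $\E_{\lambda}[h_{t'}(x_{t',\lambda})](x_{t'}-\tilde{x}_{t'})$ before Jensen is applied. So the Jensen-direction subtlety you spotted is real and worth flagging, but the correct resolution is to carry the $\lambda$-average together with the vector $x_{t'}-\tilde{x}_{t'}$ (work with $\E_{\lambda}[g_{t',\epsilon}(x_{t',\lambda})](x_{t'}-\tilde{x}_{t'})$ from the start), not to transfer the $\lambda$-fluctuation into $M$; as written, your argument does not establish the stated bound.
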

\begin{proof}
By assumption, we have $\forall x \in \R^{d}, \; \norm{h_{t}\bb{x}}_{2} \leq L$. Note that conditioned on $x_{t}$, we have
\bas{
    x_{t} &= e^{-\bb{t-t'}}x_{t'} + z_{t,t'} = e^{-\bb{t-t'}}\tilde{x}_{t'} + \tilde{z}_{t,t'}
}
Where $\tilde{z}_{t,t'} \sim \mathcal{N}(0,\sigma_{t,t'}^2\id_d)$ marginally. Therefore, 
\bas{
    x_{t',\lambda} &= e^{t-t'}x_{t} - e^{t-t'}\bb{\bb{1-\lambda}z_{t,t'} + \lambda\tilde{z}_{t,t'}}
}
Using Lemma~\ref{lemma:hessian_mollification}, 
\bas{    
    h_{t',\epsilon}\bb{x_{t',\lambda}} &= h_{t',\epsilon}\bb{e^{t-t'}x_{t}} + M, \;\; \text{ for } \normop{M} \leq \frac{2Ld}{\epsilon}\norm{\bb{1-\lambda}z_{t,t'} + \lambda\tilde{z}_{t,t'}}_{2}
}        
Then, 
\bas{
    h_{t'}\bb{x_{t',\lambda}} &= h_{t', \epsilon}(x_{t',\lambda}) +  \bb{h_{t'}\bb{x_{t',\lambda}} - h_{t', \epsilon}(x_{t',\lambda})} \\
    &= h_{t',\epsilon}\bb{e^{t-t'}x_{t}} + M +  \bb{h_{t'}\bb{x_{t',\lambda}} - h_{t', \epsilon}(x_{t',\lambda})}
}
Let $q_{t} := \E_{\lambda, x_{t'}, \tilde{x}_{t'}}\bbb{h_{t'}\bb{x_{t',\lambda}}\bb{x_{t'}-\tilde{x}_{t'}}\bb{x_{t'}-\tilde{x}_{t'}}^{\top}h_{t'}\bb{x_{t',\lambda}}^{\top}|x_{t}}$ and $g_{t', \epsilon}\bb{x_{t',\lambda}} := \bb{h_{t'}\bb{x_{t',\lambda}} - h_{t', \epsilon}(x_{t',\lambda})}$. Then, using the fact that $(a+b+c)(a+b+c)^{\top} \preceq 3(aa^{\top} + bb^{\top} + cc^{\top})$ for arbitrary vectors $a,b,c \in \R^d$, we have:
\bas{
    q_{t} &\preceq 3\underbrace{\E_{\lambda, x_{t'}, \tilde{x}_{t'}}\bbb{h_{t',\epsilon}\bb{e^{t-t'}x_{t}}\bb{x_{t'}-\tilde{x}_{t'}}\bb{x_{t'}-\tilde{x}_{t'}}^{\top}h_{t',\epsilon}\bb{e^{t-t'}x_{t}}^{\top}|x_{t}}}_{:= T_{1}} \\
    &+ 3\underbrace{\E_{\lambda, x_{t'}, \tilde{x}_{t'}}\bbb{M\bb{x_{t'}-\tilde{x}_{t'}}\bb{x_{t'}-\tilde{x}_{t'}}^{\top}M^{\top}|x_{t}}}_{:= T_{2}} \\
    &+ 3\underbrace{\E_{\lambda, x_{t'}, \tilde{x}_{t'}}\bbb{g_{t', \epsilon}\bb{x_{t',\lambda}}\bb{x_{t'}-\tilde{x}_{t'}}\bb{x_{t'}-\tilde{x}_{t'}}^{\top}g_{t', \epsilon}\bb{x_{t',\lambda}}^{\top}|x_{t}}}_{:= T_{3}}
}
Let's first deal with $T_1$. We use the fact that $x_{t} = e^{-\bb{t-t'}}x_{t'} + z_{t,t'} = e^{-\bb{t-t'}}\tilde{x}_{t'} + \tilde{z}_{t,t'}$ along with first order and second order Tweedie's formula in Lemma~\ref{lemma:second_order_tweedie_application}
\bas{
    T_{1} 
    &= 2e^{2\bb{t-t'}}h_{t',\epsilon}\bb{e^{t-t'}x_{t}}\bb{\sigma_{t-t'}^{4}h_{t}\bb{x_t} + \sigma_{t-t'}^{2}\id_{d}}h_{t',\epsilon}\bb{e^{t-t'}x_{t}}^{\top}
}
Now, for $T_{2}$, we have
\bas{
    T_{2} &= \E_{\lambda, x_{t'}, \tilde{x}_{t'}}\bbb{M\bb{x_{t'}-\tilde{x}_{t'}}\bb{x_{t'}-\tilde{x}_{t'}}^{\top}M^{\top}|x_{t}} \\
    &\preceq \E_{\lambda, x_{t'}, \tilde{x}_{t'}}\bbb{\normop{M}^{2}\normop{x_{t'}-\tilde{x}_{t'}}^{2}|x_{t}}\id_{d}
}
and similarly for $T_{3}$, 
\bas{
    T_{3} &\preceq \E_{x_{t'}, \tilde{x}_{t'}}\bbb{\normop{\E_{\lambda}\bbb{g_{t', \epsilon}\bb{x_{t',\lambda}}} }^{2}\normop{x_{t'}-\tilde{x}_{t'}}^{2}|x_{t}}\id_{d}
}
which completes our proof.
\end{proof} 

Lemma~\ref{lemma:lusin_theorem} provides a corollary of Lusin's theorem (see for e.g. \cite{folland1999real}) to assert that any measurable function, such as the Hessian \(h_t(x) = \nabla^2 \log p_t(x)\), can be approximated uniformly on a compact subset \(G_\gamma \subseteq [t', t] \times F\), where the excluded measure is arbitrarily small. This result ensures that \(h_t(x)\) is uniformly continuous on \(G_\gamma\), with its continuity quantified by a modulus of continuity \(\omega_\gamma(\cdot)\) depending only on \(\gamma\). See \cite{rudin1976principles} for Heine–Cantor theorem which implies uniform continuity due to compactness.

\begin{lemma}[Corollary of Lusin's Theorem]\label{lemma:lusin_theorem}
Let $F$ be a convex, compact set over $\R^d$ and $\Lambda$ be the Lebesgue measure. Let $h_t(x) = \nabla^2 \log p_t(x)$ be measurable. For any $\gamma > 0$, there exists a compact set $G_\gamma \subseteq [t^{\prime},t]\times F$ such that $\Lambda([t^{\prime},t]\times F)\setminus G_{\gamma}) < \gamma$ and $(t,x) \to h_t(x)$ is uniformly continuous over $G_{\gamma}$. Let us call the corresponding modulus of continuity as $\omega_{\gamma}()$, which depends only on $\gamma$.
\end{lemma}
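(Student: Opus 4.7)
The plan is to apply Lusin's theorem directly to the measurable map $(t,x) \mapsto h_t(x)$ on the compact product space $[t',t]\times F \subseteq \mathbb{R}^{1+d}$. Since $F$ is compact, this product set has finite Lebesgue measure, so Lusin's theorem applies. The standard statement of Lusin's theorem (see e.g. Folland, \emph{Real Analysis}) guarantees, for any $\gamma > 0$, the existence of a closed subset $G_\gamma \subseteq [t',t]\times F$ with $\Lambda\bigl(([t',t]\times F)\setminus G_\gamma\bigr) < \gamma$, such that the restriction of $(t,x) \mapsto h_t(x)$ to $G_\gamma$ is continuous.

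The second ingredient is the Heine--Cantor theorem: a continuous function on a compact metric space is automatically uniformly continuous. Since $[t',t]\times F$ is compact in $\mathbb{R}^{1+d}$ and $G_\gamma$ is closed inside it, $G_\gamma$ is itself compact. Therefore the restriction of $(t,x)\mapsto h_t(x)$ to $G_\gamma$ is uniformly continuous, and we may define the modulus of continuity $\omega_\gamma$ to be the concrete modulus arising from this compact restriction. Since both $G_\gamma$ and the continuous restriction depend only on $\gamma$ (given the fixed data $t',t,F,h$), so does $\omega_\gamma$.

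The only mild subtlety is that Lusin's theorem is classically stated for scalar-valued measurable functions, whereas $h_t(x)$ is matrix-valued in $\mathbb{R}^{d\times d}$. This is handled by either (i) applying Lusin's theorem to each of the $d^2$ coordinate functions with tolerance $\gamma/d^2$ and intersecting the resulting closed sets (subadditivity of the complement measures preserves the bound $<\gamma$), or equivalently (ii) invoking the vector-valued form of Lusin's theorem, which is valid because $\mathbb{R}^{d\times d}$ endowed with the operator norm is a separable metric space. Either route shows that the intersection is still closed with measure deficit below $\gamma$, and continuity of each scalar component implies joint continuity in the operator-norm topology.

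I expect no significant obstacle: the statement is essentially a packaging of two classical facts (Lusin + Heine--Cantor) together with a trivial scalar-to-matrix reduction. The result is intentionally stated in a form that will be \emph{used} later --- the pointwise continuity modulus $\omega_\gamma$ on $G_\gamma$ is what lets us control the deviation term $\E_\lambda[g_{t',\epsilon}(x_{t',\lambda})]$ appearing in Lemma~\ref{lemma:hessian_smoothness_decomp} by splitting the expectation according to whether the random point $(t',x_{t',\lambda})$ lies inside $G_\gamma$ (where continuity gives a small bound) or in its small-measure complement (where crude boundedness suffices). That downstream use will be the substantive step; the lemma itself is an immediate invocation of classical measure theory.
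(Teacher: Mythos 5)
Your proposal is correct and matches the paper's (implicit) argument: the paper proves this lemma simply by citing Lusin's theorem (Folland) to get a closed, hence compact, subset $G_\gamma$ of the compact product $[t',t]\times F$ on which the restriction is continuous, and then the Heine--Cantor theorem (Rudin) to upgrade continuity to uniform continuity with a modulus $\omega_\gamma$. Your additional coordinate-wise (or separable-range) reduction for the matrix-valued $h_t$ is a harmless and valid way to handle the vector-valued case, so there is nothing further to add.
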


Building on Lemma~\ref{lemma:lusin_theorem}, Lemma~\ref{lemma:lusin_theorem_decomp} aims to bound the fourth moment of the operator norm of the difference \(h_{\tau_i}(x_{\tau_i, \lambda}) - h_{\tau_i, \epsilon}(x_{\tau_i, \lambda})\), which arises from the deviation between the Hessian and its mollified counterpart. To achieve this, the interval \([t', t]\) is partitioned into smaller subintervals \(\tau_0, \tau_1, \ldots, \tau_B\), allowing the analysis to proceed incrementally. The lemma exploits the uniform continuity of \(h_t(x)\) on \(G_\gamma\) to tightly control this difference using the modulus of continuity \(\omega_\gamma(\epsilon)\). Contributions from outside the compact subset \(G_\gamma\) are accounted for separately using indicator functions, with their impact controlled by the boundedness of the Hessian, \(\normop{h_t(x)} \leq L\). The resulting bound consists of two key terms: a primary term proportional to \(B\omega_\gamma(\epsilon)^4\), capturing the uniform continuity of the Hessian on \(G_\gamma\), and a residual term proportional to the probability of \(h_t(x)\) lying outside \(G_\gamma\), which is effectively managed by the boundedness assumption. This decomposition is crucial for controlling the variance of the Hessian and ensuring the residual terms remain small.

\begin{lemma}\label{lemma:lusin_theorem_decomp} 
Fix a $B \in \mathbb{N}$. Let $\tau_{0} := t' < \tau_{1} < \tau_{2} < \cdots < \tau_{B-1} < t := \tau_{B}$. Let Assumption~\ref{assumption:score_function_smoothness}-(0),(1) hold. Let $h_t(x),h_{t, \epsilon}(x)$ be defined as in Lemma~\ref{lemma:hessian_smoothness_decomp}.  Let $Z$ be uniformly distributed on the unit $L^2$ ball in $\R^d$, independent of everything else. Then for any $\gamma > 0$:
\bas{
    &\sum_{i=0}^{B-1}\E_{x_{\tau_{i}}, \tilde{x}_{\tau_{i}}}\bbb{\normop{\E_{\lambda\sim \mathsf{Unif}(0,1)}\bbb{h_{\tau_{i}}\bb{x_{\tau_{i},\lambda_i}} - h_{\tau_{i}, \epsilon}(x_{\tau_{i},\lambda_i}) }}^{4}|x_{\tau_{i+1}}} \leq \\
    & \quad\quad\quad\quad B\omega_{\gamma}(\epsilon)^4 + 16 L^4\sum_{i=0}^{B-1}\E_{x_{\tau_{i}}, \tilde{x}_{\tau_{i}}}\left[\int_{0}^{1}\mathbbm{1}((\tau_i,x_{\tau_i,\lambda}) \not\in G_{\gamma})+\mathbbm{1}((\tau_i,x_{\tau_i,\lambda}+\epsilon Z) \not\in G_{\gamma})d\lambda\bigr|x_{\tau_{i+1}}\right]
}
where $x_{\tau_i}$ is an i.i.d copy of $\tilde{x}_{\tau_i}$ conditioned on $x_{\tau_{i+1}}$ and $x_{\tau_i,\lambda} := \lambda x_{\tau_i} + \bb{1-\lambda}\tilde{x}_{\tau_i}$ for any given $\lambda \in \bbb{0,1}$ and $\omega_{\gamma}, G_{\gamma}$ are as defined in Lemma~\ref{lemma:lusin_theorem}.
\end{lemma}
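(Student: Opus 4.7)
} The plan is to rewrite each summand as an expectation of a pointwise Hessian difference, apply Jensen's inequality to pull the operator norm inside, and then split according to whether both evaluation points lie in the compact set $G_{\gamma}$ supplied by Lemma~\ref{lemma:lusin_theorem}. On the ``good'' event we exploit uniform continuity; on the ``bad'' event we fall back on the uniform bound $\normop{h_{\tau_i}(\cdot)} \leq L$ from Assumption~\ref{assumption:score_function_smoothness}-(1).

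First, using the definition $h_{\tau_i,\epsilon}(x) = \E_{Z}[h_{\tau_i}(x+\epsilon Z)]$, write
\[
h_{\tau_i}(x_{\tau_i,\lambda}) - h_{\tau_i,\epsilon}(x_{\tau_i,\lambda}) \;=\; \E_{Z}\bbb{h_{\tau_i}(x_{\tau_i,\lambda}) - h_{\tau_i}(x_{\tau_i,\lambda}+\epsilon Z)},
\]
so that, combining the $\lambda$- and $Z$-averages and applying Jensen's inequality (using that $t\mapsto t^{4}$ and $A\mapsto \normop{A}$ are convex and monotone on the relevant range),
\[
\normop{\E_{\lambda}\bbb{h_{\tau_i}(x_{\tau_i,\lambda}) - h_{\tau_i,\epsilon}(x_{\tau_i,\lambda})}}^{4}
\;\leq\; \E_{\lambda,Z}\!\bbb{\normop{h_{\tau_i}(x_{\tau_i,\lambda}) - h_{\tau_i}(x_{\tau_i,\lambda}+\epsilon Z)}^{4}}.
\]

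Next, since the two points $x_{\tau_i,\lambda}$ and $x_{\tau_i,\lambda}+\epsilon Z$ share the same time coordinate $\tau_i$ and differ by a vector of norm at most $\epsilon$, whenever both $(\tau_i,x_{\tau_i,\lambda})$ and $(\tau_i,x_{\tau_i,\lambda}+\epsilon Z)$ belong to $G_{\gamma}$ the uniform continuity from Lemma~\ref{lemma:lusin_theorem} gives
\[
\normop{h_{\tau_i}(x_{\tau_i,\lambda}) - h_{\tau_i}(x_{\tau_i,\lambda}+\epsilon Z)} \;\leq\; \omega_{\gamma}(\epsilon),
\]
while otherwise the triangle inequality together with the uniform bound $\normop{h_{\tau_i}} \leq L$ gives the crude estimate $\normop{\cdot} \leq 2L$. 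By a union bound the indicator of the ``bad'' event is at most $\mathbbm{1}((\tau_i,x_{\tau_i,\lambda})\notin G_{\gamma}) + \mathbbm{1}((\tau_i,x_{\tau_i,\lambda}+\epsilon Z)\notin G_{\gamma})$, so
\[
\normop{h_{\tau_i}(x_{\tau_i,\lambda}) - h_{\tau_i}(x_{\tau_i,\lambda}+\epsilon Z)}^{4} \;\leq\; \omega_{\gamma}(\epsilon)^{4} \;+\; 16L^{4}\!\Bracks{\mathbbm{1}((\tau_i,x_{\tau_i,\lambda})\!\notin\! G_{\gamma}) + \mathbbm{1}((\tau_i,x_{\tau_i,\lambda}+\epsilon Z)\!\notin\! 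G_{\gamma})}.
\]

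Finally, take the conditional expectation given $x_{\tau_{i+1}}$ (over the i.i.d.\ pair $x_{\tau_i},\tilde{x}_{\tau_i}$, together with the independent $\lambda$ and $Z$) and sum over $i=0,\dots,B-1$. The constant contribution yields $B\omega_{\gamma}(\epsilon)^{4}$ and the indicator contribution yields exactly the residual term in the statement, completing the bound. The only non-routine step is recognizing that the $\E_{Z}$ inside the definition of $h_{\tau_i,\epsilon}$ can be pulled out by Jensen and then absorbed into the outer expectation so that the compact-set membership needs to be verified at both $x_{\tau_i,\lambda}$ and its $\epsilon$-perturbation; everything else is bookkeeping.
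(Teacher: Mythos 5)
Your proposal is correct and follows essentially the same route as the paper: rewrite $h_{\tau_i}-h_{\tau_i,\epsilon}$ as an average over $Z$ of pointwise Hessian differences, pull the fourth power of the operator norm inside via Jensen over $(\lambda,Z)$, split according to membership of both points in the Lusin set $G_{\gamma}$ (using $\omega_{\gamma}(\epsilon)$ on the good event and the crude $(2L)^4=16L^4$ bound with a union of indicators on the bad event), then take conditional expectations and sum over $i$. The paper merely performs the two Jensen steps (over $Z$, then over $\lambda$) separately rather than in one combined step, which is an immaterial difference.
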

\begin{proof}

Let us consider Lusin's theorem (Lemma~\ref{lemma:lusin_theorem}) over $[t^{\prime},t]\times F$ endowed with the Lebesgue measure $\Lambda$. By Assumption~\ref{assumption:score_function_smoothness}-(0),(1): we have $\|h_t(x)\| \leq L$ for every $t$ almost everywhere under the Lebesgue measure on $\R^d$. We denote $\E_{\lambda \sim \mathsf{Unif}(0,1)}$ as $\E_{\lambda}$ and only in the set of equations below, we denote expectation with respect to ${x_{\tau_{i}}, \tilde{x}_{\tau_{i}}},Z$ conditioned on $x_{\tau_{i+1}}$ by $\bar{\E}$:
\begin{align}
& \bar{\E}\bbb{\normop{\E_{\lambda}\bbb{h_{\tau_{i}}\bb{x_{\tau_{i},\lambda}} - h_{\tau_{i}, \epsilon}(x_{\tau_{i},\lambda}) }}^{4}|x_{\tau_{i+1}}} \\
&= \bar{\E}\biggr[\biggr\|\int_{0}^{1}  h_{\tau_i}(x_{\tau_i,\lambda}) - h_{\tau_i,\epsilon}(x_{\tau_i,\lambda}) d\lambda\biggr\|_{\mathsf{op}}^4|x_{\tau_{i+1}}\biggr] \nonumber \\
&\leq \bar{\E}\|\int_{0}^{1} h_{\tau_i}(x_{\tau_i,\lambda}) - h_{\tau_i}(x_{\tau_i,\lambda}+\epsilon Z) d\lambda\|_{\mathsf{op}}^4 \nonumber \\
&\leq \bar{\E}\int_{0}^{1}\mathbbm{1}((\tau_i,x_{\tau_i,\lambda}) \in G_{\gamma})\mathbbm{1}((\tau_i,x_{\tau_i,\lambda}+\epsilon Z) \in G_{\gamma}) \omega_{\gamma}(\epsilon)^4 d\lambda \nonumber \\
&\quad + \bar{\E}\int_{0}^{1}\left[\mathbbm{1}((\tau_i,x_{\tau_i,\lambda}) \not\in G_{\gamma})+\mathbbm{1}((\tau_i,x_{\tau_i,\lambda}+\epsilon Z) \not\in G_{\gamma})\right] 16 L^4 d\lambda \nonumber \\
&\leq \omega_{\gamma}(\epsilon)^4 + \bar{\E}\int_{0}^{1}\left[\mathbbm{1}((\tau_i,x_{\tau_i,\lambda}) \not\in G_{\gamma})+\mathbbm{1}((\tau_i,x_{\tau_i,\lambda}+\epsilon Z) \not\in G_{\gamma})\right] 16L^4 d\lambda 
\end{align}
Therefore, we must have:
\begin{align}
\sum_{i=0}^{B-1}&\E\bbb{\|\int_{0}^{1}  h_{\tau_i}(x_{\tau_i,\lambda}) - h_{\tau_i,\epsilon}(x_{\tau_i,\lambda}) d\lambda\|^4} \nonumber \\
&\leq B\omega_{\gamma}(\epsilon)^2 + 16 L^4\sum_{i=0}^{B-1}\E\left[\int_{0}^{1}\mathbbm{1}((\tau_i,x_{\tau_i,\lambda}) \not\in G_{\gamma})+\mathbbm{1}((\tau_i,x_{\tau_i,\lambda}+\epsilon) \not\in G_{\gamma})d\lambda\right]  \nonumber
\end{align}
\end{proof}

The following lemma consolidates the results and arguments developed so far to provide a variance bound for a martingale difference sequence. Our goal is to bound the variance of the terms in the sequence \(R_{i,k}\), which is determined by both the predictable sequence \(G_{i,k+1}\) and the smoothness properties of the score function and its Hessian. To achieve this, we build on several key results: 

\begin{enumerate}
    \item Lemma~\ref{lemma:lusin_theorem_decomp}, which establishes bounds for the difference between the Hessian and its mollified counterpart by leveraging the compactness provided by Lusin’s theorem.
    \item Lemma~\ref{lemma:hessian_smoothness_decomp}, which shows how the mollified Hessian can be used to control variance terms using its Lipschitz properties.
    \item Lemma~\ref{lemma:conditional_decomposition}, which provides a decomposition of the conditional variance in terms of contributions from smaller subintervals.
\end{enumerate}

The argument proceeds by partitioning the time interval \([t_{N-k}, t_{N-k+1}]\) into smaller subintervals and analyzing the contributions to the variance over each subinterval. Using mollification and uniform continuity on compact subsets, we control the deviations arising from the lack of Lipschitz continuity in the Hessian. Furthermore, the variance bounds incorporate the contributions from outside the compact subset, which are managed via Lusin's theorem. By carefully summing these contributions and leveraging smoothing techniques, we arrive at a sharp variance bound that scales with the parameters \(\Delta\) (the interval size) and \(L\) (the bound on the Hessian) 

The final result, formalized in Lemma~\ref{lemma:martingale_diff_variance_bound_appendix}, also uses the second-order Tweedie formula to handle the special case of the last time step (\(k = N\)) in the martingale sequence. This lemma serves as a culmination of our efforts, combining mollification, decomposition, and smoothness assumptions to derive a practical variance bound that is essential for analyzing the concentration of the martingale difference sequence.

\begin{lemma}[Variance bound for martingale difference sequence]\label{lemma:martingale_diff_variance_bound_appendix}

Consider the martingale difference sequence $R_{i,k}$, predictable sequence $G_{i,k+1}$ with respect to the filtration $\mathcal{F}_{i,k}$ as considered in Lemma~\ref{lemma:variance_bound_1}. Define $\Delta:=t_{N-k+1}-t_{N-k}$

\begin{equation}
    \E\bbb{R_{i,k}^{2}|\mathcal{F}_{i,k-1}} \leq \begin{cases} 0 &\text{ if } k = 0 \\
     C(L\Delta^2 + \Delta + L^2\Delta) e^{2t_{N-k+1}}\|G_{i,k+1}\|^2&\text{ if } k \in \{1,\dots,N-1\} \\
    C(L\Delta^2 + \Delta )\|\bar{G}_i\|^2 &\text{ if } k = N 
    \end{cases}
\end{equation}

\end{lemma}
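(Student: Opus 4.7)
The plan is to reduce the conditional second moment $\E[R_{i,k}^2|\mathcal{F}_{i,k-1}]$ to an operator-norm bound on the covariance matrix $\Sigma_{i,k}$ via Lemma~\ref{lemma:variance_bound_1}. The case $k=0$ is trivial since $R_{i,0}=0$. For $k=N$, the random vector $V_{i,N} = z_{t_1}^{(i)} - \E[z_{t_1}^{(i)}|x_{t_1}^{(i)}]$ has conditional covariance equal to $\sigma_{t_1}^4 h_{t_1}(x_{t_1}^{(i)}) + \sigma_{t_1}^2 \id$ by the second-order Tweedie formula (Lemma~\ref{lemma:second_order_tweedie_application} applied with $t=t_1$, $t'=0$). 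Since Assumption~\ref{assumption:score_function_smoothness}-(1) implies $\normop{h_{t_1}} \leq L$ and $t_1 = \Delta$ gives $\sigma_{t_1}^2 \leq 2\Delta$, one obtains $\normop{\Sigma_{i,N}} \leq L\sigma_{t_1}^4 + \sigma_{t_1}^2 \leq C(L\Delta^2 + \Delta)$, yielding the claimed bound.

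For the main case $k \in \{1,\dots,N-1\}$, set $t := t_{N-k+1}$, $t' := t_{N-k}$, so that $\Sigma_{i,k} = \E[v_{t,t'} v_{t,t'}^\top | x_t]$ with $v_{t,t'} := \E[x_0|x_t] - \E[x_0|x_{t'}]$. Applying Lemma~\ref{lemma:conditional_hessian}, the first piece of the decomposition, $2e^{2t}(\sigma_{t-t'}^4 h_t(x_t) + \sigma_{t-t'}^2 \id)$, has operator norm at most $Ce^{2t}(L\Delta^2 + \Delta)$, supplying the $L\Delta^2+\Delta$ terms in the target bound. Since $e^{2t'}\sigma_{t'}^4 \leq e^{2t}$, it remains to show $\normop{\E[w w^\top|x_t]} \leq CL^2\Delta$, where $w := s(t',x_{t'}) - e^{-(t-t')}s(t,x_t)$. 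To this end, partition $[t',t]$ into $B$ equal subintervals $\tau_0 = t' < \tau_1 < \cdots < \tau_B = t$ of length $\delta := \Delta/B$, and invoke Lemma~\ref{lemma:conditional_decomposition}, which expresses $\E[w w^\top|x_t]$ as the conditional expectation of a sum of per-subinterval Hessian-weighted outer products.

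For each subinterval, Lemma~\ref{lemma:hessian_smoothness_decomp} with mollification parameter $\epsilon$ splits the contribution into three pieces: a mollified-Hessian piece, an $M$-piece (from the Lipschitz bound on the mollified Hessian, Lemma~\ref{lemma:hessian_mollification}), and a $g$-piece (mollification error). The mollified-Hessian piece, combined with the second-order Tweedie formula inside the subinterval and the uniform bound $\normop{h_{\tau_i,\epsilon}}, \normop{h_{\tau_{i+1}}} \leq L$, is of order $L^2(L\delta^2 + \delta)$ per subinterval; summing over the $B$ subintervals yields $CL^2\Delta(1+L\delta)$, which tends to $CL^2\Delta$ as $B\to\infty$.

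The hard part is ensuring that the $M$- and $g$-pieces vanish in a suitable limit, so that the final bound carries no $d$, $\epsilon$, $\gamma$, or $B$ dependence. For the $M$-piece, Lemma~\ref{lemma:hessian_mollification} gives $\normop{M} \leq (2Ld/\epsilon)\|(1-\lambda)z_{t,t'} + \lambda \tilde z_{t,t'}\|$; combined with Gaussian moment bounds (Lemma~\ref{lemma:gaussian_exponential_moment}) on the subinterval-scale increments and on $\|x_{\tau_i}-\tilde x_{\tau_i}\|$, the per-subinterval contribution is polynomial in $d$, $\delta$, and $1/\epsilon$, and summing over $B$ subintervals produces a total of order $L^2\,\mathrm{poly}(d)\,\Delta^2/(B\epsilon^2)$, which vanishes as $B\to\infty$ for any fixed $\epsilon>0$. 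For the $g$-piece, Lemma~\ref{lemma:lusin_theorem_decomp} bounds the fourth moment by $B\omega_\gamma(\epsilon)^4$ plus $L^4$ times the probability of lying outside the Lusin-good set $G_\gamma$. Taking the iterated limits $B\to\infty$ (killing the $M$-piece), then $\epsilon\to 0$ (making $\omega_\gamma(\epsilon)$ small on $G_\gamma$), then $\gamma\to 0$ (so that the complement of $G_\gamma$ has vanishing measure), and using $\normop{h_t}\leq L$ throughout to dominate the integrands uniformly, both residual pieces vanish while the mollified-Hessian bound remains $CL^2\Delta$. Combining everything yields $\normop{\Sigma_{i,k}} \leq Ce^{2t_{N-k+1}}(L\Delta^2 + \Delta + L^2\Delta)$, and the claim follows from Lemma~\ref{lemma:variance_bound_1}.
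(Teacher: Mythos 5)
Your proposal is correct and follows essentially the same route as the paper's proof: reduction via Lemma~\ref{lemma:variance_bound_1} and Lemma~\ref{lemma:conditional_hessian}, the subinterval decomposition of Lemma~\ref{lemma:conditional_decomposition}, the three-way mollification split of Lemma~\ref{lemma:hessian_smoothness_decomp} with the Lusin-based control of Lemma~\ref{lemma:lusin_theorem_decomp}, and the same Tweedie-based treatment of the $k=N$ case. The only cosmetic difference is that you take iterated limits ($B\to\infty$, then $\epsilon\to 0$, then $\gamma\to 0$) where the paper takes a joint limit with $1/(B\epsilon^2)\to 0$ and also spells out the final exhaustion $F\uparrow\R^d$; both orderings work since the residual bounds are respectively independent of the other parameters.
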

\begin{proof}
Consider the case $k \in \{1,\dots,N-1\}$. For the sake of clarity, we let $t = t_{N-k+1}$, $t' = t_{N-k}$. Then, $\Delta = t-t'$ and let $B \in \mathbb{N}$. We decompose $[t',t]$ as follows:
$$[t',t] = \cup_{i=1}^{B}I_i \quad ;\quad I_i := [t'+\tfrac{(i-1)\Delta}{B},t' + \tfrac{i\Delta}{B}]\,.$$

For $\forall i \in [B], \; \tau_{i} \sim \mathsf{Unif}(I_i)$, $J \sim \mathsf{Unif}(\{1,\dots,B\})$. Given $\tau_i$, define the random variables $Z,\lambda,x_{\tau_i,\lambda},\tilde{x}_{\tau_i,\lambda},x_{\tau_i}$ as in Lemma~\ref{lemma:lusin_theorem_decomp} and with $Z,\lambda, (x_s)_{s\geq 0}$ indepenent of $(\tau_i)_i,J$. Define the random variable $ \tau^* := \tau_J$, $X = x_{\tau^*,\lambda}$, $X_\epsilon = x_{\tau^*,\lambda}+\epsilon Z$. Notice that $T$ is uniformly distributed over $[t',t]$.

Let $r_{i} := \tau_{i+1} - \tau_{i} \leq \frac{\Delta}{B}$. Using Lemma~\ref{lemma:hessian_smoothness_decomp} along with the Cauchy-Schwarz inequality, we have
\ba{
&\normop{\E_{\lambda_{i}, x_{\tau_i}, \tilde{x}_{\tau_i}}\bbb{h_{\tau_i}\bb{x_{\tau_i,\lambda_i}}\bb{x_{\tau_i}-\tilde{x}_{\tau_i}}\bb{x_{\tau_i}-\tilde{x}_{\tau_i}}^{\top}h_{\tau_i}\bb{x_{\tau_i,\lambda_i}}^{\top}|x_{\tau_{i+1}}}} \notag \\ 
&\leq 
6e^{2r_i}\normop{h_{\tau_{i},\epsilon}\bb{e^{r_i}x_{t}}\bb{\sigma_{r_i}^{4}h_{\tau_i}\bb{x_{\tau_i}} + \sigma_{r_i}^{2}\id_{d}}h_{\tau_{i},\epsilon}\bb{e^{r_i}x_{\tau_i}}^{\top}} \notag \\ 
& + \frac{12L^{2}d^{2}}{\epsilon^{2}}\E_{\lambda, x_{\tau_i}, \tilde{x}_{\tau_i}}\bbb{\norm{\bb{1-\lambda}z_{\tau_{i+1},\tau_{i}} + \lambda\tilde{z}_{\tau_{i+1},\tau_{i}}}_{2}^{4}|x_{\tau_{i+1}}}^{\frac{1}{2}}\E_{\lambda, x_{\tau_{i}}, \tilde{x}_{\tau_{i}}}\bbb{\normop{x_{\tau_{i}}-\tilde{x}_{\tau_{i}}}^{4}|x_{\tau_{i+1}}}^{\frac{1}{2}} \notag \\
& + 3\E_{x_{\tau_i}, \tilde{x}_{\tau_i}}\bbb{\normop{\E_{\lambda_i}\bbb{h_{\tau_i}\bb{x_{\tau_i,\lambda_i}} - h_{\tau_i, \epsilon}(x_{\tau_i,\lambda_i})}}^{4}|x_{\tau_{i+1}}}^{\frac{1}{2}}\E_{\lambda_i, x_{\tau_{i}}, \tilde{x}_{\tau_{i}}}\bbb{\normop{x_{\tau_{i}}-\tilde{x}_{\tau_{i}}}^{4}|x_{\tau_{i+1}}}^{\frac{1}{2}} \label{eq:hessian_smoothness_decomp}
}
Using Lemma~\ref{lemma:conditional_decomposition} along with \eqref{eq:hessian_smoothness_decomp} and Cauchy Schwarz inequality, we have
\ba{
    & \normop{\E\bbb{\bb{s\bb{t', x_{t'}} - e^{-\bb{t-t'}}s\bb{t, x_t}}\bb{s\bb{t', x_{t'}} - e^{-\bb{t-t'}}s\bb{t, x_t}}^{\top}|x_{t}}} \notag \\
    & \leq \normop{\E\bbb{\sum_{i=0}^{B-1}\E_{\lambda_{i}, x_{\tau_i}, \tilde{x}_{\tau,i}}\bbb{h_{\tau_i}\bb{x_{\tau_i,\lambda_i}}\bb{x_{\tau_i}-\tilde{x}_{\tau_i}}\bb{x_{\tau_i}-\tilde{x}_{\tau_i}}^{\top}h_{\tau_i}\bb{x_{\tau_i,\lambda_i}}^{\top}|x_{\tau_{i+1}}}}\bigg|x_{t} } \notag \\
    &\leq 6\sum_{i=0}^{B-1}e^{2r_i}\E\bbb{\normop{h_{\tau_{i},\epsilon}\bb{e^{r_i}x_{t}}\bb{\sigma_{r_i}^{4}h_{\tau_i}\bb{x_{\tau_i}} + \sigma_{r_i}^{2}\id_{d}}h_{\tau_{i},\epsilon}\bb{e^{r_i}x_{\tau_i}}^{\top}}|x_{t}}\notag \\
    & + \frac{12L^{2}d^{2}}{\epsilon^{2}}\sum_{i=0}^{B-1}\E\bbb{\E_{\lambda, x_{\tau_i}, \tilde{x}_{\tau_i}}\bbb{\norm{\bb{1-\lambda}z_{\tau_{i+1},\tau_{i}} + \lambda\tilde{z}_{\tau_{i+1},\tau_{i}}}_{2}^{4}|x_{\tau_{i+1}}}^{\frac{1}{2}}\E_{\lambda, x_{\tau_{i}}, \tilde{x}_{\tau_{i}}}\bbb{\normop{x_{\tau_{i}}-\tilde{x}_{\tau_{i}}}^{4}|x_{\tau_{i+1}}}^{\frac{1}{2}}|x_{t}}\notag \\
    & + 3\sum_{i=0}^{B-1}\E\bbb{\E_{x_{\tau_i}, \tilde{x}_{\tau_i}}\bbb{\normop{\E_{\lambda_i}\bbb{h_{\tau_i}\bb{x_{\tau_i,\lambda_i}} - h_{\tau_i, \epsilon}(x_{\tau_i,\lambda_i})}}^{4}|x_{\tau_{i+1}}}^{\frac{1}{2}}\E_{\lambda_i, x_{\tau_{i}}, \tilde{x}_{\tau_{i}}}\bbb{\normop{x_{\tau_{i}}-\tilde{x}_{\tau_{i}}}^{4}|x_{\tau_{i+1}}}^{\frac{1}{2}}|x_{t}} \notag}
    \ba{
    &\leq 6\sum_{i=0}^{B-1}e^{2r_i}\E\bbb{\normop{h_{\tau_{i},\epsilon}\bb{e^{r_i}x_{t}}\bb{\sigma_{r_i}^{4}h_{\tau_i}\bb{x_{\tau_i}} + \sigma_{r_i}^{2}\id_{d}}h_{\tau_{i},\epsilon}\bb{e^{r_i}x_{\tau_i}}^{\top}}|x_{t}}\notag \\
    & + \frac{12L^{2}d^{2}}{\epsilon^{2}}\sum_{i=0}^{B-1}\E\bbb{\E_{\lambda, x_{\tau_i}, \tilde{x}_{\tau_i}}\bbb{\norm{\bb{1-\lambda}z_{\tau_{i+1},\tau_{i}} + \lambda\tilde{z}_{\tau_{i+1},\tau_{i}}}_{2}^{4}|x_{\tau_{i+1}}}^{\frac{1}{2}}\E_{\lambda, x_{\tau_{i}}, \tilde{x}_{\tau_{i}}}\bbb{\normop{x_{\tau_{i}}-\tilde{x}_{\tau_{i}}}^{4}|x_{\tau_{i+1}}}^{\frac{1}{2}}|x_{t}}\notag \\
    & + 3\E\bbb{\bb{\sum_{i=0}^{B-1}\E_{x_{\tau_i}, \tilde{x}_{\tau_i}}\bbb{\normop{\E_{\lambda_i}\bbb{h_{\tau_i}\bb{x_{\tau_i,\lambda_i}} - h_{\tau_i, \epsilon}(x_{\tau_i,\lambda_i})}}^{4}|x_{\tau_{i+1}}}}^{\frac{1}{2}} \bb{\sum_{i=0}^{B-1}\E_{\lambda_i, x_{\tau_{i}}, \tilde{x}_{\tau_{i}}}\bbb{\normop{x_{\tau_{i}}-\tilde{x}_{\tau_{i}}}^{4}|x_{\tau_{i+1}}}}^{\frac{1}{2}} |x_{t}} \label{eq:conditional_decomposition}
}
Using \eqref{eq:conditional_decomposition} and the observation that $\E_{\lambda_i, x_{\tau_{i}}, \tilde{x}_{\tau_{i}}}\bbb{\normop{x_{\tau_{i}}-\tilde{x}_{\tau_{i}}}^{4}|x_{\tau_{i+1}}}^{\frac{1}{2}} = O\bb{\sigma_{r_{i}}^{2}d} = O\bb{\frac{\Delta d}{B}}$, we have
\bas{
    & \normop{\E\bbb{\bb{s\bb{t', x_{t'}} - e^{-\bb{t-t'}}s\bb{t, x_t}}\bb{s\bb{t', x_{t'}} - e^{-\bb{t-t'}}s\bb{t, x_t}}^{\top}|x_{t}}} \\
    & \leq 3Be^{\frac{2\Delta}{B}}\bb{\frac{L^{3}\Delta^{2}}{B^{2}} + \frac{L^{2}\Delta}{B}} + \frac{12\Delta^{2}L^{2}d^{4}}{B\epsilon^{2}} + \frac{3\Delta d}{\sqrt{B}}\bb{\sum_{i=0}^{B-1}\E_{x_{\tau_i}, \tilde{x}_{\tau_i}}\bbb{\normop{\E_{\lambda_i}\bbb{h_{\tau_i}\bb{x_{\tau_i,\lambda_i}} - h_{\tau_i, \epsilon}(x_{\tau_i,\lambda_i})}}^{4}|x_{\tau_{i+1}}}}^{\frac{1}{2}}
}

Using Lemma~\ref{lemma:lusin_theorem_decomp},
\bas{
    & \bb{\sum_{i=0}^{B-1}\E_{x_{\tau_i}, \tilde{x}_{\tau_i}}\bbb{\normop{\E_{\lambda_i}\bbb{h_{\tau_i}\bb{x_{\tau_i,\lambda_i}} - h_{\tau_i, \epsilon}(x_{\tau_i,\lambda_i})}}^{4}|x_{\tau_{i+1}}}}^{\frac{1}{2}} \\ 
    &\leq \sqrt{B}\omega_{\gamma}(\epsilon)^2 + 2 L^2\bb{\sum_{i=0}^{B-1}\E\left[\int_{0}^{1}\mathbbm{1}((\tau_i,x_{\lambda_i,\tau_i}) \not\in G_{\gamma})+\mathbbm{1}((\tau_i,x_{\lambda_i,\tau_i}+\epsilon Z_i) \not\in G_{\gamma})d\lambda_i\right]}^{\frac{1}{2}} \\
    &\leq \sqrt{B}\omega_{\gamma}(\epsilon)^2 + 2 L^2\bb{B \bb{\bP((T,X)\not\in G_{\gamma}) + \bP((T,X_
{\epsilon}) \not \in G_{\gamma})}}^{\frac{1}{2}}
}
Therefore, 
\bas{
    & \normop{\E\bbb{\bb{s\bb{t', x_{t'}} - e^{-\bb{t-t'}}s\bb{t, x_t}}\bb{s\bb{t', x_{t'}} - e^{-\bb{t-t'}}s\bb{t, x_t}}^{\top}|x_{t}}} \\
    &\leq 6Be^{\frac{2\Delta}{B}}\bb{\frac{L^{3}\Delta^{2}}{B^{2}} + \frac{L^{2}\Delta}{B}} + \frac{12\Delta^{2}L^{2}d^{4}}{B\epsilon^{2}} + 6L^{2}\Delta d\bb{\omega_{\gamma}(\epsilon)^2 +  \bb{\bP((T,X)\not\in G_{\gamma}) + \bP((T,X_
{\epsilon}) \not \in G_{\gamma})}^{\frac{1}{2}}}
}
Notice that none of $\omega_{\gamma}, G_{\gamma}$, distribution of $T,X$ depend on $B$. Therefore pick $\epsilon \to 0$ and $B \to \infty$ such that $\frac{1}{B\epsilon^{2}} \to 0$ and $\omega_{\gamma}(\epsilon) \to 0$. $(T,X_{\epsilon}) \to (T,X)$ almost surely as $\epsilon \to 0$. Then, we take $\gamma\to 0$ and argue via continuity of the law of $(T,X)$ with respect to Lebesgue measure that $\bP((T,X) \not\in G_{\gamma}) \to \bP((T,X)\not\in [t',t]\times F)$. Since $F$ is arbitrary compact convex set, we let $F \uparrow \R^d$ to conclude the following: 
\ba{
    \normop{\E\bbb{\bb{s\bb{t', x_{t'}} - e^{-\bb{t-t'}}s\bb{t, x_t}}\bb{s\bb{t', x_{t'}} - e^{-\bb{t-t'}}s\bb{t, x_t}}^{\top}|x_{t}}} &= O\bb{L^{2}\Delta} \label{eq:score_function_delta_variance}
}
Using Lemma~\ref{lemma:conditional_hessian}, we have
\bas{
    & \normop{\E\bbb{\bb{\E\bbb{x_{0}|x_{t}} - \E\bbb{x_{0}|x_{t'}}}\bb{\E\bbb{x_{0}|x_{t}} - \E\bbb{x_{0}|x_{t'}}}^{\top}|x_t}} \\
    & \leq 2e^{2t}\normop{\bb{\sigma_{t-t'}^{4}h_{t}\bb{x_{t}} + \sigma_{t-t'}^{2}\id_{d}}} \\
    & \;\;\;\; + 2e^{2t'}\sigma_{t'}^{4}\normop{\E\bbb{\bb{s\bb{t', x_{t'}} - e^{-\bb{t-t'}}s\bb{t, x_t}}\bb{s\bb{t', x_{t'}} - e^{-\bb{t-t'}}s\bb{t, x_t}}^{\top}|x_{t}}} \\
    & = O\bb{e^{2t}\bb{L\Delta^{2} + \Delta + L^{2}\Delta}}
}
The result for $k < N$ then follows due to Lemma~\ref{lemma:variance_bound_1}.

Now, consider the case $k = N$. Recall $\Sigma_{i,k}$ defined in Lemma~\ref{lemma:variance_bound_1}.Then by second order Tweedie formula (Lemma~\ref{lemma:second_order_tweedie_application}) we have $\Sigma_{i,k} = \sigma_{t_1}^4 h_{t_1}(x_{t_1}) + \sigma_{t_1}^2 \id_d \lesssim \Delta^2L + \Delta$. Combining this with Lemma~\ref{lemma:variance_bound_1}, we conclude the result.
\end{proof}

We state a useful corollary which is subsequently useful for time bootstrapping and is implicit in the above proof.

\begin{corollary}\label{corr:score_function_delta_variance}Let $t' < t$ and $\Delta := t-t'$. Then, under Assumption~\ref{assumption:score_function_smoothness}, 
\bas{
    \normop{\E\bbb{\bb{s\bb{t', x_{t'}} - e^{-\bb{t-t'}}s\bb{t, x_t}}\bb{s\bb{t', x_{t'}} - e^{-\bb{t-t'}}s\bb{t, x_t}}^{\top}|x_{t}}} &= O\bb{L^{2}\Delta}
}
\end{corollary}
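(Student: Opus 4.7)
The plan is to extract this bound directly from an intermediate step already encountered in the proof of Lemma~\ref{lemma:martingale_diff_variance_bound_appendix}; in fact, equation~\eqref{eq:score_function_delta_variance} in that proof is exactly the statement of the corollary. So the task reduces to isolating the argument leading up to that display and presenting it as a standalone derivation. The overall strategy is a fine subpartition of $[t',t]$ combined with Hessian mollification and a Lusin-type approximation, since Assumption~\ref{assumption:score_function_smoothness} provides only uniform boundedness of $h_t(\cdot)\defeq\nabla^2\log p_t$ (with $\|h_t\|_{\mathrm{op}}\le L$ a.e.), not Lipschitz continuity.

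First, I would partition $[t',t]$ into $B$ equal sub-intervals with endpoints $\tau_0 = t' < \tau_1 < \cdots < \tau_B = t$ and apply Lemma~\ref{lemma:conditional_decomposition}. This replaces the covariance of $s(t',x_{t'}) - e^{-(t-t')}s(t,x_t)$ by a sum of per-subinterval contributions of the form
\[
\E_{\lambda_i,x_{\tau_i},\tilde x_{\tau_i}}\bigl[h_{\tau_i}(x_{\tau_i,\lambda_i})(x_{\tau_i}-\tilde x_{\tau_i})(x_{\tau_i}-\tilde x_{\tau_i})^\top h_{\tau_i}(x_{\tau_i,\lambda_i})^\top\mid x_{\tau_{i+1}}\bigr],
\]
where $\tilde x_{\tau_i}$ is an i.i.d.\ copy of $x_{\tau_i}$ given $x_{\tau_{i+1}}$ and $\lambda_i$ is uniform on $[0,1]$. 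For each such term, apply Lemma~\ref{lemma:hessian_smoothness_decomp} to split $h_{\tau_i}(x_{\tau_i,\lambda_i})$ into the mollified Hessian $h_{\tau_i,\epsilon}$ (which is $(2Ld/\epsilon)$-Lipschitz) plus the remainder $g_{\tau_i,\epsilon}$. The mollified piece can be pulled outside the expectation up to an $O(Ld/\epsilon)$ Lipschitz error, and the resulting inner expectation $\E[(x_{\tau_i}-\tilde x_{\tau_i})(x_{\tau_i}-\tilde x_{\tau_i})^\top \mid x_{\tau_{i+1}}]$ is controlled by the second-order Tweedie formula (Lemma~\ref{lemma:second_order_tweedie_application}), yielding a bound of order $(\Delta/B)^{2}L + (\Delta/B)$ per subinterval, so $O(L\Delta^2/B + L^2\Delta)$ after summing.

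The main obstacle is controlling the remainder term $g_{\tau_i,\epsilon}$, which measures the deviation of the (merely measurable) Hessian from its mollification. Here I would invoke the Lusin-type decomposition of Lemma~\ref{lemma:lusin_theorem_decomp}: on a compact set $G_\gamma\subseteq[t',t]\times F$ whose complement has Lebesgue measure at most $\gamma$, the map $(\tau,x)\mapsto h_\tau(x)$ is uniformly continuous with modulus $\omega_\gamma$, and off $G_\gamma$ the integrand is crudely bounded using $\|h_\tau\|_{\mathrm{op}}\le L$. Combining Cauchy–Schwarz with the Gaussian-type bound $\E[\|x_{\tau_i}-\tilde x_{\tau_i}\|^4\mid x_{\tau_{i+1}}]^{1/2}=O(\Delta d/B)$, the total contribution of the remainder is of order $L^2\Delta d\,[\omega_\gamma(\epsilon)^2+\sqrt{\mathbb P((T,X)\notin G_\gamma)+\mathbb P((T,X_\epsilon)\notin G_\gamma)}]$, plus a term $O(\Delta^2 L^2 d^4/(B\epsilon^2))$ from the Lipschitz mollification error.

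Finally, I take the iterated limits in the precise order used in the proof of Lemma~\ref{lemma:martingale_diff_variance_bound_appendix}: first send $B\to\infty$ and $\epsilon\to 0$ so that $1/(B\epsilon^2)\to 0$ and $\omega_\gamma(\epsilon)\to 0$ (with $X_\epsilon\to X$ a.s.), then let $\gamma\to 0$ and invoke absolute continuity of the law of $(T,X)$ with respect to Lebesgue measure to conclude $\mathbb P((T,X)\notin G_\gamma)\to 0$, and finally exhaust $\R^d$ with compact convex $F$. The only surviving contribution is the $O(L^2\Delta)$ piece from the mollified Hessian, which gives the claimed bound. The whole argument is essentially a restatement of the chain of inequalities culminating in~\eqref{eq:score_function_delta_variance}, so the proof of the corollary is just a pointer to that display.
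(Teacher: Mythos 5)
Your proposal is correct and follows essentially the same route as the paper: the paper's proof of this corollary is literally a pointer to the display \eqref{eq:score_function_delta_variance} established inside the proof of Lemma~\ref{lemma:martingale_diff_variance_bound_appendix}, obtained exactly via the subpartition of $[t',t]$, the mollified-Hessian decomposition, the Lusin-type approximation, Tweedie's second-order formula, and the iterated limits $B\to\infty$, $\epsilon\to 0$, $\gamma\to 0$, $F\uparrow\R^d$ that you describe. (Only a minor bookkeeping slip: the per-subinterval mollified term carries an extra $L^{2}$ factor from the two mollified Hessians, but this does not affect the final $O(L^{2}\Delta)$ bound.)
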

\begin{proof}
    The proof is implicit due to \eqref{eq:score_function_delta_variance}.
\end{proof}

\begin{restatable}{lemma}{martingalediffsubGaussianityparameters}\label{lemma:subGaussianity_parameters}
    Fix $\delta \in \bb{0,1}$. Consider $R_{i,k}$ and $\mathcal{F}_{i,k}$ as defined in Lemma~\ref{lemma:error_martingale_decomposition_2} and let $\Delta := t_{N-k+1}-t_{N-k}$. Under Assumption~\ref{assumption:score_function_smoothness}, following the definition in Definition~\ref{definition:new_subGaussian_def}, conditioned on $\mathcal{F}_{i,k-1}$, $R_{i,k}$ is $(\beta_{i,k}^2\|G_{i,k}\|^2,W_{i,k})$-subGaussian where $\beta_{i,k},W_{i,k}$ are $\mathcal{F}_{i,k-1}$ measurable random variables such that  
    $W_{i,k} \leq \log\bb{\frac{2}{\delta}}$ with probability at-least $1-\delta$ and
    \bas{
        &\beta_{i,k} := \begin{cases}
            8\bb{L+1}e^{t_{N-k+1}}\sqrt{\Delta d}, \;\; k \in  [N-1], \\
            4\sqrt{\Delta d}, \;\; k = N
            \end{cases}
    }
\end{restatable}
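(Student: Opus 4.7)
For $k \in [N-1]$, the plan is to unpack the vector $A := \E\bbb{x_0^{(i)}\mid x_{t_{N-k+1}}^{(i)}} - \E\bbb{x_0^{(i)}\mid x_{t_{N-k}}^{(i)}}$ appearing in $R_{i,k} = \inprod{G_{i,k+1}}{A}$ using Tweedie's formula~\eqref{eq:tweedies_formula} in its integrated form $\E\bbb{x_0^{(i)}\mid x_t^{(i)}} = e^t x_t^{(i)} + e^t\sigma_t^2\, s(t, x_t^{(i)})$, and then exploit Assumption~\ref{assumption:score_function_smoothness} to control each piece. Writing $r := t_{N-k}$ and $r' := t_{N-k+1}$, the Markov step $x_{r'}^{(i)} = e^{-(r'-r)}x_r^{(i)} + z_{r,r'}^{(i)}$ and the identity $\sigma_{r'}^2 = e^{-2(r'-r)}\sigma_r^2 + \sigma_{r'-r}^2$ split $A$ into three summands: a rescaled Gaussian OU increment $e^{r'}z_{r,r'}^{(i)}$, an $\mathcal{F}_{i,k-1}$-measurable piece $e^{r'}\sigma_{r'-r}^2\, s(r', x_{r'}^{(i)})$, and a score-difference term $e^r\sigma_r^2\bb{e^{-(r'-r)}s(r',x_{r'}^{(i)}) - s(r, x_r^{(i)})}$ that isolates the time-regularity of $s$.

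\textbf{Bounding each summand.} The score-difference piece is where the smoothness assumptions enter. On the $\mathcal{F}_{i,k-1}$-measurable event $\{x_{r'}^{(i)}\in B_{\delta,r'}\}$ (which has probability $\geq 1-\delta$), Assumption~\ref{assumption:score_function_smoothness}-2 applied with $t = r'$, $t' = r$, $x = x_{r'}^{(i)}$ controls $\norm{e^{-(r'-r)}s(r',x_{r'}^{(i)}) - s(r, e^{r'-r}x_{r'}^{(i)})}$ by $e^{r'-r}L\sqrt{8\Delta\log(2/\delta)}$, and Assumption~\ref{assumption:score_function_smoothness}-1, applied through the identity $e^{r'-r}x_{r'}^{(i)} = x_r^{(i)} + e^{r'-r}z_{r,r'}^{(i)}$, bounds the remaining Lipschitz piece by $L e^{r'-r}\norm{z_{r,r'}^{(i)}}$. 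Multiplying by $e^r\sigma_r^2 \leq e^r$ and combining with the $e^{r'}z_{r,r'}^{(i)}$ term gives $\norm{A} \lesssim e^{r'}(L+1)\bb{\sqrt{\Delta\log(1/\delta)} + \norm{z_{r,r'}^{(i)}}}$, and the standard Gaussian tail yields $\norm{z_{r,r'}^{(i)}} \lesssim \sigma_{r'-r}\sqrt{d + \log(1/\delta)} \lesssim \sqrt{\Delta d\log(1/\delta)}$ with high probability. Cauchy--Schwarz then produces $|R_{i,k}| \lesssim (L+1)e^{r'}\sqrt{\Delta d\log(1/\delta)}\norm{G_{i,k+1}}$ on a good event.

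\textbf{Packaging into $(\beta^2, K)$-subGaussianity.} Because $R_{i,k}$ is conditionally mean-zero (by the martingale property of Lemma~\ref{lemma:error_martingale_decomposition_2}) and bounded on the good event, a Hoeffding-style argument packages the bound as $(\beta_{i,k}^{2}\norm{G_{i,k+1}}^{2}, W_{i,k})$-subGaussianity in the sense of Definition~\ref{definition:new_subGaussian_def}, with $\beta_{i,k} = 8(L+1)e^{r'}\sqrt{\Delta d}$ and the $\log(2/\delta)$ factor absorbed into $W_{i,k}$ using the $e^K$-slack; on the good event $W_{i,k} \leq \log(2/\delta)$. The case $k = N$ is analogous but simpler: the unconditional Gaussian tail controls $\norm{z_{t_1}^{(i)}} \lesssim \sigma_{t_1}\sqrt{d+\log(1/\delta)}$, Jensen gives the same bound for $\norm{\E\bbb{z_{t_1}^{(i)}\mid x_{t_1}^{(i)}}}$, and the same Hoeffding-style packaging yields $\beta_{i,N} = 4\sqrt{\Delta d}$.

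\textbf{Main obstacle.} The principal difficulty is transferring the unconditional Gaussian tail of $z_{r,r'}^{(i)}$ to a statement that holds conditionally on $\mathcal{F}_{i,k-1}$: conditioning on $x_{r'}^{(i)}$ couples $z_{r,r'}^{(i)}$ to the unknown marginal $p_r$ via Bayes' rule and destroys Gaussianity, so classical conditional subGaussian MGF arguments do not apply directly. The relaxed $(\beta^2, K)$-subGaussianity notion is precisely tailored for this situation: the multiplicative $e^K$-slack lets us reason via a uniform boundedness bound on a high-probability event (combined with a Markov-inequality transfer from unconditional to conditional tails for $\norm{z_{r,r'}^{(i)}}$) rather than needing a true Gaussian conditional law, and the bad events $\{x_{r'}^{(i)}\notin B_{\delta,r'}\}$ and $\{\norm{z_{r,r'}^{(i)}}\text{ atypically large}\}$ are absorbed into $W_{i,k}$.
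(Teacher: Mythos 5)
Your Tweedie decomposition of $V_{i,k} := \E[x_0^{(i)}|x_{t_{N-k+1}}^{(i)}]-\E[x_0^{(i)}|x_{t_{N-k}}^{(i)}]$ (with $r=t_{N-k}$, $r'=t_{N-k+1}$) is exactly the one the paper works with, but the step that turns it into the stated conclusion has a genuine gap. Definition~\ref{definition:new_subGaussian_def} demands $\bP\bb{|R_{i,k}|>A\,\big|\,\mathcal{F}_{i,k-1}}\leq e^{W_{i,k}}\exp\bb{-A^{2}/2\beta_{i,k}^{2}\norm{G_{i,k}}^{2}}$ for \emph{every} $A>0$, whereas what your argument produces is a bound $|R_{i,k}|\lesssim (L+1)e^{r'}\sqrt{\Delta d\log(1/\delta)}\norm{G_{i,k+1}}$ on a single good event of probability $1-\delta$. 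A ``Hoeffding-style packaging'' of a variable that is only bounded on a good event does not yield a tail decaying in $A$: Hoeffding requires almost-sure boundedness, your Assumption~\ref{assumption:score_function_smoothness}-2 contribution $\sqrt{\Delta\log(2/\delta)}$ is pinned to the fixed $\delta$ rather than scaling with $A$, and beyond the truncation level the argument says nothing about the conditional tail. The paper never truncates: it proves an MGF (norm-subGaussian) bound $\E\bbb{\exp\bb{\norm{V_{i,k}}^{2}/\rho_{k}^{2}d}}\leq 2$ using only Lipschitzness in $x$ and the conditional i.i.d.-copy coupling behind Lemmas~\ref{lemma:subGaussianity_1} and~\ref{lemma:subGaussianity_2} (Assumption~\ref{assumption:score_function_smoothness}-2 is not needed for this lemma), and then makes a \emph{single} Markov transfer at the MGF level: with probability $1-\delta$ the conditional MGF is at most $2/\delta$, which via Chernoff and Cauchy--Schwarz gives the tail simultaneously for all $A$ with $W_{i,k}\leq\log(2/\delta)$. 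Your ``Markov-inequality transfer from unconditional to conditional tails'' only becomes a valid proof if it is carried out at the MGF level (or for every threshold at once), which is precisely the paper's route.

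A second, related problem is that you drop the summand $e^{r'}\sigma_{r'-r}^{2}\,s\bb{r',x_{r'}^{(i)}}$ when assembling the bound on $\norm{V_{i,k}}$. Being $\mathcal{F}_{i,k-1}$-measurable does not make it ignorable: it is part of $R_{i,k}$ and can be of the same order as your claimed bound (e.g.\ $\sigma_{r'-r}^{2}\norm{s(r',x_{r'})}\sim\sqrt{\Delta d}$ when $r'$ is only a few multiples of $\Delta$). The reason the paper never meets this term in isolation is that, by first-order Tweedie, it equals $-e^{r'}\E\bbb{z_{r,r'}^{(i)}\,\big|\,x_{r'}^{(i)}}$, so it is grouped with $e^{r'}z_{r,r'}^{(i)}$ into the conditionally centered quantity $z_{r,r'}+\sigma_{r'-r}^{2}s(r',x_{r'})$ handled by Lemma~\ref{lemma:subGaussianity_1}; if you keep it separate you must bound it and absorb it into the $e^{W_{i,k}}$ slack, with a corresponding loss in the constants you quote. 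The same two issues (single-scale truncation instead of an MGF bound, and conditioning destroying Gaussianity of $z_{t_1}$ given $x_{t_1}$) affect your $k=N$ case.
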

\begin{proof}
We have, 
\bas{
    & \mathbb{P}\bb{\left|\inner{G_{i,k+1}}{\E[x_0^{(i)}|x_{t_{N-k+1}}^{(i)}]-\E[x_0^{(i)}|x_{t_{N-k}}^{(i)}]}\right| \geq \alpha|\mathcal{F}_{i, k-1}} \\
    &= \mathbb{P}\bb{\left|\inner{G_{i,k+1}}{\E[x_0^{(i)}|x_{t_{N-k+1}}^{(i)}]-\E[x_0^{(i)}|x_{t_{N-k}}^{(i)}]}\right|^{2} \geq \alpha^{2}|\mathcal{F}_{i, k-1}} \\
    &=  \mathbb{P}\bb{\exp\bb{\lambda\inner{G_{i,k+1}}{\E[x_0^{(i)}|x_{t_{N-k+1}}^{(i)}]-\E[x_0^{(i)}|x_{t_{N-k}}^{(i)}]}^{2}} \geq \exp\bb{\lambda\alpha^{2}} |\mathcal{F}_{i, k-1}} \\
    &\leq \exp\bb{-\lambda\alpha^{2}}\E\bbb{\exp\bb{\lambda\inner{G_{i,k+1}}{\E[x_0^{(i)}|x_{t_{N-k+1}}^{(i)}]-\E[x_0^{(i)}|x_{t_{N-k}}^{(i)}]}^{2}}|\mathcal{F}_{i, k-1}} \\
    &= \exp\bb{-\lambda\alpha^{2}}\E\bbb{\exp\bb{\lambda\inner{G_{i,k+1}}{\E[x_0^{(i)}|x_{t_{N-k+1}}^{(i)}]-\E[x_0^{(i)}|x_{t_{N-k}}^{(i)}]}^{2}}|\mathcal{F}_{i, k-1}} \\
    &\leq \exp\bb{-\lambda\alpha^{2}}\E\bbb{\exp\bb{\lambda\norm{G_{i,k+1}}_{2}^{2}\norm{\E[x_0^{(i)}|x_{t_{N-k+1}}^{(i)}]-\E[x_0^{(i)}|x_{t_{N-k}}^{(i)}]}_{2}^{2}}\bigg|\mathcal{F}_{i, k-1}}
}
Since $G_{i,k+1}$ is measurable with respect to $\mathcal{F}_{i, k-1}$, set $\lambda := \frac{1}{\norm{G_{i,k+1}}_{2}^{2}\rho_{k}^{2}d}$ for $\rho_{k}$ defined in Lemma~\ref{lemma:subGaussianity_1},
\bas{
    \rho_{k} := 8\bb{L+1}e^{t_{N-k+1}}\sigma_{\gamma_{k}}
}
Therefore, 
\bas{
    & \mathbb{P}\bb{\left|\inner{G_{i,k+1}}{\E[x_0^{(i)}|x_{t_{N-k+1}}^{(i)}]-\E[x_0^{(i)}|x_{t_{N-k}}^{(i)}]}\right| \geq \alpha|\mathcal{F}_{i, k-1}} \\ &\;\;\;\; \leq \exp\bb{\frac{-\alpha^{2}}{\norm{G_{i,k+1}}_{2}^{2}\rho_{k}^{2}d}}\E\bbb{\exp\bb{\frac{\norm{\E[x_0^{(i)}|x_{t_{N-k+1}}^{(i)}]-\E[x_0^{(i)}|x_{t_{N-k}}^{(i)}]}_{2}^{2}}{\rho_{k}^{2}d} }\bigg|\mathcal{F}_{i, k-1}}
}
Note that Lemma~\ref{lemma:subGaussianity_1} shows that $\E[x_0^{(i)}|x_{t_{N-k+1}}^{(i)}]-\E[x_0^{(i)}|x_{t_{N-k}}^{(i)}]$ is $\rho_k\sqrt{d}$ norm subGaussian
\bas{
    \E\bbb{\exp\bb{\frac{\norm{\E[x_0^{(i)}|x_{t_{N-k+1}}^{(i)}]-\E[x_0^{(i)}|x_{t_{N-k}}^{(i)}]}_{2}^{2}}{\rho_{k}^{2}d} }} \leq 2
}
Therefore, using Markov's inequality, with probablity atleast $1-\delta$, 
\ba{
    & \E\bbb{\exp\bb{\frac{\norm{\E[x_0^{(i)}|x_{t_{N-k+1}}^{(i)}]-\E[x_0^{(i)}|x_{t_{N-k}}^{(i)}]}_{2}^{2}}{\rho_{k}^{2}d} }\bigg|\mathcal{F}_{i, k-1}} \notag \\
    & \leq \frac{1}{\delta}\E\bbb{\E\bbb{\exp\bb{\frac{\norm{\E[x_0^{(i)}|x_{t_{N-k+1}}^{(i)}]-\E[x_0^{(i)}|x_{t_{N-k}}^{(i)}]}_{2}^{2}}{\rho_{k}^{2}d} }\bigg|\mathcal{F}_{i, k-1}}} \leq \frac{2}{\delta} \label{eq:markov_mgf_bound_1}
}

Pluggint these equations above, we conclude that with probability at-least $1-\delta$, for every $\alpha > 0$, we have:
$\mathbb{P}\bb{\left|\inner{G_{i,k+1}}{\E[x_0^{(i)}|x_{t_{N-k+1}}^{(i)}]-\E[x_0^{(i)}|x_{t_{N-k}}^{(i)}]}\right| \geq \alpha|\mathcal{F}_{i, k-1}} \leq \frac{2}{\delta}\exp(-\lambda \alpha^2)$, which proves the result for $k \in [N-1]$.

For $k = N$, we similarly use the definition of $\nu_k$ Lemma~\ref{lemma:subGaussianity_2}, 
\bas{
    \nu_k := 4\sigma_{\gamma_k}
}
we have, 
\bas{
    & \mathbb{P}\bb{\left|\inner{\bar{G}_{i}}{z_{t_1}^{(i)}-\E\bbb{z_{t_1}^{(i)}|x_{t_1}^{(i)}}}\right| \geq \alpha|\mathcal{F}_{i, k-1}}  \leq \exp\bb{\frac{-\alpha^{2}}{\norm{\bar{G}_{i}}_{2}^{2}\nu_{k}^{2}d}}\E\bbb{\exp\bb{\frac{\norm{z_{t_1}^{(i)}-\E\bbb{z_{t_1}^{(i)}|x_{t_1}^{(i)}}}_{2}^{2}}{\nu_k^{2}d} }\bigg|\mathcal{F}_{i, k-1}}
}
The conclusion follows by a similar argument as \eqref{eq:markov_mgf_bound_1}.
\end{proof}

Based on the bounds established in Lemma~\ref{lemma:martingale_diff_variance_bound_appendix} and Lemma~\ref{lemma:subGaussianity_parameters}, we establish the following results. 

\begin{lemma}\label{lemma:l2_linfinity_error_bound_G_to_f}
For $j \in [N]$, Let $t_{j} := \Delta j$ and $\gamma_{j} = \Delta$. Then, for some universal constant $C >0$ the following equations hold:  
\bas{
    & \sum_{i \in [m], k \in [N]}\E[R_{i,k}^2|\mathcal{F}_{i,k-1}] \leq C\Delta^3(L\Delta+1+L^2)\bigr(\tfrac{N}{1-e^{-2\Delta}}+\tfrac{1}{(1-e^{-2\Delta})^2}\bigr)\sum_{i\in [m]}\sum_{j=1}^{N}\bigr\|\zeta(t_j,x_{t_j}^{(i)})\bigr\|^2
    }
    and 
    \bas{
    \max\left(\sup_{i\in [m]}\beta_{i,N}\sqrt{W_{i,N}}\|\bar{G}_i\|, \sup_{\substack{i\in [m]\\k\in [N-1]}}\beta_{i,k}\sqrt{W_{i,k}}\|\bar{G}_{i,k+1}\|\right) \leq C(L+1)\sqrt{\Delta}\log(\tfrac{1}{\Delta})\sqrt{d\sup_{i,k}W_{i,k}}\sup_{i,k}\|\zeta(t_k,x_{t_k}^{(i)})\|
    }

\end{lemma}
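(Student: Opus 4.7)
The plan is to combine the variance bound of Lemma~\ref{lemma:martingale_diff_variance_bound_appendix} and the sub-Gaussianity parameters of Lemma~\ref{lemma:subGaussianity_parameters} with a carefully chosen Cauchy--Schwarz that absorbs the exponential factor $e^{t_{N-k+1}}$ into the coefficients of $G_{i,k+1}$, so that the resulting coefficient sum matches exactly the identity in Lemma~\ref{lemma:sum_bound_1}(1). Set $\ell := N-k+1$, $a_j := \frac{\Delta e^{-t_j}}{\sigma_{t_j}^2}$, and $c_{j,\ell} := e^{t_\ell} a_j = \frac{\Delta e^{(\ell-j)\Delta}}{1-e^{-2j\Delta}}$, so that $e^{t_\ell} G_{i,k+1} = \sum_{j=\ell}^N c_{j,\ell}\, \zeta(t_j, x_{t_j}^{(i)})$. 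Cauchy--Schwarz then yields $e^{2t_\ell}\|G_{i,k+1}\|^2 \leq \bigl(\sum_{j=\ell}^N c_{j,\ell}^2\bigr) \sum_{j=1}^N \|\zeta(t_j, x_{t_j}^{(i)})\|^2$. Summing over $\ell \in \{2,\ldots,N\}$ and swapping the order of summation produces $\Delta^2 \sum_{\ell=2}^N \sum_{j=\ell}^N \frac{e^{2(\ell-j)\Delta}}{(1-e^{-2j\Delta})^2}$, which Lemma~\ref{lemma:sum_bound_1}(1) bounds by $\Delta^2 \bigl(\frac{N}{1-e^{-2\Delta}} + \frac{1}{(1-e^{-2\Delta})^2}\bigr)$. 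Multiplying by the variance prefactor $C\Delta(L\Delta + 1 + L^2)$ then yields the claim for the $k \in [N-1]$ contribution. For the $k = N$ term, $\bar G_i = e^{\Delta} \sum_{j=1}^N a_j\, \zeta(t_j, x_{t_j}^{(i)})$, so Cauchy--Schwarz combined with Lemma~\ref{lemma:sum_bound_1}(2) gives $\|\bar G_i\|^2 \leq \frac{C\Delta^2}{(1-e^{-2\Delta})^2} \sum_{j=1}^N \|\zeta(t_j, x_{t_j}^{(i)})\|^2$, which feeds into the $\frac{1}{(1-e^{-2\Delta})^2}$ term. Summing over $i$ completes the first inequality.

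For the sup bound I would use $e^{t_\ell}\|G_{i,k+1}\| \leq \bigl(\sup_{i,j}\|\zeta(t_j, x_{t_j}^{(i)})\|\bigr) \cdot e^{t_\ell}\sum_{j=\ell}^N a_j$ and estimate the scalar factor $e^{t_\ell}\sum_{j=\ell}^N a_j = \Delta e^{(\ell-1)\Delta}\sum_{j=\ell}^N \frac{e^{-(j-1)\Delta}}{1-e^{-2j\Delta}}$ uniformly in $\ell$. Lemma~\ref{lemma:sum_bound_1}(3) controls the inner sum: when $\ell\Delta \lesssim 1$, the $\frac{\log(1/\Delta)}{2\Delta}$ term from Lemma~\ref{lemma:sum_bound_1}(3) dominates and is absorbed by the prefactor $\Delta$ (while $e^{(\ell-1)\Delta} = O(1)$); when $\ell\Delta \gtrsim 1$, the growing factor $e^{(\ell-1)\Delta}$ is killed by the geometric decay $e^{-(j-1)\Delta}$ combined with $1-e^{-2j\Delta} \geq 1-e^{-2}$, leaving a bounded sum. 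Either way the product is at most $C\log(1/\Delta)$. Multiplying each such bound by $\beta_{i,k} = 8(L+1)e^{t_{N-k+1}}\sqrt{\Delta d}$ (with the $e^{t_{N-k+1}}$ already folded in) and $\sqrt{W_{i,k}}$ yields $C(L+1)\sqrt{\Delta\, d\, W_{i,k}}\log(1/\Delta)\sup_{i,j}\|\zeta(t_j, x_{t_j}^{(i)})\|$, and taking sup over $i, k$ produces the target form. The $k = N$ case is identical using $\bar G_i$, $\ell = 1$, and $\beta_{i,N} = 4\sqrt{\Delta d}$, and is dominated by the $k \in [N-1]$ bound via the $(L+1)$ prefactor.

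The main obstacle is the correct choice of Cauchy--Schwarz in the first part: a naive application, e.g., bounding $\|G_{i,k+1}\|^2 \leq (\sum_j a_j)(\sum_j a_j \|\zeta_j\|^2)$ while treating $e^{2t_{N-k+1}}$ as a separate factor, loses an additional $\frac{1}{1-e^{-2\Delta}} \asymp \frac{1}{\Delta}$ and does not match the stated bound. Absorbing $e^{t_\ell}$ into $c_{j,\ell}$ before applying Cauchy--Schwarz is essential so that the resulting coefficient sum is precisely the object sharply estimated by Lemma~\ref{lemma:sum_bound_1}(1). A parallel subtlety in the sup bound is the uniform $\log(1/\Delta)$ control on $e^{t_\ell}\sum_{j=\ell}^N a_j$: the pole $\frac{1}{1-e^{-2j\Delta}}$ near small $j$ combines with the geometric factor $e^{-(j-\ell)\Delta}$ to yield a logarithmic rather than polynomial dependence on $1/\Delta$, as captured by Lemma~\ref{lemma:sum_bound_1}(3).
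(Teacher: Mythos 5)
Your proposal is correct and follows essentially the same route as the paper: apply the variance bound of Lemma~\ref{lemma:martingale_diff_variance_bound_appendix}, absorb the factor $e^{t_{N-k+1}}$ into the coefficients of $G_{i,k+1}$ before Cauchy--Schwarz so that Lemma~\ref{lemma:sum_bound_1}(1) applies sharply, and control the sup term via the triangle/H\"older bound together with Lemma~\ref{lemma:sum_bound_1}(3). Your minor variations (treating $\bar G_i$ separately through Lemma~\ref{lemma:sum_bound_1}(2), and the case split on $\ell\Delta$ instead of the paper's index-shift $\sum_{j=\ell}^N \tfrac{e^{-(t_j-t_\ell)}}{\sigma_{t_j}^2}\le\sum_{j=1}^N \tfrac{e^{-(t_j-t_1)}}{\sigma_{t_j}^2}$) are equivalent in substance.
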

\begin{proof}

Define $g_0^2:= (L\Delta^2 + \Delta + L^2\Delta)$. Applying Lemma~\ref{lemma:martingale_diff_variance_bound_appendix}, we conclude:
\ba{
&\sum_{i\in [m],k\in [N]}\E[R_{i,k}^2|\mathcal{F}_{i,k-1}] \lesssim \sum_{i\in [m]}(L\Delta^2 + \Delta) \|\bar{G}_i\|^2 + \sum_{i\in [m],k\in [N-1]}(L\Delta^2 + \Delta + L^2\Delta) e^{2t_{N-k+1}} \|G_{i,k+1}\|^2 \nonumber \\
&\lesssim g_0^2 \sum_{i\in[m]}\sum_{k=1}^{N}\biggr\|\sum_{j=k}^{N}\frac{\gamma_{j}e^{-(t_j-t_k)}\zeta(t_j,x_{t_j}^{(i)})}{\sigma^2_{t_j}}\biggr\|^2 = \Delta^2 g_0^2 \sum_{i\in[m]}\sum_{k=1}^{N}\biggr\|\sum_{j=k}^{N}\frac{e^{-(t_j-t_k)}\zeta(t_j,x_{t_j}^{(i)})}{\sigma^2_{t_j}}\biggr\|^2 \nonumber \\
&= \Delta^2 g_0^2 \sum_{i\in[m]}\sum_{k=1}^{N}\biggr\|\sum_{j=k}^{N}\frac{e^{-(t_j-t_k)}\zeta(t_j,x_{t_j}^{(i)})}{\sigma^2_{t_j}}\biggr\|^2 \nonumber \\
&\leq \Delta^2 g_0^2 \sum_{i\in[m]}\sum_{k=1}^{N}\biggr(\sum_{j=k}^{N}\frac{e^{-2(t_j-t_k)}}{\sigma^4_{t_j}}\biggr)\bigr(\sum_{j=k}^{N}\bigr\|\zeta(t_j,x_{t_j}^{(i)})\bigr\|^2 \bigr) \text{ , using Cauchy-Schwarz inequality} \nonumber \\
&= \Delta^2 g_0^2 \sum_{i\in[m]}\sum_{k=1}^{N}\biggr(\sum_{j=k}^{N}\frac{e^{-2\Delta(j-k)}}{(1-e^{-2j\Delta})^2}\biggr)\bigr(\sum_{j=k}^{N}\bigr\|\zeta(t_j,x_{t_j}^{(i)})\bigr\|^2 \bigr) \nonumber \\
&\leq \Delta^2 g_0^2 \sum_{i\in[m]}\sum_{k=1}^{N}\biggr(\sum_{j=k}^{N}\frac{e^{-2\Delta(j-k)}}{(1-e^{-2j\Delta})^2}\biggr)\bigr(\sum_{j=1}^{N}\bigr\|\zeta(t_j,x_{t_j}^{(i)})\bigr\|^2 \bigr) \nonumber \\
&\leq \Delta^2 g_0^2 \bigr(\tfrac{N}{1-e^{-2\Delta}}+\tfrac{1}{(1-e^{-2\Delta})^2}\bigr)\sum_{i\in [m]}\sum_{j=1}^{N}\bigr\|\zeta(t_j,x_{t_j}^{(i)})\bigr\|^2  \text{ using Lemma}~\ref{lemma:sum_bound_1}
}
    
Recall $\beta_{i,k},W_{i,k}$ as defined in Lemma~\ref{lemma:subGaussianity_parameters}. Applying these results along with the union bound we conclude with probability $1-\delta$, the following holds every $i,k$ simultaneously:

\bas{
&\max\left(\sup_{i\in [m]}\beta_{i,N}\sqrt{W_{i,N}}\|\bar{G}_i\|, \sup_{\substack{i\in [m]\\k\in [N-1]}}\beta_{i,k}\sqrt{W_{i,k}}\|\bar{G}_{i,k+1}\|\right) \\
&\leq C\sqrt{\Delta}(L+1)\sqrt{d\sup_{i,k}W_{i,k}} \max\left(\sup_{i,k}e^{t_{N-k+1}}\|G_{i,k}\|,\sup_i\|\bar{G}_i\|\right) \nonumber \\
&\leq C\sqrt{\Delta}(L+1)\sqrt{d\sup_{i,k}W_{i,k}}\left(\sum_{j=1}^{N}\tfrac{e^{-(t_j-t_1)}}{\sigma_{t_j}^2} \right)\sup_{i,k}\gamma_{k}\|\zeta(t_k,x_{t_k}^{(i)})\|\text{, using Holder's inequality} \nonumber \\
&= C\Delta^{3/2}(L+1)\sqrt{d\sup_{i,k}W_{i,k}}\left(\sum_{j=1}^{N}\tfrac{e^{-\Delta(j-1)}}{1-e^{-2j\Delta}} \right)\sup_{i,k}\|\zeta(t_k,x_{t_k}^{(i)})\| \nonumber \\
&\leq C(L+1)\sqrt{\Delta}\log(\tfrac{1}{\Delta})\sqrt{d\sup_{i,k}W_{i,k}}\sup_{i,k}\|\zeta(t_k,x_{t_k}^{(i)})\|, \text{ using Lemma~\ref{lemma:sum_bound_1}}
}

\end{proof}

 We will specialize the setting in Lemma~\ref{lemma:union_bound_lambda} with $M_n$ being given by $H$, the filtration being $\mathcal{F}_{ik}$ and the martingale decomposition given in Lemma~\ref{lemma:error_martingale_decomposition_2_actual_appendix}. Similarly, $\beta_{i}$ corresponds to $(\beta_{i,k})_{i,k}$, $K_i$ corresponds to  $(W_{i,k})_{i,k}$ given in Lemma~\ref{lemma:subGaussianity_parameters}. $\nu_{i}^2$ corresponds to the upper bound on $\E[R_{i,k}^2|\mathcal{F}_{i,k}]$ in Lemma~\ref{lemma:martingale_diff_variance_bound_appendix}. Therefore, $J_i$ corresponds to $\max(1,\frac{C}{W_{i,k}}\log(\frac{\beta_{i,k}^2W_{i,k}}{\nu_{i,k}^2}))$ satisfies $J_i \leq C\log(2d)$ for some constant $C$. In this case, the quantity $\sum_{i=1}^{n}\nu_i^2\|G_i\|^2 $ as given in Lemma~\ref{lemma:union_bound_lambda} corresponds to $\sum_{i,k}\E[R_{i,k}^2|\mathcal{F}_{i,k-1}]$ and it can be bound using Lemma~\ref{lemma:l2_linfinity_error_bound_G_to_f}:
\bas{
    & \sum_{i \in [m], k \in [N]}\E[R_{i,k}^2|\mathcal{F}_{i,k-1}] \leq C\Delta^3(L\Delta+1+L^2)\bigr(\tfrac{N}{1-e^{-2\Delta}}+\tfrac{1}{(1-e^{-2\Delta})^2}\bigr)\sum_{i\in [m]}\sum_{j=1}^{N}\bigr\|\zeta(t_j,x_{t_j}^{(i)})\bigr\|^2
    }

Similarly, we adapt $\lambda_{\min},\lambda^{*}$ be the random variables defined in Lemma~\ref{lemma:union_bound_lambda} to our case for some arbitrary $B\in \mathbb{N},\alpha > 1$. This lemma demonstrates the concentration of the quantity $H$ conditioned on the event $\mathcal{B}:= \{\lambda_{\min} ,\lambda^{*} \in [e^{-B},e^{B}]\}$. It remains to deal with the following cases:
\begin{enumerate}
    \item $\max(\lambda_{\min},\lambda^{*}) > e^{B}$
    \item $\min(\lambda_{\min},\lambda^{*}) < e^{-B}$
\end{enumerate}

First, consider the case $\max(\lambda_{\min},\lambda^{*}) > e^{B}$. 
\begin{lemma}\label{lemma:large_lambda}
Assume $\gamma_{t} = \Delta$, $\Delta< c_0$ for some universal constant $c_0$. Then $\max(\lambda_{\min},\lambda^{*}) > e^{B}$ implies 

$$\sum_{i \in [m], t\in \timeset}\|\zeta(t,x_t^{(i)})\|^2 \leq \frac{CNm\alpha}{\Delta}e^{-2B}$$
\end{lemma}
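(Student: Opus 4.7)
The plan is straightforward: unpack the definitions of $\lambda^{*}$ and $\lambda_{\min}$ (instantiated for the present martingale via Lemma~\ref{lemma:l2_linfinity_error_bound_G_to_f}) and show that either half of the event $\max(\lambda_{\min}, \lambda^{*}) > e^{B}$ already forces the claimed deterministic bound on $\sum_{i, t}\|\zeta(t, x_{t}^{(i)})\|^{2}$. No further probabilistic argument is needed; the lemma is a purely deterministic rewriting.

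First, I would simplify the two bounds from Lemma~\ref{lemma:l2_linfinity_error_bound_G_to_f}. Using $1 - e^{-2\Delta} \geq \Delta$ (valid since $\Delta < c_{0}$) together with $L\Delta + 1 + L^{2} \lesssim L^{2}+1$, the variance bound simplifies to
\[
\sum_{i,k}\E[R_{i,k}^{2} \mid \mathcal{F}_{i,k-1}] \;\lesssim\; (L^{2}+1)\,\Delta\,(\Delta N + 1)\sum_{i,j}\bigl\|\zeta(t_{j}, x_{t_{j}}^{(i)})\bigr\|^{2},
\]
which is the quantity $\sum_{i}\nu_{i}^{2}\|G_{i}\|^{2}$ in the notation of Lemma~\ref{lemma:union_bound_lambda}. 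Using $J_{i,k} \leq C\log(2d)$ (which follows directly from the form of $\beta_{i,k}, W_{i,k}, \nu_{i,k}$ via Lemmas~\ref{lemma:subGaussianity_parameters} and \ref{lemma:martingale_diff_variance_bound_appendix}), the second bound gives
\[
\sup_{i,k} J_{i,k}\|G_{i,k}\|\beta_{i,k}\sqrt{W_{i,k}} \;\lesssim\; (L+1)\sqrt{\Delta}\,\log(\tfrac{1}{\Delta})\,\log(2d)\sqrt{d\sup_{i,k} W_{i,k}}\;\sup_{i,k}\bigl\|\zeta(t_{k}, x_{t_{k}}^{(i)})\bigr\|.
\]

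Next comes the case split. If $\lambda^{*} > e^{B}$, then by its definition $\lambda^{*} = \sqrt{\alpha/\sum_{i}\nu_{i}^{2}\|G_{i}\|^{2}}$ we obtain $\sum_{i}\nu_{i}^{2}\|G_{i}\|^{2} < \alpha e^{-2B}$, and inverting the first display yields $\sum_{i,j}\|\zeta(t_{j}, x_{t_{j}}^{(i)})\|^{2} \lesssim \alpha e^{-2B}/\Delta$, which is dominated by $\frac{CNm\alpha}{\Delta}e^{-2B}$ since $Nm \geq 1$. If instead $\lambda_{\min} > e^{B}$, then $c_{0}/\sup J\|G\|\beta\sqrt{W} > e^{B}$, so by the second display $\sup_{i,k}\|\zeta(t_{k}, x_{t_{k}}^{(i)})\| \lesssim e^{-B}/\sqrt{\Delta}$. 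Bounding the full sum by $Nm$ times this supremum squared gives $\sum_{i,j}\|\zeta\|^{2} \lesssim Nm\,e^{-2B}/\Delta$, which is again of the required form (once $C$ absorbs the $\alpha$-independent factors).

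The only delicate point is constant bookkeeping: the constant $C$ in the lemma statement must absorb $(L+1)$, $d$, $\log(2d)$, $\log(1/\Delta)$, and $\sup_{i,k} W_{i,k}$, while remaining independent of $B, N, m, \Delta, \alpha$. This is where care is needed, in particular because the $\lambda_{\min}$ branch produces a bound with no $\alpha$-dependence, so one uses $\alpha$ in the target bound merely as slack (the statement becomes strictly stronger as $\alpha$ grows). Once these constants are absorbed the two subcases above immediately give the claim.
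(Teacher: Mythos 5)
Your case split matches the paper's structure, but in both branches you use the two displays from Lemma~\ref{lemma:l2_linfinity_error_bound_G_to_f} in the wrong direction, and this is a genuine gap, not a bookkeeping issue. Those displays are \emph{upper} bounds: $\sum_{i,k}\E[R_{i,k}^2\mid\mathcal{F}_{i,k-1}]\lesssim (\cdots)\sum_{i,j}\|\zeta(t_j,x_{t_j}^{(i)})\|^2$ and $\sup_{i,k}J_{i,k}\|G_{i,k}\|\beta_{i,k}\sqrt{W_{i,k}}\lesssim(\cdots)\sup_{i,k}\|\zeta\|$. The hypotheses $\lambda^{*}>e^{B}$ or $\lambda_{\min}>e^{B}$ only tell you that the left-hand quantities are \emph{small}; combining ``$A$ is small'' with ``$A\lesssim B$'' gives no control on $B$, so ``inverting the first display'' (Case 1) and ``by the second display'' (Case 2) do not yield the claimed bounds on $\sum\|\zeta\|^2$ or $\sup\|\zeta\|$. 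What is needed is a bound in the opposite direction, i.e.\ a lower bound on the $\lambda$-defining quantities in terms of $\zeta$, and that cannot be read off the $G_{i,k}$ norms directly because each $G_{i,k}$ is a weighted partial sum of the $\zeta(t_j,\cdot)$ and could be small through cancellation.

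The paper closes exactly this gap in two steps that are absent from your proposal. First, from either branch it extracts a \emph{pointwise} bound $\max\bigl(\sup_{i,k}\sqrt{\Delta}\,e^{t_{N-k+1}}\|G_{i,k+1}\|,\ \sup_i\sqrt{\Delta}\|\bar G_i\|\bigr)\leq C\sqrt{\alpha}\,e^{-B}$: for the $\lambda^{*}$ branch because each summand in $\sum_{i,k}\nu_{i,k}^2\|G_{i,k+1}\|^2$ is itself at least $c\,\Delta e^{2t_{N-k+1}}\|G_{i,k+1}\|^2$ (since $L\Delta^2+\Delta+L^2\Delta\geq\Delta$), and for the $\lambda_{\min}$ branch because $J_{i,k}\geq1$ and $\beta_{i,k}\gtrsim e^{t_{N-k+1}}\sqrt{\Delta}$ (the factors $(L+1),\sqrt{d},\sqrt{W_{i,k}}$ only help in this lower-bound direction, which is also why the constant in the lemma stays universal rather than having to ``absorb'' them as you suggest). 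Second --- and this is the key missing idea --- it converts the bound on the $G$'s back into a bound on $\zeta$ at each single timestep by differencing consecutive partial sums: $\sigma_{t_{N-k+1}}^2 e^{t_{N-k+1}}(G_{i,k+1}-G_{i,k})$ is proportional to $\gamma_{N-k+1}\zeta(t_{N-k+1},x_{t_{N-k+1}}^{(i)})$ (and similarly $\bar G_i$ versus $e^{t_1}G_{i,N}$ recovers $\zeta(t_1,\cdot)$), so using $\sigma_t^2\leq1$ the sup bound on the $G$'s gives $\sup_{i,t}\|\zeta(t,x_t^{(i)})\|\leq C\sqrt{\alpha/\Delta}\,e^{-B}$, and summing over the $Nm$ pairs yields the stated inequality. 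Without this differencing step, neither of your two cases produces a valid bound on $\sum_{i,t}\|\zeta(t,x_t^{(i)})\|^2$.
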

\begin{proof}
Using the fact that $\alpha >1$, we note that  
\begin{align}
\max(\lambda_{\min},\lambda^{*}) &> e^{B} \nonumber\\
\implies \max\bigr(\sup_{\substack{i\in [m]\\ k\in [N-1]}}\sqrt{\Delta}e^{t_{N-k+1}}\|G_{i,k+1}\|,\sup_{i\in [m]}\sqrt{\Delta}\|\bar{G}_{i}\|\bigr) &\leq C\sqrt{\alpha}e^{-B} \text{ for some universal constant } C
\end{align} 

By defining $G_{i,0} = 0$ and , we note that $\sigma_{t_{N-k+1}}^2e^{t_{N-k+1}}(G_{i,k+1}-G_{i,k}) = \zeta(t_{N-k+1},x_{t_{N-k+1}}^{(i)})$ for $k < N$ and $\sigma_{t_1}^2(\bar{G}_i-e^{t_1}G_{i,N-1}) = \zeta(t_1,x_{t_1})$. Using the fact that $\sigma_{t_k}^2 \leq 1$ for some universal constant $c_0$, we conclude that  

\begin{align}
\max(\lambda_{\min},\lambda^{*}) &> e^{B} \nonumber\\
\implies \sup_{i\in [m],t\in \timeset}\|\zeta(t,x_t^{(i)})\| &\leq C\sqrt{\frac{\alpha}{\Delta}}e^{-B} \nonumber \\
\implies \sum_{i \in [m], t\in \timeset}\|\zeta(t,x_t^{(i)})\|^2 &\leq \frac{CNm\alpha}{\Delta}e^{-2B}
\end{align} 

\end{proof}

We now consider the event $\min(\lambda^{*},\lambda_{\min}) < e^{-B}$.

\begin{lemma}\label{lemma:small_lambda}
Assume $\gamma_{t} = \Delta$, $t_j = j\Delta$, $\Delta < c_0$ for some universal constant $c_0$, $\alpha > 1$.  $\min(\lambda^{*},\lambda_{\min}) < e^{-B}$ implies:
$$\sum_{i \in [m], t\in \timeset}\|\zeta(t,x_t^{(i)})\|^2 \geq e^{2B}\frac{\Delta}{md N^2 \log^2(2d)(L+1)^2\sup_{i,k}W_{i,k}}$$
\end{lemma}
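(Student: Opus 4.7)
The plan is to split the event $\{\min(\lambda^{*},\lambda_{\min}) < e^{-B}\}$ into the two sub-events $\{\lambda^{*} < e^{-B}\}$ and $\{\lambda_{\min} < e^{-B}\}$ and derive the desired lower bound on $\sum_{i,j}\|\zeta(t_j,x_{t_j}^{(i)})\|^2$ in each. In the specialization of Lemma~\ref{lemma:union_bound_lambda} to our martingale setup (with $M_n$ replaced by $H$), the quantity $\sum_i \nu_i^2\|G_i\|^2$ corresponds to $\sum_{i,k}\mathbb{E}[R_{i,k}^2|\mathcal{F}_{i,k-1}]$ and $\sup_i J_i\|G_i\|\beta_i\sqrt{K_i}$ corresponds to $\sup_{i,k} J_{i,k}\|G_{i,k+1}\|\beta_{i,k}\sqrt{W_{i,k}}$ (with $\bar{G}_i$ in place of $G_{i,N+1}$ when $k=N$); moreover $J_{i,k}\leq C\log(2d)$ follows from Lemmas~\ref{lemma:subGaussianity_parameters} and~\ref{lemma:martingale_diff_variance_bound_appendix}, since the ratio $\beta_{i,k}^2 W_{i,k}/\nu_{i,k}^2$ is polynomial in $d$.

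Sub-case $\lambda^{*} < e^{-B}$: by definition of $\lambda^{*}$ this yields $\sum_{i,k}\mathbb{E}[R_{i,k}^2|\mathcal{F}_{i,k-1}] > \alpha e^{2B}$. Applying the first bound of Lemma~\ref{lemma:l2_linfinity_error_bound_G_to_f}, the left-hand side is at most $C\Delta^3(L\Delta + 1 + L^2)\bigl(N/(1-e^{-2\Delta}) + 1/(1-e^{-2\Delta})^2\bigr)\sum_{i,j}\|\zeta(t_j,x_{t_j}^{(i)})\|^2$; using $1-e^{-2\Delta}\gtrsim \Delta$ together with the hypothesis $N\Delta \leq C\log(\tfrac{1}{\Delta})$, this factor collapses to $\lesssim (L+1)^2 \Delta\log(\tfrac{1}{\Delta})$. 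Rearranging gives $\sum_{i,j}\|\zeta\|^2 \gtrsim \alpha e^{2B}/((L+1)^2 \Delta\log(\tfrac{1}{\Delta}))$, which for $\alpha>1$ and $m,N,d\geq 1$ comfortably exceeds the target $e^{2B}\Delta/(mdN^2\log^2(2d)(L+1)^2\sup W_{i,k})$ whenever $\Delta\log(\tfrac{1}{\Delta})\leq 1$.

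Sub-case $\lambda_{\min} < e^{-B}$: by definition $\sup_{i,k} J_{i,k}\|G_{i,k+1}\|\beta_{i,k}\sqrt{W_{i,k}} > c_0 e^B$, hence $\sup_{i,k}\|G_{i,k+1}\|\beta_{i,k}\sqrt{W_{i,k}} > c_0 e^B/(C\log(2d))$. The second bound of Lemma~\ref{lemma:l2_linfinity_error_bound_G_to_f} upper bounds the same supremum by $C(L+1)\sqrt{\Delta}\log(\tfrac{1}{\Delta})\sqrt{d\sup_{i,k} W_{i,k}}\sup_{i,k}\|\zeta(t_k,x_{t_k}^{(i)})\|$. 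Dividing and squaring yields $\sup_{i,k}\|\zeta\|^2 \gtrsim e^{2B}/\bigl((L+1)^2 \Delta\log^2(\tfrac{1}{\Delta})\log^2(2d)\, d\sup W_{i,k}\bigr)$, and since $\sum_{i,j}\|\zeta\|^2 \geq \sup_{i,k}\|\zeta\|^2$ this passes directly to the full sum; a final comparison showing $\Delta^2\log^2(\tfrac{1}{\Delta}) \leq mN^2$ (trivial for small $\Delta$ and $m,N\geq 1$) confirms that this lower bound dominates the stated target.

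The main obstacle is the arithmetic bookkeeping in the second sub-case: making sure the factor $J_{i,k}\leq C\log(2d)$ is justified from the earlier lemmas and that the combination of $\sqrt{\Delta}$, $\log(\tfrac{1}{\Delta})$, and $\sqrt{d\sup W_{i,k}}$ from Lemma~\ref{lemma:l2_linfinity_error_bound_G_to_f} exactly inverts to produce the stated dependence on $(L+1)^2\log^2(2d)\,d\sup W_{i,k}$ with the correct power of $\Delta$. Everything else is a direct translation of the definitions of $\lambda^{*}$ and $\lambda_{\min}$ through the variance and subGaussianity bounds already established.
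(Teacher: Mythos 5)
Your proposal is correct and follows essentially the same route as the paper: both arguments simply unwind the definitions of $\lambda^{*}$ and $\lambda_{\min}$ through the already-established variance and $\ell_{\infty}$ bounds (Lemma~\ref{lemma:l2_linfinity_error_bound_G_to_f}, together with $J_{i,k}\lesssim \log(2d)$), the only cosmetic difference being that in the $\lambda^{*}<e^{-B}$ sub-case you keep the aggregate bound on $\sum_{i,j}\|\zeta(t_j,x_{t_j}^{(i)})\|^2$, whereas the paper extracts a single large $\|\zeta(t_k,x_{t_k}^{(i)})\|/\sigma_{t_k}^{2}$ and finishes with $\sigma_{t_k}^{2}\geq c_0\Delta$; both yield the stated bound up to the same universal-constant looseness present in the paper's own proof. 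One small caveat: the conditions $N\Delta\leq C\log(\tfrac{1}{\Delta})$ and $\Delta\log(\tfrac{1}{\Delta})\leq 1$ you invoke in the first sub-case are not hypotheses of this lemma, but they are also unnecessary, since $\Delta<c_0\leq 1$ already gives $N\Delta^{2}+\Delta\leq 2N\leq 2N^{2}$, which suffices for the comparison with the target quantity.
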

\begin{proof}
It is easy to show that $\min(\lambda^{*},\lambda_{\min}) < e^{-B}$ implies:
 $$\max(\sup_{i,k}e^{2t_{N-k+1}}\|G_{i,k+1}\|,\sup_i\|\bar{G}_i\|) \geq C\frac{e^{B}}{\log(2d)(L+1)\sqrt{m\Delta d} \sup_{i,k}\sqrt{W_{i,k}}}$$
 This implies that there exists $i,k$ such that $$\frac{\|\zeta(t_k,x_{t_k}^{(i)})\|}{\sigma_{t_k}^2} \geq C\frac{e^{B}}{N\log(2d)(L+1)\sqrt{m\Delta d} \sup_{i,k}\sqrt{W_{i,k}}} $$
 We then conclude the result using the fact that $\sigma_{t_k}^2 \geq c_0\Delta$
\end{proof}

\begin{lemma}\label{lemma:conditional_concentration}
Assume $N\Delta > 1$, $\Delta < c_0$ for some universal constant $c_0$.  Assume $t_j = \Delta j$ and $\gamma_{j} = \Delta$. Let $\alpha > 1$ and $B \in \mathbb{N}$. Let $\mathbb{L}_2^2(\zeta) := \sum_{i\in[m],t\in\timeset}\|\zeta(t,x^{(i)}_{t})\|^2$, $\mathbb{L}_{\infty}(\zeta) := \sup_{i\in[m],t\in\timeset}\|\zeta(t,x_{t}^{(i)})\|$. Let $\sigma_{\max} := \log(\tfrac{1}{\Delta})\log(2d)\sqrt{d\Delta\sup_{i,k}W_{i,k}}$. Then with probability $1-(2B+1)e^{-\alpha}$, at least one of the following inequalities hold:

    \begin{enumerate}
        \item $$\frac{H}{L+1} \leq C\sqrt{\alpha N\Delta^2 \mathbb{L}_2^2(\zeta)}+C\alpha\mathbb{L}_{\infty}(\zeta)\sigma_{\max}$$
        \item $$\mathbb{L}_2^2(\zeta) \leq \frac{CNm\alpha}{\Delta}e^{-2B}$$
        \item $$\mathbb{L}_2^2(\zeta)\geq c_0\frac{\Delta e^{2B}}{md N^2 \log^2(2d)(L+1)^2\sup_{i,k}W_{i,k}}$$
    \end{enumerate}
\end{lemma}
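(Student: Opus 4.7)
The plan is to specialize the generic concentration result of Lemma~\ref{lemma:union_bound_lambda} to the martingale decomposition of $H$ from Lemma~\ref{lemma:error_martingale_decomposition_2_actual_appendix}, and then separately handle the three regimes corresponding to (i) the ``good'' event $\mathcal{B}=\{\lambda_{\min},\lambda^{*}\in[e^{-B},e^{B}]\}$, (ii) $\max(\lambda_{\min},\lambda^{*})>e^{B}$, and (iii) $\min(\lambda_{\min},\lambda^{*})<e^{-B}$. The three conclusions of the lemma correspond precisely to these three cases.

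First I would instantiate Lemma~\ref{lemma:union_bound_lambda} by taking $M_n=H$, indexing over $(i,k)$ with the total order from Lemma~\ref{lemma:error_martingale_decomposition_2_actual_appendix}, setting $G_i\leftrightarrow G_{i,k+1}$ (and $\bar G_i$ at the last step), $(\beta_i^{2},K_i)\leftrightarrow(\beta_{i,k}^{2},W_{i,k})$ from Lemma~\ref{lemma:subGaussianity_parameters}, and $\nu_i^{2}\|G_i\|^{2}\leftrightarrow$ the bound on $\E[R_{i,k}^{2}\mid\mathcal{F}_{i,k-1}]$ from Lemma~\ref{lemma:martingale_diff_variance_bound_appendix}. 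Because $\beta_{i,k}^{2}/\nu_{i,k}^{2}=O(d)$, the quantity $J_{i,k}=\max(1,\tfrac{1}{W_{i,k}}\log(\beta_{i,k}^{2}W_{i,k}/\nu_{i,k}^{2}))$ satisfies $J_{i,k}\lesssim\log(2d)$.

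On the event $\mathcal{B}$, Lemma~\ref{lemma:union_bound_lambda} yields with probability at least $1-(2B+1)e^{-\alpha}$ the bound
\[
H\;\le\;C_1\lambda^{*}\sum_{i,k}\E[R_{i,k}^{2}\mid\mathcal{F}_{i,k-1}]\;+\;\tfrac{C_1\alpha}{\lambda_{\min}}.
\]
Using the definition $\lambda^{*}=\sqrt{\alpha/\sum\nu_i^{2}\|G_i\|^{2}}$ and the variance bound of Lemma~\ref{lemma:l2_linfinity_error_bound_G_to_f} (together with $N\Delta>1$ and $1-e^{-2\Delta}\asymp\Delta$, which makes the $N/(1-e^{-2\Delta})$ term dominate $1/(1-e^{-2\Delta})^{2}$) gives
\[
\lambda^{*}\!\sum_{i,k}\!\E[R_{i,k}^{2}\mid\mathcal{F}_{i,k-1}]\;=\;\sqrt{\alpha\sum\nu_{i}^{2}\|G_{i}\|^{2}}\;\lesssim\;(L+1)\sqrt{\alpha N\Delta^{2}\,\mathbb{L}_{2}^{2}(\zeta)}.
\]
Similarly, from the $\ell_{\infty}$ bound of Lemma~\ref{lemma:l2_linfinity_error_bound_G_to_f} and $J_{i,k}\lesssim\log(2d)$,
\[
\tfrac{\alpha}{\lambda_{\min}}\;\lesssim\;\alpha\sup_{i,k}J_{i,k}\|G_{i,k}\|\beta_{i,k}\sqrt{W_{i,k}}\;\lesssim\;\alpha(L+1)\,\mathbb{L}_{\infty}(\zeta)\,\sigma_{\max},
\]
which is exactly conclusion~1 after dividing by $(L+1)$.

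It remains to dispose of $\mathcal{B}^{c}$. This is handled deterministically by the two preceding lemmas: Lemma~\ref{lemma:large_lambda} shows that $\max(\lambda_{\min},\lambda^{*})>e^{B}$ forces conclusion~2, while Lemma~\ref{lemma:small_lambda} shows that $\min(\lambda_{\min},\lambda^{*})<e^{-B}$ forces conclusion~3. Thus on the complement of $\mathcal{B}$ at least one of conclusions 2 or 3 holds with probability $1$, while on $\mathcal{B}$ conclusion 1 holds with probability $1-(2B+1)e^{-\alpha}$; taking a union gives the claimed trichotomy. The main bookkeeping obstacle is ensuring the constants in the variance and $\ell_{\infty}$ bounds of Lemma~\ref{lemma:l2_linfinity_error_bound_G_to_f} compose correctly with the definitions of $\lambda^{*}$ and $\lambda_{\min}$ to match $\sigma_{\max}$ as stated; the use of $N\Delta>1$ to merge the two terms in the variance bound is the key simplification.
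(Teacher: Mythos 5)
Your proposal is correct and follows essentially the same route as the paper: it instantiates Lemma~\ref{lemma:union_bound_lambda} with the martingale decomposition of $H$, the subGaussian parameters of Lemma~\ref{lemma:subGaussianity_parameters}, and the variance and $\ell_\infty$ bounds of Lemma~\ref{lemma:l2_linfinity_error_bound_G_to_f} (using $N\Delta>1$ and $1-e^{-2\Delta}\asymp\Delta$ exactly as the paper does), and then disposes of $\mathcal{B}^{\complement}$ deterministically via Lemmas~\ref{lemma:large_lambda} and~\ref{lemma:small_lambda}. The event algebra you sketch at the end is the same as the paper's explicit argument, so no gap remains.
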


\begin{proof}
    As considered in Lemma~\ref{lemma:union_bound_lambda}, define the event $\mathcal{B} := \{\lambda_{\min} , \lambda^* \in [e^{-B},e^B]\}$. Applying Lemma~\ref{lemma:union_bound_lambda} to our case with the martingale increments as defined in the discussion above, along with bounds for the quantities $\sum_{i=1}^{n}\nu_i^2\|G_i\|^2$ and $\sup_{i}J_i \beta_i \sqrt{K_i}\|G_i\|$ as developed in Lemma~\ref{lemma:l2_linfinity_error_bound_G_to_f}, we conclude that:
\begin{enumerate}
    \item Almost surely $$\sum_{i=1}^{n}\nu_i^2\|G_i\|^2 \leq C N \Delta^2(L+1)^2\sum_{i\in [m], t\in \timeset}\|\zeta(t,x^{(i)}_{t})\|^2$$ 
    \item Almost surely
    $$\sup_i J_i \beta_i \|G_i\|\sqrt{K_i} \leq C(L+1)\log(2d)\log(\tfrac{1}{\Delta})\sqrt{d\Delta\sup_{i,k}W_{i,k}}\sup_{i\in[m],t\in\timeset}\|\zeta(t,x_{t}^{(i)})\|$$ 
\end{enumerate}

    \begin{equation}\label{eq:conc_mart}
    \bP\left(\biggr\{\frac{H}{L+1} > C\sqrt{\alpha N\Delta^2 \mathbb{L}_2^2(\zeta)}+C\alpha\mathbb{L}_{\infty}(\zeta)\sigma_{\max}\biggr\} \cap \mathcal{B}\right) \leq (2B+1)e^{-\alpha}
    \end{equation}

Define the events $\mathcal{B}_1 := \{\max(\lambda_{\min},\lambda^{*}) > e^B\}$, $\mathcal{B}_2 := \{\min(\lambda_{\min},\lambda^{*}) < e^{-B}\}$, $\mathcal{A} = \biggr\{\frac{H}{L+1} > C\sqrt{\alpha N\Delta^2 \mathbb{L}_2^2(\zeta)}+C\alpha\mathbb{L}_{\infty}(\zeta)\sigma_{\max}\biggr\}$. By Lemma~\ref{lemma:large_lambda}, the event  $\{\mathbb{L}_2^2(\zeta) > \frac{CNm\alpha}{\Delta}e^{-2B}\} \subseteq \mathcal{B}_1^{\complement}$. By Lemma~\ref{lemma:small_lambda}, the event: $$\biggr\{\mathbb{L}_2^2(\zeta)\geq \frac{\Delta e^{2B}}{md N^2 \log^2(2d)(L+1)^2\sup_{i,k}W_{i,k}}\biggr\} \subseteq \mathcal{B}_2^{\complement}$$

Therefore consider complement of the event of interest in the statement of the lemma: 
\begin{align*}
&\mathcal{A}\cap\biggr\{\mathbb{L}_2^2(\zeta) > \frac{CNm\alpha}{\Delta}e^{-2B}\biggr\} \cap \biggr\{\mathbb{L}_2^2(\zeta)\geq \frac{\Delta e^{2B}}{md N^2 \log^2(2d)(L+1)^2\sup_{i,k}W_{i,k}}\biggr\} \\
&\subseteq \mathcal{A}\cap \mathcal{B}_1^{\complement} \cap \mathcal{B}^{\complement}_2 \\
&= \left(\mathcal{A}\cap\mathcal{B}\cap\mathcal{B}_1^{\complement} \cap \mathcal{B}^{\complement}_2\right)\cup\left(\mathcal{A}\cap\mathcal{B}^{\complement}\cap\mathcal{B}_1^{\complement} \cap \mathcal{B}^{\complement}_2\right)
\end{align*}

Clearly, $\mathbb{P}(\mathcal{B}\cup \mathcal{B}_1 \cup \mathcal{B}_2) = 1$. This implies $\mathbb{P}(\mathcal{B}^{\complement}\cap \mathcal{B}^{\complement}_1 \cap \mathcal{B}^{\complement}_2) = 0$. Therefore, using the above inclusions along with Equation~\eqref{eq:conc_mart} we conclude:

$$\mathbb{P}\biggr(\mathcal{A}\cap\biggr\{\mathbb{L}_2^2(\zeta) > \tfrac{CNm\alpha}{\Delta}e^{-2B}\biggr\} \cap \mathcal{B}^{\complement}_2\biggr) \leq \mathbb{P}(\mathcal{A}\cap\mathcal{B}) \leq (2B+1)e^{-\alpha}$$
\end{proof}

\section{Convergence of Empirical Risk Minimization}
\label{appendix:convergence_erm}

\begin{restatable}{lemma}{squarederrorboundmartingale}\label{lemma:l2errorboundmartingale}
For $f \in \cF$, let $ y_t^{(i)} := \frac{-z_{t}^{(i)}}{\sigma_{t}^{2}}$ and
\begin{equation}
    \mathcal{L}(f) := \sum_{i \in [m], j \in [N]} \frac{\gamma_{j} \norm{f\bigl(t_j, x_{t_j}^{(i)}\bigr) - s\bigl(t_j, x_{t_j}^{(i)}\bigr)}_{2}^{2}}{m}, \notag
\end{equation}
\begin{equation}
    \begin{aligned}
        H^{f} := &\sum_{i \in [m], j \in [N]} \frac{\gamma_{j}}{m} \bigl\langle f\bigl(t_{j}, x_{t_{j}}^{(i)}\bigr) - s\bigl(t_{j}, x_{t_{j}}^{(i)}\bigr), y_{t_{j}}^{(i)} - s\bigl(t_{j}, x_{t_{j}}^{(i)}\bigr) \bigr\rangle. \notag
    \end{aligned}
\end{equation}
If $s \in \cF$ then for $\hat{f} = \arg\inf_{f \in \cF} \hat{\cL}(f)$, we have  
\begin{equation}
    \mathcal{L}(\hat{f}) \leq  H^{\hat{f}}, 
\end{equation}
where $\hat{\cL}$ is defined in \eqref{eq:dsm_total}.
\end{restatable}
\begin{proof} Let $y_t^{(i)} := -\frac{z_t^{(i)}}{\sigma_t^2}$. We have, for any $f \in \cF$,
    \ba{
        \widehat{\mathcal{L}}\bb{f} &= \widehat{\mathcal{L}}\bb{s} + \mathcal{L}\bb{f} + \sum_{i \in [m], j \in [N]}\frac{\gamma_{j}\inner{f\bb{t_j,x_{t_j}^{(i)}}-s\bb{t_j,x_{t_j}^{(i)}}}{s\bb{t_j,x_{t_j}^{(i)}}-y_{t_j}^{(i)}}}{m} \label{eq:erm_decomposition}
    }
    where $\widehat{\mathcal{L}}\bb{s} := \sum_{i \in [m], j \in [N]}\frac{\gamma_{j}\norm{s\bb{t_j, x_{t_j}^{(i)}}-y_{t_j}^{(i)}}_{2}^{2}}{m}$. Since $\hat{f}$ is the minimizer, $\widehat{\mathcal{L}}\bb{\hat{f}} \leq \widehat{\mathcal{L}}\bb{s}$. Therefore, 
    \bas{
        \mathcal{L}\bb{\hat{f}} \leq \sum_{i \in [m], j \in [N]}\frac{\gamma_{j}\inner{\hat{f}\bb{t_j,x_{t_j}^{(i)}}-s\bb{t_j, x_{t_j}^{(i)}}}{y_{t_j}^{(i)} -s\bb{t_j,x_{t_j}^{(i)}}}}{m}
    }
    which completes our proof.
\end{proof}

We will first demonstrate a very crude bound, which will be of use later to derive a finer bound based on Martingale concentration developed in previous sections. 
\begin{lemma}\label{lemma:squared_error_crude_bound}
    Fix $\delta \in \bb{0,1}$ and let $y_{t} := \frac{-z_{t}}{\sigma_t^{2}}$,  $\forall t\in \timeset, \gamma_{t} := \Delta < 1$. Furthermore, assume a linear discretization, i.e, $t_{j} = \Delta j$. For $\mathcal{L}, \widehat{\mathcal{L}}$ as defined in Lemma~\ref{lemma:l2errorboundmartingale} and $\hat{f} := \argmin_{f \in \mathcal{F}}\widehat{\mathcal{L}}\bb{f}$, we have almost surely:
    \bas{
        \mathcal{L}\bb{\hat{f}} \leq \widehat{\mathcal{L}}\bb{s} 
    }
    we have with probability atleast $1-\delta$, 
    \bas{
\widehat{\mathcal{L}}\bb{s} \leq C(N\Delta + \log(\tfrac{1}{\Delta}))d\log(\tfrac{mN}{\delta}) }
\end{lemma}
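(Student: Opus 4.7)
The statement has two parts, which I would attack independently. The deterministic inequality $\mathcal{L}(\hat f) \leq \widehat{\mathcal{L}}(s)$ follows from Lemma~\ref{lemma:l2errorboundmartingale} combined with one Cauchy--Schwarz step: that lemma already gives $\mathcal{L}(\hat f) \leq H^{\hat f}$, and viewing $\gamma_j/m$ as a non-negative measure on pairs $(i,j)$, Cauchy--Schwarz on the resulting inner product yields
\[
H^{\hat f} \;\leq\; \sqrt{\sum_{i,j}\tfrac{\gamma_j}{m}\|\hat f(t_j,x_{t_j}^{(i)})-s(t_j,x_{t_j}^{(i)})\|^2}\cdot \sqrt{\sum_{i,j}\tfrac{\gamma_j}{m}\|y_{t_j}^{(i)}-s(t_j,x_{t_j}^{(i)})\|^2} \;=\; \sqrt{\mathcal{L}(\hat f)\,\widehat{\mathcal{L}}(s)}.
\]
Cancelling $\sqrt{\mathcal{L}(\hat f)}$ (the case $\mathcal{L}(\hat f)=0$ being trivial) produces the claim without any extra constant.

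For the high-probability bound on $\widehat{\mathcal{L}}(s)$, the plan is to first rewrite each residual in a subGaussian-friendly form using Tweedie's formula \eqref{eq:tweedies_formula}: since $s(t,x_t) = \mathbb{E}[-z_t/\sigma_t^{2}\mid x_t]$,
\[
y_{t_j}^{(i)} - s(t_j, x_{t_j}^{(i)}) \;=\; -\frac{1}{\sigma_{t_j}^{2}}\bigl(z_{t_j}^{(i)} - \mathbb{E}[z_{t_j}^{(i)} \mid x_{t_j}^{(i)}]\bigr).
\]
Specializing Lemma~\ref{lemma:subGaussianity_1} to $t' = 0$ (so that $z_{t,0}=z_t$ and $w_{t,0} = z_t - \mathbb{E}[z_t\mid x_t]$) shows that the vector in parentheses is $4\sigma_{t_j}\sqrt d$ norm-subGaussian in the sense of Definition~\ref{definition:normsubGaussian}. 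Crucially, this constant depends only on $\sigma_{t_j}$ and is free of the Lipschitz parameter $L$, which is essential because the stated bound is $L$-free. A single Markov step on the exponential moment in Definition~\ref{definition:normsubGaussian} gives $\mathbb{P}(\|z_{t_j}^{(i)}-\mathbb{E}[z_{t_j}^{(i)}\mid x_{t_j}^{(i)}]\|^2 > 16\sigma_{t_j}^{2}d\,A) \leq 2e^{-A}$, and choosing $A = \log(4Nm/\delta)$ and union-bounding over the $Nm$ pairs yields, with probability at least $1-\delta$,
\[
\|y_{t_j}^{(i)} - s(t_j,x_{t_j}^{(i)})\|^2 \;\lesssim\; \frac{d\log(Nm/\delta)}{\sigma_{t_j}^{2}} \qquad \text{for every } (i,j).
\]

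Substituting this pointwise bound into $\widehat{\mathcal{L}}(s) = \sum_{i,j}\tfrac{\Delta}{m}\|y_{t_j}^{(i)}-s(t_j,x_{t_j}^{(i)})\|^2$ reduces the task to the elementary series estimate
\[
\sum_{j=1}^N \frac{\Delta}{\sigma_{t_j}^{2}} \;=\; \sum_{j=1}^N \frac{\Delta}{1-e^{-2\Delta j}} \;\lesssim\; N\Delta + \log\!\tfrac{1}{\Delta},
\]
which I would prove by splitting at $j_{\star} := \lceil 1/\Delta\rceil$: for $j \leq j_{\star}$ one has $1-e^{-2\Delta j}\geq c\,\Delta j$ (using $\Delta < c_0$), so the partial sum is harmonic and bounded by $\log(1/\Delta)$, while for $j > j_{\star}$ the denominator is $\Theta(1)$ and the partial sum is $O(N\Delta)$; an equivalent estimate is already recorded in Lemma~\ref{lemma:sum_bound_1}. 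I do not anticipate any real obstacle in this proof; the only step that requires a moment of care is verifying that Lemma~\ref{lemma:subGaussianity_1} specialises at $t' = 0$ to an $L$-free constant (which it does because the ``Lipschitz'' function in that lemma's proof collapses to multiplication by $e^{-t}$).
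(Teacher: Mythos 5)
Your proof is correct and follows essentially the same route as the paper's: the same Cauchy--Schwarz cancellation for the deterministic part, and for the high-probability part a pointwise norm-subGaussian tail bound with a union bound over the $Nm$ pairs, followed by the series estimate $\sum_{j}\Delta/\sigma_{t_j}^{2}\lesssim N\Delta+\log(\tfrac{1}{\Delta})$. The only cosmetic difference is that you bound the centered residual $y_{t}-s(t,x_t)$ directly via Lemma~\ref{lemma:subGaussianity_1} at $t'=0$, whereas the paper bounds $\|y_{t}^{(i)}\|$ and $\|s(t,x_{t}^{(i)})\|$ separately (using marginal Gaussianity and the subGaussianity of the score); both yield the same $L$-free, $4\sqrt{d}/\sigma_t$-type constant and hence the same final bound.
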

\begin{proof}

    Using Lemma~\ref{lemma:l2errorboundmartingale} and the Cauchy-Schwarz inequality, 
    \bas{
        \mathcal{L}\bb{\hat{f}} \leq \sum_{i \in [m], j \in [N]}\frac{\gamma_{j}\inner{\hat{f}\bb{t_j, x_{t_j}^{(i)}}-s\bb{t_j, x_{t_j}^{(i)}}}{y_{t_j}^{(i)} -s\bb{t_j, x_{t_j}^{(i)}}}}{m} \leq \sqrt{\mathcal{L}\bb{\hat{f}}\widehat{\mathcal{L}}\bb{s}}
    }
    which completes the first part of the proof. Next, we have
    \bas{
        \widehat{\mathcal{L}}\bb{s} &= \sum_{i \in [m], j \in [N]}\frac{\gamma_{j}\norm{s\bb{t_j, x_{t_j}^{(i)}}-y_{t_j}^{(i)}}_{2}^{2}}{m} 
    }
    Clearly, since $y_t^{(i)}$ is marginally Gaussian , we conclude that it is $\frac{4\sqrt{d}}{\sigma_t}$ norm subGaussian (see Definition~\ref{definition:new_subGaussian_def}). Using the fact that $s(t,x_t)$ is the conditional expectation of $y_t^{(i)}$, Lemma F.3. in \cite{gupta2023sample} shows that $s(t,x_t)$ is $4\sqrt{d}/\sigma_{t}$-norm subGaussian. Therefore applying a union bound over all $\|s(t,x_t^{(i)})\|,\|y_t^{(i)}\| \gtrsim \frac{\sqrt{d\log(\frac{|\timeset|m}{\delta})}}{\sigma_t}$, with probability at-least $1-\delta$ the following holds:
    \bas{ \sum_{i \in [m]}\sum_{t\in \timeset}\frac{\norm{s\bb{t, x_{t}^{(i)}}-y_{t}^{(i)}}_{2}^{2}}{m} \lesssim \Delta d \log(\tfrac{Nm}{\delta})\sum_{t\in \timeset}\frac{1}{\sigma_t^2}
    }

Now, note the fact that $\sigma_t \geq c_0\min(1,t)$ for some universal constant $c_0$. Therefore, $\sum_{t\in \timeset} \frac{1}{\sigma_t^2} \lesssim N + \frac{\log(\tfrac{1}{\Delta})}{\Delta}$. Plugging this into the equation above, we conclude the result. 
\end{proof}

\begin{lemma}\label{lemma:l2errorboundprob}
    Recall $y_t^{(i)}:= \frac{-z_t^{(i)}}{\sigma_t^2}$ for all $t \in \timeset$. Let for $f \in \cF$, 
    \bas{
    H^{f} := \sum_{i \in [m], j \in [N]}\frac{\gamma_{j}\inner{f\bb{t_j, x_{t_j}^{(i)}}-s\bb{t_j, x_{t_j}^{(i)}}}{y_{t_j}^{(i)} -s\bb{t_j, x_{t_j}^{(i)}}}}{m}
    }
    Then, for $\epsilon > 0$, 
    \bas{
         \mathbb{P}\bb{H^{\hat{f}} \geq \epsilon} \leq \mathbb{P}\bb{\bigcup_{f \in \mathcal{F}}\left\{H^{f} \geq \epsilon\right\}\bigcap \left\{\mathcal{L}\bb{f} \leq \widehat{\mathcal{L}}\bb{s}\right\} }
    }
    where $\mathcal{L}, \widehat{\mathcal{L}}, \hat{f}$ are defined in Lemma~\ref{lemma:l2errorboundmartingale}.
\end{lemma}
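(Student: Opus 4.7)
The plan is to prove the bound as an almost-sure set inclusion and then take probabilities. The key observation is that the empirical risk minimizer $\hat{f}$ is, by construction, itself a (random) element of $\mathcal{F}$. Hence on the event $\{H^{\hat{f}} \geq \epsilon\}$, the function $\hat{f}$ is automatically a witness for the union over $\mathcal{F}$---provided I can also ensure that this witness simultaneously satisfies the side condition $\mathcal{L}(f) \leq \widehat{\mathcal{L}}(s)$.

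To establish the side condition at $f = \hat{f}$, I would invoke the first (deterministic) assertion of Lemma~\ref{lemma:squared_error_crude_bound}, namely $\mathcal{L}(\hat{f}) \leq \widehat{\mathcal{L}}(s)$, which holds almost surely. Recall that this inequality follows by chaining Lemma~\ref{lemma:l2errorboundmartingale} with Cauchy--Schwarz: $\mathcal{L}(\hat{f}) \leq H^{\hat{f}} \leq \sqrt{\mathcal{L}(\hat{f})\,\widehat{\mathcal{L}}(s)}$, and cancelling $\sqrt{\mathcal{L}(\hat{f})}$ from both sides. With this in hand, on the event $\{H^{\hat{f}} \geq \epsilon\}$ the random index $\hat{f}$ belongs to both $\{H^f \geq \epsilon\}$ and $\{\mathcal{L}(f) \leq \widehat{\mathcal{L}}(s)\}$, so it witnesses membership in the union.

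Combining these two observations yields the almost-sure inclusion
\[
\{H^{\hat{f}} \geq \epsilon\} \;\subseteq\; \bigcup_{f \in \mathcal{F}}\Bigl(\{H^f \geq \epsilon\}\,\cap\,\{\mathcal{L}(f) \leq \widehat{\mathcal{L}}(s)\}\Bigr),
\]
and monotonicity of probability then gives the stated bound. There is no substantive obstacle in the argument itself; the lemma is a preparatory decoupling step whose real purpose is to restrict the subsequent union bound over $\mathcal{F}$ to only those $f$ whose empirical loss is no larger than that of the true score. This restriction is what will later allow the sharp martingale concentration of Lemma~\ref{lemma:conditional_concentration} to be applied without the supremum over $\mathcal{F}$ being inflated by functions that are grossly far from $s$.
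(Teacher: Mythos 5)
Your proposal is correct and follows essentially the same route as the paper: both arguments use the almost-sure bound $\mathcal{L}(\hat{f}) \leq \widehat{\mathcal{L}}(s)$ from Lemma~\ref{lemma:squared_error_crude_bound} and the fact that the random minimizer $\hat{f} \in \cF$ serves as a witness, giving the set inclusion and hence the probability bound. No gaps.
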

\begin{proof}

From Lemma~\ref{lemma:squared_error_crude_bound}, we must have $\mathcal{L}(\hat{f}) \leq \hat{\mathcal{L}}(s)$. Therefore:
    \bas{
        \mathbb{P}\bb{H^{\hat{f}} \geq \epsilon} &\leq \mathbb{P}\bb{\bigcup_{f \in \cF}\left\{H^{f} \geq \epsilon\right\}\bigcap \left\{\text{f is a minimizer of } \widehat{\mathcal{L}}\right\} } \\
        &\leq \mathbb{P}\bb{\bigcup_{f \in \cF}\left\{H^{f} \geq \epsilon\right\}\bigcap \left\{\mathcal{L}\bb{f} \leq \widehat{\mathcal{L}}\bb{s}\right\} }
    }
\end{proof}

\begin{lemma}\label{lemma:function_time_regularity}
Let $f \in \cF$ and suppose Assumption~\ref{assumption:score_function_smoothness} holds. For any fixed $\tau_0 > 0$, with probability $1-\delta$, the following holds for every $f \in \cF$:

 $$\|f(t+\tau_0,x_{t+\tau_0})-s(t+\tau_0,x_{t+\tau_0})\| \geq e^{\tau_0}\|f(t,x_t)-s(t,x_t)\| - O(e^{2\tau_0}L\sqrt{d\tau_0}) - 2e^{2\tau_0}L\|z_{t,t+\tau_0}\|$$

  $$\|f(t,x_t)-s(t,x_t)\|  \geq e^{-\tau_0}\|f(t+\tau_0,x_{t+\tau_0})-s(t+\tau_0,x_{t+\tau_0})\|  - O(e^{\tau_0}L\sqrt{d\tau_0}) - 2e^{\tau_0}L\|z_{t,t+\tau_0}\|$$
\end{lemma}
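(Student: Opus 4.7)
The plan is to transport $f(t+\tau_0, x_{t+\tau_0}) - s(t+\tau_0, x_{t+\tau_0})$ back to $f(t, x_t) - s(t, x_t)$ by combining the temporal regularity of $\cF$ (Assumption~\ref{assumption:score_function_smoothness}-(2)) and of $s$ (Lemma~\ref{eq:time_smoothness_of_score_function}) with the spatial Lipschitzness of both (Assumption~\ref{assumption:score_function_smoothness}-(1)), and then invoking the reverse triangle inequality in each of the two directions.

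First I would use the OU identity $x_{t+\tau_0} = e^{-\tau_0} x_t + z_{t, t+\tau_0}$, equivalently $e^{\tau_0} x_{t+\tau_0} = x_t + e^{\tau_0} z_{t, t+\tau_0}$. On the event $\{x_{t+\tau_0} \in B_{\delta, t+\tau_0}\}$, which has probability at least $1-\delta$ and does not depend on $f$, Assumption~\ref{assumption:score_function_smoothness}-(2) applied with $(t+\tau_0, t)$ in place of $(t, t')$ gives
$$\norms{e^{-\tau_0} f(t+\tau_0, x_{t+\tau_0}) - f(t, e^{\tau_0} x_{t+\tau_0})}_{2} \leq e^{\tau_0} L \sqrt{8 \tau_0 \log(2/\delta)}$$
uniformly over $f \in \cF$. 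The analogous bound for $s$, which picks up the extra $\sqrt{d}$ factor, is supplied by Lemma~\ref{eq:time_smoothness_of_score_function}. Both bounds are stated at the shifted point $e^{\tau_0} x_{t+\tau_0}$ rather than at $x_t$ itself.

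Next I would close the spatial gap using Assumption 1-(1). Substituting the Markov identity,
$$\norms{f(t, e^{\tau_0} x_{t+\tau_0}) - f(t, x_t)}_{2} = \norms{f(t, x_t + e^{\tau_0} z_{t, t+\tau_0}) - f(t, x_t)}_{2} \leq L e^{\tau_0} \norms{z_{t, t+\tau_0}}_{2},$$
and the same bound for $s$ (which is $L$-Lipschitz since $s \in \cF$). Stacking the two triangle inequalities together with their counterparts for $s$ yields
$$\norms{e^{-\tau_0}\bb{f(t+\tau_0, x_{t+\tau_0}) - s(t+\tau_0, x_{t+\tau_0})} - \bb{f(t, x_t) - s(t, x_t)}}_{2} \leq O\bb{e^{\tau_0} L \sqrt{d \tau_0 \log(1/\delta)}} + 2 L e^{\tau_0} \norms{z_{t, t+\tau_0}}_{2}.$$
The two displays in the statement then fall out by reverse triangle: lower-bounding $\norms{e^{-\tau_0} A}$ by $\norms{B} - \norms{e^{-\tau_0} A - B}$ and multiplying through by $e^{\tau_0}$ produces the first inequality, while swapping the roles of $A := f(t+\tau_0, \cdot) - s(t+\tau_0, \cdot)$ and $B := f(t, \cdot) - s(t, \cdot)$ produces the second. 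A union bound absorbs the two failure events of probability $\delta$.

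I do not expect any real obstacle here: the argument is mechanical once the Markov identity $e^{\tau_0} x_{t+\tau_0} = x_t + e^{\tau_0} z_{t, t+\tau_0}$ is in hand. The one conceptual point worth flagging is that uniformity over $f \in \cF$ is free, because the set $B_{\delta, t+\tau_0}$ from Assumption~\ref{assumption:score_function_smoothness}-(2) is shared across all $f \in \cF$, so no union bound over the (potentially large) class $\cF$ is required.
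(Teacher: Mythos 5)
Your proposal is correct and follows essentially the same route as the paper's proof: the paper also works with $g=f-s$, applies the reverse triangle inequality, bounds the time-regularity mismatch at the shifted point $e^{\tau_0}x_{t+\tau_0}$ via Assumption~\ref{assumption:score_function_smoothness}-(2) (and Lemma~\ref{eq:time_smoothness_of_score_function} for $s$), and closes the spatial gap with $2L$-Lipschitzness and the identity $e^{\tau_0}x_{t+\tau_0}-x_t=e^{\tau_0}z_{t,t+\tau_0}$. Your observation that uniformity over $\cF$ comes for free from the shared set $B_{\delta,t+\tau_0}$ matches the paper's treatment.
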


\begin{proof}
Let $g(t,x) := f(t,x)-s(t,x)$. Note that $x_{t+\tau_0} = e^{-\tau_0}x_t + z_{t,t+\tau_0}$. By Assumption~\ref{assumption:score_function_smoothness}, $g$ is $2L$ Lipschitz in $x$ and with probability $1-\delta$ over $x_{t+\tau_0}$, and every $f \in \cF$:
    \begin{align}
        \|g(t+\tau_0,x_{t+\tau_0})\| &\geq e^{\tau_0}\|g(t,x_t)\| - \|g(t+\tau_0,x_{t+\tau_0}) - e^{\tau_0}g(t,x_t)\| \nonumber \\
        &\geq e^{\tau_0}\|g(t,x_t)\| - \|g(t+\tau_0,x_{t+\tau_0}) - e^{\tau_0}g(t,e^{\tau_0}x_{t+\tau_0})\| - 2e^{\tau_0}L\|e^{\tau_0}x_{t+\tau_0}-x_t\| \nonumber \\
        &\geq e^{\tau_0}\|g(t,x_t)\|- O(e^{2\tau_0}L\sqrt{d\tau_0\log(\tfrac{2}{\delta})}) - 2e^{2\tau_0}L\|z_{t,t+\tau_0}\|
    \end{align}

We conclude the second inequality with a similar proof. 
\end{proof}

\begin{restatable}{lemma}{ltwolinfinityerrorboundtimeregularity}\label{lemma:l2_linfnity_bound_f}
    Under Assumption~\ref{assumption:score_function_smoothness}, with probability $1-\delta$, for a universal constant $C  > 0$ the following holds uniformly for every $f \in \cF$:
    \bas{
       &\biggr(\sup_{\substack{i \in [m]\\ j \in [N]}}\norm{f\bb{t_j,x_{t_j}}-s\bb{t_j,x_{t_j}}}_{2}\biggr)^{2} \leq C\Delta^{\frac{1}{3}}\biggr(\sum_{\substack{i\in [m] \\ j \in [N]}}\norm{f\bb{t_j,x_{t_j}}-s\bb{t_j,x_{t_j}}}_{2}^{2}\biggr) + CL^{2}d\Delta^{\frac{2}{3}}\log(\frac{Nm}{\delta})
    }
\end{restatable}
\begin{proof}

For the sake of clarity, we will denote $g = f - s$. Using Lemma~\ref{lemma:function_time_regularity}, via the union bound for every $t = t_j$, $\tau_0 = |t_{j}-t_{k}|$ along with Gaussian concentration for $z_{t,t+\tau_0}^{(i)}$, we conclude that with probability $1-\delta$ the following holds uniformly for every $f \in \mathcal{F}$, $i \in [m]$ and $j,k \in \timeset$ with $ |j-k|\Delta \leq 1$ for some universal constant $C,c_0 > 0$: 
    
    \bas{
        \norm{f\bb{t_j,x_{t_j}^{(i)}}-s\bb{t_j,x_{t_j}^{(i)}}}_{2} &\geq c_0\norm{f\bb{t_k,x_{t_k}^{(i)}}-s\bb{t_k,x_{t_k}^{(i)}}}_{2} - CL\sqrt{d|j-k|\Delta\log(\tfrac{Nm}{\delta})}
    }

    Squaring both sides and using the AM-GM inequality,
    \ba{ \label{eq:square_growth}
        \norm{f\bb{t_j,x_{t_j}^{(i)}}-s\bb{t_j,x_{t_j}^{(i)}}}^2_{2} &\geq \frac{c^2_0}{2}\norm{f\bb{t_k,x_{t_k}^{(i)}}-s\bb{t_k,x_{t_k}^{(i)}}}^2_{2} - C^2L^2d|j-k|\Delta\log(\tfrac{Nm}{\delta})
    }
Now, let $(i^{*},k^{*}) \in \arg\sup_{i\in [m],k\in[N]}\|f(t_k,x_{t_k}^{(i)})-s(t_k,x_{t_k}^{(i)})\|_2$. Now, for any $j$ such that $|(j-k^*)|\Delta \leq 1$, the Equation~\eqref{eq:square_growth} implies:
   \bas{
   \sum_{i\in[m],j\in [N]}
        &\norm{f\bb{t_j,x_{t_j}^{(i)}}-s\bb{t_j,x_{t_j}^{(i)}}}^2_{2} \geq 
    \sum_{j: |j-k^*|\Delta\leq \Delta^{2/3}}
        \norm{f\bb{t_j,x_{t_j}^{(i^*)}}-s\bb{t_j,x_{t_j}^{(i^*)}}}^2_{2}    
        \\&\geq \sum_{j: |j-k^*|\Delta\leq \Delta^{2/3}}\left(\frac{c^2_0}{2}\norm{f\bb{t_{k^*},x_{t_{k^*}}^{(i^*)}}-s\bb{t_{k^*},x_{t_{k^*}}^{(i^*)}}}^2_{2} - C^2L^2d|j-k^*|\Delta\log(\tfrac{Nm}{\delta})\right)
    }
This implies the following inequality from which we can conclude the result. 
   \bas{
   \sum_{i\in[m],j\in [N]}
        &\norm{f\bb{t_j,x_{t_j}^{(i)}}-s\bb{t_j,x_{t_j}^{(i)}}}^2_{2} \geq 
\frac{c^2_0}{2\Delta^{1/3}}\norm{f\bb{t_{k^*},x_{t_{k^*}}^{(i^*)}}-s\bb{t_{k^*},x_{t_{k^*}}^{(i^*)}}}^2_{2} - 2C^2L^2d\Delta^{1/3}\log(\tfrac{Nm}{\delta})
    }
\end{proof}

\empiricalsquarederror*
\begin{proof}
Consider $\mathcal{L}(f)$ defined in Lemma~\ref{lemma:l2errorboundmartingale}, $H^f$ as defined in Lemma~\ref{lemma:error_martingale_decomposition_2}. Let $\hat{f}$ be the empirical risk minimizer. Then, by Lemma~\ref{lemma:l2errorboundmartingale}, we have: $\mathcal{L}(\hat{f}) \leq H^{\hat{f}}$ almost surely. Then, using Lemma~\ref{lemma:l2errorboundprob}, we have: $\mathcal{L}(\hat{f}) \leq \hat{\mathcal{L}}(s)$ almost surely.

As per Lemma~\ref{lemma:squared_error_crude_bound}, we pick $\mathsf{UB} = C(N\Delta + \log(\tfrac{1}{\Delta}))d\log(\tfrac{mN}{\delta})$ for some large enough constant $C$ and conclude that 
\begin{equation}\label{eq:extremity_bound}\mathbb{P}\left(\mathcal{L}(\hat{f}) > \mathsf{UB}\right) \leq \frac{\delta}{4}
\end{equation}

Let $f \in \mathcal{F}$ be arbitrary. We consider the martingale $H$ developed in Appendix~\ref{appendix:variance_calculation} with $\zeta = \frac{s-f}{m}$. In this case we can identify $H^f = H$. Considering the notation given in Lemma~\ref{lemma:conditional_concentration}, we have: $\mathbb{L}_2^{2}(\zeta) = \frac{1}{m\Delta}\mathcal{L}(f)$. Let $\alpha = \log(\tfrac{10 |\mathcal{\cF}|(2B+1)}{\delta})$. By Lemma~\ref{lemma:conditional_concentration}, we conclude $\mathbb{P}(\mathcal{A}_1(f)\cup\mathcal{A}_3(f)\cup\mathcal{A}_3(f)) \geq 1-(2B+1)e^{-\alpha}$ where:

\begin{enumerate}
    \item $$\mathcal{A}_1(f):= \biggr\{\frac{H^f}{L+1} \leq C \sqrt{\frac{\alpha N \Delta\mathcal{L}(f)}{m}} + C\frac{\alpha \sigma_{\max}}{m} \sup_{i,t\in \timeset}|f(t,x_t^{(i)})-s(t,x_t^{(i)})|\biggr\}$$
    \item $$\mathcal{A}_2(f) := \bigr\{\mathcal{L}(f) \leq CNm^2\alpha e^{-2B}\bigr\}$$
    \item $$\mathcal{A}_3(f) := \biggr\{\mathcal{L}(f) \geq c_0\frac{\Delta^2 e^{2B}}{d N^2 \log^2(2d)(L+1)^2\sup_{i,k}W_{i,k}}\biggr\}$$
\end{enumerate}

Taking a union bound over all $f \in \cF$, we conclude that $\hat{f}$ satisfies:
$$\mathbb{P}(\mathcal{A}_1(\hat{f})\cup\mathcal{A}_2(\hat{f})\cup \mathcal{A}_3(\hat{f})) \geq 1- (2B+1)|\mathcal{\cF}|e^{-\alpha}\,.$$

Since $\alpha = \log(\tfrac{10 |\mathcal{\cF}|(2B+1)}{\delta})$, we conclude:
\begin{equation}\label{eq:p_bound_1}
\mathbb{P}(\mathcal{A}_1(\hat{f})\cup\mathcal{A}_2(\hat{f})\cup \mathcal{A}_3(\hat{f})) \geq 1- \frac{\delta}{4}\,.\end{equation}

By Lemma~\ref{lemma:subGaussianity_parameters}, we conclude that with probability $1-\frac{\delta}{4}$, $\sup_{i,k}W_{i,k} \leq \log(\frac{8Nm}{\delta})$. Now, consider 
\ba{\mathbb{P}(\mathcal{A}_3(\hat{f})) &\leq \mathbb{P}(\mathcal{A}_3(\hat{f})\cap\{\sup_{i,k}W_{i,k} \leq \log(\tfrac{8Nm}{\delta})\}) + \mathbb{P}(\{\sup_{i,k}W_{i,k} > \log(\tfrac{8Nm}{\delta})\})  \nonumber \\
&\leq \mathbb{P}(\mathcal{A}_3(\hat{f})\cap\{\sup_{i,k}W_{i,k} \leq \log(\tfrac{8Nm}{\delta})\}) + \frac{\delta}{4} \nonumber \\
&\leq \mathbb{P}\left(\biggr\{\mathcal{L}(f) \geq c_0\frac{\Delta^2 e^{2B}}{d N^2 \log^2(2d)(L+1)^2\log(\tfrac{8Nm}{\delta})}\biggr\}\right) + \frac{\delta}{4} \nonumber \\
&\leq \mathbb{P}\left(\bigr\{\mathcal{L}(f) \geq \mathsf{UB} \bigr\}\right) + \frac{\delta}{4} ,\quad\text{ (by using the definition of $B$)} \nonumber \\
&\leq \frac{\delta}{2},\quad\text{ (by using Equation~\eqref{eq:extremity_bound})} \label{eq:large_p_bound}
} 

Now, consider the event $\mathcal{A}_2(\hat{f})$. It is clear from our choice of $B$ that following inclusion holds:
\begin{equation}
\{\mathcal{L}(\hat{f}) \leq \frac{1}{m} \} \subseteq \mathcal{A}_2(\hat{f})
\end{equation}

Now, consider the event $\mathcal{A}_1(\hat{f})$. Define the following events for some large enough constant $C$.
\bas{ \mathcal{C} &:= \cap_{f\in \mathcal{F}}\biggr\{
       \biggr(\sup_{\substack{i \in [m]\\ j \in [N]}}\norm{f\bb{t_j,x_{t_j}}-s\bb{t_j,x_{t_j}}}_{2}\biggr)^{2} \\
       & \leq C\Delta^{\tfrac{1}{3}}\biggr(\sum_{\substack{i\in [m] \\ j \in [N]}}\norm{f\bb{t_j,x_{t_j}}-s\bb{t_j,x_{t_j}}}_{2}^{2}\biggr) 
        + CL^{2}d\Delta^{\tfrac{1}{3}}\log(\tfrac{Nm}{\delta})\biggr\}
    }

\bas{
\mathcal{D}:= \bigr\{\sigma_{\max} \leq C\log(\tfrac{1}{\Delta})\log(2d)\sqrt{d\Delta \log(\tfrac{Nm}{\delta})}\bigr\}
}    
Lemma~\ref{lemma:l2_linfnity_bound_f}, we have $\mathbb{P}(\mathcal{C})\geq 1-\frac{\delta}{8}$. By Lemma~\ref{lemma:subGaussianity_parameters}, and union bound we have $\sup_{i,k}W_{i,k} \leq \log(\tfrac{8Nm}{\delta})$ with probability $1-\frac{\delta}{8}$. Therefore, $\mathbb{P}(\mathcal{D}) \geq 1-\tfrac{\delta}{8}$. Under the event $\mathcal{A}_1(\hat{f})\cap \mathcal{C}\cap\mathcal{D}$ we have:

\begin{enumerate}
    \item $\mathcal{L}(\hat{f}) \leq H^{\hat{f}}$ (This holds almost surely by Lemma~\ref{lemma:l2errorboundmartingale})
    \item $$H^{\hat{f}} \leq C(L+1)\sqrt{\frac{\alpha N \Delta \mathcal{L}(\hat{f})}{m}} + C(L+1)\frac{\alpha\sigma_{\max}}{m}\left[ \Delta^{-1/3}\sqrt{m\mathcal{L}(\hat{f})} + L\sqrt{d}\Delta^{1/6}\sqrt{\log(\tfrac{Nm}{\delta})}\right]$$
    \item $$\sigma_{\max} \leq C\log(\tfrac{1}{\Delta})\log(2d)\sqrt{d\Delta \log(\tfrac{Nm}{\delta})}$$
\end{enumerate}
Using the choice of $\Delta$ being small enough as stated in the Theorem, as well as our choice of $\alpha$, we conclude that under the event $\mathcal{A}_1(\hat{f})\cap \mathcal{C}\cap\mathcal{D}$, for some large enough constant $C'$:

$$\mathcal{L}(\hat{f}) \leq C'(L+1)\sqrt{\frac{\alpha N \Delta \mathcal{L}(\hat{f})}{m}} + C'\frac{(L+1)}{m}$$

$$\implies \mathcal{L}(\hat{f}) \leq \frac{(L+1)^2\log(1/\Delta)\log(\tfrac{|\mathcal F| B}{\delta})}{m}$$

Therefore, under the events $(\mathcal{A}_1(\hat{f})\cap\mathcal{D}\cap\mathcal{C})\cup\mathcal{A}_2(\hat{f})$, the guarantee for $\mathcal{L}(\hat{f})$ stated in the theorem holds. It now remains to show that $\mathbb{P}\left((\mathcal{A}_1(\hat{f})\cap\mathcal{D}\cap\mathcal{C})\cup\mathcal{A}_2(\hat{f})\right) \geq 1-\delta$. We begin with Equation~\eqref{eq:p_bound_1}:
\begin{align}
    1-\frac{\delta}{4} &\leq \mathbb{P}(\mathcal{A}_1(\hat{f})\cup\mathcal{A}_2(\hat{f})\cup\mathcal{A}_3(\hat{f})) \nonumber \\
    &\leq \mathbb{P}(\mathcal{A}_1(\hat{f})\cup\mathcal{A}_2(\hat{f})) + \mathbb{P}(\mathcal{A}_3(\hat{f})) \leq \mathbb{P}(\mathcal{A}_1(\hat{f})\cup\mathcal{A}_2(\hat{f})) + \frac{\delta}{2}, \quad \text{ by applying Equation~\eqref{eq:large_p_bound}}\nonumber \\
    &= \mathbb{P}((\mathcal{A}_1(\hat{f})\cup\mathcal{A}_2(\hat{f}))\cap \mathcal{C}\cap\mathcal{D}) + \mathbb{P}((\mathcal{A}_1(\hat{f})\cup\mathcal{A}_2(\hat{f}))\cap (\mathcal{C}\cap\mathcal{D})^{\complement}) + \frac{\delta}{2} \nonumber \\
    &\leq \mathbb{P}((\mathcal{A}_1(\hat{f})\cup\mathcal{A}_2(\hat{f}))\cap \mathcal{C}\cap\mathcal{D}) + \mathbb{P}(\mathcal{C}^{\complement}) + \mathbb{P}(\mathcal{D}^{\complement}) + \frac{\delta}{2} \nonumber \\
    &\leq \mathbb{P}((\mathcal{A}_1(\hat{f})\cup\mathcal{A}_2(\hat{f}))\cap \mathcal{C}\cap\mathcal{D}) + \frac{3\delta}{4},\quad\text{ by bound on }\mathbb{P}(\mathcal{C}),\mathbb{P}(\mathcal{D})\text{ given above} \nonumber \\
    &=  \mathbb{P}((\mathcal{A}_1(\hat{f})\cap\mathcal{C}\cap\mathcal{D})\cup(\mathcal{A}_2(\hat{f})\cap \mathcal{C}\cap\mathcal{D}))  + \frac{3\delta}{4} \nonumber \\
    &\leq \mathbb{P}((\mathcal{A}_1(\hat{f})\cap\mathcal{C}\cap\mathcal{D})\cup \mathcal{A}_2(\hat{f})) + \frac{3\delta}{4}
\end{align}
This demonstrates the desired result.
\end{proof}

\section{Generalization error bounds}
\label{appendix:generalization_error_bounds}

\begin{lemma}\label{lem:sc-hyp}
    Let all $f\bb{t, x} \in \cF$, be parameterized as $g\bb{t, x;\theta}$ for $\theta \in \Theta \subseteq \bR^D$ and $\theta_{*}$ be such that $h\bb{t, x_{t}; \theta_{*}} = s\bb{t, x_{t}}$. Suppose $\exists \lambda,\mu \geq 0$ such that $\forall \theta \in \Theta$, 
\bas{
\E\bbb{\norm{g\bb{t, x_{t}; \theta} - g\bb{t, x_{t}, \theta_{*}}}_{2}^{4}} &\leq \lambda^{2} \norm{\theta-\theta_{*}}^4, \text{ and } \\
\E\bbb{\norm{g\bb{t, x_{t}; \theta} - g\bb{t, x_{t}, \theta_{*}}}_{2}^{2}} & \geq \mu\norm{\theta-\theta_{*}}^{2}
}
Then, all $f \in \cF$ satisfy Assumption~\ref{assumption:hypercontractivity} with $\chyp = \frac{\lambda}{\mu}$.
\end{lemma}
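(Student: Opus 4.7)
The plan is to observe that Lemma~\ref{lem:sc-hyp} reduces to a direct substitution: since every $f \in \cF$ is of the form $f(t,x) = g(t, x; \theta)$ for some $\theta \in \Theta$ and the true score is $s(t, x_t) = g(t, x_t; \theta_*)$, the deviation of $f$ from $s$ at the noised sample $x_t$ is exactly $g(t, x_t; \theta) - g(t, x_t; \theta_*)$. Hypercontractivity (Assumption~\ref{assumption:hypercontractivity}) is therefore a statement about the $L^4$-to-$L^2$ ratio of this parametric deviation, and both hypotheses of the lemma are tailored to bound precisely this ratio.

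First, I would fix an arbitrary $f \in \cF$, write it as $g(t, \cdot;\theta)$, and rewrite
\[
\E\!\left[\|f(t,x_t)-s(t,x_t)\|^4\right]^{1/4} = \E\!\left[\|g(t,x_t;\theta)-g(t,x_t;\theta_*)\|^4\right]^{1/4}.
\]
Applying the upper hypothesis, this is at most a constant multiple of $\sqrt{\lambda}\,\|\theta-\theta_*\|$ (more generally, a constant determined by the smoothness exponent in the assumption). Symmetrically, the lower hypothesis gives
\[
\E\!\left[\|f(t,x_t)-s(t,x_t)\|^2\right]^{1/2} \geq \sqrt{\mu}\,\|\theta-\theta_*\|.
\]

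Second, I would divide these two inequalities. The factor $\|\theta-\theta_*\|$ cancels exactly, and the remaining ratio yields the $\chyp$ advertised in the statement (up to the convention used in the exponents of $\lambda,\mu$ in the hypothesis). The case $\theta = \theta_*$ is trivial because both moments vanish and $f \equiv s$ on the support of $p_t$, so the hypercontractivity inequality holds vacuously; this edge case should be flagged explicitly to keep the division step rigorous.

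There is no real obstacle in this argument: the lemma is essentially the tautology that, once one parameterizes and the loss is smooth-above and strongly-convex-below as a function of $\theta$, the implied $L^4$ and $L^2$ norms of the residual scale at the same rate in $\|\theta-\theta_*\|$, making their ratio bounded. The only place deserving any care is aligning the exponents on $\lambda,\mu$ and $\|\theta-\theta_*\|$ between the two hypotheses so that the $\|\theta-\theta_*\|$ factors genuinely cancel; this is a bookkeeping check rather than a substantive step, and it immediately yields $\chyp = \lambda/\mu$ as claimed.
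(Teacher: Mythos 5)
Your argument is correct and is essentially the paper's own one-line proof, which just squares the lower (strong-convexity) inequality and compares it with the fourth-moment bound. Note that both routes actually yield the sharper constant $\sqrt{\lambda/\mu}$, which implies the stated $\chyp = \lambda/\mu$ because $\mu \leq \lambda$ by Jensen's inequality, so the exponent ``bookkeeping'' you flag is harmless.
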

\begin{proof}
    The proof follows by squaring the second inequality and comparing with the first inequality.
\end{proof}


\begin{lemma}\label{lemma:conv_function_l2_error}
    For timestep $t \geq 0$, let $x_t$ be defined as in \eqref{eq:fwd_noise}. Consider function $f : \R \times \R^{d} \rightarrow \R^{d}$ such that $\exists \chyp \geq 1$ satisfying,
\bas{\bb{\E_{x_{t}}\bbb{\norm{f\bb{t, x_{t}}-s\bb{t, x_{t}}}_{2}^{4}}}^{\frac{1}{4}} \leq \chyp\bb{\E_{x_{t}}\bbb{\norm{f\bb{t, x_{t}}-s\bb{t, x_{t}}}_{2}^{2}}}^{\frac{1}{2}}
    }
    Let $\mathcal{X} = \left\{x_{t}^{(i)}\right\}_{i \in [m]}$ be $\iid$ samples. Then, with probability atleast $1-\exp\bb{-\frac{m}{8\chyp^{2}}}$ there exists a set $\mathcal{G} \subseteq [m]$ such that $|G| \geq \frac{m}{8\chyp^{2}}$ and 
    \bas{
        \forall i \in \mathcal{G}, \;\; \norm{f\bb{t, x_{t}^{(i)}} - s\bb{t, x_{t}^{(i)}}}_{2}^{2} \geq \frac{1}{2}\E_{x_t}\bbb{\norm{f\bb{t, x_{t}} - s\bb{t, x_{t}}}_{2}^{2}}
    }
\end{lemma}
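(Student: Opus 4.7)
This is a textbook second-moment/Paley--Zygmund small-ball argument followed by a Chernoff concentration on an indicator count. Write $Y_i := \|f(t,x_t^{(i)}) - s(t,x_t^{(i)})\|_2^2$ and $\mu := \E[Y_i]$. The $Y_i$ are i.i.d.\ nonnegative, and the hypercontractivity hypothesis reads $\E[Y_i^2] = \E\bbb{\|f(t,x_t)-s(t,x_t)\|^4} \le \chyp^4\mu^2$.

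\textbf{Per-sample small-ball bound and Chernoff on the good count.} I would split $\mu = \E\bbb{Y_i\mathbbm{1}(Y_i < \mu/2)} + \E\bbb{Y_i\mathbbm{1}(Y_i \ge \mu/2)} \le \mu/2 + \sqrt{\E[Y_i^2]\,\bP(Y_i \ge \mu/2)}$ by Cauchy--Schwarz, then rearrange to get $p := \bP(Y_i \ge \mu/2) \ge \mu^2 /(4\,\E[Y_i^2]) \ge 1/(4\chyp^4)$. Now set $W_i := \mathbbm{1}(Y_i \ge \mu/2)$, which are i.i.d.\ Bernoullis with mean at least $p$. The multiplicative Chernoff bound gives $\bP\bigl(\sum_{i=1}^m W_i < mp/2\bigr) \le \exp(-mp/8) \le \exp\bigl(-m/(32\chyp^4)\bigr)$. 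On the complementary event, define $\mathcal{G} := \{i\in[m] : W_i = 1\}$; then $|\mathcal{G}| \ge mp/2 \ge m/(8\chyp^4)$, and by the definition of $W_i$ every $i \in \mathcal{G}$ satisfies $Y_i \ge \mu/2$, which is exactly the desired conclusion.

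\textbf{Main obstacle.} There is essentially no substantive technical obstacle --- the proof is a handful of lines once one picks the right random variables. The only thing to pay attention to is the $\chyp$-exponent accounting: the raw $4/2$-moment ratio hypothesis yields $\chyp^{-4}$ (not $\chyp^{-2}$) both in the lower bound on $|\mathcal{G}|$ and in the failure-probability exponent, so arriving at the exact $\chyp^{-2}$ constants in the statement would require either a sharper small-ball step (e.g.\ optimizing the Paley--Zygmund threshold away from $\mu/2$) or invoking a stronger moment equivalence than the bare 4th-to-2nd bound of Assumption~\ref{assumption:hypercontractivity}.
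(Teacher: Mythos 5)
Your proposal is correct and follows essentially the same route as the paper: the paper also applies Paley--Zygmund at threshold $\theta=\tfrac12$ (your Cauchy--Schwarz small-ball step is just the proof of that inequality) and then a multiplicative Chernoff bound at $\epsilon=\tfrac12$ on the indicator count. Your flagged concern about the $\chyp$-exponent is well founded and points to a slip in the paper rather than in your argument: from Assumption~\ref{assumption:hypercontractivity} the moment ratio gives $\bP\bigl(\|f-s\|^2\geq\tfrac{\mu}{2}\bigr)\geq\tfrac{1}{4\chyp^{4}}$, yet the paper's proof asserts $\tfrac{1}{4\chyp^{2}}$ at that step (and its Chernoff step likewise yields $\exp(-m/(32\chyp^{2}))$ rather than the stated $\exp(-m/(8\chyp^{2}))$), so the honest conclusion from the stated assumption carries $\chyp^{-4}$ exactly as you computed.
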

\begin{proof}
    Using the Payley-Zygmund inequality, for any $i \in [m]$, $\forall \theta \in [0,1]$,
    \ba{
        \mathbb{P}\bb{\norm{f\bb{t, x_{t}^{(i)}} - s\bb{t, x_{t}^{(i)}}}_{2}^{2} \geq \theta\E_{x_t}\bbb{\norm{f\bb{t, x_{t}} - s\bb{t, x_{t}}}_{2}^{2}}} \geq \bb{1-\theta}^{2}\frac{\E_{x_t}\bbb{\norm{f\bb{t, x_{t}} - s\bb{t, x_{t}}}_{2}^{2}}^{2}}{\E_{x_t}\bbb{\norm{f\bb{t, x_{t}} - s\bb{t, x_{t}}}_{2}^{4}}} \label{eq:pz_ineq}
    }
    Define the $\iid$ indicator random variable $\left\{\chi_{i}\right\}_{i \in [m]}$ as, 
    \bas{
        \chi_{i} := \mathbbm{1}\bb{\norm{f\bb{t, x_{t}^{(i)}} - s\bb{t, x_{t}^{(i)}}}_{2}^{2} \geq \frac{1}{2}\E_{x_t}\bbb{\norm{f\bb{t, x_{t}} - s\bb{t, x_{t}}}_{2}^{2}}}
    }
    Then, using \eqref{eq:pz_ineq}, $\mathbb{P}\bb{\chi_{i} = 1} \geq \frac{1}{4\chyp^{2}}$. Let $\mu := \sum_{i=1}^{m}\E\bbb{\chi_{i}} \geq \frac{m}{4\chyp^{2}}$. Using standard chernoff bounds for Bernoulli random variables, 
    \bas{
        \forall \epsilon \in \bb{0,1}, \; \mathbb{P}\bb{\sum_{i=1}^{m}\chi_{i} \leq \bb{1-\epsilon}\mu} \leq \exp\bb{-\frac{\epsilon^{2}\mu}{2}}
    }
    The result then follows by setting $\epsilon := \frac{1}{2}$.
\end{proof}

\expectedsquarederror*
\begin{proof}
    Using Theorem~\ref{theorem:empirical_l2_error_bound}, we have with probability at least $1-\delta$, 
    \ba{
        \sum_{i\in[m], j\in[N]}\frac{\gamma_{t_{j}}\norm{\hat{f}\bb{t_j, x_{t_j}}-s\bb{t_j, x_{t_j}}}_{2}^{2}}{m} \lesssim \frac{\bb{L+1}^{2}\log\bb{\frac{B|\mathcal{\cF}|}{\delta}}}{m} \label{eq:empirical_bound_1}
    }
    Using Lemma~\ref{lem:sc-hyp} and \ref{lemma:conv_function_l2_error}, if $m \gtrsim \chyp^{2}\log\bb{\frac{N}{\delta}}$ then, using a union-bound, for all particular timesteps $\left\{t_{j}\right\}_{j \in [N]}$ with probability at least $1-\delta$,
\ba{
\frac{1}{\chyp^{2}}\gamma_{t_{j}}\E_{x_{t_j}}\bbb{\norm{\hat{f}\bb{t_j, x_{t_j}}-s\bb{t_j, x_{t_j}}}_{2}^{2}} \lesssim \sum_{i\in[m]}\frac{\gamma_{t_{j}}\norm{\hat{f}\bb{t_j, x_{t_j}^{(i)}}-s\bb{t_j, x_{t_j}^{(i)}}}_{2}^{2}}{m} \label{eq:generalization_bound_1}
}
Adding over all timesteps $\left\{t_{j}\right\}_{j \in [N]}$, 
\bas{
    \sum_{j \in [N]}\gamma_{t_{j}}\E_{x_{t_j}}\bbb{\norm{\hat{f}\bb{t_j, x_{t_j}}-s\bb{t_j, x_{t_j}}}_{2}^{2}} & \lesssim  \chyp^{2}\sum_{i\in[m], j\in[N]}\frac{\gamma_{t_{j}}\norm{\hat{f}\bb{t_j, x_{t_j}}-s\bb{t_j, x_{t_j}}}_{2}^{2}}{m} \\
    & \lesssim \frac{\chyp^{2}\bb{L+1}^{2}\log\bb{\frac{B|\cF|}{\delta}}}{m} 
}
The result then follows by setting the RHS smaller by $\epsilon^{2}$.
\end{proof}

\begin{theorem}[Accelerated Inference]\label{thm:subsampled_error_bound_appendix}
Under the same assumptions as Theorem~\ref{theorem:expected_l2_error_bound}, partition the timesteps $\{t_j = \Delta j\}_{j \in [N]}$ into $k$ disjoint subsets $S_1, S_2, \dots, S_k$, where each subset $S_i$ contains timesteps of the form $t_j = \Delta(i + mk)$ for $m \in \mathbb{N}$. Define $\gamma_j' := k\Delta$ for all $j$ in any subset $S_i$. Then, there exists at least one subset $S_i$ such that:
\[
\sum_{j \in S_i} \gamma_j' \E_{x_{t_j}}\left[\norm{\hat{f}(t_j, x_{t_j}) - s(t_j, x_{t_j})}_2^2\right] \lesssim \epsilon^2,
\]
with probability at least $1 - \delta$.
\end{theorem}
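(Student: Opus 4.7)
The plan is a direct averaging (pigeonhole) argument on top of Theorem~\ref{theorem:expected_l2_error_bound}. The key observation is that although we have rescaled the weights $\gamma_j' = k\Delta$ (a factor of $k$ larger than $\gamma_j = \Delta$), we are also only summing over $1/k$ of the original timesteps within any single subset $S_i$. Summing over all $k$ subsets therefore recovers exactly $k$ times the full weighted sum of Theorem~\ref{theorem:expected_l2_error_bound}.

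Concretely, I would first invoke Theorem~\ref{theorem:expected_l2_error_bound} under the hypotheses stated there to obtain, with probability at least $1-\delta$,
\[
\sum_{j \in [N]} \Delta\, \E_{x_{t_j}}\!\bbb{\norm{\hat{f}(t_j, x_{t_j}) - s(t_j, x_{t_j})}_{2}^{2}} \lesssim \epsilon^{2}.
\]
Since the subsets $S_1,\dots,S_k$ partition $[N]$, multiplying the above by $k$ and regrouping the sum by residue class gives
\[
\sum_{i=1}^{k}\sum_{j \in S_i} k\Delta\, \E_{x_{t_j}}\!\bbb{\norm{\hat{f}(t_j, x_{t_j}) - s(t_j, x_{t_j})}_{2}^{2}} \;=\; k\sum_{j \in [N]} \Delta\, \E_{x_{t_j}}\!\bbb{\norm{\hat{f}(t_j, x_{t_j}) - s(t_j, x_{t_j})}_{2}^{2}} \;\lesssim\; k\,\epsilon^{2}.
\]
By the pigeonhole principle, the minimum over $i \in [k]$ is at most the average, so at least one index $i^{\ast} \in [k]$ satisfies
\[
\sum_{j \in S_{i^{\ast}}} \gamma_j'\, \E_{x_{t_j}}\!\bbb{\norm{\hat{f}(t_j, x_{t_j}) - s(t_j, x_{t_j})}_{2}^{2}} \;\lesssim\; \epsilon^{2},
\]
which is precisely the claim, holding on the same $1-\delta$ event as Theorem~\ref{theorem:expected_l2_error_bound}.

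There is essentially no obstacle here: the statement is really a corollary of Theorem~\ref{theorem:expected_l2_error_bound} combined with the identity $\gamma_j' = k\gamma_j$ and an averaging argument. The only subtlety worth flagging is that the guarantee is existential in $i$ rather than uniform, which is unavoidable without paying an extra $k$ factor; this is consistent with the interpretation in the main text that training is performed at the fine grid while inference is carried out on a coarser sub-grid, and one may simply select (or reindex) the favorable residue class $i^{\ast}$.
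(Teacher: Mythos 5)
Your proposal is correct and follows essentially the same route as the paper: invoke Theorem~\ref{theorem:expected_l2_error_bound}, observe that summing the rescaled weights $\gamma_j' = k\Delta$ over all $k$ subsets yields $k$ times the original bound, and conclude by pigeonhole/averaging that some subset satisfies the $\epsilon^2$ bound on the same $1-\delta$ event. No gaps.
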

\begin{proof}
From Theorem~\ref{theorem:expected_l2_error_bound}, we have with probability $1-\delta$:
\[
\sum_{j \in [N]} \gamma_j \E_{x_{t_j}}\left[\norm{\hat{f}(t_j, x_{t_j}) - s(t_j, x_{t_j})}_2^2\right] \lesssim \epsilon^2,
\]
where $\gamma_j = \Delta$. Partition the $N$ timesteps into $k$ disjoint subsets $S_1, \dots, S_k$ as described. Each subset $S_i$ contributes:
\[
\sum_{j \in S_i} \gamma_j \E_{x_{t_j}}\left[\norm{\hat{f}(t_j, x_{t_j}) - s(t_j, x_{t_j})}_2^2\right] = \sum_{j \in S_i} \Delta \E_{x_{t_j}}\left[\norm{\hat{f}(t_j, x_{t_j}) - s(t_j, x_{t_j})}_2^2\right].
\]
Summing over all $k$ subsets gives the original total:
\[
\sum_{i=1}^k \sum_{j \in S_i} \Delta \E_{x_{t_j}}\left[\norm{\hat{f}(t_j, x_{t_j}) - s(t_j, x_{t_j})}_2^2\right] \lesssim \epsilon^2.
\]
Now scale each subset's step size by $k$ (i.e., $\gamma_j' = k\Delta$). The contribution of subset $S_i$ becomes:
\[
\sum_{j \in S_i} \gamma_j' \E_{x_{t_j}}\left[\norm{\hat{f}(t_j, x_{t_j}) - s(t_j, x_{t_j})}_2^2\right] = k \sum_{j \in S_i} \Delta \E_{x_{t_j}}\left[\norm{\hat{f}(t_j, x_{t_j}) - s(t_j, x_{t_j})}_2^2\right].
\]
Summing over all subsets with the scaled $\gamma_j'$, we get:
\[
\sum_{i=1}^k \sum_{j \in S_i} \gamma_j' \E_{x_{t_j}}\left[\norm{\hat{f}(t_j, x_{t_j}) - s(t_j, x_{t_j})}_2^2\right] = k \sum_{i=1}^k \sum_{j \in S_i} \Delta \E_{x_{t_j}}\left[\norm{\hat{f}(t_j, x_{t_j}) - s(t_j, x_{t_j})}_2^2\right] \lesssim k \epsilon^2.
\]
We conclude that at least one subset $S_i$ must satisfy:
\[
\sum_{j \in S_i} \gamma_j' \E_{x_{t_j}}\left[\norm{\hat{f}(t_j, x_{t_j}) - s(t_j, x_{t_j})}_2^2\right] \lesssim \epsilon^2,
\]
since otherwise all $k$ subsets would contribute more than $\epsilon^2$, leading to a total exceeding $k \epsilon^2$, which contradicts the scaled bound $k \epsilon^2$.
\end{proof}

\section{Dimension Free Experiments}\label{appendix:dimension_free_exp}

In this section, we describe our experimental setup for the experiments conducted in Figure~\ref{fig:dim_free_experiments}.
We implement DSM on samples drawn from $\mathcal{N}(0,\Sigma)$ using an Ornstein–Uhlenbeck schedule with $\bar\alpha_j=\exp(-2\theta t_j)$ ($\theta=1.0$, $T=5.0$) and a two‐layer MLP of hidden size $H=1000$ that concatenates the noisy sample $x_t\in\mathbb{R}^d$ with a scalar time embedding $t_j/(N-1)\in[0,1]$ to predict noise $z_{\mathrm{pred}}$. A random covariance $\Sigma=Q\Lambda Q^\top$ is generated from a GOE matrix with eigenvalues $\Lambda_{ii}\sim\mathrm{Uniform}(1,2)$. We train for $E=200$ epochs on $m_{\mathrm{train}}=1000$ samples (batch size $1000$, learning rate $\eta=10^{-3}$) and evaluate on $m_{\mathrm{test}}=1000$ held‐out samples over $N=100$ timesteps. For each dimension $d\in\{10,20,30,40,75,100,125,150,175,200\}$ (each averaged over $R=5$ runs) we compute the per‐step MSE $E_j=(1/m_{\mathrm{test}})\sum_{i=1}^{m_{\mathrm{test}}}\|-\Sigma_{t_j}^{-1}x_t^{(i)}-z_{\mathrm{pred}}^{(i)}/\sqrt{1-\bar\alpha_j}\|^2$, average $E_j$ over $j=2,\dots,N$ and runs to obtain a time‐averaged error, define the scaled error $\widetilde E(d)=(\text{mean time‐averaged error})/(\#\mathrm{params}(d)\cdot\log\!\log d)$, and plot $\widetilde E(d)$ versus $d$ on a log–log axis alongside a best‐fit linear curve. Our experiments were performed on a single Google Colab CPU.

\section{Bootstrapped Score Matching}\label{appendix:bsm}

\begin{algorithm}[H]
    \caption{$\bsm\left(\left\{x_{0}^{(i)}\right\}_{i \in [m]}, T, N, \left\{\cF_{i}\right\}_{i \in [N]}, k_{0}\right)$}
    \label{alg:bsm_algorithm}
    \KwIn{Dataset $D := \left\{x_{0}^{(i)}\right\}_{i \in [m]}$, Initial Sample Size $m$, Number of discretized timesteps $N$ labelled as $0 < t_{0} < t_{1} < \cdots < t_{N} = T$, Sequence of Function classes $\left\{\cF_{i}\right\}_{i \in [N]}$, $k_{0} \in \mathbb{N}$}
    \KwOut{Estimated Score Functions $\left\{\hat{s}_{t_k}\right\}_{k \in [N]}$ to optimize $\mathbb{E}_{x_{t_k}}\left[\norm{\hat{s}(t_{k}, x_{t_k}) - s(t_{k}, x_{t_k})}_{2}^{2}\right]$}
    \For{$k \in [N]$}{
        Let $\forall i \in [m]$, $x_{t_k}^{(i)} = x_{0}^{(i)}e^{-t_k} + z_{t_k}^{(i)}$ \\
        \If{$k \leq k_{0}$}{
             $\hat{s}_{t_{k}} \leftarrow \arg\min_{f \in \cF_{k}}\frac{1}{m}\sum_{i\in[m]}\norm{f(t_k, x_{t_k}^{(i)}) \; - \; \frac{-z_{t_k}^{(i)}}{\sigma_{t_k}^{2}}}_{2}^{2}$ \algcomment{Denoising Score Matching (DSM)}
        }
        \Else{
            $\gamma_{k} \leftarrow t_{k}-t_{k-1}$\\
            $\alpha_{k} \leftarrow e^{-\gamma_k}\frac{1-e^{-2t_{k-1}}}{1-e^{-2t_k}}$ \algcomment{Bootstrapped Score Matching (BSM)} \\ \\
            $\tilde{y}_{t_k}^{(i)} \leftarrow (1-\alpha_{k})\frac{-z_{t_k}^{(i)}}{\sigma_{t_k}^{2}}  +  \alpha_{k}\left(\frac{-z_{t_k}^{(i)}}{\sigma_{t_k}^{2}} + \left(\hat{s}_{t_{k-1}}(x_{t_{k-1}}^{(i)}) - \frac{-z_{t_{k-1}}^{(i)}}{\sigma_{t_{k-1}}^{2}}\right)\right)$ \algcomment{Bootstrapped Targets} \\ \\
            $\hat{s}_{t_k} \leftarrow \arg\min_{f \in \cF_{k}}\sum_{i\in[m]}\frac{\norm{f(t_k, x_{t_k}^{(i)}) - \tilde{y}_{t_k}^{(i)}}_{2}^{2}}{m}$ \algcomment{Learning with biased targets}
        }
    }
\end{algorithm}

\begin{lemma}[Bootstrap Consistency]\label{lemma:bootstrap_consistency_appendix}For some $\alpha > $, let
\bas{
    \tilde{y}_{t} := -\frac{z_t}{\sigma_t^{2}} - \alpha\bb{s\bb{t', x_{t'}} - \frac{-z_{t'}}{\sigma_{t'}^{2}}}
}
Then, $\E\bbb{\tilde{y}_{t}|x_{t}} = s(t, x_t)$.
\end{lemma}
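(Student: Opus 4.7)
The plan is to apply Tweedie's formula to the primary term and then show that the bootstrap correction term has conditional mean zero given $x_t$, so that it does not bias the target regardless of the value of $\alpha$.

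First, by Tweedie's formula~\eqref{eq:tweedies_formula}, $\E[-z_t/\sigma_t^2 \mid x_t] = s(t, x_t)$. So it suffices to show
\[
\E\bigl[s(t', x_{t'}) - (-z_{t'}/\sigma_{t'}^2) \,\big|\, x_t\bigr] = 0.
\]
Tweedie's formula at time $t'$ gives $\E[-z_{t'}/\sigma_{t'}^2 \mid x_{t'}] = s(t', x_{t'})$, so after conditioning on $x_{t'}$ both halves agree; the question is whether this persists after the further averaging induced by conditioning on $x_t$ only.

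Next, I would invoke the Markov property of the forward OU process. Since $z_{t'} = x_{t'} - e^{-t'} x_0$, the variable $z_{t'}$ is a measurable function of $(x_0, x_{t'})$, and by the Markov property $x_0$ and $x_t$ are conditionally independent given $x_{t'}$ (because $t' < t$). Therefore $\E[-z_{t'}/\sigma_{t'}^2 \mid x_{t'}, x_t] = \E[-z_{t'}/\sigma_{t'}^2 \mid x_{t'}] = s(t', x_{t'})$. Applying the tower property,
\[
\E[-z_{t'}/\sigma_{t'}^2 \mid x_t] = \E\bigl[\E[-z_{t'}/\sigma_{t'}^2 \mid x_{t'}, x_t] \,\big|\, x_t\bigr] = \E[s(t', x_{t'}) \mid x_t],
\]
which is exactly the cancellation needed. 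Linearity of expectation then gives $\E[\tilde y_t \mid x_t] = s(t, x_t)$.

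This proof is very short and the only subtlety is invoking the Markov property correctly; I do not expect any real obstacle, as it follows the same conditioning argument used in Lemma~\ref{lemma:martingale_conditioning}. The key conceptual point to highlight is that the cancellation is exact for \emph{any} coefficient $\alpha$, which is what enables the bias–variance tradeoff in BSM when $s(t', \cdot)$ is later replaced by the estimate $\hat{s}_{t'}$.
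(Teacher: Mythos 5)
Your proposal is correct and follows essentially the same route as the paper's proof: Tweedie's formula at time $t$ for the primary term, and the tower property combined with the Markov property (so that $\E[-z_{t'}/\sigma_{t'}^2 \mid x_{t'}, x_t] = \E[-z_{t'}/\sigma_{t'}^2 \mid x_{t'}] = s(t', x_{t'})$) to show the bootstrap correction is conditionally unbiased given $x_t$ for any $\alpha$. Your extra remark spelling out why the Markov property applies (via $z_{t'}$ being a function of $(x_0, x_{t'})$ and conditional independence of $x_0$ and $x_t$ given $x_{t'}$) is a harmless elaboration of the same argument.
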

\begin{proof}
    Note that by Tweedie's formula, 
    \bas{
        s\bb{t', x_{t'}} &= \E\bbb{\frac{-z_{t'}}{\sigma_{t'}^{2}}\bigg|x_{t'}}
    }
    Therefore, using the Markovian property, we have
    \bas{
        \E\bbb{s\bb{t', x_{t'}} - \frac{-z_{t'}}{\sigma_{t'}^{2}}|x_{t}} &=         \E\bbb{\E\bbb{s\bb{t', x_{t'}} - \frac{-z_{t'}}{\sigma_{t'}^{2}}|x_{t
        '},x_{t}}|x_{t}}, \\
        &= \E\bbb{\E\bbb{s\bb{t', x_{t'}} - \frac{-z_{t'}}{\sigma_{t'}^{2}}|x_{t
        '}}|x_{t}}, \\
        &= 0
    }
    Finally, the result follows using another application of Tweedie's formula which shows that $s\bb{t, x_{t}} = \E[-z_{t}/\sigma_{t}^{2}|x_{t}]$.
\end{proof}

\begin{lemma}[Bootstrap Variance]\label{lemma:bootstrap_variance}For $\Delta := t-t'$ and $\alpha := e^{-\Delta}\frac{\sigma_{t'}^{2}}{\sigma_{t}^{2}}$, let
\bas{
    \tilde{y}_{t} := -\frac{z_t}{\sigma_t^{2}} - \alpha\bb{s\bb{t', x_{t'}} - \frac{-z_{t'}}{\sigma_{t'}^{2}}}
}
Then,  under Assumption~\ref{assumption:score_function_smoothness},
\bas{
    \normop{\E\bbb{(\tilde{y}_{t}-s(t, x_t))(\tilde{y}_{t}-s(t, x_t))^{\top}|x_{t}}} &= O\bb{\frac{(L^{2}+1)\Delta}{\sigma_{t}^{4}}} 
}
\end{lemma}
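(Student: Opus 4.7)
The plan is first to simplify $\tilde{r}_t := \tilde{y}_t - s(t,x_t)$ by exploiting the Markov structure, and then write it as a sum of two conditionally-centered terms whose covariances are each controlled by tools already in the paper. The Markov identity $z_t = z_{t,t'} + e^{-\Delta} z_{t'}$ (almost sure, obtained from $x_t = e^{-\Delta}x_{t'} + z_{t,t'}$ combined with $x_{t'} = e^{-t'}x_0 + z_{t'}$), together with the defining choice $\alpha = e^{-\Delta}\sigma_{t'}^{2}/\sigma_t^{2}$, which enforces the key relation $e^{-\Delta}/\sigma_t^{2} = \alpha/\sigma_{t'}^{2}$, makes the $z_{t'}$ contributions in $\tilde{y}_t$ cancel pairwise. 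What remains after this algebra is
\begin{align*}
\tilde{r}_t \;=\; -\frac{z_{t,t'}}{\sigma_t^{2}} - s(t,x_t) + \alpha\, s(t',x_{t'}),
\end{align*}
a crucial reduction because $\tilde{r}_t$ now depends only on the single increment $z_{t,t'}$ and on the earlier-time score $s(t',x_{t'})$ --- the high-variance $z_{t'}/\sigma_{t'}^{2}$ term has disappeared.

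Second, I decompose $\tilde{r}_t = A + B$ with
\begin{align*}
A \;:=\; -\frac{z_{t,t'} + \sigma_{t-t'}^{2}\, s(t,x_t)}{\sigma_t^{2}},\qquad B \;:=\; \alpha\bigl(s(t',x_{t'}) - e^{-\Delta}\, s(t,x_t)\bigr),
\end{align*}
and check that each is centered given $x_t$. For $A$, the first-order Tweedie formula applied to the Gaussian channel $x_t = e^{-\Delta}x_{t'} + z_{t,t'}$ gives $\E[z_{t,t'}\mid x_t] = -\sigma_{t-t'}^{2}\, s(t,x_t)$. For $B$, the time-reversal identity $\E[s(t',x_{t'})\mid x_t] = e^{-\Delta}\, s(t,x_t)$ (already invoked inside Lemma~\ref{lemma:bootstrap_consistency_appendix}) does the job. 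The algebraic identity $\sigma_{t-t'}^{2} + e^{-2\Delta}\sigma_{t'}^{2} = \sigma_t^{2}$ confirms that the deterministic pieces inside $A$ and $B$ sum to $-s(t,x_t)$, so the decomposition is consistent and recovers the unbiasedness $\E[\tilde{r}_t\mid x_t] = 0$.

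Third, I bound each conditional covariance separately. For $A$, the second-order Tweedie formula of Lemma~\ref{lemma:second_order_tweedie_application} gives $\mathrm{Cov}(z_{t,t'}\mid x_t) = \sigma_{t-t'}^{4}\, h_t(x_t) + \sigma_{t-t'}^{2}\, \mathbf{I}$, and the $L$-Lipschitzness of $s$ in Assumption~\ref{assumption:score_function_smoothness} gives $\normop{h_t(x_t)} \leq L$, whence
\begin{align*}
\normop{\E[A A^{\top}\mid x_t]} \;\leq\; \frac{\sigma_{t-t'}^{2} + L\,\sigma_{t-t'}^{4}}{\sigma_t^{4}} \;=\; O\!\left(\frac{\Delta(1 + L\Delta)}{\sigma_t^{4}}\right) \;=\; O\!\left(\frac{\Delta}{\sigma_t^{4}}\right).
\end{align*}
For $B$, I plug directly into Corollary~\ref{corr:score_function_delta_variance} to get $\normop{\E[(s(t',x_{t'}) - e^{-\Delta} s(t,x_t))(s(t',x_{t'}) - e^{-\Delta} s(t,x_t))^{\top} \mid x_t]} = O(L^{2}\Delta)$; combined with $\alpha \leq 1$ (from $\sigma_{t'}\leq \sigma_t$) and $\sigma_t \leq 1$, this yields $\normop{\E[B B^{\top}\mid x_t]} = O(L^{2}\Delta/\sigma_t^{4})$. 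The elementary matrix inequality $(A+B)(A+B)^{\top} \preceq 2 A A^{\top} + 2 B B^{\top}$ then delivers the claimed $O((L^{2}+1)\Delta/\sigma_t^{4})$. The only genuinely nontrivial ingredient is Corollary~\ref{corr:score_function_delta_variance}, whose proof (via mollification, Lusin's theorem, and a time-partitioning argument) is already carried out; the rest is algebraic bookkeeping around the two cancellations above, and so I expect no real obstacle beyond pinning down the right $(A,B)$ split.
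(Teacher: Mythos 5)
Your proposal is correct and follows essentially the same route as the paper's proof: after the $z_{t'}$-cancellation forced by the choice of $\alpha$, your split into $A$ (bounded via the second-order Tweedie formula, Lemma~\ref{lemma:second_order_tweedie_application}, plus $\normop{h_t}\leq L$) and $B$ (bounded via Corollary~\ref{corr:score_function_delta_variance}) coincides, up to an overall sign, with the paper's decomposition into $\tfrac{1}{\sigma_t^2}\bigl(z_{t,t'}+\sigma_{t-t'}^2 s(t,x_t)\bigr)$ and $\alpha\bigl(e^{-\Delta}s(t,x_t)-s(t',x_{t'})\bigr)$, assembled with the same $2aa^{\top}+2bb^{\top}$ step. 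One small caveat: the pairwise cancellation of the $z_{t'}$ terms holds for the bootstrap target with the sign convention of Algorithm~\ref{alg:bsm_algorithm} (correction added with $+\alpha$), which is the convention your algebra and the paper's own proof effectively use, rather than the literal minus sign appearing in the lemma statement.
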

\begin{proof}
     Using Tweedie's formula, 
    \bas{
        s_{t}\bb{x_{t}} := \E\bbb{\frac{-z_{t}}{\sigma_{t}^{2}}\bigg|x_{t}}, \;\; s\bb{t', x_{t'}} := \E\bbb{\frac{-z_{t'}}{\sigma_{t'}^{2}}\bigg|x_{t'}}
    }
    Using the Markov property, 
    \bas{
        \E\bbb{s\bb{t', x_{t'}} - \frac{-z_{t'}}{\sigma_{t'}^{2}}\bigg|x_{t}} &= \E\bbb{\E\bbb{s\bb{t', x_{t'}} - \frac{-z_{t'}}{\sigma_{t'}^{2}}\bigg|x_{t'},x_{t}}\bigg|x_{t}} = \E\bbb{\E\bbb{s\bb{t', x_{t'}} - \frac{-z_{t'}}{\sigma_{t'}^{2}}\bigg|x_{t'}}\bigg|x_{t}} = 0
    }
    Therefore, $\E\bbb{h_{t,t'}|x_{t}} = 0$. Let  $v_{t,t'} := s_{t}\bb{x_{t}} - \alpha s\bb{t', x_{t'}}$ and $r_{t,t'} := \frac{z_{t}}{\sigma_{t}^{2}} - \alpha\frac{z_{t'}}{\sigma_{t'}^{2}}$.

    First consider $r_{t,t'}$. We have using \eqref{eq:fwd_noise}, $z_{t} = e^{-(t-t')}z_{t'} + z_{t,t'}$ where $z_{t,t'} \sim \mathcal{N}(0, \sigma_{t-t'}^{2})$. Then, 
    \ba{
        r_{t,t'} &= \frac{z_{t}}{\sigma_{t}^{2}} - \alpha\frac{z_{t'}}{\sigma_{t'}^{2}} = \frac{e^{-\Delta}z_{t'} + z_{t,t'}}{\sigma_{t}^{2}} - \alpha\frac{z_{t'}}{\sigma_{t'}^{2}} = \bb{\frac{e^{-\Delta}}{\sigma_{t}^{2}} - \frac{\alpha}{\sigma_{t'}^{2}}}z_{t'} + \frac{z_{t,t'}}{\sigma_{t}^{2}} \label{eq:rttp}
    }
    Next, for $v_{t,t'}$ again using Tweedie's formula,
    \ba{
        v_{t,t'} &= \E\bbb{\frac{-z_{t}}{\sigma_{t}^{2}}\bigg|x_{t}} - \alpha s\bb{t', x_{t'}} = \E\bbb{\frac{-z_{t}}{\sigma_{t}^{2}}\bigg|x_{t}} - \alpha s\bb{t', x_{t'}} \notag \\
        &= \E\bbb{\frac{-e^{-\Delta}z_{t'} - z_{t,t'}}{\sigma_{t}^{2}}\bigg|x_{t}} - \alpha s\bb{t', x_{t'}} = \E\bbb{\frac{-e^{-\Delta}z_{t'}}{\sigma_{t}^{2}}\bigg|x_{t}} - \E\bbb{\frac{z_{t,t'}}{\sigma_{t}^{2}}\bigg|x_{t}} - \alpha s\bb{t', x_{t'}} \notag \\
        &= \E\bbb{\E\bbb{\frac{-e^{-\Delta}z_{t'}}{\sigma_{t}^{2}}\bigg|x_{t'},x_{t}}\bigg|x_{t}} - \E\bbb{\frac{z_{t,t'}}{\sigma_{t}^{2}}\bigg|x_{t}} - \rho_{t,t'}s\bb{t', x_{t'}} \notag \\
        &= \E\bbb{\E\bbb{\frac{-e^{-\Delta}z_{t'}}{\sigma_{t}^{2}}\bigg|x_{t'}}\bigg|x_{t}} - \E\bbb{\frac{z_{t,t'}}{\sigma_{t}^{2}}\bigg|x_{t}} - \alpha s\bb{t', x_{t'}}, \text{ using the Markov property} \notag \\
        &= \alpha\E\bbb{\E\bbb{\frac{-z_{t'}}{\sigma_{t'}^{2}}\bigg|x_{t'}}\bigg|x_{t}} - \alpha s\bb{t', x_{t'}} - \E\bbb{\frac{z_{t,t'}}{\sigma_{t}^{2}}\bigg|x_{t}} + \bb{\frac{\alpha}{\sigma_{t'}^{2}} - \frac{e^{-\Delta}}{\sigma_{t}^{2}}}\E\bbb{z_{t'}|x_{t}} \notag \\
        &= \alpha\bb{\E\bbb{s\bb{t', x_{t'}}|x_{t}} - s\bb{t', x_{t'}}} - \E\bbb{\frac{z_{t,t'}}{\sigma_{t}^{2}}\bigg|x_{t}} + \bb{\frac{\alpha}{\sigma_{t'}^{2}} - \frac{e^{-\Delta}}{\sigma_{t}^{2}}}\E\bbb{z_{t'}|x_{t}} \label{eq:vttp}
        }
    Therefore, using \eqref{eq:vttp} and \eqref{eq:rttp},
    \bas{
        \tilde{y}_{t}-s(t, x_t) &= v_{t,t'} + r_{t,t'} \\
                 &= \alpha\bb{\E\bbb{s\bb{t', x_{t'}}|x_{t}} - s\bb{t', x_{t'}}} + \frac{1}{\sigma_{t}^{2}}\bb{z_{t,t'} - \E\bbb{z_{t,t'}|x_{t}}} + \bb{\frac{\alpha}{\sigma_{t'}^{2}} - \frac{e^{-\Delta}}{\sigma_{t}^{2}}}\bb{z_{t'} - \E\bbb{z_{t'}|x_{t}}} \\
                 &= \alpha\bb{\E\bbb{s\bb{t', x_{t'}}|x_{t}} - s\bb{t', x_{t'}}}+ \frac{1}{\sigma_{t}^{2}}\bb{z_{t,t'} - \E\bbb{z_{t,t'}|x_{t}}}, \text{ using the value of }p \\
                 &= \alpha\bb{e^{-(t-t')}s(t, x_t) - s\bb{t', x_{t'}}}+ \frac{1}{\sigma_{t}^{2}}\bb{z_{t,t'} + \sigma_{t-t'}^{2}s(t, x_t)}, \text{ using Theorem 1 from \cite{de2024target}}
    }
    Therefore, 
    \bas{
       & \E\bbb{(\tilde{y}_{t}-s(t, x_t))(\tilde{y}_{t}-s(t, x_t))^{\top}|x_{t}} \\ & \preceq  2\alpha^{2}\E\bbb{\bb{e^{-(t-t')}s(t, x_t) - s\bb{t', x_{t'}}}\bb{e^{-(t-t')}s(t, x_t) - s\bb{t', x_{t'}}}^{\top}|x_t} \\
       & \;\; + \frac{2}{\sigma_{t}^{4}}\E\bbb{\bb{z_{t,t'} + \sigma_{t-t'}^{2}s(t, x_t)}\bb{z_{t,t'} + \sigma_{t-t'}^{2}s(t, x_t)}^{\top}|x_t} \\
       &= 2\alpha^{2}\E\bbb{\bb{e^{-(t-t')}s(t, x_t) - s\bb{t', x_{t'}}}\bb{e^{-(t-t')}s(t, x_t) - s\bb{t', x_{t'}}}^{\top}|x_t} \\
       & \;\; + \frac{2}{\sigma_{t}^{4}}(\sigma_{t-t'}^{4}h_{t}(x_t) + \sigma_{t-t'}^{2}\id_{d}) \text{ using Lemma}~\ref{lemma:second_order_tweedie_application}, \text{ where } h_{t}(x_t) := \nabla^{2}\log(p_{t}(x_t))
    }
    which implies, 
    \bas{
        & \normop{\E\bbb{(\tilde{y}_{t}-s(t, x_t))(\tilde{y}_{t}-s(t, x_t))^{\top}|x_{t}}} \\ & \leq  2\alpha^{2}\normop{\E\bbb{\bb{e^{-(t-t')}s(t, x_t) - s\bb{t', x_{t'}}}\bb{e^{-(t-t')}s(t, x_t) - s\bb{t', x_{t'}}}^{\top}|x_t}} \\
       & \;\; + \frac{2}{\sigma_{t}^{4}}\normop{\sigma_{t-t'}^{4}h_{t}(x_t) + \sigma_{t-t'}^{2}\id_{d}} \\
       &= O\bb{ \frac{L\Delta^{2} + \Delta}{\sigma_{t}^{4}} + \alpha^{2}L^{2}\Delta }, \text{ using Assumption~\ref{assumption:score_function_smoothness} and Corollary~\ref{corr:score_function_delta_variance}} \\
       &= O\bb{\frac{(L^{2}+1)\Delta}{\sigma_{t}^{4}}}
    }
\end{proof}

\subsection{Experimental Details}
\label{appendix:bsm_experiments}

In this section, we provide some preliminary experiments (Figure~\ref{fig:bsm_experiments}) with the Bootstrapped Score Matching algorithm described in Section~\ref{sec:bootstrapped_score_matching}. The formal pseudocode has been provided in Algorithm~\ref{alg:bsm_algorithm}.

\begin{figure*}[hbt]
    \centering
    \begin{minipage}[b]{0.4\textwidth}
        \centering
        \includegraphics[width=\textwidth]{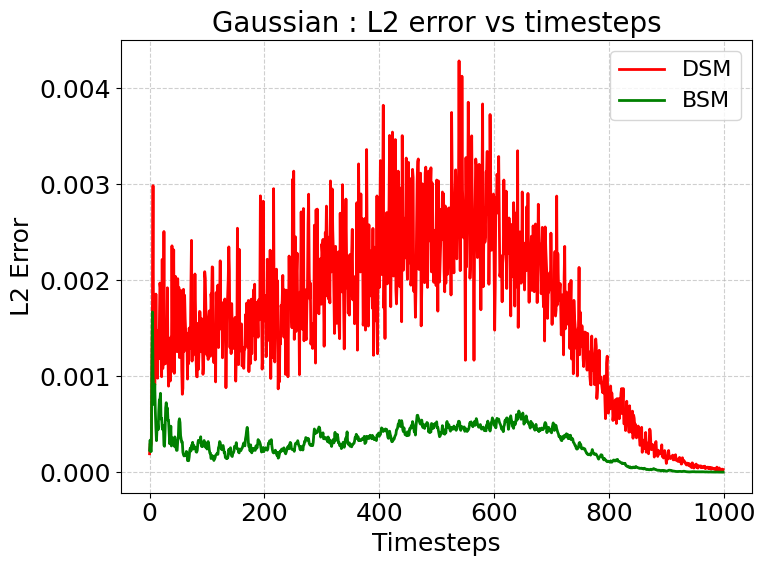}
        \vspace{1mm} 
        (a) L2 error for a multivariate Gaussian density
        \label{fig:gaussian_experiment_bsm}
    \end{minipage}
    \hspace{0.05\textwidth} 
    \begin{minipage}[b]{0.4\textwidth}
        \centering
        \includegraphics[width=\textwidth]{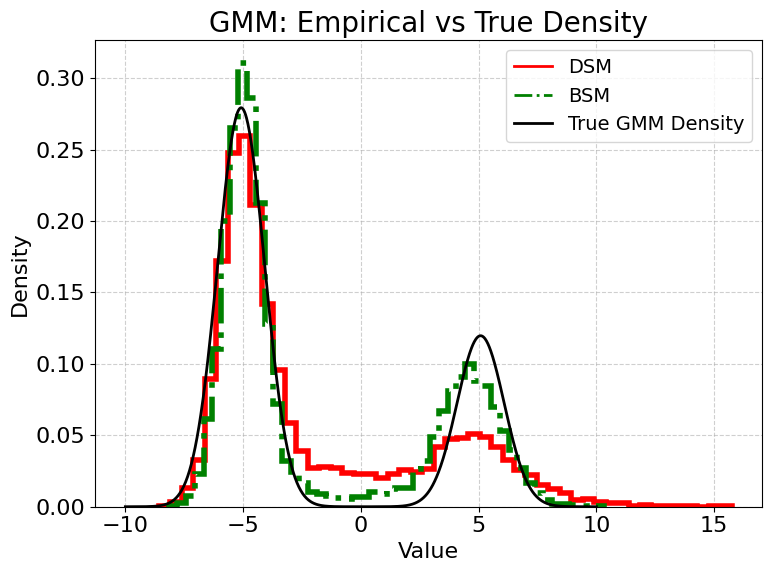}
        \vspace{1mm} 
        (b) Empirical density for a mixture of Gaussians
        \label{fig:gmm_experiment_bsm}
    \end{minipage}
    \caption{\label{fig:bsm_experiments} Experiments with Bootstrapped Score Matching. (a) represents the L2 error at each timestep while performing score estimation for a multivariate Gaussian density. In this case, since the score function is linear, \eqref{eq:dsm_total} can be solved exactly without a neural network. We note that BSM significantly enhances the quality of the score function. (b) explores multimodal densities, specifically a mixture of Gaussians. Here, we use a 3-layer neural network to represent the score function and plot the empirical density learned by using \eqref{eq:reverse_sde} with different score estimation algorithms. We note that using score bootstrapping significantly enhances the proportional representation of the minor mode, leading to a fair output. We provide details of the experimental setup in the Appendix Section~\ref{appendix:bsm}.}
\end{figure*} 

In the first experiment, we study the accuracy of different score estimation methods in the context of learning the score function of a Gaussian distribution under the variance-reduced Bootstrapped Score Matching (BSM) objective. We compare BSM with DSM to evaluate their relative performance in estimating the true score function across different timesteps. Our target distribution is a $d$-dimensional Gaussian distribution with covariance matrix $\Sigma \in \mathbb{R}^{d \times d}$, constructed as $\Sigma = 5 M M^T + 5 v v^T$ where $M \in \mathbb{R}^{d \times d}$ and $v \in \mathbb{R}^{d \times 1}$ are sampled from a standard normal distribution. We generate $m = 10000$ samples from the target distribution. Note that since the target density is gaussian, the density at all intermediate timesteps, $p_{t}$, also follows a gaussian distribution. The time evolution follows an non-linear decay model, with $N = 1000$ discrete timesteps sampled as:
$t_i = \text{linspace}(0.001, t_{\max}, N)^2, \quad \text{where} \quad t_{\max} = \sqrt{5}$. The noise covariance scaling factor follows $\sigma_t = \sqrt{1 - e^{-2t}}$. The bootstrap ratio for BSM is adaptively chosen as $1 - (\sigma_t / (\sigma_{t-t'} + \sigma_t))$, where $t'$ represents the previous timestep. The score function is estimated using the standard least-squares regression solution on account of the simple target distribution which implies a linear score function of the form $s(t, x) := A_{t}x$ for some matrix $A_{t}$. We run $5$ training epochs for the first few timesteps ($t \leq 3$) and $1$ epoch thereafter. We plot the squared error of the learned score matrix, $\hat{A}_{t}$ against the true score matrix, $A_{t}$ at all timesteps. 

In the second experiment, we move away from the Gaussian density, which is unimodal, to a Gaussian Mixture model (GMM), which is multimodal. We fix the dimensionality of the data as $d = 1$ for ease of visualization, and generate a mixture of two gaussians with means $\pm 5$ and mixture weights $0.7$ and $0.3$ respectively. We generate $m=10000$ samples from the GMM.  The time evolution is linear with $N=1000$ timesteps. We train a 3 layer neural network with hidden layer dimensions of $10$ each, separately for DSM and BSM. We train the neural network for 100 epochs, with an initial learning rate of $0.05$, using the \textit{AdamW} optimizer, along with a cosine scheduler to manage the learning rate schedule. The number of warmup steps of the scheduler are chosen to be $10\%$ of the total training steps. When training the BSM network, we start bootstrapping after $k_0 = 250$ timesteps and $90$ epochs. The bootstrap ratio is fixed at $0.9$. Once training is completed, we sample 10000 points using the learned score functions to plot and compare the empirical density. Our experiments were performed on a single Google Colab CPU.

\end{document}